\documentclass[12pt]{article}
\usepackage{physics}
\usepackage{times}
\usepackage[letterpaper, margin=1in]{geometry}
\usepackage{xcolor}
\usepackage[numbers]{natbib}
\bibliographystyle{plainnat}

% Optional math commands from https://github.com/goodfeli/dlbook_notation.
%%%%% NEW MATH DEFINITIONS %%%%%

\usepackage{amsmath,amsfonts,bm, amssymb, amsthm}

% Mark sections of captions for referring to divisions of figures

% Highlight a newly defined term

% Figure reference, lower-case.

% Figure reference, capital. For start of sentence

% Section reference, lower-case.

% Section reference, capital.

% Reference to two sections.

% Reference to three sections.

% Reference to an equation, lower-case.
\def\eqref#1{equation~\ref{#1}}
% Reference to an equation, upper case

% A raw reference to an equation---avoid using if possible

% Reference to a chapter, lower-case.

% Reference to an equation, upper case.

% Reference to a range of chapters

% Reference to an algorithm, lower-case.

% Reference to an algorithm, upper case.

% Reference to a part, lower case

% Reference to a part, upper case

\def\1{\bm{1}}

% Random variables

% rm is already a command, just don't name any random variables m

% Random vectors

% Elements of random vectors

% Random matrices

% Elements of random matrices

% Vectors

\def\vtheta{{\bm{\theta}}}

\def\ve{{\bm{e}}}

\def\vq{{\bm{q}}}

\def\vu{{\bm{u}}}
\def\vv{{\bm{v}}}
\def\vw{{\bm{w}}}
\def\vx{{\bm{x}}}

\def\vz{{\bm{z}}}
\def\vphi{{\bm{\varphi}}}
\def\vtphi{\tilde{\bm{\varphi}}}

% Elements of vectors

% Matrix

\def\mF{{\bm{F}}}

\def\mI{{\bm{I}}}

\def\mL{{\bm{L}}}

\def\mQ{{\bm{Q}}}

\def\mU{{\bm{U}}}
\def\mV{{\bm{V}}}
\def\mW{{\bm{W}}}
\def\mX{{\bm{X}}}

\def\mTheta{{\bm{\Theta}}}

% Tensor
\DeclareMathAlphabet{\mathsfit}{\encodingdefault}{\sfdefault}{m}{sl}
\SetMathAlphabet{\mathsfit}{bold}{\encodingdefault}{\sfdefault}{bx}{n}
\newcommand{\tens}[1]{\bm{\mathsfit{#1}}}

\def\tHe{{\tens{He}}}

% Graph
\def\gA{{\mathcal{A}}}

\def\gN{{\mathcal{N}}}

\def\gR{{\mathcal{R}}}
\def\gS{{\mathcal{S}}}

\def\gV{{\mathcal{V}}}

% Sets

% Don't use a set called E, because this would be the same as our symbol
% for expectation.

\def\sR{{\mathbb{R}}}

% Entries of a matrix

% entries of a tensor
% Same font as tensor, without \bm wrapper

% The true underlying data generating distribution

% The empirical distribution defined by the training set

% The model distribution

% Stochastic autoencoder distributions

 % Laplace distribution

\newcommand{\Var}{\mathrm{Var}}

% Wolfram Mathworld says $L^2$ is for function spaces and $\ell^2$ is for vectors
% But then they seem to use $L^2$ for vectors throughout the site, and so does
% wikipedia.

 % See usage in notation.tex. Chosen to match Daphne's book.

% \DeclareMathOperator{\Tr}{Tr}

% \input{macros.tex}

\usepackage{hyperref}
\usepackage{cleveref}
\usepackage{url}
\usepackage{thm-restate}
\usepackage{graphicx}
\usepackage{subcaption}
\usepackage{titlesec}

\DeclareMathOperator{\attn}{attn}
\DeclareMathOperator{\poly}{poly}
\renewcommand{\S}{\mathcal{S}}

\newtheorem{lemma}{Lemma}
\newtheorem{corollary}{Corollary}
\newtheorem{theorem}{Theorem}

\newtheorem{assumption}{Assumption}

\newtheorem{definition}{Definition}

\thispagestyle{empty}
\setlength{\parindent}{0pt}
\setlength{\parskip}{6pt}
\titlespacing*{\section}{0pt}{6pt}{0pt}
\titlespacing*{\subsection}{0pt}{6pt}{0pt}

\title{Understanding Factual Recall in Transformers via Associative Memories}

\author{
  Eshaan Nichani\thanks{\url{eshnich@princeton.edu}. This work was conducted while EN was an intern at the Flatiron Institute.}\\
  Princeton University
  \and
  Jason D. Lee\\
  Princeton University
  \and
  Alberto Bietti\\
  Flatiron Institute
}

\ifdefined\usebigfont

 % just to make sure
\usepackage{times}
\usepackage[fontsize=13pt]{scrextend}
\makeatletter
\@ifpackageloaded{geometry}{\AtBeginDocument{\newgeometry{letterpaper,left=1.56in,right=1.56in,top=1.71in,bottom=1.77in}}}{\usepackage[letterpaper,left=1.56in,right=1.56in,top=1.71in,bottom=1.77in]{geometry}}
\AtBeginDocument{\newgeometry{letterpaper,left=1.56in,right=1.56in,top=1.71in,bottom=1.77in}}
\makeatother
%\usepackage[letterpaper,left=1.56in,right=1.56in,top=1.71in,bottom=1.77in]{geometry}
%\usepackage[letterpaper,left=1.56in,right=1.56in,top=.1in,bottom=.1in]{geometry}
%\AtBeginDocument{\newgeometry{letterpaper,left=1.56in,right=1.56in,top=1.71in,bottom=1.77in}}
%\AtBeginDocument{\newgeometry{letterpaper,left=1.56in,right=1.56in,top=.1in,bottom=.1in}}
%\pagenumbering{gobble}
%\AtBeginDocument{\eject \pdfpagewidth=8.5in \pdfpageheight=11in}
%\usepackage[shortlabels]{enumitem}
%\setlist[itemize]{leftmargin=*}
%\setlist[enumerate]{leftmargin=*}
\else
\fi

\begin{document}
\maketitle

\begin{abstract}
Large language models have demonstrated an impressive ability to perform factual recall. Prior work has found that transformers trained on factual recall tasks can store information at a rate proportional to their parameter count. In our work, we show that shallow transformers can use a combination of associative memories to obtain such near optimal storage capacity. We begin by proving that the storage capacities of both linear and MLP associative memories scale linearly with parameter count. We next introduce a synthetic factual recall task, and prove that a transformer with a single layer of self-attention followed by an MLP can obtain 100\% accuracy on the task whenever either the total number of self-attention parameters or MLP parameters scales (up to log factors) linearly with the number of facts. In particular, the transformer can trade off between using the value matrices or the MLP as an associative memory to store the dataset of facts. We complement these expressivity results with an analysis of the gradient flow trajectory of a simplified linear attention model trained on our factual recall task, where we show that the model exhibits sequential learning behavior. 
\end{abstract}

\section{Introduction}
One hallmark capability of transformer-based large language models (LLMs) is factual recall~\citep{petroni2019language,jiang2020can,roberts2020much}. Given a prompt of the form ``In what year was George Washington born?" an LLM will correctly respond with ``1732." Language models thus act as databases, storing somewhere in their parameters mappings of the form (George Washington, birth year)~$ \mapsto (1732)$ which can be easily accessed during inference time. 

Prior work \citep{allen2024physics} has observed that transformers trained on factual recall tasks can store information at a rate proportional to their parameter count. Other studies~\citep[e.g.,][]{meng2022locating,geva2023dissecting,nanda2023fact,lv2024interpreting} have sought to understand the specific mechanism by which transformers implement factual recall, probing models to understand specifically which transformer blocks ``contain" certain facts. However, these studies do not consider the memorization capacity of such constructions, and it is thus an open question to understand how transformers optimally encode such factual information within their weights.

In this work, we show that shallow transformers can use a combination of \emph{associative memories} to obtain near-optimal storage capacity for factual recall tasks. Associative memories store pairs of input-output embeddings through their outer products, and are thus well-suited for modeling the weight matrices of a transformer. Prior work \citep{bietti2023birth} has shown that this associative memory model is a key primitive towards understanding both the representational capacity and optimization dynamics of transformers on synthetic tasks.

Our specific contributions are as follows:
\begin{itemize}
    \item In \Cref{sec:AM} we begin by studying the ability of linear and MLP associative memory models to store associations between discrete vocabularies. We prove that when the embeddings are sampled randomly over the sphere, these models can store a number of associations proportional to their parameter count, significantly improving over the case where the embeddings are orthogonal.
    \item In \Cref{sec:task}, we introduce a synthetic next-token prediction task which models factual recall. The data distribution consists of prompts containing a subject token $s$ and relation token $r$ hidden amongst a set of noise tokens, which the learner must map to a ground truth answer $a^*(s, r)$. Our main theorem is that a transformer consisting of a single multi-head self-attention layer followed by an MLP can obtain 100\% accuracy when \emph{either} the number of self-attention parameters or MLP parameters scales (up to logs) proportionally with the dataset size.
    \item In \Cref{sec:optimization}, we study the gradient descent dynamics of a single linear self-attention head trained on the synthetic task. We prove that the model undergoes a sequential learning dynamics, consisting of a ``hallucination" stage where the model outputs the conditional distribution for the answer based on only the relation. 
    \item Finally, in \Cref{sec:LBs} we complement our constructions with lower bounds, showing that they are optimal up to logarithmic factors. 
\end{itemize}
Overall, our work makes progress towards understanding the mechanism by which transformers learn and store factual information.

\section{Related Work}

\paragraph{Associative memories.}
Associative memories have a long history in the neural computation literature~\citep{hopfield1982neural,kohonen72,willshaw1969non}.
More recently there has been renewed interest in extensions of such models with larger capacity~\citep{krotov2016dense,demircigil2017model,lucibello2024exponential}. These have been linked to the attention blocks in Transformers~\citep{ramsauer2020hopfield,schlag2021linear}, with \citep{le2020self,hoover2024energy} in particular using the connection between self-attention and associative memories to design new variants of the attention module. \citep{radhakrishnan2020overparameterized} show that overparameterized autoencoders can also behave as associative memories. However, these connections differs from our work, where we consider instead the role of both self-attention and MLP weights as associative memories, in a similar vein to~\citep{bietti2023birth,cabannes2024scaling}.

\paragraph{Memorization and factual recall.}
Large language models are known to store vast amounts of factual knowledge in their weights~\citep{jiang2020can,roberts2020much,geva2021transformer}.
Several recent works in the mechanistic interpretability literature have attempted to understand how transformers store facts~\citep{meng2022locating,geva2023dissecting,nanda2023fact,lv2024interpreting}.
\citet{allen2024physics} empirically studied the memorization capacity for Transformer language models of different sizes trained on synthetic factual recall tasks, and observed near-linear scaling with the number of parameters. \citet{jiang2024llms} demonstrate how shallow transformers can solve a related latent concept association task by viewing the weight matrices as associative memories. At a more basic level, several works have studied the memorization capacity of neural networks, using constructions that differ from our associative memory approach, both in the context of regression~\citep{bubeck2020network,vardi2021optimal,madden2024memory} and (next) token prediction~\citep{mahdavi2023memorization,kajitsuka2023transformers,kajitsuka2024optimal,madden2024upper}. 
% although our work differs in that we primarily study the memorization capacity and optimization dynamics.

% \anote{TODO: more comments on how these compare to us, either here or after results.}\enote{I added to section 3}

% \enote{papers to add: \citep{jiang2024llms,chughtai2024summing}}

\paragraph{Gradient dynamics.}
Training dynamics of transformer models on various tasks has been a popular recent line of research~\citep{jelassi2022vision,snell2021approximating,li2023transformers,bietti2023birth,tian2023scan,nichani2024transformers}. \citet{zhang2024trained,mahankali2024one} studied training dynamics of transformers with linear attention on in-context learning tasks. \citet{ghosal2024understanding} studied the fine-tuning dynamics on a similar factual recall task, showing how training on lesser-known facts may hurt performance. Our emphasis differs in that we consider non-orthogonal embeddings, and require the model to additionally filter out the relevant subject and relation tokens from the noise tokens, which requires learning of the key and query matrices.

% \anote{is this accurate?}

\section{Associative Memories}\label{sec:AM}
In this section, we show that associative memories have a storage capacity on the order of the number of parameters (up to logarithmic factors), which is near-optimal (as we show in Section~\ref{sec:LBs}).

\paragraph{Setup.} Our setting follows that of \citet{cabannes2024scaling}. Let $[N]$ be the set of input tokens, and $[M]$ be the set of output tokens. Our goal is to store a set of associations given by the function $f^*:[N] \rightarrow [M]$. For each input token $x \in [N]$ we assign a corresponding embedding vector $\ve_x \in \sR^d$, and likewise for each output token $y \in [M]$ we associate an unembedding vector $\vu_y \in \sR^d$. We primarily focus on the setting where the embeddings $\{\ve_x\}_{x \in [N]}$ and $\{\vu_y\}_{y \in [M]}$ are drawn i.i.d uniformly from the sphere of radius 1. Let $F: \mathbb{R}^d \rightarrow \mathbb{R}^d$ be our model which ``stores" the associations $f^*$. Given such an $F$, the prediction $\hat f(x)$ for $f^*(x)$ is given by the arg-max decoding $\hat f(x) := \arg\max_{y \in [M]} \vu_y^\top F(\ve_x)$.

% \paragraph{Linear Associative Memories.} 
\subsection{Linear Associative Memories}
We first consider the case where $F$ is a linear map $F(\ve_x) = \mW\ve_x$.

\begin{theorem}\label{thm:lin-AM}
    Assume that $f^*$ is injective. If $d^2 \gtrsim N\poly\log N$, then with high probability over the draw of the embeddings, there exists a $\mW$ such that 
    \begin{align}
        \arg\max_{y \in [M]} \vu_y^\top \mW \ve_x = f^*(x)\quad\text{for all}~ x \in [N].
    \end{align}
    This capacity is obtained by the construction $\mW = \sum_{x \in [N]} \vu_{f^*(x)}\ve_x^\top$. Furthermore, if $\mW$ is restricted to be a rank $m$ matrix, then such a $\mW$ exists when $md \gtrsim N\poly\log N$; this construction is $\mW = \sum_{x \in [N]} \vu_{f^*(x)}\ve_x^\top \sum_{i=1}^m\vv_i \vv_i^\top$, where $\vv_i \in \mathbb{R}^d$ are drawn i.i.d from the standard Gaussian.
    % \anote{Can we be more explicit about the construction here, given how straightforward it is?}
\end{theorem}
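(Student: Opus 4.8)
The plan is to analyze the two explicit constructions directly: in each case I would show that the ``signal'' score $\vu_{f^*(x)}^\top\mW\ve_x$ strictly exceeds every ``distractor'' score $\vu_y^\top\mW\ve_x$ with $y\ne f^*(x)$, with high probability simultaneously over all $x\in[N]$ and all $y\in[M]$, which forces the arg-max decoding to be correct everywhere. The workhorse is a Bernstein-type concentration inequality for sums of independent sub-exponential random variables, applied after conditioning on part of the randomness.

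For the full construction $\mW=\sum_{x'\in[N]}\vu_{f^*(x')}\ve_{x'}^\top$, using $\|\ve_{x'}\|=1$ gives
\[
\vu_y^\top\mW\ve_x \;=\; \langle\vu_y,\vu_{f^*(x)}\rangle \;+\; \sum_{x'\ne x}\langle\vu_y,\vu_{f^*(x')}\rangle\,\langle\ve_{x'},\ve_x\rangle .
\]
When $y=f^*(x)$ the first term equals $1$, and conditionally on $\ve_x$ and $\vu_{f^*(x)}$ the remaining sum has $N-1$ independent, mean-zero summands, each a product of two independent random variables of sub-Gaussian proxy variance $O(1/d)$ (inner products of a fixed unit vector with an independent uniform unit vector), hence sub-exponential with parameter $O(1/d)$; Bernstein's inequality bounds it by $\widetilde O(\sqrt N/d)\le\tfrac12$ once $d^2\gtrsim N\poly\log N$. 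When $y\ne f^*(x)$ the same expansion contributes at most one or two ``isolated'' inner-product terms of size $\widetilde O(1/\sqrt d)$ plus a sum of the same sub-exponential type, hence is $\le\tfrac14$. A union bound over the $\le NM$ pairs (the $\log M$ factor being harmless when $M\le\poly(N)$, and otherwise requiring $\log M\lesssim d$) yields $\vu_{f^*(x)}^\top\mW\ve_x\ge\tfrac12>\tfrac14\ge\vu_y^\top\mW\ve_x$, which is the claim; injectivity of $f^*$ is used only to keep the case analysis clean.

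For the rank-$m$ construction (where the interesting regime is $m\le d$), write $\mW=\mW_0\mSigma$ with $\mW_0=\sum_{x'}\vu_{f^*(x')}\ve_{x'}^\top$ and $\mSigma=\sum_{i=1}^m\vv_i\vv_i^\top$. The step I would build on is the identity
\[
\vu_y^\top\mW\ve_x \;=\; \sum_{i=1}^m\langle\mW_0^\top\vu_y,\vv_i\rangle\,\langle\vv_i,\ve_x\rangle \;=\; \vw_y^\top\mSigma\ve_x,\qquad \vw_y:=\mW_0^\top\vu_y,
\]
which reduces everything to the scalar $\langle\vw_y,\ve_x\rangle=\vu_y^\top\mW_0\ve_x$ (controlled exactly as above: $\approx 1$ when $y=f^*(x)$, small otherwise) and the norm $\|\vw_y\|$. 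One checks $\vw_{f^*(x)}=\ve_x+\vr_x$ with $\langle\vr_x,\ve_x\rangle$ small and $\|\vr_x\|=\widetilde O(\sqrt{N/d})$, and $\|\vw_y\|=\widetilde O(1+\sqrt{N/d})$ for $y\ne f^*(x)$, by routine concentration for the sum of independent bounded vectors $\sum_{x'\ne x}\langle\vu_{f^*(x')},\vu_{f^*(x)}\rangle\ve_{x'}$. Then, conditioning on all embeddings and using only the randomness of the $\vv_i$, the quantity $\vw_y^\top\mSigma\ve_x=\sum_i\langle\vw_y,\vv_i\rangle\langle\vv_i,\ve_x\rangle$ is a degree-two Gaussian chaos with mean $m\langle\vw_y,\ve_x\rangle$, so the (bilinear) Hanson--Wright inequality gives $\vw_y^\top\mSigma\ve_x=m\langle\vw_y,\ve_x\rangle+\widetilde O(\sqrt m\,\|\vw_y\|)$ with high probability; combined with $\ve_x^\top\mSigma\ve_x=\sum_i\langle\vv_i,\ve_x\rangle^2\sim\chi^2_m$ concentrating at $m$, this gives $\vu_{f^*(x)}^\top\mW\ve_x=m(1+o(1))$ and $\vu_y^\top\mW\ve_x=o(m)$ for $y\ne f^*(x)$ as soon as $md\gtrsim N\poly\log N$ (note $m\le d$ makes the full-construction condition $d^2\gtrsim md$ automatic) and $m\gtrsim\log N$; a union bound over $(x,y)$ finishes the argument.

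The main obstacle is the concentration bookkeeping for the bilinear noise terms, and it has two delicate points. First, one must carefully identify the conditional independence structure (conditioning on $\ve_x,\vu_{f^*(x)}$, or on all embeddings) that turns these bilinear expressions into genuine sums of independent sub-exponential variables before invoking Bernstein/Hanson--Wright. Second, and more importantly, in the rank-$m$ case one must \emph{not} bound the cross term $\vr_x^\top\mSigma\ve_x$ by $\|\vr_x\|\,\|\mSigma\|_{\mathrm{op}}$---since $\|\mSigma\|_{\mathrm{op}}$ can be as large as $\asymp d$, this is far too lossy and destroys the $md$ scaling---but instead keep it as a Gaussian chaos and exploit its cancellation. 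Controlling $\|\vr_x\|$ and $\|\vw_y\|$ to lower order, and the $\chi^2_m$ lower tail (the source of the $m\gtrsim\log N$ requirement), is routine but must be done uniformly in $x\in[N]$ and $y\in[M]$.
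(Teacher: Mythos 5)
Your proposal is correct, but it takes a genuinely different route to concentration than the paper. The paper also isolates the ``signal'' $x'=x$ term and the off-diagonal ``noise'' in $\gamma_{xy}=(\vu_{f^*(x)}-\vu_y)^\top\mW\ve_x$, and it too computes $\mathbb{E}[\gamma_{xy}]=1$ and $\Var(\gamma_{xy})\lesssim 1/d + N/d^2$; but instead of your Bernstein / Hanson--Wright route it treats $\gamma_{xy}$ as a fixed low-degree polynomial in the embedding coordinates and invokes a single hypercontractivity-based moment bound (Lemma~\ref{lem:poly-concentrate}: $L^q$-to-$L^2$ for degree-$p$ polynomials over the sphere/Gaussian, then Markov on $|\gamma_{xy}-\mathbb{E}\gamma_{xy}|^q$ with $q\asymp\log(1/\delta')$). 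That choice is deliberate: the same lemma is reused verbatim for the degree-6 rank-$m$ quantity in Lemma~\ref{lem:construct_VO} and for the much higher-degree quantities arising from Hermite-expanded polynomial activations in Theorem~\ref{thm:mlp-AM}, so the paper never has to re-derive a conditional-independence decomposition for each construction. Your approach, in exchange, is tighter and more structurally transparent: conditioning on $(\ve_x,\vu_y)$ makes the off-diagonal sum a genuine sum of independent sub-exponential products and Bernstein gives the clean $\widetilde O(\sqrt N/d)$ rate, and for the rank-$m$ case your factorization $\mW=\mW_0\mSigma$ followed by conditioning on embeddings and Hanson--Wright on the Gaussian chaos $\sum_i\langle\vw_y,\vv_i\rangle\langle\vv_i,\ve_x\rangle$ cleanly separates the $\chi^2_m$ signal from the $\widetilde O(\sqrt m\,\|\vw_y\|)$ fluctuation, with exactly the caveat you flag (never bound via $\|\mSigma\|_{\mathrm{op}}$). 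Incidentally, the appendix proof of Theorem~\ref{thm:lin-AM} only treats the full-rank $\mW$; the rank-$m$ half is handled by the nearby Lemma~\ref{lem:construct_VO} via the same moments-plus-hypercontractivity template, so your explicit rank-$m$ argument is a self-contained alternative rather than a retracing. One minor bookkeeping point worth making explicit if you write this up: when $y\ne f^*(x)$, in the off-diagonal sum the at-most-one index $x'$ with $f^*(x')=y$ (possible since $y$ may lie in the image of $f^*$ away from $x$) must be peeled off as an isolated $\langle\ve_{x'},\ve_x\rangle=\widetilde O(1/\sqrt d)$ term before the conditional-independence argument kicks in; you note this, but it is the only place where injectivity of $f^*$ actually earns its keep in the full-rank case (it guarantees the $\vu_{f^*(x')}$'s are fresh draws).
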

Since $\mW$ has $d^2$ parameters, \Cref{thm:lin-AM} shows that the number of associations that can be stored scales (up to log factors) linearly in the number of parameters. We note that in this linear case, the injectivity assumption on~$f^*$ is important, as otherwise the capacity may be as low as~$d$, as in~\citep{cabannes2024scaling}. Additionally, we remark that these constructions are easily obtained by gradient descent; the general $\mW$ construction corresponds to one-step of GD on the correlation loss $L(\mW) = -\sum_x \vu_{f^*(x)}^\top \mW \ve_x$, while the low-rank construction corresponds to parameterizing $\mW = \mU\mV^\top$ for $\mU, \mV \in \mathbb{R}^{d \times m}$, and taking one step of GD on $\mU$ while $\mV$ is fixed to random Gaussian initialization. The proof of \Cref{thm:lin-AM} is deferred to  \Cref{app:AM-proofs}.

\paragraph{Remarks.} Our setting bears similarity to associative Hopfield networks~\citep{hopfield1982neural}, yet differs in that we decode to a fixed discrete set of output tokens $[M]$ rather than exactly matching the target output. This more closely resembles the language modeling framework, and allows us to improve the memorization capacity from $d$ to $d^2$~\citep{capacityhopfield}.  Next, we note that non-orthogonality of the embeddings is necessary for \Cref{thm:lin-AM}, as the optimal storage capacity for one-hot embeddings is only $N = d$. Since our constructions are in the regime $N \gg d$, the associative memory $\mW$ is a \emph{superposition}~\citep{elhage2022superposition} of the outer products $\vu_{f^*(x)}\ve_x^\top$. Finally, we remark that the random, rather than trainable, embeddings setting was also studied in \citet{cabannes2024scaling}. The embeddings can be viewed as global quantities in a larger network, of which the associative memory is implementing some subtask, and is thus not able to optimize these embeddings in order to solve its specific task.

% \anote{Add a paragraph for low-rank (e.g. attention heads)? Or add it into theorem above?}
\subsection{MLP Associative Memories}
Next, we consider the case where $F$ is a two-layer neural network with hidden width $m$; that is $F(\ve_x) = \mV^\top \sigma(\mW \ve_x)$ for $\mV, \mW \in \sR^{m \times d}$.

\begin{theorem}[Informal]\label{thm:mlp-AM}
    If $md \gtrsim N\poly\log N$, then with high probability over the draw of the embeddings, there exists $\mV, \mW$ such that
    \begin{align}
        \arg\max_{y \in [M]} \vu_y^\top \mV^\top \sigma(\mW \ve_x) = f^*(x)\quad\text{for all}~ x \in [N].
    \end{align}
\end{theorem}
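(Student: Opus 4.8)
The plan is to reduce the MLP case to the low-rank linear case from \Cref{thm:lin-AM}. The key observation is that a two-layer network $F(\ve_x) = \mV^\top \sigma(\mW \ve_x)$ with an appropriate choice of weights can \emph{emulate} a low-rank linear map of the form $\sum_{x}\vu_{f^*(x)}\ve_x^\top \sum_i \vv_i\vv_i^\top$, plus a controllable bias. Concretely, I would let the rows of $\mW$ be (scaled copies of) i.i.d.\ Gaussian vectors $\vv_i$, so that $\sigma(\mW\ve_x)$ has coordinates $\sigma(\vv_i^\top \ve_x)$; since $\vv_i^\top\ve_x$ is roughly $\mathcal N(0, \|\ve_x\|^2)$, a ReLU nonlinearity zeroes out about half of these and leaves the positive part. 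The first step is therefore to write $\sigma(t) = \tfrac12 t + \tfrac12|t|$ and split $F$ into a "linear part" $\tfrac12\mV^\top\mW\ve_x$ and an "absolute-value part" $\tfrac12\mV^\top|\mW\ve_x|$. Choosing $\mV$ so that $\mV^\top\mW = 2\sum_{x}\vu_{f^*(x)}\ve_x^\top \big(\sum_i \vv_i\vv_i^\top\big)\big/\|\mW\|$-type scaling reproduces exactly the low-rank linear construction up to a factor; the absolute-value part becomes an error term to be controlled.

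The second step is the concentration/signal analysis, which mirrors the proof of \Cref{thm:lin-AM}. For a fixed query $x$, the "signal" coordinate $\vu_{f^*(x)}^\top F(\ve_x)$ should concentrate around a positive constant (coming from the $x=x$ term of the sum, where $\vu_{f^*(x)}^\top\vu_{f^*(x)}=1$ and $\sum_i(\vv_i^\top\ve_x)^2 \approx m\|\ve_x\|^2$), while every "noise" coordinate $\vu_y^\top F(\ve_x)$ for $y\neq f^*(x)$ is a sum of $N$ terms each of size $O(d^{-1/2})$ in the relevant inner products, hence concentrates near $0$ with fluctuations of order $\sqrt{N/d}\cdot(\text{poly}\log)$ after accounting for the random $\vv_i$'s. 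Setting $md \gtrsim N\poly\log N$ makes the signal dominate all $M$ noise coordinates with high probability; a union bound over $x\in[N]$ and $y\in[M]$ finishes it. The absolute-value term contributes only lower-order noise because $|\vv_i^\top\ve_x|$ has the same scale as $\vv_i^\top\ve_x$ and is still sub-Gaussian in the remaining randomness. Alternatively — and this may be cleaner — one can use the fact that a ReLU network with random first layer and the $\vv_i$ acting as random features implements a kernel-type map whose Gram matrix on $\{\ve_x\}$ is well-conditioned when $m d\gtrsim N\poly\log N$, then solve for $\mV$ exactly; I would pick whichever bookkeeping is shorter.

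The main obstacle is controlling the \emph{cross terms} introduced by the nonlinearity: unlike the purely linear case, $\sigma(\mW\ve_x)$ is not linear in $\ve_x$, so the clean outer-product cancellation $\ve_{x'}^\top\ve_x \approx \delta_{x'x}$ no longer applies coordinatewise, and one must argue that the $|\cdot|$ contribution, summed against the $N$ random unembedding directions, does not accidentally create a spurious large inner product with some $\vu_y$. I expect this to go through via a Bernstein/Hanson--Wright argument conditional on the $\vv_i$'s, using that each $|\vv_i^\top\ve_x|$ is bounded with high probability by $O(\sqrt{\log(Nm)})$ and that the summands are (conditionally) independent across $x'$, but it is the step that requires the most care and is presumably why the theorem is stated as "informal" in the main text, with the full quantitative version and the precise polylog exponents deferred to \Cref{app:AM-proofs}.
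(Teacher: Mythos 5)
There is a genuine gap, and it is the central one. Your plan puts the memory in the \emph{linear} part of the activation: you write $\sigma(t)=\tfrac12 t+\tfrac12|t|$, let $\tfrac12\mV^\top\mW$ reproduce the low-rank construction from \Cref{thm:lin-AM}, and treat the nonlinear part as noise. But $\mV^\top\mW$ is a $d\times d$ matrix of rank at most $\min(m,d)\le d$, so the linear part can never do better than a general linear associative memory: its interference noise per readout is governed by the degree-one kernel $\langle \ve_x,\ve_{x'}\rangle$, whose cross-correlations contribute variance of order $N/d^2$ \emph{independently of $m$}. Hence your own estimate caps the capacity of this route at $N\lesssim d^2/\poly\log N$, and increasing the width $m$ beyond $d$ buys nothing. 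The regime where \Cref{thm:mlp-AM} has content is exactly $m\gg d$, $N\gg d^2$, and there the conclusion ``setting $md\gtrsim N\poly\log N$ makes the signal dominate'' does not follow from the analysis you sketch. Treating the nonlinear term as lower-order noise is backwards: it is the only component that can carry capacity beyond $d^2$.

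The paper's proof makes the nonlinearity do the work. Taking $\sigma$ polynomial with nonvanishing Hermite coefficients, it sets the second-layer weights so that (in the NTK linearization) the network approximates a \emph{polynomial} associative memory $\sum_{x'}\vu_{f^*(x')}\langle \ve_x,\ve_{x'}\rangle^{k+1}$, extracted from the degree-$k$ Hermite component of the random features. The cross-pattern interference then scales like $d^{-(k+1)}$ rather than $d^{-1}$, so choosing $k\approx \log_d N$ suppresses the $N$-pattern interference entirely; the width $m$ enters only through the Monte-Carlo error of approximating this polynomial kernel with $m$ random features, which is where the condition $md\gtrsim N\poly\log N$ actually comes from. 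Your alternative suggestion (well-conditioned random-feature Gram matrix) is closer in spirit, but making it work still requires isolating the high-degree components of the kernel, i.e., the same idea you propose to discard. To repair your argument you would need to replace the split $\sigma=\tfrac12\mathrm{id}+\tfrac12|\cdot|$ with a Hermite expansion of $\sigma$ and route the associations through a degree-$k$ term with $k$ growing with $\log_d N$.
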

Since the MLP has $2md$ parameters, \Cref{thm:mlp-AM} shows that the MLP associative memory scheme has storage capacity which is (nearly) linear in the parameter count.

% \anote{If there is space, add a short proof sketch for random features + Hermite-only construction?}
\paragraph{Proof Sketch.} The construction for \Cref{thm:mlp-AM} mimics that of \Cref{thm:lin-AM}, after an appropriate random feature transformation. First, sample the rows of $\mW$ from the standard Gaussian. Then, set each $\vv_i = m^{-1}\sum_x \vu_{f^*(x)} h_k(\langle \vw_i, \ve_x\rangle)$, where $h_k$ is the $k$th Hermite polynomial (see \Cref{app:hermite}). We then see that
\begin{align}
    F(\ve_x) = \frac{1}{m}\sum_{i = 1}^m\sum_{x' \in [N]} \vu_{f^*(x')} h_k(\langle \vw_i, \ve_{x'}\rangle)\sigma(\langle \vw_i, \ve_x\rangle) \approx \sum_{x' \in [N]}\vu_{f^*(x')}\langle \ve_x, \ve_{x'} \rangle^k.
\end{align}
for sufficiently large $m$. Such \emph{polynomial} associative memory is reminiscent of that in \citet{krotov2016dense}, and can store many more associations for large $k$. By choosing $k \approx \log_d m$ and appropriately dealing with concentration, one can obtain the $\tilde O(md)$ storage capacity (for technical reasons, we must also use the Neural Tangent Kernel~\citep{jacot2018neural} rather the random feature model). The full proof of \Cref{thm:mlp-AM} is deferred to \Cref{app:AM-proofs}.

\paragraph{On Optimal Storage Capacity.} Prior works \citep{bubeck2020network,madden2024memory,madden2024upper} studying the memorization capacity of neural networks focus on the regression setting, and thus do not directly apply to our setup with multiple outputs and a discrete set of output tokens. Other works~\citep{vardi2021optimal,kajitsuka2023transformers,kajitsuka2024optimal} show that one can memorize $N$ arbitrary labels with $\tilde O(\sqrt{N})$ parameters, at the expense of using a bit complexity of $\tilde \Omega(\sqrt{N})$. Such networks still require $\Omega(N)$ bits, which matches our lower bounds in \Cref{sec:LBs}. These constructions, however, are unwieldy, and are not learnable if we restrict the precision to be $\poly\log N$. Instead, our constructions are learnable -- the linear construction results from one step of GD, while the ReLU construction uses the NTK and can thus be learned via GD on a convex loss. In \Cref{cor:quantize}, we show that a quantized version of the construction from \Cref{thm:lin-AM} indeed succeeds with bit precision $\tilde O(1)$, and thus more accurately captures realistic training regimes where models do seem to succeed with low precision~\citep{dettmers2022gpt3,allen2024physics}.

% We expect our constructions to only require $\poly\log N$ bits of precision, and thus more accurately capture realistic training regimes. We leave such an analysis for future work, and remark that models in practice do seem to succeed with low precision \citep{dettmers2022gpt3,allen2024physics}.

% \anote{but we leave a finite-precision analysis to future work.}

\begin{figure}[t!]
\centering
\includegraphics[width=0.4\linewidth]{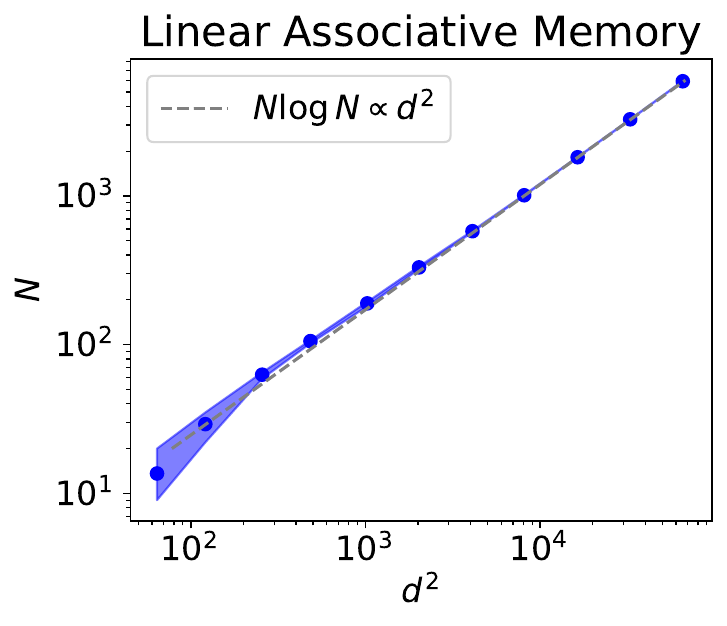}
\includegraphics[width=0.4\linewidth]{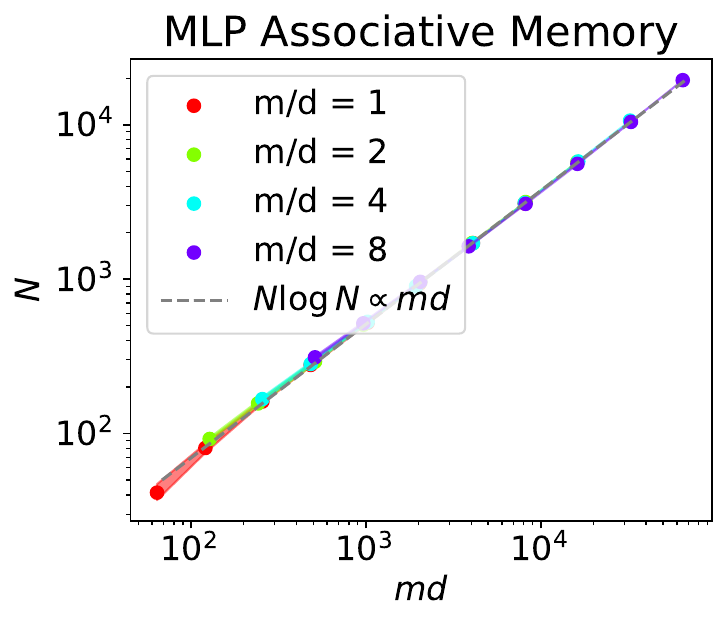}
\caption{We train linear and MLP associative memories to store the association $f^*(x) = x$. (Left) A linear associative memory requires $d^2 \propto N\log N$ parameters to store $N$ associations. (Right) The MLP associative memory requires $md \propto N \log N$ parameters to store $N$ associations, as predicted by \Cref{thm:mlp-AM}.} 
\label{fig:mlp-scaling}
\end{figure}

\paragraph{Empirical Validation.} In \Cref{fig:mlp-scaling}, we train both linear and MLP associative memories to store the association $f^*(x) = x$. Given a fixed model size $(d, m)$, we fit datasets with increasing values of $N$ using the cross entropy loss, and plot the largest value of $N$ for which we can obtain at least 99\% accuracy. We observe that the linear associative memory can store $\tilde \Theta(d^2)$ associations, while the MLP associative memory can store $\tilde \Theta(md)$ associations. See \Cref{app:extra-experiments} for additional experiments where the number of output tokens $M$ does not scale with $N$.

\section{A Synthetic Task for Factual Recall}\label{sec:task}
In this section we introduce a synthetic factual recall task, and show that one-layer transformers constructed via associative memories can store a number of facts proportional to parameter count.

\subsection{The Task}

We first define a global dictionary of facts. Let $\gS$ be a set of subject tokens and $\gR$ be a set of relation tokens, where $S = \abs{\gS}, R = \abs{\gR}$. Let $\gA$ be the set of answer tokens. We let $a^*: \gS \times \gR \rightarrow \gA$ be the ground truth association function, which maps subject-relation tuples $(s, r)$ to their corresponding answer $a^*(s, r)$\footnote{We focus on one-to-one relations, where each $(s, r)$ pair corresponds to a unique $a^*$. This is in contrast to the one-to-many setting, where each $(s, r)$ maps to many possible answers (for example, $s = $ ``France," $r$ = ``city," $a^* \in \{\text{``Paris"}, \text{``Toulouse"}, \cdots\}$)}. A similar such task was considered in \citet{petroni2019language,ghosal2024understanding}.

Define $\gA_r$ to be the set of answers corresponding to a relation $r$, i.e $\gA_r := \{a^*(s, r) : s \in \gS\}$. Define $\gA_s := \{a^*(s, r) : r \in \gR\}$ analogously. We assume that each relation corresponds to a disjoint set of answers:
\begin{assumption}\label{assume:disjoint-answers}
    $\gA_r \cap \gA_{r'} = \emptyset$ for $r, r' \in \gR$ with $r \neq r'$. Furthermore, define $D := \max_{r \in \gR}\abs{\gA_r}$.
\end{assumption}
% \Cref{assume:disjoint-answers} implies that each relation corresponds to a disjoint set of answers. 
For example, $\gS$ could be the set of all countries, while $\gR$ could be $\{\mathrm{president}, \mathrm{capital}\}$; in this case, the set of all presidents and set of all capitals are disjoint.

We next define our data distribution $\mathcal{D}$ over sequences. Let $T > 0$ be the context length. Let $\gN$ be a set of noise tokens, and define the vocabulary to be $\gV := \gS \cup \gR \cup \gA \cup \gN \cup \{\mathrm{EOS}\}$, where $\mathrm{EOS}$ is a special ``end-of-sequence" token. The data distribution is over length $T+1$ sequences $z_{1:T+1} := (z_1, z_2, \dots, z_T, z_{T+1}) \in \gV^{T+1}$, generated via the following procedure:
\begin{enumerate}
    \item First, sample a subject and relation tuple $(s, r)$ from some distribution $p$ over $\gS \times \gR$.
    \item Next, sample two distinct indices $i, j \in [T-1]$. Set $z_i = s$ and $z_j = r$.
    \item For the remainder of tokens $z_k$ where $k \in [T-1] \setminus \{i, j\}$, draw $z_k$ uniformly at random from the noise tokens $\gN$.
    \item Set $z_T = \mathrm{EOS}$.
    \item Finally, set $z_{T+1} = a^*(s, r)$.
\end{enumerate}

The goal of this task is to predict $z_{T+1}$ from $(z_1, \dots, z_T)$. A model which can successfully do so must (1) be able to isolate the relevant subject and relation tokens from the noise tokens and (2) store all of the associations $(s, r) \mapsto a^*(s, r)$. See \Cref{fig:task-diagram} for a diagram of the task.

\begin{figure}[t]
\centering
\includegraphics[width=0.9\linewidth]{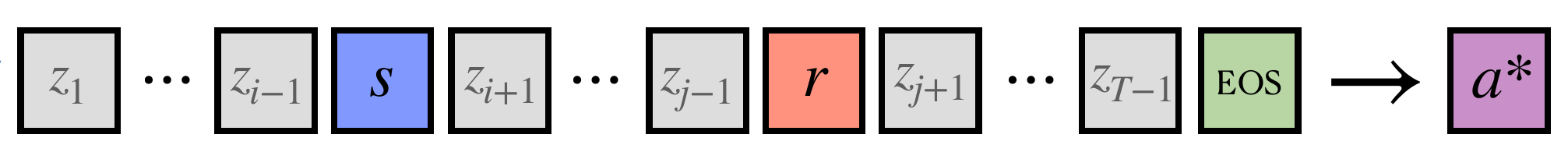}
\caption{A diagram of the synthetic factual recall task.}
\label{fig:task-diagram}
\end{figure}

\subsection{The Model: One-Layer Transformer}

Our learner for the task is a single layer of multi-head self attention followed by an MLP. Define $d$ to be the embedding dimension. The input to the transformer is a sequence of vectors $\mX := (\vx_1, \dots, \vx_T)^\top \in \sR^{T \times d}$. Each self attention head is parameterized by the key, query, and value matrices $\mW_K, \mW_Q, \mW_V \in \sR^{d_h \times d}$, where $d_h$ is the \emph{head dimension}. The self attention head is then a map $\attn(~\cdot~; \mW_K, \mW_Q, \mW_V) : \sR^{T \times d} \rightarrow \sR^{d_h}$, which operates as
\begin{align}
    \attn(\mX; \mW_K, \mW_Q, \mW_V) = \mW_V\mX^\top\S\qty(\mX\mW_K^\top\mW_Q\vx_T),
\end{align}
where $\S(\vz)_i = \frac{\exp(z_i)}{\sum_j \exp(z_j)}$ is the softmax operator.

A multi-head self-attention layer with $H$ heads is parameterized by $H$ different key, query, and value matrices, along with $H$ output matrices. Let $\vtheta := \{(\mW^{(h)}_K, \mW^{(h)}_Q, \mW^{(h)}_V, \mW^{(h)}_O)\}_{h \in [H]}$, where $\mW^{(h)}_K, \mW^{(h)}_Q, \mW^{(h)}_V, \mW^{(h)}_O \in \sR^{d_h \times d}$. A multi-head self-attention layer is then a map $F_{\mathrm{MHSA}}(~\cdot~; \vtheta) : \sR^{T \times d} \rightarrow \sR^d$ given by
\begin{align}
    F_{\mathrm{MHSA}}(\mX; \vtheta) = \sum_{h \in [H]} {\mW^{(h)}_O}^\top \attn(\mX; \mW^{(h)}_K, \mW^{(h)}_Q, \mW^{(h)}_V).
\end{align}
Finally, a single-layer transformer combines a multi-head self-attention layer with an MLP. Let $m$ be the MLP width. Let $\mV, \mW \in \sR^{m \times d}$ be the MLP parameters, and define $\vtheta_{\mathrm{TF}} := \vtheta \cup \{\mV, \mW\}$. Then, a single-layer transformer is the map $F_{\mathrm{TF}}(~\cdot~; \vtheta_{\mathrm{TF}}) : \sR^{T \times d} \rightarrow \sR^d$ given by
\begin{align}
    F_{\mathrm{TF}}(\mX; \vtheta_{\mathrm{TF}}) = F_{\mathrm{MHSA}}(\mX; \vtheta) + \mV^\top\sigma\qty(\mW F_{\mathrm{MHSA}}(\mX; \vtheta)).
\end{align}

A single-layer transformer is parameterized by the tuple of hyperparameters $(d, H, d_h, m)$. The model has $4Hdd_h$ self-attention parameters, and $2md$ MLP parameters.

\subsection{One-Layer Transformers have (Almost) Linear Storage Capacity}

We next characterize how large a single-layer transformer must be in order to obtain 100\% accuracy on the synthetic task. For each token $z \in \gV$, sample its embedding vectors $\vphi(z) \in \mathbb{R}^d$ i.i.d uniformly over the sphere of radius 1. An input sequence $(z_1, \dots, z_T)$ gets embedded as $\mX = (\vphi(z_1), \dots, \vphi(z_T))^\top$. We use argmax decoding to predict the next token; that is,
\begin{align}
    \hat f(z_{1:T}) = \arg\max_{z \in \gV} \vphi(z)^\top F_{\mathrm{TF}}(\mX; \vtheta_{\mathrm{TF}}).
\end{align}

Our first result is that there exists an attention-only single-layer transformer that obtain 100\% accuracy on the factual recall task, as long as the total number of self-attention parameters $4Hdd_h$ scales (up to logarithmic factors) linearly with the dataset size $SR$.

% \enote{do I want to present more informal versions of these theorems?} \anote{I wouldn't be against it, in particular for the MLP, that's a lot of log factors..}

\begin{theorem}[Attention-only, informal]\label{thm:fact-recall-attn-only}
    Assume that $d \ge \tilde \Omega(\max(R, D))$ and $Hd_h \ge \tilde\Omega(S + R)$. With high probability over the embeddings, there exists a single-layer attention-only transformer $F_{\mathrm{TF}}(~\cdot~; \vtheta_{\mathrm{TF}})$ with embedding dimension $d$, number of heads $H$ and head dimension $d_h$ such that
    \begin{align}
        \mathbb{P}_{z_{1:T+1} \sim \mathcal{D}}\left[\arg\max_{z \in \gV} \vphi(z)^\top F_{\mathrm{TF}}(\mX; \vtheta_{\mathrm{TF}}) = z_{T+1}\right] = 1.
    \end{align}
\end{theorem}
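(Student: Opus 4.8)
The plan is to exhibit the transformer explicitly, using each head's key/query matrices purely as a \emph{selector} that filters out noise tokens and its value/output matrices as linear associative memories in the spirit of \Cref{sec:AM} (the MLP contributes nothing, e.g.\ by taking $\mV=0$). The key structural observation is that the query is always $\vphi(\mathrm{EOS})$, so within a head the pre-softmax score of a token $z$ is a fixed scalar depending only on $z$, and a head can therefore only output a fixed linear image of a convex combination of the token embeddings present in the prompt. I would split the $H$ heads into a small family $\gH_R$ of \emph{relation heads} and a family $\gH_S$ of \emph{subject heads}. Since $d\gtrsim R\poly\log$, a single relation head with $\mW_K^\top\mW_Q=\lambda\,\big(\sum_{r\in\gR}\vphi(r)\big)\vphi(\mathrm{EOS})^\top$ and $\lambda$ polylogarithmically large puts all but a negligible fraction of its attention mass on the (unique) relation token, so its contribution is essentially $\mW_O^\top\mW_V\vphi(r)$. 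Because we cannot afford $d\gtrsim S$, I would partition $\gS$ into blocks $\gS_1,\gS_2,\dots$ of size $\tilde\Theta(d)$ (hence $\tilde\Theta(S/d)$ blocks) and assign one head $h_k$ to each, with $\mW_K^\top\mW_Q=\lambda\,\big(\sum_{s\in\gS_k}\vphi(s)\big)\vphi(\mathrm{EOS})^\top+\mu\,\vphi(\mathrm{EOS})\vphi(\mathrm{EOS})^\top$ for suitable $\lambda\asymp\mu$: when the prompt's subject lies in $\gS_k$ the head attends to it, and otherwise it collapses onto $\mathrm{EOS}$. The block size $\tilde\Theta(d)$ is forced by requiring the random cross-correlations $\langle\vphi(s),\sum_{s'\in\gS_k}\vphi(s')\rangle-1$ to stay below a constant, uniformly over the vocabulary.

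Next I would choose the value/output matrices to realize two associative memories. For the relation head, pick $\mW_O^\top\mW_V$ with $\mW_O^\top\mW_V\vphi(r)=\vw_r:=C\sum_{a\in\gA_r}\vphi(a)$ for every $r\in\gR$; this is an exact interpolation problem with $R\lesssim d$ equations in $\sR^d$, solvable because the $\vphi(r)$ are in general position. For each subject head $h_k$, pick $\mW_O^{(h_k)\top}\mW_V^{(h_k)}$ with $\mW_O^{(h_k)\top}\mW_V^{(h_k)}\vphi(s)=\vv_s:=C''\sum_{a\in\gA_s}\vphi(a)$ for $s\in\gS_k$ and $\mW_O^{(h_k)\top}\mW_V^{(h_k)}\vphi(\mathrm{EOS})=\vzero$ (again exactly solvable, $\tilde\Theta(d)$ equations), and bound its operator norm by a constant using the near-isometry of $\tilde\Theta(d)$ random unit vectors. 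Summing over all heads, and absorbing the softmax leakage and the collapsed subject heads into an error vector whose norm we drive below any prescribed constant by enlarging $\lambda$, gives $F_{\mathrm{MHSA}}(\mX)=\vv_s+\vw_r+\mathrm{err}$. The decisive fact — and the only place \Cref{assume:disjoint-answers} is used — is that $\gA_s\cap\gA_r=\{a^*(s,r)\}$: if $a=a^*(s,r_1)=a^*(s_1,r)$, disjointness of $\gA_{r_1}$ and $\gA_r$ forces $r_1=r$. Hence $\langle\vphi(a^*(s,r)),\vv_s+\vw_r\rangle\approx C+C''$, while any other answer token gains at most one of the two boosts $C,C''$ and otherwise only an $O(\sqrt{\max(R,D)/d})$ cross-correlation term, and any non-answer token gains only such terms from both summands. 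Since $d\gtrsim\max(R,D)\poly\log$, a union bound over the $\le SR$ prompts and the $\abs{\gV}$ candidate tokens shows the argmax equals $a^*(s,r)$ on every sequence in $\mathrm{supp}(\gD)$, so the stated probability is exactly $1$.

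For the count, taking $d_h=\Theta(d)$ we use $H=\tilde\Theta(S/d)+O(1)$ heads, so $Hd_h=\tilde\Theta(S+d)\gtrsim S+R$ (using $d\gtrsim R$) and $4Hdd_h=\tilde\Theta\big((S+\max(R,D))\max(R,D)\big)=\tilde O(SR)$, as claimed; general $d_h$ is handled by splitting or merging blocks (and adding the EOS default to relation heads when $d_h<R$). I expect the main difficulty to be making the three roles of the weight matrices coexist: the key/query products must select the correct one or two positions \emph{uniformly over all prompts} despite $\Theta(\sqrt{R/d})$-scale random fluctuations of the logits (this is what pins the block size to $\tilde\Theta(d)$, forces $d\gtrsim R\poly\log$, and constrains $\lambda,\mu$ to a window), the value/output products must simultaneously interpolate the two memories exactly \emph{and} have $O(1)$ operator norm so that softmax leakage is harmless, and the targets $\vw_r,\vv_s$ must be designed so that decoding their sum against the \emph{global} vocabulary isolates $a^*(s,r)$. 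Carefully tracking how many $\poly\log$ factors each concentration and union-bound step costs is the other place where attention to detail is required.
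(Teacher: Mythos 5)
Your proposal follows the same blueprint as the paper's proof: the key/query matrices filter noise by attending, within each head, to a per-head subset of $\gS\cup\gR$ (defaulting to $\mathrm{EOS}$ when the prompt misses that subset); the value/output matrices act as linear associative memories mapping each selected token $z$ to the superposition $\sum_{a\in\gA_z}\vphi(a)$; and decoding exploits Assumption~\ref{assume:disjoint-answers} via $\gA_s\cap\gA_r=\{a^*(s,r)\}$ so the true answer scores roughly twice any competitor (the paper's arithmetic gives margins $4/5$ vs.\ $1/5$ per head and a total of $\ge 8/5$ vs.\ $\le 7/5$). Your deviations are mostly cosmetic: the paper partitions both $\gS$ and $\gR$ into size-$\le M$ blocks with one head per block, whereas you use a single relation head (exploiting $d\gtrsim R$) plus subject blocks; and the paper realizes the value/output map as a random rank-$d_h$ construction (Lemma~\ref{lem:construct_VO}) controlled by polynomial concentration rather than by exact interpolation.

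Two points in your sketch need repair. First, your claim that the interpolant has ``$O(1)$ operator norm'' is false: the targets $\vv_s=C''\sum_{a\in\gA_s}\vphi(a)$ have norm $\approx C''\sqrt{R}$, so the operator norm is $\Omega(\sqrt{R})$. This does not break the argument -- what matters is the per-answer inner product, and softmax leakage can be suppressed by taking $\lambda$ large relative to $\sqrt{R}$ -- but the bound as stated is wrong and you cannot invoke it as a constant. Second, and more substantively, your treatment of $d_h<d$ is handwaved. When $d_h\ll d$, each head's value/output matrix has rank $\le d_h$, so the block of subjects it can interpolate exactly has size $\lesssim d_h$, not $\lesssim d/\poly\log$ as you wrote; this is precisely the constraint that Lemma~\ref{lem:construct_VO} makes explicit ($d_h\gtrsim\abs{\gV^{(h)}}\poly\log$ together with $d\gtrsim\max_z\abs{\gA^z}\poly\log$), and it is what makes the self-attention parameter count appear as $Hdd_h$ rather than $Hd^2$. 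Your parenthetical ``splitting or merging blocks'' gestures at the right fix -- re-partition with blocks of size $\tilde\Theta(d_h)$ and verify that rank-$d_h$ exact interpolation plus the $\mathrm{EOS}$ default still holds -- but as written this step of the argument is missing, not merely elided.
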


% \begin{restatable}[Attention-only]{theorem}{factrecallattnonly}\label{thm:fact-recall-attn-only}
%     Assume that $d \gtrsim \max(R, D)\cdot\log^6(\abs{\gV}SR/\delta)$ and $Hd_h \gtrsim S \log^6(\abs{\gV}SR/\delta)$. Then, with probability $1 - \delta$, there exists a single-layer attention-only transformer $F_{\mathrm{TF}}(~\cdot~; \vtheta_{\mathrm{TF}})$ with embedding dimension $d$, number of heads $H$ and head dimension $d_h$such that
%     \begin{align*}
%         \mathbb{P}_{z_{1:T+1} \sim \mathcal{D}}\left[\arg\max_{z \in \gV} \vphi(z)^\top F_{\mathrm{TF}}(\mX; \vtheta_{\mathrm{TF}}) = z_{T+1}\right] = 1.
%     \end{align*}
% \end{restatable}

% \begin{theorem}\label{thm:fact-recall}
%     Let $Hdd_h + md \gtrsim |\gS||\gR|\poly\log|\gV|$ and $Hd \gtrsim (|\gS| + |\gR|)\poly\log|\gV|$. Then, there exists a single-layer transformer $F_{\mathrm{TF}}(~\cdot~; \vtheta_{\mathrm{TF}})$ with embedding dimension $d$, number of heads $H$, head dimension $d_h$, and MLP width $m$ such that
%     \begin{align*}
%         \mathbb{P}_{z_{1:T+1} \sim \mathcal{D}}\left[\arg\max_{z \in \gV} \vphi(z)^\top F_{\mathrm{TF}}(\mX; \vtheta_{\mathrm{TF}}) = z_{T+1}\right] = 1.
%     \end{align*}
% \end{theorem}

We next show that a single-layer transformer with an MLP can obtain 100\% accuracy on the factual recall task, if the number of MLP parameters $md$ scales linearly with the dataset size:

\begin{theorem}[Attention + MLP, informal]\label{thm:fact-recall-mlp}
    Assume that $\sigma$ is a polynomial of sufficiently large degree. Define $C(a) = \abs{\{(s, r) : a^*(s, r) = a\}}$.
    Let $(d, H, d_h, m)$ satisfy
    \begin{align}
        d  \ge \tilde \Omega(1) \qquad Hd_h \ge \tilde \Omega(S + R) \qquad m \ge \tilde\Omega(\max_a C(a)) \qquad md \ge \tilde \Omega(SR).
    \end{align}
    Then with high probability over the embeddings there exists a single-layer transformer $F_{\mathrm{TF}}(~\cdot~; \vtheta_{\mathrm{TF}})$ with embedding dimension $d$, number of heads $H$, head dimension $d_h$, and MLP width $m$ such that
    \begin{align}
        \mathbb{P}_{z_{1:T+1} \sim \mathcal{D}}\left[\arg\max_{z \in \gV} \vphi(z)^\top F_{\mathrm{TF}}(\mX; \vtheta_{\mathrm{TF}}) = z_{T+1}\right] = 1.
    \end{align}
\end{theorem}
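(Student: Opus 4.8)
The plan is to decompose the transformer's job into two stages handled by two separate components: the multi-head attention layer first \emph{extracts} from the noisy sequence a clean summary of the subject and relation tokens, and then the MLP acts as a polynomial associative memory (in the sense of the proof sketch for \Cref{thm:mlp-AM}) that maps this summary to the correct answer embedding $\vphi(a^*(s,r))$. Concretely, I would first construct the attention part so that $F_{\mathrm{MHSA}}(\mX; \vtheta) \approx \vphi_{\mathrm{sub}}(s) + \vphi_{\mathrm{rel}}(r)$, or more precisely a vector from which both $s$ and $r$ can be linearly decoded, where $\vphi_{\mathrm{sub}}, \vphi_{\mathrm{rel}}$ are (possibly re-randomized) embedding maps realized through the value and output matrices. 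The key/query matrices must implement the selection mechanism: using that $\vx_T = \vphi(\mathrm{EOS})$ is a fixed query vector, one sets $\mW_K^\top \mW_Q$ so that the softmax concentrates its mass on the positions holding a subject token and a relation token, and is negligible on noise tokens. Since $s$ and $r$ can appear at arbitrary positions, this selection must be content-based, not position-based: one head attends to ``whichever token lies in $\gS$'' and another to ``whichever token lies in $\gR$,'' using the (near-)orthogonality of random sphere embeddings across the disjoint vocabulary blocks $\gS, \gR, \gN$ to make the inner products $\vphi(\mathrm{EOS})^\top \mW_K^\top \mW_Q \vphi(z)$ large exactly for $z \in \gS \cup \gR$. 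This is where the budget $Hd_h \ge \tilde\Omega(S+R)$ enters — by the rank-$m$ version of \Cref{thm:lin-AM}, the combined key-query interaction (and the value read-out of the selected token's identity) across $H$ heads of dimension $d_h$ behaves as an associative memory that can faithfully store $S+R$ associations once $Hd_h \gtrsim (S+R)\poly\log$.

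Second, given the clean summary $\vv_{s,r} := F_{\mathrm{MHSA}}(\mX;\vtheta)$, I would invoke the MLP construction from \Cref{thm:mlp-AM}: sample $\mW$ Gaussian, and choose $\mV$ so that $\mV^\top \sigma(\mW \vv_{s,r}) \approx \sum_{(s',r')} \vphi(a^*(s',r'))\, \langle \vv_{s,r}, \vv_{s',r'}\rangle^k$ for a degree $k \approx \log_d m$. Because the summaries $\vv_{s,r}$ for distinct $(s,r)$ pairs are well-separated (they are sums of two near-orthogonal unit vectors, so $\langle \vv_{s,r}, \vv_{s',r'}\rangle$ is bounded away from $2$ unless $(s,r)=(s',r')$), the polynomial kernel $\langle\cdot,\cdot\rangle^k$ acts as a sharp selector, and the associative-memory analysis gives capacity $\tilde\Theta(md)$ — matching $md \ge \tilde\Omega(SR)$ for the $SR$ facts. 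The residual connection $F_{\mathrm{TF}} = F_{\mathrm{MHSA}} + \mathrm{MLP}(\cdot)$ adds a spurious term $\vv_{s,r}$, but since $\vv_{s,r}$ lies (essentially) in the span of subject and relation embeddings while the target answer embeddings are near-orthogonal to those, this term does not disturb the argmax over $z \in \gV$; I would absorb it into the error budget. The role of the extra condition $m \ge \tilde\Omega(\max_a C(a))$ is to handle collisions: when several $(s,r)$ pairs share the same answer $a$, the corresponding outer products $\vphi(a)\langle\vv_{s,r},\cdot\rangle^k$ all point in the direction $\vphi(a)$ and add coherently, so one needs enough hidden units for concentration to control the $\max_a C(a)$ superimposed copies — exactly analogous to the non-injective subtleties flagged after \Cref{thm:lin-AM}.

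The main obstacle, I expect, is \emph{making the two stages compose cleanly under finite width and random embeddings}: the attention output $\vv_{s,r}$ is not exactly $\vphi_{\mathrm{sub}}(s)+\vphi_{\mathrm{rel}}(r)$ but carries (i) softmax leakage from the $T-2$ noise positions, of size roughly $\mathrm{poly}(T)\exp(-\Omega(\text{gap}))$, and (ii) the superposition cross-talk from the rank-$(Hd_h)$ value associative memory, of size $\tilde O(\sqrt{(S+R)/(Hd_h)})$. Both feed into the polynomial MLP, where a degree-$k$ power can amplify errors by up to $k$ in the exponent, so one must choose the attention-side margins (controlled by scaling $\mW_K^\top\mW_Q$ by a large constant and by taking $d \ge \tilde\Omega(\max(R,D))$, wait — here only $d\ge\tilde\Omega(1)$ is required, so the margin must instead come from $Hd_h$ and from the $1/\sqrt d$-concentration of inner products) large enough that the propagated error stays $o(1)$ relative to the separation of the answer embeddings. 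Carefully tracking these two error sources through the Hermite expansion and the NTK step used in \Cref{thm:mlp-AM} — and verifying that a single choice of polynomial degree $k$ and width $m$ simultaneously satisfies the capacity bound $md\gtrsim SR$, the collision bound $m\gtrsim\max_a C(a)$, and the error-amplification constraint — is the technical heart of the argument. A union bound over all $(s,r)\in\gS\times\gR$ and all competing output tokens $z\in\gV$, using the independence and sub-Gaussianity of the sphere embeddings, then closes the high-probability statement. The full details are deferred to the appendix.
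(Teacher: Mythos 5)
Your proposal is correct and follows essentially the same route as the paper's proof: attention heads, partitioned into groups covering $\gS$ and $\gR$ with key--query matrices built as associative memories keyed on $\vphi(\mathrm{EOS})$, filter out the noise and output a clean sum of subject and relation embeddings, and the MLP then acts as a polynomial associative memory in the style of \Cref{thm:mlp-AM} mapping that sum to $\vphi(a^*(s,r))$, with $m \gtrsim \max_a C(a)$ controlling answer collisions exactly as you describe. The one notable difference is that the paper routes $\vphi(s)$ and $\vphi(r)$ through the output-value matrices into two \emph{orthogonal} subspaces and uses a separable Hermite tensor $\vtphi(s)^{\otimes p}\otimes\vtphi(r)^{\otimes k}$ rather than a plain power of $\langle \vv_{s,r},\vv_{s',r'}\rangle$, so that cross-terms sharing a subject or relation decay like $d^{-\Omega(k)}$ instead of $2^{-k}$, which streamlines the concentration argument (and the softmax leakage you worry about is removed outright by sending the key--query scale $\beta\to\infty$).
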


% \begin{restatable}[Attention + MLP]{theorem}{factrecallmlp}\label{thm:fact-recall-mlp}
%     Let $\epsilon$ be a fixed constant. Assume that $\sigma$ is a degree $q$ polynomial, where $q = C_1/\epsilon$ for some $C_1 > 2$. Assume that $d \ge S^{\epsilon}, R^{\epsilon}$. Define $C(a) = \abs{\{(s, r) : a^*(s, r) = a\}}$.

%     Let $(d, H, d_h, m)$ satisfy
%     \begin{align*}
%         d &\gtrsim (C_2\log(\abs{\mathcal{V}}/\delta)/\epsilon)^{C_3/\epsilon}\\
%         Hd_h & \gtrsim (S + R)\log\qty(\abs{\gV}/\delta)\\
%         m &\gtrsim (C_2\log(\abs{\mathcal{V}}/\delta))^{C_3/\epsilon}\cdot \max_a C(a)\\
%         md &\gtrsim (C_2\log(\abs{\mathcal{V}}/\delta))^{C_3/\epsilon}\cdot SR,
%     \end{align*}

%     Then, with probability $1 - \delta$, there exists a single-layer transformer $F_{\mathrm{TF}}(~\cdot~; \vtheta_{\mathrm{TF}})$ with embedding dimension $d$, number of heads $H$, head dimension $d_h$, and MLP width $m$ such that
%     \begin{align*}
%         \mathbb{P}_{z_{1:T+1} \sim \mathcal{D}}\left[\arg\max_{z \in \gV} \vphi(z)^\top F_{\mathrm{TF}}(\mX; \vtheta_{\mathrm{TF}}) = z_{T+1}\right] = 1.
%     \end{align*}
% \end{restatable}

The proofs of \Cref{thm:fact-recall-attn-only} and \Cref{thm:fact-recall-mlp} are deferred to \Cref{app:task-proofs}.

\paragraph{Remarks.} \Cref{thm:fact-recall-attn-only,thm:fact-recall-mlp} each have two main constraints on the size of the architecture needed to obtain 100\% accuracy. First, the quantity $Hd_h$ must be larger than $S + R$. This corresponds to self-attention having sufficient capacity to filter out the tokens in $\gS \cup \gR$ from the noise tokens $\gN$. For the attention-only architecture, we additionally require $d = \tilde \Omega(\max(R, D))$. When $R \ge D$, the total number of parameters $Hdd_h$ is (up to logs) at least the total number of facts $SR$. For the MLP construction, the second condition is that the number of MLP parameters, $md$, scales nearly linearly with the number of facts $SR$. As such, as long as either the total number of self-attention parameters or the total number of MLP parameters is large enough, 100\% accuracy can be obtained. The single-layer transformer can thus trade off MLP and self-attention parameters while still maintaining perfect accuracy. This phenomenon is reflected in the experiments in \Cref{sec:experiments-task}. We remark that it is straightforward to extend our construction to the case where we only need to store a size $M$ subset of $\gS \times \gR$, where the constraints now become $Hdd_h, md = \tilde \Omega(M)$. 
% \anote{Comment on how this could adapt to subsets of $\gS \times \gR$ of smaller cardinality than~$SR$ if that's all we want to store?}

% the number of attention heads must be large enough, so that the tokens in $\gS \cup \gR$ can be selected, while the noise tokens in $\gN$ can be ignored. The second condition is only that the \emph{total} number of parameters is sufficiently large, and must scale linearly (up to logs) with $|\gS||\gR|$, the total number of unique facts in the dataset. 

\begin{figure}[t!]
\centering
\includegraphics[width=0.9\linewidth]{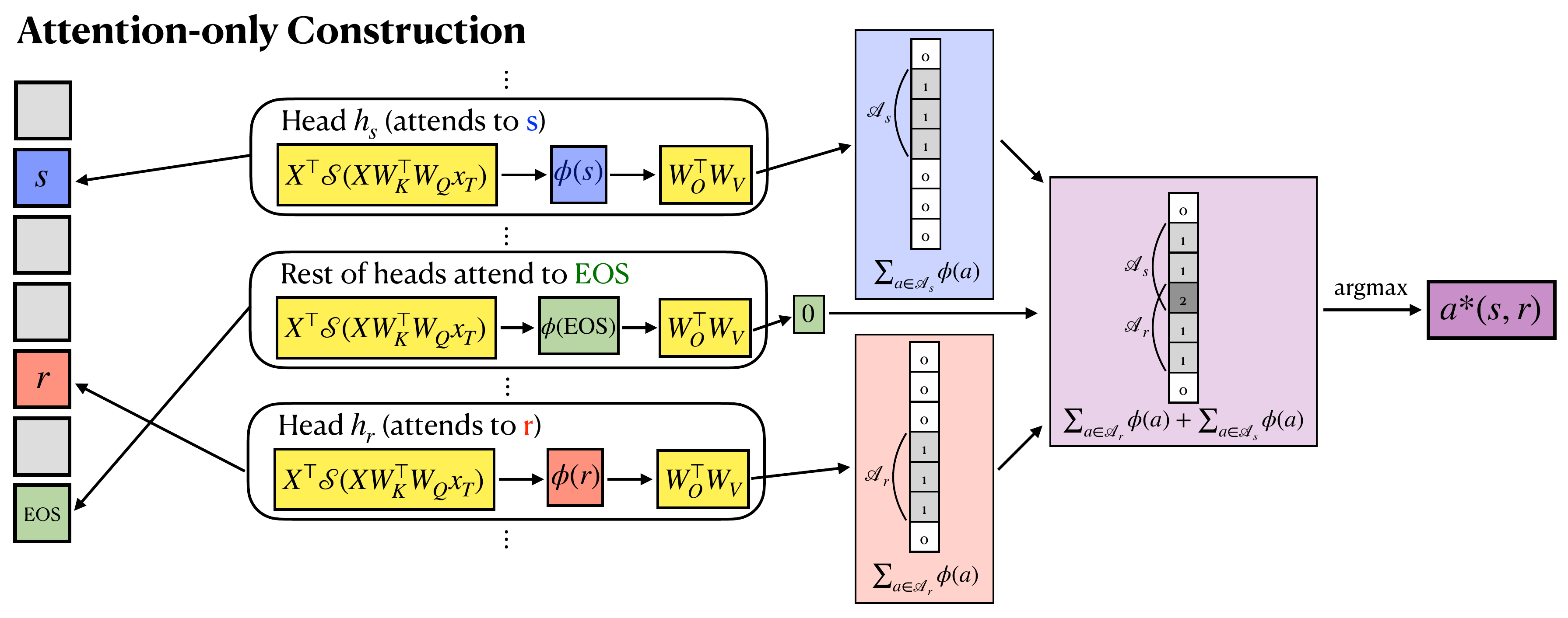}
\includegraphics[width=0.9\linewidth]{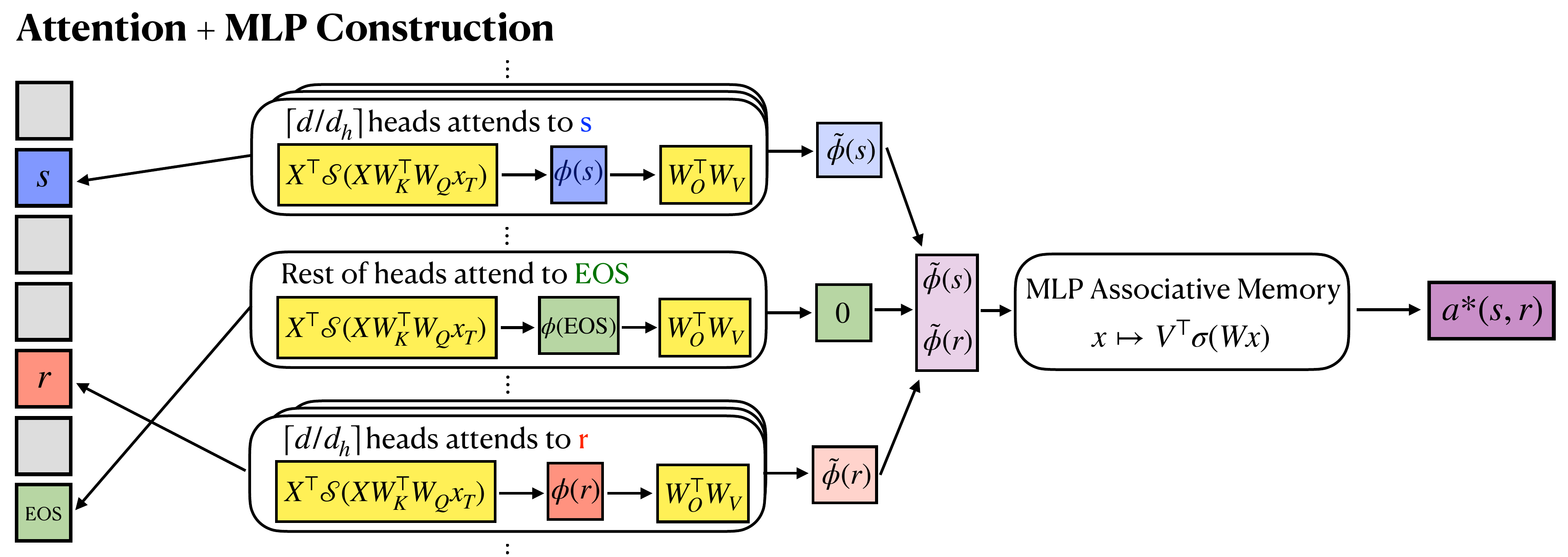}
\caption{Both the Attention-only and Attention+MLP constructions for the factual recall task.}
\label{fig:pf-diagram}
\end{figure}

\paragraph{Proof Sketch.} \Cref{thm:fact-recall-attn-only,thm:fact-recall-mlp} both utilize the associative memory framework of \Cref{sec:AM}. First, the key and query matrices of each self-attention head act as a \emph{denoiser}, selecting the relevant subject and relation tokens in $z_{1:T}$ while ignoring the noise tokens. To the $h$th attention head, we associate a subset $\gS^{(h)} \subset \gS \cup \gR$ of subject and relation tokens. Then, setting
\begin{align}
{\mW_{K}^{(h)}}^\top \mW_Q^{(h)} \approx \beta \sum_{z \in \gS^{(h)}} \vphi(z)\vphi(\mathrm{EOS})^\top
\end{align}
for a large constant $\beta$, we see that the $h$th head will only attend to the tokens in the subset $\gS^{(h)}$. We remark that since the embeddings are $d$-dimensional, at most $d/\poly\log(d)$ embeddings can be in superposition, and thus we must have $\abs{\gS^{(h)}} \le d/\poly\log(d)$.

For the attention-only construction, the output-value matrix ${\mW_{O}^{(h)}}^\top \mW_V^{(h)}$ acts as a linear associative memory, mapping each $z$ in $\gS^{(h)}$ to a superposition of all possible answers associated with the subject/relation $z$. Letting $P_h$ be a projection onto a random $d_h$-dimensional subspace of $\mathbb{R}^d$, we set
\begin{align}
{\mW_{O}^{(h)}}^\top \mW_V^{(h)} \propto \sum_{z \in \gS^{(h)}}\sum_{a \in \gA_z}\vphi(a)\vphi(z)^\top P_h.
\end{align}
In \Cref{lem:construct_VO}, we show that this construction stores at most $d_h$ tokens per head (i.e $\abs{\gS^{(h)}} \lesssim d_h$), and requires the dimension to scale with the number of elements in superposition (i.e $\abs{\gA^z} \lesssim d$). Since $\abs{\gA^z} \le R + D$, and the $\gS^{(h)}$ partition $\gS \cup \gR$, it suffices to take $d \gtrsim R + D$ and $Hd_h \gtrsim S + R$.

For the MLP construction, we instead associate the subset $\gS^{(h)}$ with $\lceil d/d_h \rceil$ attention heads. This is equivalent to having a single full-rank attention head per subset. We set the aggregate output-value matrix to the identity, so that the output of the self-attention layer is $F_{\mathrm{MHSA}}(\mX; \vtheta) = \vphi(s) + \vphi(r)$. Finally, the MLP layer acts as an MLP associative memory, mapping $\vphi(s) + \vphi(r)$ to $\vphi(a^*(s, r))$ for each $(s, r)$ pair. Via a similar computation to \Cref{thm:mlp-AM}, it suffices to make the total number of parameters $md$ be $md = \tilde \Omega(SR)$. Since the $\gS^{(h)}$ partition $\gS \cup \gR$, it suffices to take $Hd_h \gtrsim S + R$ as well. See \Cref{fig:pf-diagram} for a diagram describing both constructions.

\subsection{Empirical Validation}\label{sec:experiments-task}

We next empirically validate the claims of \Cref{thm:fact-recall-attn-only,thm:fact-recall-mlp} that 100\% accuracy can be obtained as long as either the total number of self-attention or MLP parameters scales with $SR$. We further observe that 100\% accuracy can be achieved as long as the \emph{total} number of parameters scales with $SR$, providing evidence that the model can simultaneously use attention and the MLP to store facts.
% \Cref{thm:fact-recall-attn-only,thm:fact-recall-mlp} together show that 100\% accuracy on the factual recall task can be obtained as long as the either the total number of self-attention or MLP parameters scales with $SR$. We empirically validate this claim, and observe a more robust phenomenon: 100\% accuracy can be acheived as long as the \emph{total} number of parameters scales linearly with $SR$. This provides evidence that the model can simultaneously use both the output-value matrix and the MLP to store facts.

In \Cref{fig:linear_scaling}, we train a wide range of models of various ``shapes" on datasets of varying sizes. A model shape is defined by the tuple $(\alpha, \beta, H)$\footnote{The ``standard" transformer scaling takes $\alpha = 1, \beta = 4$.}, and corresponds to the family of models satisfying $Hd_h = \alpha d$ and $m = \beta d$. The total number of model parameters is $4Hd_hd + 2md = (4\alpha + 2\beta)d^2$, which can thus be varied by increasing $d$. For a fixed model size $(d, H, d_h, m)$, we binary search on the largest dataset size that can be memorized. Specifically, we fix $D = 8$ and vary $S, R$ jointly as $S = R$. Experiments with different scalings are considered in \Cref{app:extra-experiments}. For each $(S, R, D)$, the fact dataset is generated at random by selecting $\abs{\gA} = RD$, $\abs{\gN} = S + R$, and for each $s$ sampling $a^*(s, r)$ uniformly at random from $\{(r-1)D + 1, \dots, rD\}$. We say the dataset was successfully memorized, and as such $SR$ facts were stored, if the model can obtain an accuracy of at least 99\%.

On the left panel of \Cref{fig:linear_scaling} we observe that, across different model shapes, the maximum number of facts stored scales linearly with the total number of parameters. On the right panel, we consider a specific dataset with $S = 32, R = 32, D = 8$, and plot the accuracy as the number of parameters vary. We observe that the model can trade off MLP parameters for self-attention parameters, while still maintaining an accuracy of near 1. However, we do still require the total number of attention parameters to be large enough; this corresponds to the $Hd_h = \tilde \Omega(S + R)$ constraint.

\begin{figure}[t!]
\centering
\includegraphics[width=0.56\linewidth]{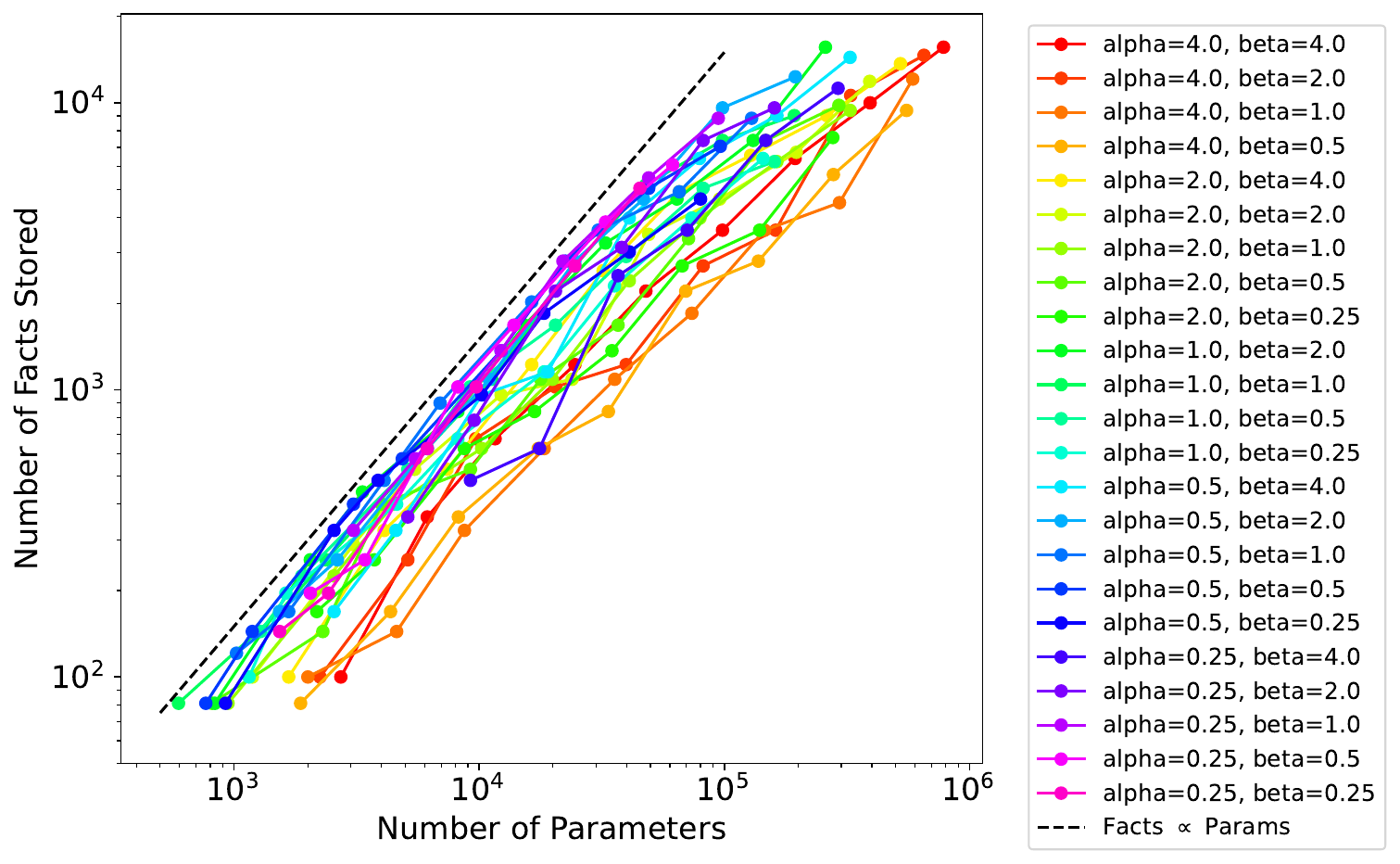}
\includegraphics[width=0.42\linewidth]{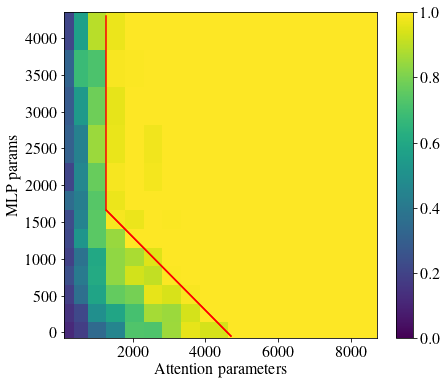}
\caption{(Left) The number of facts stored scales linearly with the total number of parameters, for a wide range of model sizes. (Right) For a fixed dataset, the model can trade off MLP parameters for attention parameters to obtain 100\% accuracy. The heatmap color corresponds to model accuracy.}
\label{fig:linear_scaling}
\end{figure}

\section{Optimization Dynamics}\label{sec:optimization}

We next study the optimization dynamics of the factual recall task. To simplify the model, we consider a linear attention transformer (i.e., the softmax is replaced with the identity map) with orthogonal embeddings. We set $d = \abs{\gV}$, and let the embedding vectors $\{\vphi(z)\}_{z \in \gV}$ satisfy $\langle \vphi(z), \vphi(z') \rangle = \delta_{z = z'}$. Such linear attention or orthogonal embeddings assumptions are common in prior works studying the gradient descent dynamics of transformers~\citep{li2023transformers,von2023transformers,ahn2024transformers,mahankali2024one,zhang2024trained,nichani2024transformers}.

The linear attention model is given by
\begin{align}
    F_{\mathrm{lin}}(\mX; \vtheta) := \mW_{OV} \mX^\top \mX \mW_{KQ} \vx_T,
\end{align}
where we set $d_h = d$ and let $\mW_{OV} := \mW_O^\top \mW_V$, $\mW_{KQ} := \mW_K^\top \mW_Q$ denote the non-factorized output-value and key-query matrices.
% For tokens $a \in \gA, z \in \gV$, define the parameters $\mW_{OV}(a, z) := \vphi(a)^\top \mW_{O}^\top\mW_V\vphi(z), \mW_{KQ}(z) := \vphi(z)^\top \mW_{K}^\top\mW_Q\vphi(\mathrm{EOS})$. A single linear attention head $F_{\mathrm{lin}}(~\cdot~; \vtheta)$ can be rewritten as follows:
% \begin{align}
%     \vphi(a)^\top F_{\mathrm{lin}}(\mX; \vtheta) = \sum_{t=1}^T \mW_{OV}(a, z_t) \mW_{KQ}(z_t).
% \end{align}
% The linear attention head is defined as
% \begin{align}
%     F_{\mathrm{lin}}(\mX; \vtheta) := \mW_O^\top \mW_V \mX^\top \mX \mW_K^\top \mW_Q \vx_T.
% \end{align}
% Setting $d_h = d$, and defining the combined matrices $\mW_{OV} := \mW_O^\top \mW_V$, $\mW_{KQ} := \mW_K^\top \mW_Q$, we can rewrite the model as
% \begin{align*}
%     \vphi(a)^\top F_{\mathrm{lin}}(\mX; \vtheta) &= \vphi(a)^\top \mW_{OV} \mX^\top \mX \mW_{KQ} \vx_T= \sum_{t=1}^T \vphi(a)^\top \mW_{OV}\vphi(z_t) \cdot \vphi(z_t)^\top \mW_{KQ}\vphi(\mathrm{EOS})
% \end{align*}
% Defining $\mW_{OV}(a, z) := \vphi(a)^\top \mW_{OV}\vphi(z). \mW_{KQ}(z) := \vphi(z)^\top \mW_{KQ}\vphi(\mathrm{EOS})$, we can thus write
% \begin{align}
%     \vphi(a)^\top F_{\mathrm{lin}}(\mX; \vtheta) = \sum_{t=1}^T \mW_{OV}(a, z_t) \mW_{KQ}(z_t).
% \end{align}
Let $\hat p(\cdot \mid z_{1:T}) \in \Delta_{\gA}$ be the predicted next token distribution on an input sequence $z_{1:T}$, i.e
\begin{align}
    \hat p(a \mid z_{1:T}) := \frac{\exp\qty(\langle \vphi(a), F_{\mathrm{lin}}(\mX; \vtheta) \rangle)}{\sum_{a'\in \gA}\exp\qty(\langle \vphi(a'), F_{\mathrm{lin}}(\mX; \vtheta) \rangle)}.
\end{align}
One can then rewrite the cross entropy loss as
\begin{align}\label{eq:lin-attn-loss}
    L(\vtheta) = \mathbb{E}_{z_{1:T+1}}\qty[-\log \hat p(z_{T+1} \mid z_{1:T})].
\end{align}

We would like to characterize the output of running gradient flow, (i.e $\dot\vtheta = -\nabla L(\vtheta)$) with respect to the non-factorized parameters $\vtheta := \{\mW_{OV}, \mW_{KQ}\}$ on the cross-entropy loss (\ref{eq:lin-attn-loss}). For notational convenience, we denote $\mW_{OV}(a, z) := \vphi(a)^\top \mW_{OV}\vphi(z), \mW_{KQ}(z) := \vphi(z)^\top \mW_{KQ}\vphi(\mathrm{EOS})$, and note that by isometry gradient flow on $\vtheta$ is equivalent to gradient flow on these quantities.
%the dynamics may be expressed in terms of these quantities. %due to orthogonality, these form a sufficient statistic for $\vtheta$.

% Since the embeddings are orthogonal, the parameters for each token $z$ decouple. Gradient descent on $\mW_{OV}, \mW_{KQ}$ is thus equivalent to gradient descent on the $\mW_{OV}(a, z)$, $\mW_{KQ}(z)$. We thus redefine the parameter vector as $\vtheta := \{\mW_{OV}(a, z)\}_{a \in \gA, z \in \gV} \cup \{\mW_{KQ}\}_{z \in \gV}$.
% \begin{align}\label{eq:lin-attn-loss}
%     L(\vtheta) &:= \mathbb{E}_{z_{1:T+1}}\qty[-\langle \vphi(z_{T+1}), F_{\mathrm{lin}}(\mX; \vtheta) \rangle + \log\qty(\sum_{a \in \gA}\exp\qty(\langle \vphi(a), F_{\mathrm{lin}}(\mX; \vtheta) \rangle)) ]
% \end{align}

% This allows us to rewrite the cross entropy loss concisely as
% \begin{align}\label{eq:lin-attn-loss}
%     L(\vtheta) = \mathbb{E}_{z_{1:T+1}}\qty[-\log \hat p(z_{T+1} \mid z_{1:T})].
% \end{align}

Let us assume that we start from the following ``balanced" initialization. 
\begin{assumption}
    Given an initialization scale $\alpha > 0$, set $\mW_{OV}(a, z) = \alpha$ and $\mW_{KQ}(z) = \alpha\sqrt{\abs{\gA} + 1}$ for each $a \in \gA, z \in \gV$.
\end{assumption}

Our first result is that the gradient flow indeed converges to zero loss. As a consequence, the predicted next token probabilities $\hat p(z_{T+1} \mid z_{1:T})$ converge to $\mathbf{1}(z_{T+1} = a^*(s, r))$, where $s, r$ are the subject and relation contained in the sequnece $z_{1:T}$.
% Since all a minimizer \anote{characterize?} of the cross entropy loss:
\begin{theorem}[Global Convergence]\label{thm:convergence}
    For $t \ge 0$, let $\vtheta(t)$ be the output of running gradient flow for $t$ time. For any $\delta > 0$, there exists a time $t_\delta$ such that for $t \ge t_\delta$, $L(\vtheta(t)) \le \delta$. %\anote{inf?}
\end{theorem}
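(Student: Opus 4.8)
The plan is to analyze the gradient flow on the reduced scalar coordinates $w(a,z) := \mW_{OV}(a,z)$ and $q(z) := \mW_{KQ}(z)$, exploiting the balanced initialization to collapse the dynamics onto a low-dimensional invariant manifold. First I would write out the model output explicitly: on a sequence with subject $s$ at position $i$ and relation $r$ at position $j$, one has $\mX^\top\mX\mW_{KQ}\vphi(\mathrm{EOS}) = \sum_k \vphi(z_k)\, q(z_k)$, which by the noise structure equals $q(s)\vphi(s) + q(r)\vphi(r) + (\text{noise terms with }q(\nu)\text{ for }\nu\in\gN)$; applying $\mW_{OV}$ and then reading off coordinate $a$ gives a logit of the form $q(s)w(a,s) + q(r)w(a,r) + \sum_{\nu}q(\nu)w(a,\nu)$. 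Taking expectations over the data distribution $\mathcal D$, the loss decomposes as a sum over $(s,r)$ in the support of $p$ of a softmax cross-entropy whose logits are affine in these scalars, plus contributions from the noise coordinates which, by symmetry of the uniform noise distribution, all move together.

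Next I would identify the conserved quantities / symmetries. The balanced initialization $w(a,z)\equiv\alpha$, $q(z)\equiv\alpha\sqrt{|\gA|+1}$ is chosen precisely so that the ``$\mW_{OV}$-side'' and ``$\mW_{KQ}$-side'' norms stay balanced along the flow (the standard $\dot{}(\|u\|^2-\|v\|^2)=0$ type invariant for bilinear parametrizations), and so that all tokens playing symmetric roles remain tied: all noise tokens $\nu\in\gN$ keep a common value $q(\nu)$, all ``wrong'' answer coordinates for a given context stay tied, etc. Using these reductions I would argue the flow is well-defined for all time and the relevant coordinates stay bounded below / controlled. The core of the convergence argument is then a Lyapunov / Łojasiewicz-type argument: the loss $L$ is real-analytic in the finitely many reduced coordinates, bounded below by $0$, and I would show that along the trajectory $L$ is strictly decreasing unless $\nabla L = 0$, and that the only critical points reachable from the balanced initialization are global minima (loss $0$). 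The global-minimum structure follows because driving $q(s)w(a^*(s,r),s)+q(r)w(a^*(s,r),r)\to+\infty$ while the competing logits stay bounded sends each per-context cross-entropy term to $0$; one must check the balanced dynamics actually push the signal coordinates to $+\infty$ rather than stalling.

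The concrete mechanism I expect the proof to use — and the main obstacle — is showing the signal coordinates diverge and the cross-entropy vanishes at a quantitative rate, despite the bilinear (non-convex) parametrization. I would split into two regimes matching the ``hallucination then recall'' narrative the paper advertises: an early phase where, because the relation $r$ appears in many training sequences while a given subject $s$ appears in fewer, the coordinates $w(a,r)$ with $a\in\gA_r$ grow first and the model's prediction approaches the relation-conditional $p(a\mid r)$; and a later phase where the subject-specific coordinates $w(a^*(s,r),s)$ and the gates $q(s)$ catch up and sharpen the prediction to the indicator. In each per-context subsystem the gradient flow, after the balanced reduction, looks like a monotone system (logits increasing, softmax weights on correct token increasing), so one can lower-bound the growth of the correct logit by something like $c\cdot(\text{current error})$ and then run a Grönwall / comparison argument to get $L(\vtheta(t))\le \delta$ for $t\ge t_\delta$; to make this uniform over all $(s,r)$ one uses that $p$ has full support with some minimal probability mass $p_{\min}>0$, which controls the slowest-learned context. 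The delicate points are (i) ruling out bad stationary points and saddles on the reduced manifold (handled by the analyticity of $L$ plus the explicit monotonicity, or by a Łojasiewicz inequality on the compactified sublevel sets), and (ii) controlling the coupling between different $(s,r)$ contexts through shared coordinates $w(\cdot,r)$ and shared noise gates $q(\nu)$ — I would show the noise gates $q(\nu)$ contribute only a context-independent additive shift to every logit and hence cancel in the softmax, decoupling the problem, and that the shared relation coordinates only help (they move the prediction toward $\gA_r$, which contains the correct answer by Assumption~\ref{assume:disjoint-answers}).
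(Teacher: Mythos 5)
The paper's proof of this theorem is much shorter and does not track the dynamics at all; it is a pure near-stationarity argument. The three ingredients are: (i) gradient flow on a lower-bounded loss forces $\liminf_{t\to\infty}\norm{\nabla L}=0$, so some time $t_\delta$ has $\norm{\nabla L(\vtheta(t_\delta))}\le\epsilon$ (and by monotone decrease of $L$ one only needs a single such time); (ii) the balancedness invariant $\mW_{KQ}(z)^2 - \sum_a \mW_{OV}(a,z)^2 = \alpha^2$ gives the uniform lower bound $\mW_{KQ}(z)\ge\alpha>0$ for all $z$ and all $t$; (iii) for any answer $a\notin\gA_s$, the indicator in the gradient vanishes and
\begin{align*}
\partial_{\mW_{OV}(a,s)} L = \mW_{KQ}(s)\, p(s)\, \E_{z_{1:T}}\bigl[\hat p(a\mid z_{1:T}) \mid s\in z_{1:T}\bigr] \ge 0,
\end{align*}
so near-stationarity and $\mW_{KQ}(s)\ge\alpha$ force $\E[\hat p(a\mid\cdot)\mid s,r]\le \epsilon/(\alpha\, p(s,r))$. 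The same bound holds for $a\notin\gA_r$ via $\partial_{\mW_{OV}(a,r)}L$, and Assumption~\ref{assume:disjoint-answers} guarantees that every $a\ne a^*(s,r)$ is excluded from $\gA_s$ or from $\gA_r$. Summing over wrong answers shows $\hat p(a^*(s,r)\mid\cdot)\ge 1-\text{small}$ for every sequence, and the cross-entropy loss follows by a $-\log(1-z)\le 2z$ bound.

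Your proposal is a genuinely different and much heavier route: it is essentially the machinery the paper reserves for Theorem~\ref{thm:sequential} (the staged dynamics result, which carries extra hypotheses like $S\gg R$) applied to the weaker statement of Theorem~\ref{thm:convergence}. Two concrete gaps would block it. First, the Łojasiewicz / analytic-gradient-flow argument you invoke guarantees convergence to a \emph{finite} critical point, but here the infimum $L=0$ is approached only as $\norm{\mTheta}\to\infty$; there is no finite global minimizer to land on, sublevel sets are not compact, and the standard Łojasiewicz machinery does not apply off the shelf. Second, the claimed decoupling of the noise gates is false: the logit contribution from a noise token $\nu$ is $q(\nu)\,w(a,\nu)$, which depends on $a$, and the coordinates $w(a,\nu)$ do \emph{not} stay tied across $a$ along the flow, since $\partial_{w(a,\nu)}L$ involves $\E[C(z_{1:T},\nu)(\mathbf{1}(a=a^*(z_{1:T}))-\hat p(a\mid z_{1:T}))]$, which distinguishes answers by how often they are the target. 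The key idea you are missing is that one never needs to understand the trajectory: the \emph{sign structure} of $\partial_{\mW_{OV}(a,\cdot)}L$ for wrong answers, combined with the lower bound on $\mW_{KQ}$ from balancedness and the disjoint-answer assumption, converts ``small gradient'' directly into ``correct prediction,'' which is exactly how the paper sidesteps both the non-convexity and the divergence-to-infinity issues.
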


We next show that the model undergoes a sequential learning dynamics. Let us assume that the number of subjects $S$ is much greater than the number of facts $R$. We show that during the first stage of training only the $\mW_{OV}(a, r)$ and $\mW_{KQ}(r)$ components grow for relations $r \in \gR$, while the remainder of the parameters stay close to zero. As such, the model gets close to outputting the best predictor based on just the relation token $r$. Define $p^*(\cdot \mid r)$ to be the conditional distribution of the answer, given the relation $r$, i.e $ p^*(a \mid r) := \sum_{s \in \gS} p(s \mid r) \mathbf{1}(a = a^*(s, r)).$
% \begin{align*}
%     p^*(a \mid r) := \sum_{s \in \gS} p(s \mid r) \mathbf{1}(a = a^*(s, r)).
% \end{align*}

% \begin{theorem}[Sequential Learning]\label{thm:sequential}
%     Assume that $S \ge 8R\sqrt{2D}$, and $\abs{\gN} \ge 4R\sqrt{2D} T$. Let $p(s, r) = \frac{1}{SR}$. Pick $\epsilon > 0$. There exists runtime $T^*$ and initialization scale $\alpha$ (both depending on $\epsilon$) such that:
%     \begin{enumerate}
%         \item For all $t \le T^*$ and $z \in \gS \cup \gN, a \in \gA$,
%         \begin{align*}
%             \mW_{OV}(a, z), \mW_{KQ}(z) \le \alpha^{1/2}
%         \end{align*}
%         \item There exists $t \le T^*$ such that, for any input sequence $z_{1:T}$ containing a relation $r$,
%         \begin{align*}
%             \sum_{a \in \gA}\qty(p^*(a \mid r) - \hat p(a \mid z_{1:T}))^2 \le \epsilon^2.
%         \end{align*}
%     \end{enumerate}
% \end{theorem}

\begin{theorem}[Sequential Learning]\label{thm:sequential}
    Assume that $S \ge 8R\sqrt{2D}$, and $\abs{\gN} \ge 4R\sqrt{2D} T$. Let $p(s, r) = \frac{1}{SR}$. Pick $\epsilon > 0$. There exists runtime $T^*$ and initialization scale $\alpha$ (both depending on $\epsilon$) such that:
    \begin{enumerate}
        \item For all $t \le T^*$ and $z \in \gS \cup \gN, a \in \gA$, we have $\abs{\mW_{OV}(a, z)}, \abs{\mW_{KQ}(z)} \le \alpha^{1/2}$
        % \begin{align*}
        %     \abs{\mW_{OV}(a, z)}, \abs{\mW_{KQ}(z)} \le \alpha^{1/2}
        % \end{align*}
        \item There exists $t \le T^*$ such that, for any input sequence $z_{1:T}$ containing a relation $r$,
        %$\sum_{a \in \gA}\qty(p^*(a \mid r) - \hat p(a \mid z_{1:T}))^2 \le \epsilon^2.$
        \begin{align}
            \sum_{a \in \gA}\qty(p^*(a \mid r) - \hat p(a \mid z_{1:T}))^2 \le \epsilon^2.
        \end{align}
    \end{enumerate}
\end{theorem}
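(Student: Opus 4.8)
The plan is to exploit the isometry of the orthogonal embeddings, which by the remark before Assumption~2 reduces gradient flow on $\vtheta$ to a coupled ODE system on the scalars $\mW_{OV}(a,z)$ and $\mW_{KQ}(z)$. First I would write down these ODEs explicitly. On an input sequence $z_{1:T}$ containing subject $s$ at position $i$ and relation $r$ at position $j$ (with the remaining tokens being noise), the linear attention readout is $F_{\mathrm{lin}}(\mX;\vtheta) = \sum_{k} \langle \vx_k, \mW_{KQ}\vphi(\mathrm{EOS})\rangle \mW_{OV}\vx_k$, and by orthogonality this collapses to $\mW_{KQ}(s)\,\mW_{OV}(\cdot,s) + \mW_{KQ}(r)\,\mW_{OV}(\cdot,r) + \sum_{n\in\mathcal N \text{ used}}\mW_{KQ}(n)\mW_{OV}(\cdot,n)$. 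Under the balanced initialization and $p(s,r)=\tfrac1{SR}$, a symmetry argument shows that throughout training all noise coordinates share a common value, all $(a^*(s,r),s)$ ``diagonal'' entries of $\mW_{OV}$ share a value, all off-diagonal answer-subject entries share another, and similarly for the relation block; so the dynamics is governed by a constant-size ODE system whose right-hand sides are softmax-weighted expectations of indicator mismatches. The key structural fact is that the ``signal'' driving growth of a coordinate $\mW_{OV}(a,z)$ is proportional to how often $z$ appears together with answer $a$ and how much probability mass is currently misallocated; because each relation $r$ co-occurs with a $1/R$ fraction of sequences while each subject $s$ co-occurs with only a $1/S\ll 1/R$ fraction, the relation coordinates receive a gradient signal that is larger by a factor $\Theta(S/R)$.

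The heart of the argument is a two-timescale / separation-of-scales analysis, in the style of the tensor-decomposition and two-layer-net literatures (e.g.\ the ``sequential learning'' phenomena in~\citep{nichani2024transformers}). I would (i) show that as long as the subject and noise coordinates remain $O(\alpha^{1/2})$, the relation subsystem $\{\mW_{OV}(a,r),\mW_{KQ}(r)\}$ evolves (to leading order) as an autonomous low-dimensional gradient flow on the ``relation-only'' loss $\mathbb E_r[-\log \hat p_r(z_{T+1}\mid r)]$ whose population minimizer is exactly $p^*(\cdot\mid r)$, and that this subsystem escapes the $\alpha$-scale neighborhood and approaches its minimizer in time $O(\alpha^{-c})$ for a suitable $c$; (ii) run a Gr\"onwall/bootstrap bound on the subject and noise coordinates showing their growth rate is suppressed by the $R/S$ (resp.\ $1/|\mathcal N|$) factor, so that over the whole window $[0,T^*]$ they stay below $\alpha^{1/2}$ — this is where the hypotheses $S\ge 8R\sqrt{2D}$ and $|\mathcal N|\ge 4R\sqrt{2D}\,T$ enter, controlling the ratio of timescales; and (iii) at the time $t$ when the relation subsystem is $\epsilon^2$-close to optimal, translate closeness-in-parameters to closeness of $\hat p(\cdot\mid z_{1:T})$ to $p^*(\cdot\mid r)$ via Lipschitzness of softmax, using part~(i)'s bound to argue the small subject/noise contributions perturb the logits by at most $O(\alpha^{1/2})$, hence the output distribution by $o(\epsilon)$.

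Two auxiliary points need care. The ``$\sum_a (p^*(a\mid r)-\hat p(a\mid z_{1:T}))^2\le\epsilon^2$'' conclusion is a statement about every sequence containing $r$, not an average, so I would verify that the relation-only logit vector $\mathbf W_{KQ}(r)\mathbf W_{OV}(\cdot,r)$ depends on $z_{1:T}$ only through $r$ (true by orthogonality, since the subject contributes a near-zero term), and then invoke the standard bound that a $\delta$-suboptimal point of the strictly convex relation-only loss gives an $O(\delta)$-close softmax output — choosing $T^*$ and $\alpha$ to convert $\epsilon$ into the required precision. I would also need a short computation that the relation-only loss is indeed minimized at $p^*(\cdot\mid r)$ with the minimizer being approached (not just inf'd) along the flow — this follows because it is a smooth convex function of the logits with the logits an unbounded reparametrization, analogous to the global convergence in Theorem~\ref{thm:convergence}, so I would reuse that machinery restricted to the relation block.

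The main obstacle I anticipate is step~(ii): rigorously decoupling the subsystems. The subject/noise coordinates are not exactly frozen — they receive a genuine, if small, gradient, and moreover the growing relation coordinates feed back into the softmax normalization that appears in the subject/noise gradients, so one cannot simply treat the relation block as an external forcing term. The clean way around this is a bootstrap: assume $|\mathbf W_{OV}(a,z)|,|\mathbf W_{KQ}(z)|\le\alpha^{1/2}$ for $z\in\gS\cup\gN$ on $[0,\tau]$, derive under that assumption that their time derivatives are $O(\alpha^{1/2}\cdot (R/S + T/|\mathcal N|) \cdot (\text{something bounded}))$ plus an initial $O(\alpha)$ drift, integrate to get a bound strictly below $\alpha^{1/2}$ on $[0,T^*]$, and close the loop — with the constants $8$ and $4$ in the hypotheses chosen precisely to make the bootstrap constant beat $1$. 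Getting the bookkeeping of these constants right, while simultaneously ensuring $T^*$ is long enough for the relation subsystem to converge to within $\epsilon$, is the delicate part and will dictate the exact dependence of $\alpha$ on $\epsilon$.
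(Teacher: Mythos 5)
Your plan follows essentially the same route as the paper: a two--timescale bootstrap in which the subject and noise coordinates are shown (via Gr\"onwall) to grow at rates proportional to $1/S$ and $T/\abs{\gN}$ respectively, the relation block is analyzed as an approximately autonomous flow toward the relation-only predictor $p^*(\cdot\mid r)$ under the standing assumption that the other coordinates stay at scale $\alpha^{1/2}$, and the per-sequence conclusion is recovered by noting the subject/noise logits perturb the output by $O(T\alpha)$.

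One quantitative claim in step (i) would break the argument if taken literally: you assert the relation subsystem escapes the $\alpha$-neighborhood and converges in time $O(\alpha^{-c})$. The escape is driven by exponential growth at a rate $\gamma_r \asymp p(r)\norm{p^*(\cdot\mid r)-p_0}$ that is \emph{independent} of $\alpha$, so the escape time is $\Theta(\log(1/\alpha))$, and the subsequent convergence phase adds a term depending only on $\epsilon$; hence $T^*\propto\log(1/\alpha)$. This is essential for closing the bootstrap: the subject coordinates obey $\mW_{KQ}(s)\le e^{2p(s)t}\,\alpha\sqrt{\abs{\gA}+1}$, which stays below $\alpha^{1/2}$ over $[0,T^*]$ only because $e^{2p(s)T^*}$ is a power $\alpha^{-O(R\sqrt{D}/S)}$ of $1/\alpha$ that the prefactor $\alpha$ beats precisely when $S\ge 8R\sqrt{2D}$ (similarly for noise with $\abs{\gN}\ge 4R\sqrt{2D}\,T$). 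With $T^*=\alpha^{-c}$ the factor $e^{2T^*/S}$ explodes as $\alpha\to 0$ and no choice of constants saves the bootstrap. A second, minor point: your proposed symmetry reduction to a constant-size ODE system requires structure on $a^*$ (e.g., all answers having equally many preimages) that is not assumed; the paper avoids it by working with the balancedness invariant $\mW_{KQ}(z)^2-\sum_a\mW_{OV}(a,z)^2$ and coordinatewise Gr\"onwall bounds, which is the cleaner path here.
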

Proofs of \Cref{thm:convergence,thm:sequential} are deferred to \Cref{app:opt_proofs}.

\paragraph{Remarks.} \Cref{thm:sequential} tells us that at some intermediate time, the prediction of the model $\hat p(\cdot \mid z_{1:T})$ is approximately equal to $p^*(\cdot \mid r)$, the conditional distribution of the answer given the relation $r$. At this stage, the model ignores all other tokens in the sequence $z_{1:T}$ -- including the useful subject token $s$ -- and predicts based only on the relation $r$. For example, if $\gS$ is the set of all countries and $r$ is the relation ``capital," then on the prompt ``What is the capital of France?" the model will output a random countries' capital. We view this as an instance of \emph{hallucination}: the model is outputting a plausible, yet ultimately incorrect, answer to the prompt. We remark that without the assumption that $S \gg R$, it is possible for this intermediate hallucination stage to exhibit different behavior.

\begin{figure}
\centering
\includegraphics[width=0.45\linewidth]{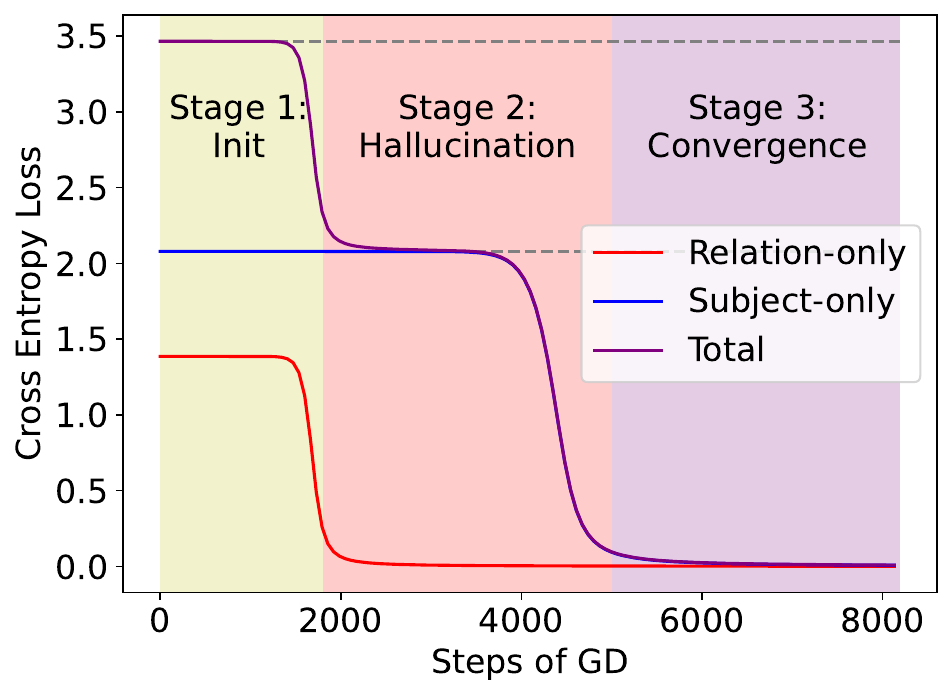}
\includegraphics[width=0.45\linewidth]{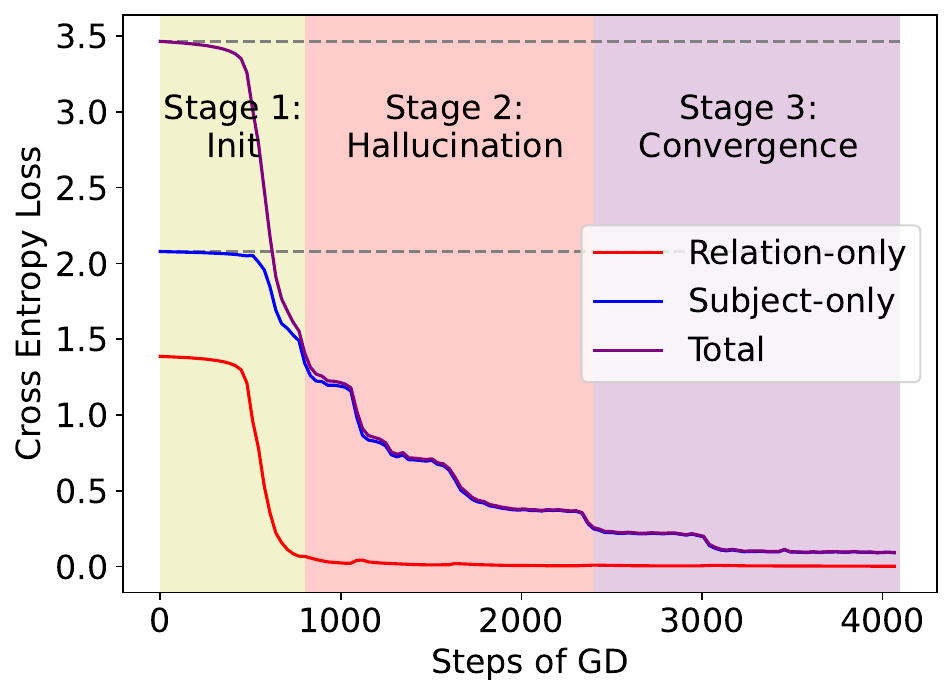}
\caption{(Left) Loss of the linear attention model with orthogonal embeddings. There is an intermediate \emph{hallucination} stage where the loss plateaus and the model predicts based on only the relation. (Right) Loss of the softmax attention model with random embeddings. We again observe an intermediate hallucination stage, where the relation-only loss is zero but the total loss is still large.}
\label{fig:sequential}
\end{figure}

\paragraph{Empirical Validation.} We next empirically verify \Cref{thm:convergence,thm:sequential}. We first train the linear attention model with orthogonal embeddings (\ref{eq:lin-attn-loss}) with $S = 16, R = 4$ and $D = 8$, and plot the loss over time. In the left pane of \Cref{fig:sequential}, we observe three distinct stages. At the start of training, the prediction is close to uniform over all possible answers, and the model obtains a loss of $\log\abs{\gA}$. Next, the loss plateaus at $\log D$, and the model outputs the conditional distribution of $a$ given the relation $r$. Finally, as training continues, the model escapes the plateau and converges to zero loss.
We include the ``relation-only loss" in the plot, defined as $\mathbb{E}_{z_{1:T+1}}\qty[-\log \qty(\sum_{a \in \gA_r}p(a \mid z_{1:T}))]$, where any probability mass assigned to an answer which is valid for the relation $r$ is considered to be correct; the subject-only loss is defined analogously.

In the right pane of \Cref{fig:sequential}, we plot the loss of a single softmax attention head with random embeddings trained on the same factual recall task. We observe similar phenomenology as for linear attention, and identify an intermediate ``hallucination" stage where the relation-only loss drops to zero, but the subject-only loss is still far from zero.
\vspace{-0.5em}
% While an intermediate plateau is now not so clearly defined, we can plot the relation-only accuracy, i.e the probability that $\arg\max_a \hat p(a \mid z_{1:T})$ is contained in $\gA_r$. We observe that early in training (around step 800) the relation-only accuracy of the model jumps to 100\%. However, the cross-entropy loss is still far from zero, implying that at this stage the model is outputting a hallucination.

\section{Lower Bounds}\label{sec:LBs}
% \vspace{-0.5em}
In this section, we argue via information-theoretic arguments that the results from \Cref{sec:AM,sec:task} are optimal up to logarithmic factors. Proofs are deferred to \Cref{app:LB_proofs}.

% So far, we have shown that both associative memories and a single-layer transformer can store facts at a rate proportional to their parameter count. We next argue via simple information-theoretic arguments that this is optimal up to log factors. Proofs from this section are deferred to \Cref{app:LB_proofs}.

\paragraph{Associative Memories.} Let $[N]$ and $[M]$ be the input and output vocabularies, respectively. To establish a lower bound, we must consider a \emph{distribution} over association functions $f^*$. For each $x \in [N]$, assume that the output $f^*(x)$ is sampled independently from the uniform distribution over $[M]$. We model the learning protocol as follows. At train time, the learner observes the randomly sampled ground truth $f^*$, and writes down a $B$ bit model $\mF$. At test time, the learner generates a set of predictions $\hat f \in \mathbb{R}^{N \times M}$ from $\mF$, where $\hat f(x) \in \Delta_M$ is the prediction for $f^*(x)$. Both the mappings $f^* \rightarrow \mF$ and $\mF \rightarrow \hat f$ can be randomized. Let $p$ be a probability distribution over the input space $[N]$; assume WLOG that $p(1) \ge p(2) \ge \cdots \ge p(N)$. The goal of the learner is to minimize the cross entropy loss $L(\hat f) = -\sum_{x \in [N]} p(x)\log \hat f(x)_{f^*(x)}.$

\begin{theorem}\label{thm:LB_simple}
The expected loss of the learner can be lower bounded by
    \begin{align}
        \mathbb{E}_{f^*, \hat f}\qty[L(\hat f)] \ge \log M \cdot \sum_{x \ge \lceil \frac{B}{\log M} \rceil}p(x).
    \end{align}
\end{theorem}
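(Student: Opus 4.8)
The plan is to use a standard information-theoretic argument: since $\mF$ has only $B$ bits, it cannot retain enough information about $f^*$ to predict $f^*(x)$ well for too many inputs $x$, and cross entropy loss on those ``uncertain'' inputs is bounded below by a function of the residual entropy. First I would introduce the random variables explicitly: let $Y_x := f^*(x)$, so $Y_1, \dots, Y_N$ are i.i.d.\ uniform on $[M]$, hence $H(Y_x) = \log M$ and they are mutually independent, so $H(Y_1, \dots, Y_N) = N \log M$. The model $\mF$ is a (possibly randomized) function of $(Y_1, \dots, Y_N)$ taking values in a set of size at most $2^B$, so $H(\mF) \le B$, and therefore $I((Y_1,\dots,Y_N) ; \mF) \le H(\mF) \le B$. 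Since the $Y_x$ are independent, the chain rule gives $\sum_{x \in [N]} I(Y_x ; \mF) \le I((Y_1,\dots,Y_N); \mF) \le B$, i.e.\ $\sum_x \big(\log M - H(Y_x \mid \mF)\big) \le B$, equivalently $\sum_x H(Y_x \mid \mF) \ge N\log M - B$.

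Next I would lower bound the cross entropy loss by conditional entropy. The prediction $\hat f(x) \in \Delta_M$ is a (randomized) function of $\mF$ only, so it is independent of $Y_x$ given $\mF$; by the standard fact that expected cross entropy is at least the conditional entropy (Gibbs' inequality / non-negativity of KL), $\mathbb{E}_{f^*,\hat f}\big[-\log \hat f(x)_{Y_x}\big] \ge H(Y_x \mid \mF)$. Summing against the weights $p(x)$,
\begin{align}
    \mathbb{E}_{f^*,\hat f}[L(\hat f)] \ge \sum_{x \in [N]} p(x)\, H(Y_x \mid \mF).
\end{align}
Now I need to turn the aggregate constraint $\sum_x H(Y_x\mid\mF) \ge N\log M - B$ into a bound on the $p$-weighted sum. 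Write $h_x := H(Y_x \mid \mF) \in [0, \log M]$, so $\sum_x h_x \ge N\log M - B$, i.e.\ $\sum_x (\log M - h_x) \le B$, and $\log M - h_x \in [0,\log M]$. Since $p(1)\ge\cdots\ge p(N)$, the $p$-weighted sum $\sum_x p(x)(\log M - h_x)$ is maximized, subject to $\sum_x(\log M - h_x)\le B$ and each term in $[0,\log M]$, by putting all the ``deficit'' mass on the smallest-index (largest-$p$) coordinates: set $\log M - h_x = \log M$ for $x = 1, \dots, \lfloor B/\log M\rfloor$ and absorb the remainder into the next coordinate. This gives $\sum_x p(x)(\log M - h_x) \le \log M \cdot \sum_{x < \lceil B/\log M\rceil} p(x) = \log M\big(1 - \sum_{x\ge\lceil B/\log M\rceil} p(x)\big)$. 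Therefore
\begin{align}
    \mathbb{E}_{f^*,\hat f}[L(\hat f)] \ge \sum_x p(x)\log M - \sum_x p(x)(\log M - h_x) \ge \log M \cdot \sum_{x \ge \lceil B/\log M\rceil} p(x),
\end{align}
which is the claimed bound.

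The main obstacle is not any single step but making the information-theoretic bookkeeping fully rigorous when the maps $f^* \to \mF$ and $\mF \to \hat f$ are randomized: one must be careful that $\hat f(x) \perp Y_x \mid \mF$ (the test-time randomness is independent of $f^*$), and that ``$B$ bit model'' is interpreted as $\mF$ supported on at most $2^B$ values so that $H(\mF)\le B$. The rearrangement step also deserves a clean one-line justification — it is just the observation that moving deficit from a lower-weight coordinate to a higher-weight one (respecting the box constraints) never decreases the weighted sum, so the extremal configuration is the greedy one described above; alternatively one can phrase it directly as $\sum_x p(x)(\log M - h_x) \le p(\lceil B/\log M\rceil)\cdot$(leftover) $+ \log M\sum_{x<\lceil B/\log M\rceil}p(x) \le \log M \sum_{x < \lceil B/\log M\rceil} p(x)$ using monotonicity of $p$. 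Everything else is routine.
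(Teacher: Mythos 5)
Your proof is correct and follows essentially the same route as the paper's: bound the mutual information between $f^*$ and the learner's output by $B$, split it across coordinates using independence of the $f^*(x)$, lower-bound the per-coordinate cross entropy by the residual conditional entropy via Gibbs' inequality, and finish with the greedy rearrangement against the monotone $p$. The only cosmetic difference is that you condition on $\mF$ directly (using $\hat f(x) \perp f^*(x) \mid \mF$) where the paper conditions on $\hat f$ and reaches $\mF$ via the data-processing inequality; note also that your greedy step, like the paper's own theorem statement, is off by one boundary term (the extremal allocation actually yields $\log M \cdot \sum_{x > \lceil B/\log M\rceil} p(x)$, which is what the paper's proof derives), an indexing triviality that is immaterial to how the bound is used.
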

\vspace{-1em}
We thus see that in order to obtain zero loss, the learner must use $B \ge N \log M$ bits; this matches the construction from \Cref{thm:mlp-AM} up to log factors. As a corollary of \Cref{thm:LB_simple}, we can obtain scaling law lower bounds with respect to model size.

\begin{corollary}\label{cor:power-law}
    Assume that $p$ is a power law, i.e $p(x) \propto x^{-\alpha}$ for $\alpha > 1$. Then $\mathbb{E}_{f^*, \hat f}\qty[L(\hat f)] \gtrsim B^{1 - \alpha}$.
\end{corollary}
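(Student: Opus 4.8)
The plan is to derive Corollary~\ref{cor:power-law} directly from Theorem~\ref{thm:LB_simple} by plugging in the power-law distribution and estimating the resulting tail sum. Write $K := \lceil B/\log M\rceil$. Theorem~\ref{thm:LB_simple} gives $\mathbb{E}[L(\hat f)] \ge \log M \cdot \sum_{x \ge K} p(x)$, so the whole task reduces to lower bounding $\sum_{x\ge K} p(x)$ when $p(x) = Z^{-1} x^{-\alpha}$ with normalizing constant $Z = \sum_{x=1}^N x^{-\alpha} = \Theta(1)$ (since $\alpha>1$, the series converges, so $Z$ is a constant independent of $N$).

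First I would handle the normalization: since $\alpha > 1$, $\sum_{x=1}^\infty x^{-\alpha}$ converges to some constant $\zeta(\alpha)$, hence $Z = \Theta(1)$ and $p(x) = \Theta(x^{-\alpha})$. Next I would lower bound the tail sum by an integral comparison: $\sum_{x \ge K} x^{-\alpha} \ge \int_K^\infty t^{-\alpha}\,dt = \frac{K^{1-\alpha}}{\alpha - 1}$, so $\sum_{x\ge K} p(x) \gtrsim K^{1-\alpha}$. Then since $K = \lceil B/\log M\rceil \le B/\log M + 1 \lesssim B$ (treating $\log M$ as a constant, or more carefully folding it into the implied constant), and $1 - \alpha < 0$, we get $K^{1-\alpha} \gtrsim B^{1-\alpha}$. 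Combining, $\mathbb{E}[L(\hat f)] \ge \log M \cdot \sum_{x\ge K}p(x) \gtrsim \log M \cdot B^{1-\alpha} \gtrsim B^{1-\alpha}$, which is the claim.

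A small point to be careful about: the statement hides the dependence on $M$ and $\alpha$ inside the $\gtrsim$, so I would simply note that the implied constant is allowed to depend on $M$ and $\alpha$ (as is standard for such scaling-law statements), and that the $\log M$ factor only helps. I would also make sure the integral bound is applied on the correct side — for a decreasing function $g$, $\sum_{x\ge K} g(x) \ge \int_K^{\infty} g(t)\,dt$, which is the direction we need. One should also check $K \ge 1$ so the tail is nonempty, which holds since $B \ge 1$.

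The main obstacle here is essentially bookkeeping rather than any real difficulty: making sure the ceiling in $K = \lceil B/\log M\rceil$ and the normalization constant $Z$ are correctly absorbed into the asymptotic notation, and that the integral comparison is oriented correctly. There is no deep step — the content is entirely in Theorem~\ref{thm:LB_simple}, and this corollary is just the specialization to power laws.
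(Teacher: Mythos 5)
Your proposal is correct and follows essentially the same route as the paper: apply Theorem~\ref{thm:LB_simple}, use the fact that $\sum_{x\ge k} p(x) \asymp k^{1-\alpha}$ for a normalized power law with $\alpha>1$, and absorb the $\log M$ factors into the implied constant. The integral comparison you use to justify the tail estimate is just a standard way to verify the $\asymp k^{1-\alpha}$ step the paper asserts directly, and the bookkeeping with $K=\lceil B/\log M\rceil$ matches the paper's $\log M\cdot (B/\log M)^{1-\alpha}\gtrsim B^{1-\alpha}$ conclusion.
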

This lower bound is obtained by the MLP associative memory by storing the most probable $\tilde O(B)$ associations. This matches the scaling law with respect to model size considered in \citet{michaud2024quantization, cabannes2024scaling}, which also considered storing the $\tilde O(B)$ most frequent associations. 
%Experiments demonstrating such scaling laws are given in \Cref{app:extra-experiments}.

\paragraph{Remark.} The constructions in \Cref{sec:AM} require storing $\tilde O(N)$ network parameters, along with input and output embeddings. We view $\mF$ in \Cref{thm:LB_simple} as containing only the network parameters, while the embeddings are ``global" quantities, independent of the ground truth $f^*$, used to compute the predictions $\hat f$. This matches our interpretation of the embeddings as fixed global quantities which cannot be modified by the associative memory. We remark that the associative memory constructions from \Cref{sec:AM} match the lower bound, since they hold for $\tilde O(1)$-bit precision (\Cref{cor:quantize}).

\paragraph{Factual Recall.} We next prove a lower bound for the factual recall task; a similar bound was proven in \citet{allen2024physics}. Let $\gS$ and $\gR$ be the fixed set of subjects and relations and $\gV$ be the full vocabulary, where $\abs{\gV} \gg \abs{\gS}, \abs{\gR}$. The association function $a^*:\gS \times \gR \rightarrow \gV$ is sampled randomly as follows. First, for each relation $r \in \gR$, the answer set $\gA_r$ is chosen to be a uniformly random size $D$ subset of $\gV$, conditional on all subsets $\gA_r$ being disjoint. For each $s \in \gS$, the answer $a^*(s, r)$ is sampled uniformly at random from $\gA_r$. The learner sees the association $a^*$, writes down a $B$ bit model $\mF$, and from $\mF$ generates a set of predictions $\hat f \in \mathbb{R}^{S \times R \times \abs{\gV}}$, where $\hat f(s, r) \in \Delta_{\gV}$ is the prediction for $a^*(s, r)$. We lower bound $\mL$, the expected cross entropy loss with respect to a distribution $p(s, r)$ over $\gS \times \gR$, defined as follows:
\begin{align}
    \mL := \mathbb{E}_{a^*, \hat f}\qty[L(\hat f)] = \mathbb{E}_{a^*, \hat f}\qty[-\sum_{s, r}p(s, r)\log \hat f(s, r)_{a^*(s, r)}].
\end{align}

\begin{theorem}
    Assume that $\abs{\gV} \ge 2RD$ and $S \ge CD\log(2D^2\log \abs{V})$ for sufficiently large constant $C$. There exists a constant $c \in (0, 1)$ such that, if $\mL = 0$, the number of bits $B$ must satisfy
    \begin{align}
        B \ge SR \log D + (1 -c) RD \log \qty(\abs{\gV}/D)
    \end{align}
\end{theorem}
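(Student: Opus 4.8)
### Proof Proposal

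The plan is to lower bound the number of bits $B$ by an information-theoretic counting argument, splitting the randomness of $a^*$ into two essentially independent pieces: the choice of the disjoint answer sets $\{\gA_r\}_{r \in \gR}$, and, conditioned on those sets, the choice of which element of $\gA_r$ each $a^*(s,r)$ equals. If $\mL = 0$ then (since cross entropy is zero only when the prediction is a point mass on the truth) the learner's decoding $\hat f$ must recover $a^*(s,r)$ exactly for every $(s,r)$ in the support of $p$; I will assume $p$ has full support (or absorb the support size into constants), so $\mF$ together with the global embeddings determines $a^*$ completely. Hence $B \ge H(a^* \mid \text{embeddings}) = H(a^*)$, since $a^*$ is sampled independently of the embeddings. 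It then remains to lower bound $H(a^*)$.

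The entropy decomposes as $H(a^*) = H(\{\gA_r\}) + H(a^* \mid \{\gA_r\})$. For the second term: conditioned on the answer sets, each $a^*(s,r)$ is uniform over the $D$-element set $\gA_r$, independently across all $(s,r)$, so $H(a^* \mid \{\gA_r\}) = SR \log D$ exactly. This already gives the first term $SR\log D$ of the bound and is the easy half. The first term, $H(\{\gA_r\})$, is the entropy of an ordered tuple of $R$ pairwise-disjoint size-$D$ subsets of $\gV$. Its exact value is $\log \binom{|\gV|}{D}\binom{|\gV|-D}{D}\cdots\binom{|\gV|-(R-1)D}{D}$, which is $\sum_{j=0}^{R-1}\log\binom{|\gV|-jD}{D}$. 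Each term is at least $\log\binom{|\gV|-RD}{D} \ge D\log\frac{|\gV|-RD}{D} \ge D\log\frac{|\gV|}{2D}$ using the hypothesis $|\gV| \ge 2RD$ (which gives $|\gV| - RD \ge |\gV|/2$) and $\binom{n}{D}\ge (n/D)^D$. Summing over $j$ gives $H(\{\gA_r\}) \ge RD\log\frac{|\gV|}{2D} = RD\log(|\gV|/D) - RD$, which matches the claimed $(1-c)RD\log(|\gV|/D)$ once $\log(|\gV|/D)$ is large enough that the subtracted $RD$ (and any logarithmic slack) is absorbed by the constant $c$; the hypothesis $S \ge CD\log(2D^2\log|\gV|)$ is presumably what guarantees $\log(|\gV|/D)$ is large relative to these lower-order terms, letting one pick the constant $c \in (0,1)$.

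The main obstacle I anticipate is not the counting itself but handling the gap between ``$\mL = 0$'' and ``$\mF$ determines $a^*$,'' and handling the role of the embeddings carefully. The subtlety is that the decoding map $\mF \to \hat f$ may use the global embeddings, which are random but independent of $a^*$; I need the conditional-entropy inequality $B \ge H(\mF) \ge I(\mF; a^*) = H(a^*) - H(a^* \mid \mF) $ together with $H(a^* \mid \mF, \text{embeddings}) = 0$ (from exact recovery) and independence of embeddings from $a^*$ to conclude $H(a^*\mid \mF) \le H(\text{embeddings} \mid \mF)$... which is not obviously zero. The cleaner route, which I would take, is to condition everything on the embeddings throughout: fix the embeddings, note $a^*$ is still uniform over its original support, apply $B \ge H(\mF \mid \text{emb}) \ge I(a^*; \mF \mid \text{emb}) = H(a^* \mid \text{emb}) - H(a^* \mid \mF, \text{emb}) = H(a^*) - 0$, and then average over the embeddings. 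A secondary nuisance is the possibility that $p$ does not have full support, in which case the learner need only recover $a^*$ on $\mathrm{supp}(p)$; the theorem statement as given seems to implicitly assume full support (or that the interesting regime is $p$ uniform as in \Cref{thm:sequential}), so I would state the bound for $p$ with full support and remark that the general case replaces $SR$ and $RD$ by the corresponding restricted counts.
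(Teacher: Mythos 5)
Your overall strategy --- exact recovery from $\mL=0$, then $B \ge H(a^*)$, then an entropy count --- is viable and genuinely more elementary than the paper's argument, which instead decomposes $I(\mathcal{D};\hat f)$ via the chain rule and proves three lemmas relating $I(\gA_r;\hat f)$ and $I(a^*(s,r);\hat f \mid \gA_r)$ to the individual losses $\ell(s,r)$ through Gibbs' inequality with carefully chosen surrogate distributions. What the paper's heavier machinery buys is a quantitative bits-versus-loss tradeoff valid for any $\mL$, of which the stated theorem is the $\mL=0$ specialization; your route only gives the zero-loss endpoint, which is all the theorem asks for. Your handling of exact recovery and of the data-processing step is fine (and the worry about embeddings is moot here, since the factual-recall lower bound setup has no embeddings in the chain $a^* \to \mF \to \hat f$).

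However, there is a concrete gap in your ``easy half.'' The identity $H(a^*) = H(\{\gA_r\}) + H(a^* \mid \{\gA_r\})$ is false: the right-hand side equals the joint entropy $H(a^*, \{\gA_r\})$, and this exceeds $H(a^*)$ by $H(\{\gA_r\} \mid a^*)$, which is strictly positive because $\gA_r$ is \emph{not} a function of $a^*$ --- some elements of $\gA_r$ may never appear as $a^*(s,r)$ for any $s$, so observing $a^*$ does not pin down $\gA_r$. Since exact recovery only gives you $B \ge H(a^*)$, your argument as written overcounts by $\sum_r H(\gA_r \mid a^*)$. This is also exactly where the hypothesis $S \ge CD\log(2D^2\log\abs{\gV})$ actually enters: it is a coupon-collector condition guaranteeing that with high probability every element of $\gA_r$ is hit by some $a^*(s,r)$, whence $H(\gA_r \mid a^*) \le 1 + \Pr[\text{some element unhit}]\cdot D\log\abs{\gV} = O(1)$ per relation and the defect is a lower-order term absorbed into $c$. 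Your proposal misattributes this hypothesis to making $\log(\abs{\gV}/D)$ large, which it does not do and which is not what is needed (the $RD\log 2$ slack from $\binom{n}{D}\ge (n/D)^D$ is already handled by $\abs{\gV}\ge 2RD$ together with the freedom in $c$). The fix is short but it is a missing step, not a cosmetic one; the paper's Lemma~\ref{lem:LB_A_set} confronts the same issue by introducing the event $\mathcal{E}$ that every answer has at least $M$ preimage subjects.
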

    % Additionally, if $p(s, r) = \frac{1}{SR}$ (all subjects and relations are equally likely), then
    % \begin{align*}
    %     B \ge -\qty((1 + c)RD + RS)\cdot \mathbf{L} + SR \log D + (1 - c)RD \log\frac{\abs{\mathcal{V}}}{D}
    % \end{align*}
We thus see that $\tilde O(SR)$ parameters are needed to achieve a loss of zero. For this lower bound, the learner knows the sets $\gS$ and $\gR$ and does not have to distinguish them from the noise tokens $\gN$, making it a strictly easier problem than the factual recall task in \Cref{sec:task}.

% \enote{Things to potentially add:
% \begin{itemize}
%     \item Empirical verification: do we get the $B^{1 - \min(\alpha, \beta)}$ scaling for the factual recall task?
% \end{itemize}
% }

% \section{Empirical Validation}\label{sec:experiments}

% \enote{do we even need this section?}

% \enote{List of experiments to include:
% \begin{itemize}
%     \item One-layer transformer experiments: Large scale version of what I had on internship poster, scale $d$ versus number of subjects/relations for different shapes of models.
%     \item GPT2 scale experiment: scaling law or parameter count tradeoff?
%     \item Power law distribution $\rightarrow$ learning order? what if underparameterized?
% \end{itemize}}

\section{Discussion}
In this work, we showed that shallow transformers can use associative memories to obtain near optimal storage capacity for factual recall tasks. Furthermore, by studying the optimization dynamics of a simplified model, we also showed that transformers undergo an intermediate hallucination stage. One interesting direction of future work is to better understand the role of the embeddings, and whether there exists an optimal choice of (non-random) embeddings leading to more efficient constructions. Another important direction is to understand the implication of our results towards understanding empirical LLM scaling laws~\citep{hoffmann2022training}. In particular, does there exist a scaling law lower bound for the factual recall task? Finally, it would be very interesting to understand the extent to which larger models utilize similar associative memory constructions, and if one can probe whether specific ``facts" are stored in either the self-attention matrices or the MLP.

\section*{Acknowledgements}
JDL acknowledges support of  NSF CCF 2002272, NSF IIS 2107304,  NSF CIF 2212262, ONR Young Investigator Award, and NSF CAREER Award 2144994. This work was conducted in part at the Simons Institute.

\bibliography{full, iclr2025_conference}

\appendix
\clearpage

\section{Additional Experiments}\label{app:extra-experiments}

\subsection{MLP Associative Memory}

In \Cref{fig:mlp-scaling-fixed-output}, we train MLP associative memories to store the association $f^*(x) = x \mod M$. We fix $M = 32$ throughout. In this case, we see that the number of associations $N$ which can be stored by a model with $md$ parameters scales as $N \propto md$. This linear scaling in the absence of logarithmic factors is due to the fact that the number of output tokens $M$ is a constant, and does not scale with the number of input tokens $N$.

\begin{figure}[h!]
\centering
\includegraphics[width=0.45\linewidth]{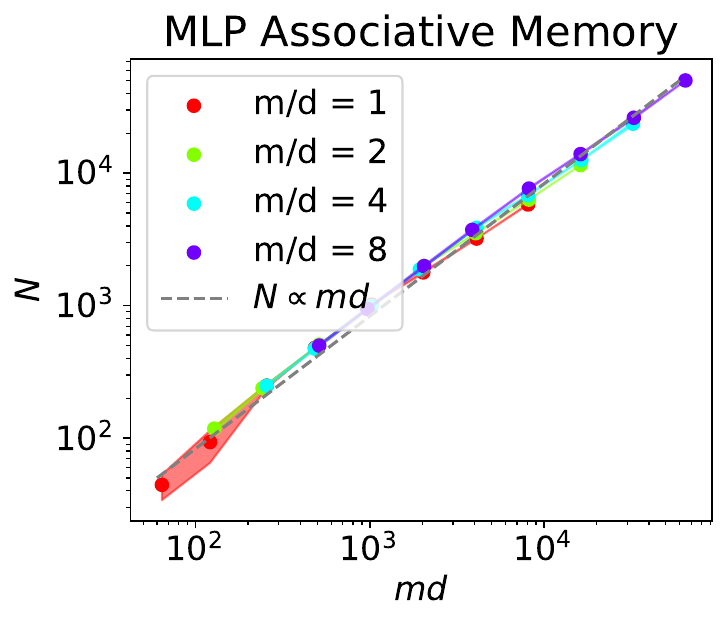}
\caption{We train an MLP associative memory to store the association $f^*(x) = x \mod 32$. Empirically, $md \propto N $ parameters are required to store $N$ associations.} 
\label{fig:mlp-scaling-fixed-output}
\end{figure}

\subsection{Factual Recall}
In \Cref{fig:scaling_vary_H,fig:scaling_D_S,fig:scaling_large_S}, we repeat the experiment in the left pane of \Cref{fig:linear_scaling}, for different choices of $H$ and scalings of $(S, R, D)$. In all plots, we observe the general trend that the number of facts stored scales proportionally to the number of model parameters.

\begin{figure}[h!]
\centering
\includegraphics[width=0.48\linewidth]{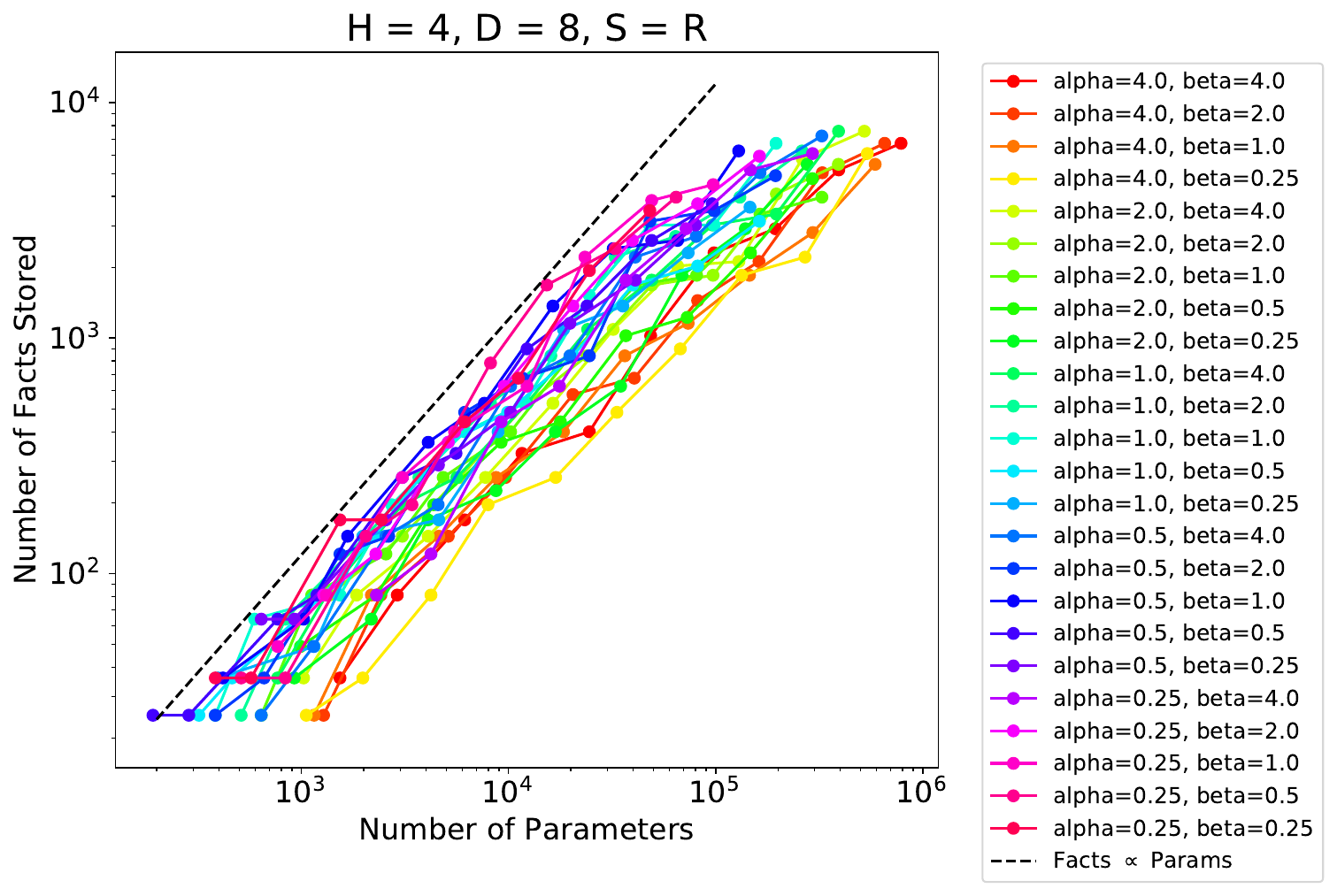}
\includegraphics[width=0.48\linewidth]{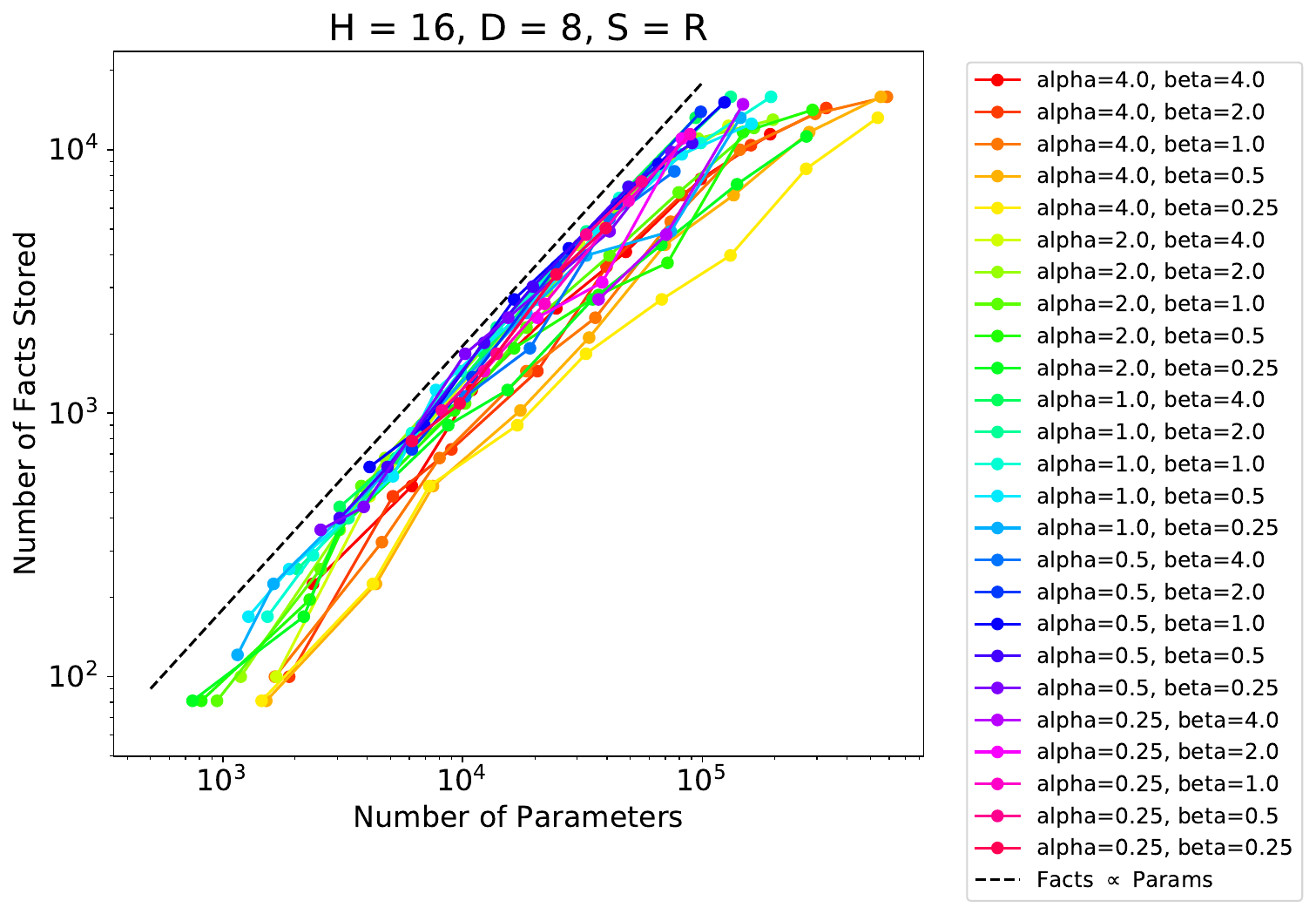}
\caption{We repeat the experiment in \Cref{fig:linear_scaling}, varying the number of head to be 4 (Left) or 16 (Right). In both cases, we observe that the number of facts stored scales linear with the parameter count.}
\label{fig:scaling_vary_H}
\end{figure}

\begin{figure}[h!]
\centering
\includegraphics[width=0.48\linewidth]{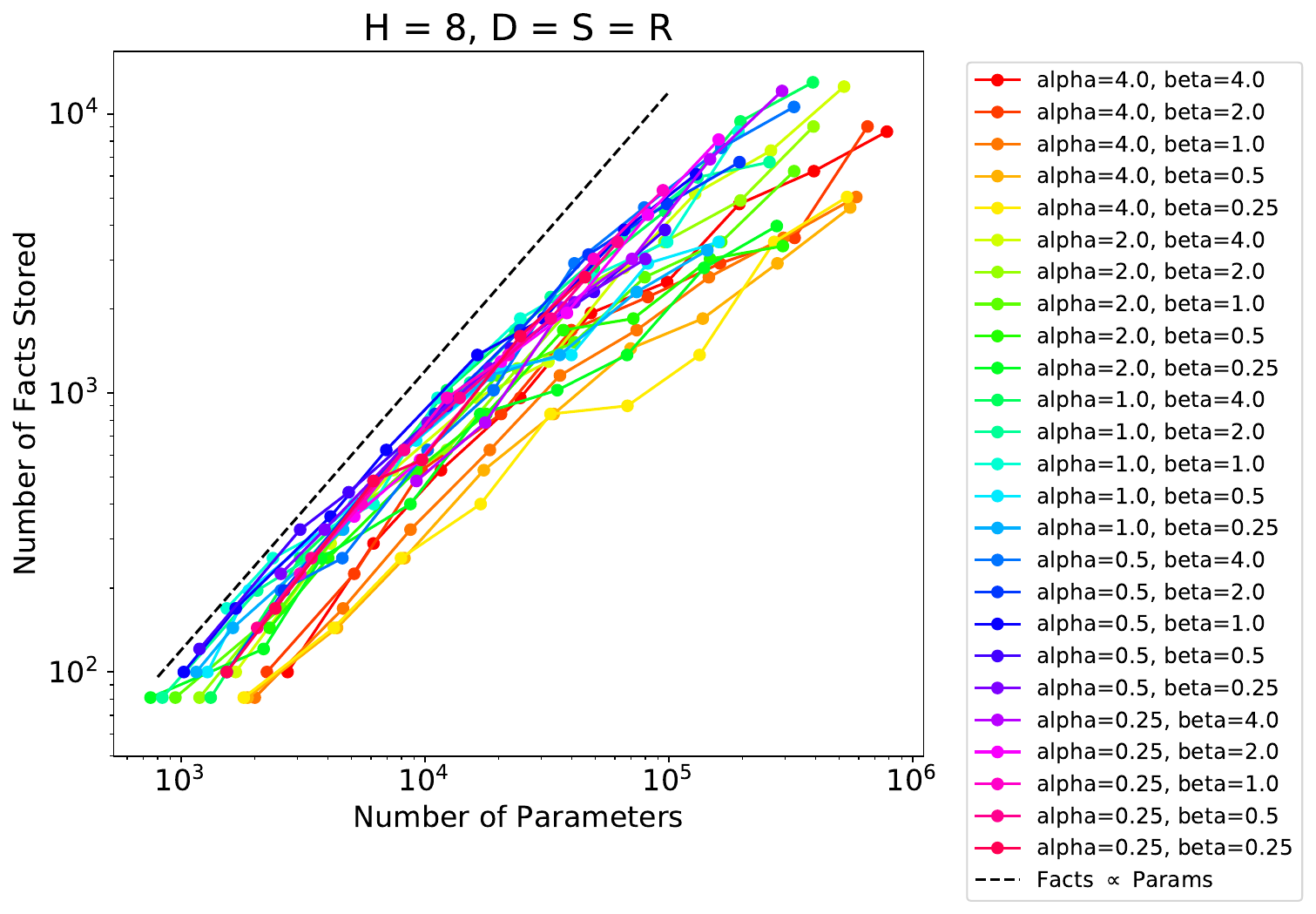}
\includegraphics[width=0.48\linewidth]{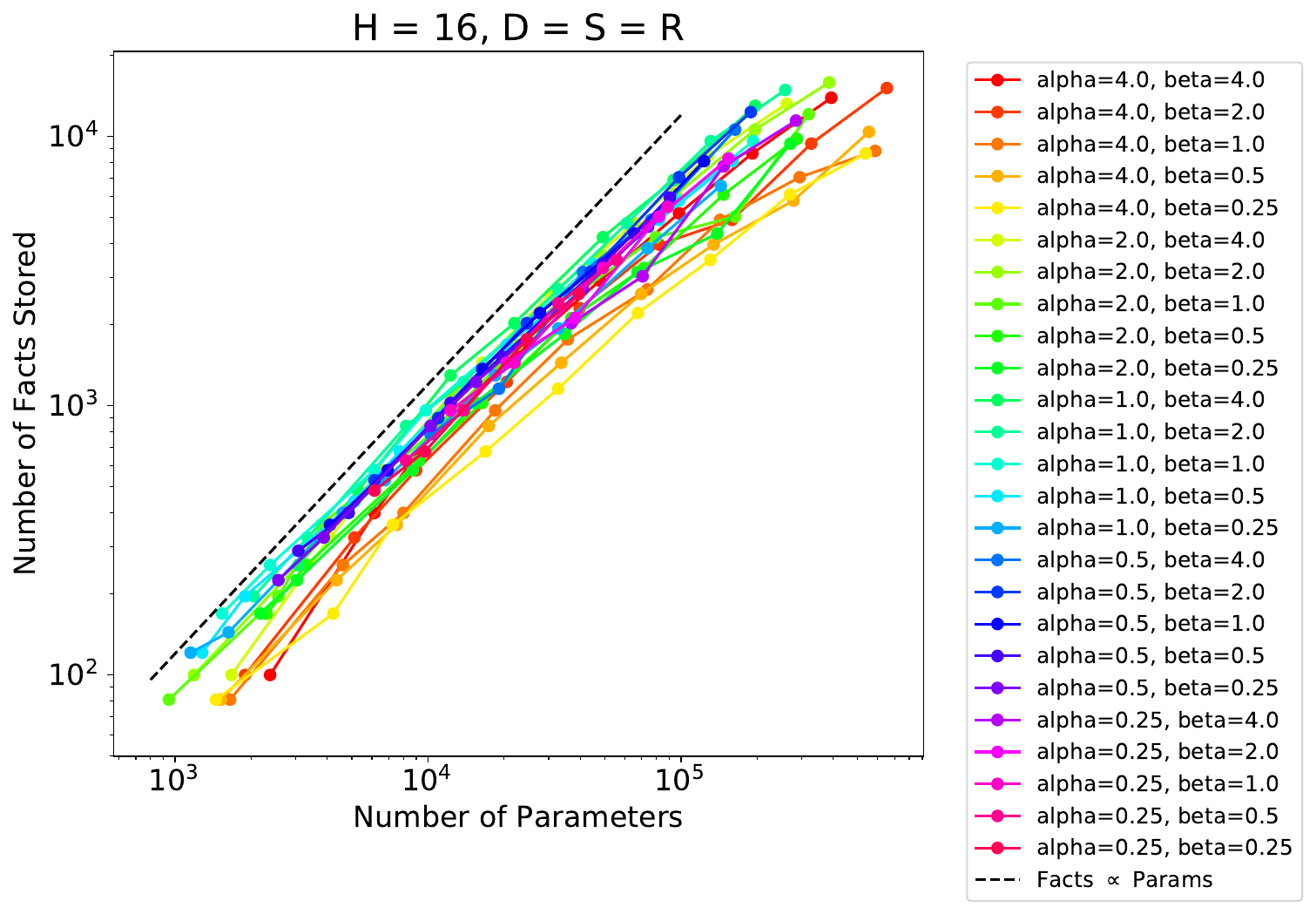}
\caption{We repeat \Cref{fig:linear_scaling} on factual recall tasks where each subject and relation map to a distinct answer (i.e $D = S$).}
\label{fig:scaling_D_S}
\end{figure}

\begin{figure}[h!]
\centering
\includegraphics[width=0.48\linewidth]{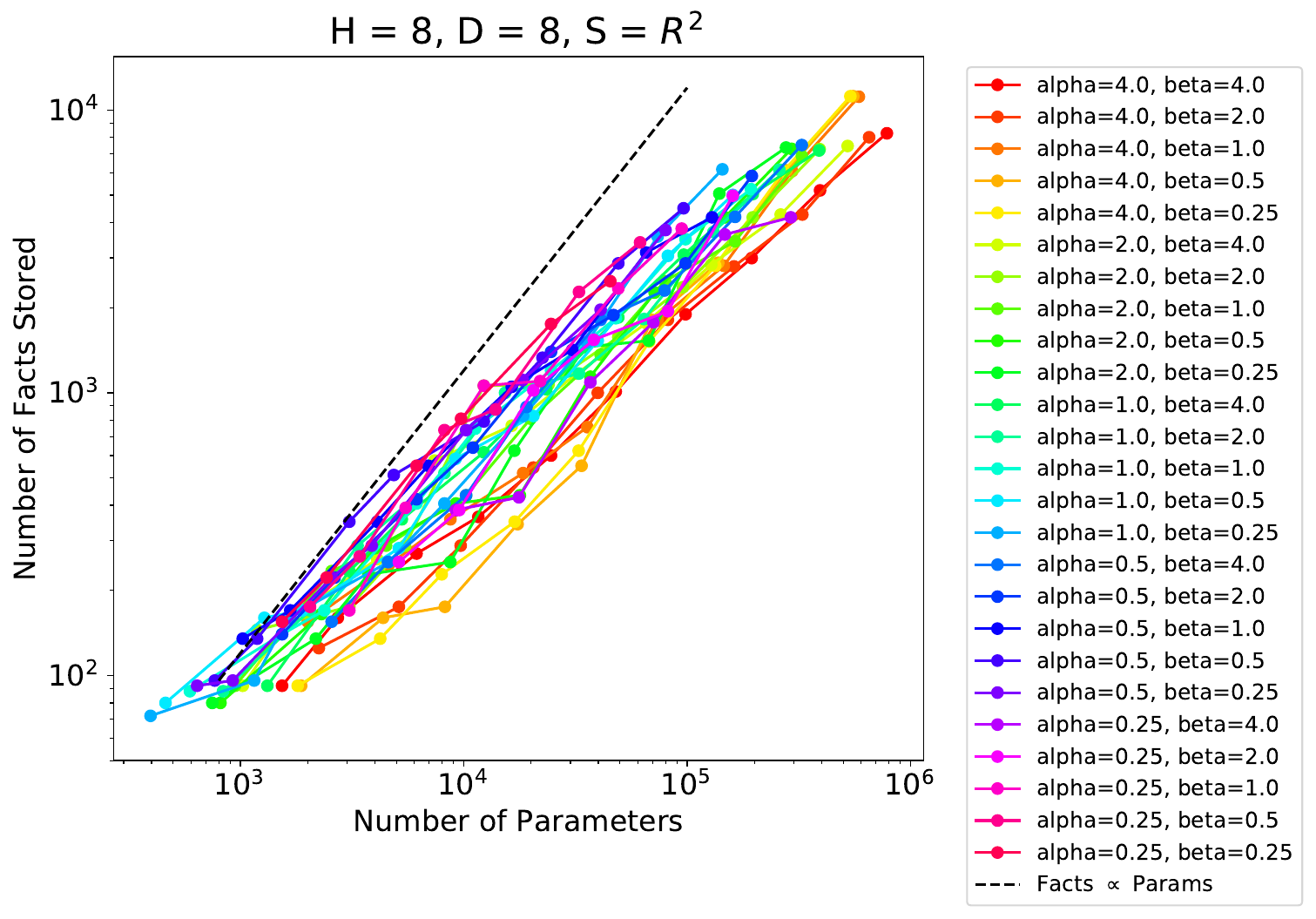}
\includegraphics[width=0.48\linewidth]{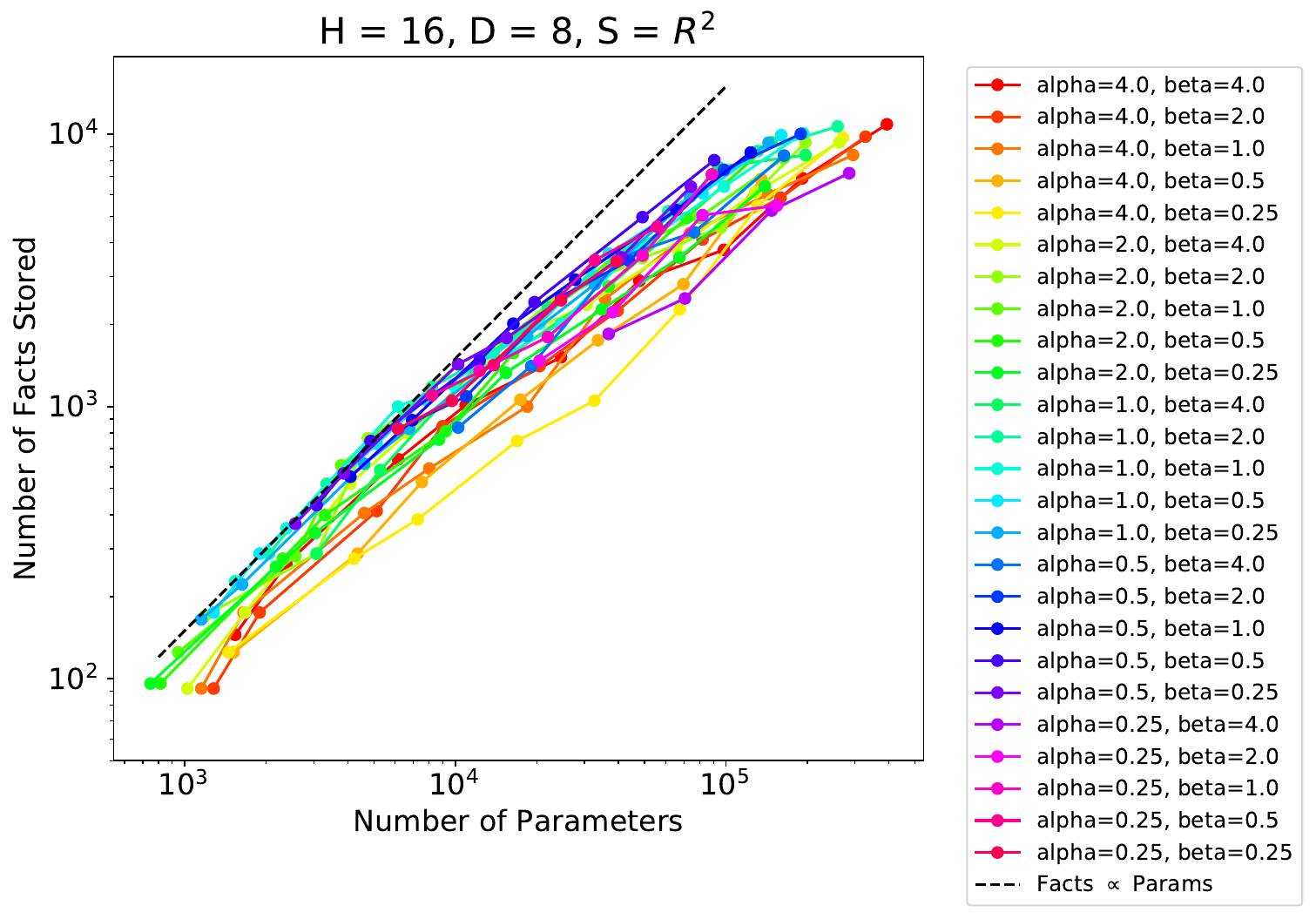}
\caption{We repeat \Cref{fig:linear_scaling} on factual recall tasks where the number of subjects is much larger than the number of relations; specifically, we take $S = R^2$.}
\label{fig:scaling_large_S}
\end{figure}

% \subsection{Scaling Laws}
% We next demonstrate that when the token frequencies are power law distributed, associative memories undergo scaling laws with respect to model size. In the left pane of \enote{todo}, we train MLP associative memories of increasing dimension $d$ and width $m$ to store the association $f^*(x) = x$, for $N = 10^4$. The $x$th token is sampled with frequency $p(x) \propto x^{-\alpha}$, for $\alpha = 1.3$. We observe that initially the loss roughly follows a power law with respect to the number of parameters, with exponent $-0.3$ as predicted by the lower bound. 

% In the right pane of \enote{todo}, we train one-layer transformers with increasing number of parameters on the factual recall task, with $S = R = 512$. The subjects and relations are sampled with frequencies $p(s) \propto s^{-1.3}$ and $p(r) \propto r^{-1.1}$. We plot the loss versus the number of parameters, and observe that the loss initially 

\subsection{Experimental Details}

\paragraph{Figures 1, 6:} For the MLP associative memory experiments, for each choice of $m, d, N$, we first sample random embeddings $\{\ve_x\}_{x \in [N]}, \{\vu_y\}_{y \in [M]}$ i.i.d uniformly over the sphere. We train a two-layer neural network on the cross entropy loss to predict the association $f^*(x) = x$. We use standard parameterization and initialization, and the activation $\sigma = \mathrm{ReLU}$. The network is trained using ADAM with a learning rate of $10^{-2}$ for $2^{14}$ steps. We compute the maximum accuracy the network achieves over the training run, and say that the network has ``stored" the dataset if the highest accuracy is at least $99\%$. We repeat this procedure to binary search over $N$, to find the largest value of $N$ such that the network achieves an accuracy of at least $99\%$. Error bars are shown for 5 random seeds.

\paragraph{Figures 4, 7, 8, 9:} We consider a fixed prompt length of $T = 32$, and train the models via online batch gradient descent with batch size 1024 on the population loss (i.e we sample an independent batch at each timestep).  We use standard parameterization and initialization for both self-attention and the MLP. For a fixed model size, we binary search over the maximum value of $SR$ such that the model achieves an accuracy of at least $99\%$. All models were trained using ADAM for $2^{14}$ steps, with a sweep over learning rates in $\{.001, .003, .01\}$ (where we consider the best performing model over all learning rates).

\paragraph{Figure 5:} In the left pane we train a linear attention head with orthogonal embeddings. The weights are all initialized to be equal to $10^{-5}$. In the right plot, we train a softmax attention head with random embeddings, which are fixed throughout training.

\section{Proofs for \texorpdfstring{\Cref{sec:AM}}{Section 3}}\label{app:AM-proofs}
\begin{proof}[Proof of \Cref{thm:lin-AM}]

    Let us set $\mW = \sum_{z \in [N]}\vu_{f^*(z)}\ve_z^\top$. For $y \neq f^*(x)$, define the quantity $\gamma_{xy}$ by
    \begin{align*}
        \gamma_{xy} = (\vu_{f^*(x)} - \vu_y)^\top \mW \ve_x.
    \end{align*}
    We first see that (where the expectation is taken over the randomness of the embedding vectors)
    \begin{align*}
        \mathbb{E}[\gamma_{xy}] &= \sum_{z \in [N]}\mathbb{E}\qty[(\vu_{f^*(x)} - \vu_y)^\top\vu_{f^*(z)}]\mathbb{E}\qty[\ve_z^\top\ve_x]\\
        &= \mathbb{E}\qty[(\vu_{f^*(x)} - \vu_y)^\top\vu_{f^*(x)}]\\
        &= 1.
    \end{align*}
    We can next compute the second moment of $\gamma_{xy}$. Since the $\ve_z$ are drawn uniformly on the sphere, the $\ve_z^\top \ve_x$ terms for $z \neq x$ are independent and mean zero. Therefore
    \begin{align*}
        \mathbb{E}[\gamma_{xy}^2] &= \sum_{z \in [N]}\mathbb{E}\qty[\qty((\vu_{f^*(x)} - \vu_y)^\top\vu_{f^*(z)})^2]\mathbb{E}\qty[\qty(\ve_z^\top\ve_x)^2]\\
        &= \mathbb{E}\qty[\qty(1 - \vu_{f^*(x)}^\top\vu_y)^2] + \frac{1}{d}\sum_{z \neq x}\mathbb{E}\qty[\qty((\vu_{f^*(x)} - \vu_y)^\top\vu_{f^*(z)})^2]\\
        &= 1 + \frac{1}{d} + \frac{1}{d}\sum_{z \neq x}\qty(\frac{2}{d}\cdot \mathbf{1}(f^*(z) \neq y) + \qty(1 + \frac{1}{d})\cdot \mathbf{1}(f^*(z) = y))\\
        &= \qty(1 + \frac{1}{d})^2 + \frac{2(N-2)}{d^2}.
    \end{align*}
    Therefore 
    \begin{align*}
        \Var(\gamma_{xy}) = \frac{2}{d} + \frac{1}{d^2} + \frac{2(N-2)}{d^2} \lesssim \frac{1}{d} + \frac{N}{d^2}.
    \end{align*}

    Let $\delta$ be a fixed failure probability, and let $\delta' = \frac{\delta}{NM}$. Observe that $\gamma_{xy}$ is a degree 4 polynomial. By \Cref{lem:poly-concentrate}, by choosing $d \ge C\log^4(1/\delta')$ and $d^2 \ge C N \log^4(1/\delta')$ for a sufficiently large constant $C$, we have that $\frac{16e^{-1}\log^4(1/\delta)\Var(\gamma_{xy})}{(\mathbb{E}\gamma_{xy})^2} \le 1$, and thus $\mathbb{P}(\gamma_{xy} \le 0) \le \delta'$.

    % We next need to show that $\gamma_{xy}$ concentrates. By Markov's inequality, 
    % \begin{align*}
    %     \mathbb{P}(\gamma_{xy} \le 0) &\le \mathbb{P}(\abs{\gamma_{xy} - 1} \ge 1)\\
    %     &\le \mathbb{P}(\abs{\gamma_{xy} - 1}^q \ge 1)\\
    %     &\le \mathbb{E}\qty[\abs{\gamma_{xy} - 1}^q].
    % \end{align*}
    % Since $\gamma_{xy}$ is a degree 4 polynomial on $(\mathbb{S}^{d-1})^{N^2}$, by \Cref{lem:sphere-hyper} we have that
    % \begin{align*}
    %     \mathbb{E}\qty[\abs{\gamma_{xy} - 1}^q]^{1/q} \le q^2\Var(\gamma_{xy})^{1/2}.
    % \end{align*}
    % Therefore
    % \begin{align*}
    %     \mathbb{P}(\gamma_{xy} \le 0) \le (q^4 \Var(\gamma_{xy}))^{q/2}.
    % \end{align*}
    % Let $\delta$ be a fixed failure probability, and let $\delta' = \frac{\delta}{NM}$ Choose $q = 2\log(1/\delta')$. If $d \ge C\log^4(1/\delta')$ and $d^2 \ge C N \log^4(1/\delta')$ for a sufficiently large constant $C$, we get that $\frac{q^4\Var(\gamma_{xy})}{e} \le 1$. Plugging in, we get that $\mathbb{P}(\gamma_{xy} \le 0) \le \delta'$. 
    
    Therefore union bounding over all $(x, y)$ pairs with $y \neq f^*(x)$, we have that 
    \begin{align*}
        \mathbb{P}\qty(\exists~\gamma_{xy} \le 0) \le N(M-1)\delta' \le \delta.
    \end{align*}
    Thus with probability $1 - \delta$, $\gamma_{xy} > 0$ for all $x$ and $y \neq f^*(x)$, and on this event $\arg\max_{y \in [M]} \vu_y^\top \mW \ve_x = f^*(x)$ for all $x \in [N]$.
    
\end{proof}

For \Cref{thm:mlp-AM}, we need the following assumption on the activation $\sigma$.
\begin{assumption}\label{assume:sigma}
    $\sigma$ is a polynomial of degree $q$. Furthermore, if $\sigma(z) = \sum_{k=0}^q c_k h_k(z)$ is the Hermite decomposition of $\sigma$, then $c_k \neq 0$ for all $0 \le k \le q$.
\end{assumption}

We prove the following formal version of \Cref{thm:mlp-AM}.
\begin{theorem}
Let $\epsilon \in (0, 1)$ be a fixed constant. Assume that $d \ge N^\epsilon$ and $N \ge C_1(\epsilon)$, where $C_1(\epsilon)$ is a constant depending only on $\epsilon$. Assume that $q$ in \Cref{assume:sigma} satisfies $q = \frac{C_2}{\epsilon}$ for some $C_2 > 2$. Then, if $md \gtrsim N\qty(C_3\log(MN/\delta))^{C_4/\epsilon}$, with probability $1 - \delta$ over the draw of the embeddings, there exists $\mV, \mW$ such that
    \begin{align}
        \arg\max_{y \in [M]} \vu_y^\top \mV^\top \sigma(\mW \ve_x) = f^*(x)
    \end{align}
    for all $x \in [N]$.
\end{theorem}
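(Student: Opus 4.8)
The plan is to realize the MLP as a polynomial associative memory via a random-feature/NTK argument, reducing to a concentration estimate analogous to the one in the proof of \Cref{thm:lin-AM}. First I would sample the rows $\vw_i$ of $\mW$ i.i.d.\ from the standard Gaussian on $\sR^d$, and then choose the second-layer weights so that the network computes (approximately) a \emph{polynomial kernel} associative memory. Concretely, writing $\sigma(z) = \sum_{k=0}^q c_k h_k(z)$ and picking the single Hermite level $k^\star \approx \tfrac{1}{2}\log_d m$ (or rather letting all levels contribute and isolating the dominant one), set $\vv_i = \frac{1}{m}\sum_{x'} \vu_{f^*(x')}\, h_{k^\star}(\langle \vw_i, \ve_{x'}\rangle)\,/\,c_{k^\star}$, so that $F(\ve_x) = \mV^\top\sigma(\mW\ve_x)$ has expectation (over $\vw_i$) equal to $\sum_{x'}\vu_{f^*(x')}\langle \ve_x,\ve_{x'}\rangle^{k^\star}$ plus contributions from the other Hermite levels, each a lower (or different) power of $\langle \ve_x,\ve_{x'}\rangle$. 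The key point is that $\langle\ve_x,\ve_{x'}\rangle^{k}$ concentrates around $0$ for $x\neq x'$ with fluctuations $\sim d^{-k/2}$, which for $k$ growing logarithmically gives an effective orthogonality strong enough to separate $N \approx d^k$ tokens.

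The main steps, in order: (1) \textbf{Expected signal/noise of the kernel memory.} Define, for $x$ and $y\neq f^*(x)$, the analogue of $\gamma_{xy} = (\vu_{f^*(x)}-\vu_y)^\top \bar F(\ve_x)$ where $\bar F$ is the \emph{idealized} memory $\sum_{x'}\vu_{f^*(x')}g(\langle\ve_x,\ve_{x'}\rangle)$ with $g(t) = \sum_k (c_k^2/\text{something})\, t^k$ the induced (NTK-flavored) kernel; show $\E[\gamma_{xy}] = g(1) = \Theta(1)$ and $\Var(\gamma_{xy}) \lesssim N/d^{k^\star} \cdot \text{poly}$, using that off-diagonal inner products $\langle\ve_x,\ve_{x'}\rangle$ are independent-ish, mean zero, with $\E[\langle\ve_x,\ve_{x'}\rangle^{2k}] \asymp d^{-k}$. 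Since $\gamma_{xy}$ is a bounded-degree polynomial in the Gaussian-like spherical coordinates, invoke \Cref{lem:poly-concentrate} to conclude $\mathbb P(\gamma_{xy}\le 0)\le\delta' := \delta/(2NM)$ provided $d^{k^\star}\gtrsim N\,(\log(1/\delta'))^{O(k^\star)}$; choosing $k^\star = \Theta(1/\epsilon)$ and $d\ge N^\epsilon$ makes $d^{k^\star}\ge N$, and absorbing the polylog factors gives the stated bound $md\gtrsim N(C_3\log(MN/\delta))^{C_4/\epsilon}$ after accounting for the width $m$. (2) \textbf{Finite-width approximation.} Control $\|F(\ve_x) - \bar F(\ve_x)\|$ for the random-feature model by a matrix/vector Bernstein argument over the $m$ i.i.d.\ summands; each term has bounded norm (degree-$q$ polynomial of a subgaussian), so $m \gtrsim N\,\text{polylog}$ suffices to make the deviation $o(\E[\gamma_{xy}])$ uniformly over the $N$ inputs, preserving the sign of each $\gamma_{xy}$. (This is the step where the remark in the excerpt says one must pass to the NTK rather than the plain random-feature model, to get the variances to line up favorably — I would follow that, defining $\vv_i$ in terms of the NTK feature map $z\mapsto (\sigma(z),\,\sigma'(z)\,\text{(first-layer coords)})$, so the kernel $g$ picks up the extra $\sigma'$ contribution.) (3) \textbf{Union bound} over the $\le NM$ pairs $(x,y)$ for both the idealized-memory event and the approximation event, conclude $\arg\max_y \vu_y^\top F(\ve_x) = f^*(x)$ for all $x$ with probability $1-\delta$. (4) \textbf{Parameter bookkeeping}: verify that the constraints $d\ge N^\epsilon$, $q = C_2/\epsilon$, $N\ge C_1(\epsilon)$ are exactly what is needed for $k^\star\lesssim q$ to be a valid Hermite level and for $d^{k^\star}\ge N\cdot\text{polylog}$, yielding $md\gtrsim N(\log(MN/\delta))^{O(1/\epsilon)}$.

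\textbf{The main obstacle} I expect is step (1)'s variance computation together with the concentration: unlike the linear case, $\gamma_{xy}$ now involves the product/convolution of many random inner products through both the Hermite expansion of $\sigma$ and (in the NTK version) its derivative, so the second-moment bookkeeping is genuinely messier — one must show that the dominant fluctuation is the ``diagonal-pair'' term of size $d^{-k^\star/2}$ and that cross terms and the lower Hermite levels $k<k^\star$ do not overwhelm the signal $g(1)=\Theta(1)$ (this is why Assumption~\ref{assume:sigma} insists $c_k\neq 0$ for \emph{all} $k$: it guarantees the kernel $g$ is strictly increasing with $g(1)$ bounded below, so the true answer $f^*(x)$ strictly wins). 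A secondary subtlety is that $\gamma_{xy}$ as a function of the underlying Gaussians has degree $\Theta(q) = \Theta(1/\epsilon)$, so the hypercontractive/polynomial concentration bound pays a $(\log(1/\delta'))^{\Theta(1/\epsilon)}$ factor — this is precisely the source of the awkward exponent $C_4/\epsilon$ in the statement, and one has to be careful that this factor multiplies $N$ but not $d$, so that the constraint remains $md = \tilde O_\epsilon(N)$ rather than something worse.
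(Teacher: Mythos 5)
Your proposal follows the same route as the paper: sample $\vw_i$ from the Gaussian, pick a Hermite level $k^\star = \Theta(1/\epsilon)$ so that $d^{k^\star}\gtrsim N$, view the resulting network as a polynomial-kernel associative memory, and then show $\gamma_{xy} > 0$ for all $NM$ pairs via the hypercontractive polynomial-concentration bound (Lemma~\ref{lem:poly-concentrate}), paying a $(\log(MN/\delta))^{O(1/\epsilon)}$ factor from the degree $\Theta(q)$ of the polynomial; the NTK substitution you note is exactly what the paper does, placing the learned coefficients in the first-layer perturbation $\vq_i = \vw_i - \vw_i^0$ with $\vv_i,\vw_i^0$ both random Gaussian.

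One bookkeeping slip in your step (2): you assert ``each term has bounded norm, so $m\gtrsim N\,\mathrm{polylog}$ suffices,'' but the summands are not bounded (they are degree-$q$ polynomials of Gaussians), and the per-summand conditional variance is actually $O(N/d)$ rather than $O(N)$ or $O(1)$ --- each of the $N$ cross terms contributes only $\sim 1/d$ through the near-orthogonality of the embeddings. This is what makes the correct requirement $md\gtrsim N\,\mathrm{polylog}$ (as you note at the very end and as the theorem states) rather than $m\gtrsim N\,\mathrm{polylog}$, which combined with $d\ge N^\epsilon$ would give the weaker $md \gtrsim N^{1+\epsilon}$. The paper avoids this bifurcation by treating the finite-$m$ margin $\gamma_{xy}$ directly as a single polynomial over all the randomness (embeddings and $\vw_i, \vv_i$) and computing one variance; if you keep the two-step split, you need to track the $1/d$ scaling of the cross-term variances explicitly in step (2) so that the finite-width cost scales as $N/(md)$, not $N/m$.
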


\begin{proof}[Proof of \Cref{thm:mlp-AM}]

    Let us consider the linearization, or Neural Tangent Kernel, of $F$:
    \begin{align*}
        F_{\mathrm{NTK}}(\vz) = \mV^\top \qty(\sigma'(\mW^0\vz) \odot (\mW - \mW^0)\vz) = \sum_{i \in [m]} \vv_i \sigma'(\langle \vw^0_i, \vz \rangle)\langle\vw_i - \vw^0_i, \vz \rangle.
    \end{align*}
    where $\mW^0$ is the initialization which we are linearizing with respect to. By rescaling the parameters as $\mW - \mW^0 \leftarrow \epsilon (\mW - \mW^0)$ and $\mV \leftarrow \epsilon^{-1} \mV$, we see that $F \rightarrow F_{\mathrm{NTK}}$ as $\epsilon \rightarrow 0$. It thus suffices to work with $F_{\mathrm{NTK}}$ instead of $F$. For ease of notation, we redefine $\mW^0$ as $\mW$, and $\mW - \mW^0$ as $\mQ$, so that
    \begin{align*}
        F(\vz) = \sum_{i \in [m]} \vv_i \sigma'(\langle \vw_i, \vz \rangle)\langle\vq_i, \vz \rangle
    \end{align*}
    Let $k$ be a even integer, to be chosen later. Assume without loss of generality that $c_{k+1} > 0$ (if it is negative, we can simply negate all the $\vq_i$ in the construction below). Set $$\vq_i = \frac{1}{m}\sum_{z \in [N]} h_k(\langle \ve_z, \vw_i\rangle)\langle \vv_i, \vu_{f^*(z)}\rangle \ve_z,$$ where $h_k$ is the $k$th Hermite polynomial. Then
    \begin{align*}
        F(\ve_x) = \frac{1}{m}\sum_{i \in [m], z \in [N]}\vv_i \langle \vv_i, \vu_{f^*(z)}\rangle \sigma'(\langle \vw_i, \ve_x \rangle)h_k(\langle \ve_z, \vw_i\rangle) \langle \ve_x, \ve_z \rangle
    \end{align*}
    As in the proof of \Cref{thm:lin-AM}, define the margin between $x$ and some $y \neq f^*(x)$ as 
    \begin{align*}
        \gamma_{xy} &= (\vu_{f^*(x)} - \vu_y)^\top F(\ve_x)\\
        &= \frac{1}{m}\sum_{i \in [m], z \in [N]}\langle \vv_i, \vu_{f^*(x)} - \vu_y \rangle \langle \vv_i, \vu_{f^*(z)}\rangle \sigma'(\langle \vw_i, \ve_x \rangle)h_k(\langle \ve_z, \vw_i\rangle) \langle \ve_x, \ve_z \rangle.
    \end{align*}
    We will show that, with high probability over the draw of the embeddings over the sphere, \emph{and} the $\vv_i, \vw_i$ independently from the standard Gaussian, that $\gamma_{xy} > 0$ for all $y \neq f^*(x)$.

    The expectation of the margin is
    \begin{align*}
        \mathbb{E}\qty[\gamma_{xy}] &= \sum_z\mathbb{E}\qty[\langle \vv_i, \vu_{f^*(x)} - \vu_y \rangle \langle \vv_i, \vu_{f^*(z)}\rangle \sigma'(\langle \vw_i, \ve_x \rangle)h_k(\langle \ve_z, \vw_i\rangle) \langle \ve_x, \ve_z \rangle]\\
        &= c_{k+1}\sum_z\mathbb{E}\qty[\langle \vu_{f^*(x)} - \vu_y, \vu_{f^*(z)}\rangle \langle \ve_z, \ve_x \rangle^{k+1}]\\
        &= c_{k+1}.
    \end{align*}
    We next compute the variance. Define $\omega_{iz}^{xy}$ as
    \begin{align*}
        \omega_{iz}^{xy} = \langle \vv_i, \vu_{f^*(x)} - \vu_y \rangle \langle \vv_i, \vu_{f^*(z)}\rangle \sigma'(\langle \vw_i, \ve_x \rangle)h_k(\langle \ve_z, \vw_i\rangle) \langle \ve_x, \ve_z \rangle,
    \end{align*}
    so that $\gamma_{xy} = \frac{1}{m}\sum_{i, z}\omega_{iz}^{xy}$. First, observe that when $z \neq z'$, we have $\mathbb{E}\qty[\omega_{iz}^{xy}\omega_{jz'}^{xy}] = 0$, since $k$ is even. For $i \neq j$, we have that
    \begin{align*}
        \mathbb{E}\qty[\omega_{iz}^{xy}\omega_{jz}^{xy}] = c_{k+1}^2\mathbb{E}\qty[\langle \vu_{f^*(x)} - \vu_y, \vu_{f^*(z)}\rangle^2]\mathbb{E}[\langle \ve_z, \ve_x \rangle^{2(k+1)}]\\
    \end{align*}
    First, by \Cref{lem:sphere-moment} we have that
    \begin{align*}
        \mathbb{E}[\langle \ve_z, \ve_x \rangle^{2(k+1)}] &\le \begin{cases}
            1 &~x=z\\
            (2k + 2)^{k + 1}d^{-(k + 1)} &~x \neq z
        \end{cases}.
    \end{align*}
    Next, we see that
    \begin{align*}
        \mathbb{E}\qty[\langle \vu_{f^*(x)} - \vu_y, \vu_{f^*(z)}\rangle^2] = \begin{cases}
            1 + \frac1d &~f^*(x)=f^*(z)~\text{or}~y = f^*(z)\\
            \frac2d &~\text{otherwise}
        \end{cases}.
    \end{align*}
    Finally, we have that
    \begin{align*}
        \mathbb{E}\qty[\omega_{iz}^{xy}\omega_{iz}^{xy}] &= \mathbb{E}\qty[\langle \vv_i, \vu_{f^*(x)} - \vu_y \rangle^2 \langle \vv_i, \vu_{f^*(z)}\rangle^2 \sigma'(\langle \vw_i, \ve_x \rangle)^2h_k(\langle \ve_z, \vw_i\rangle)^2 \langle \ve_x, \ve_z \rangle^2]\\
        &= \mathbb{E}\qty[\langle \vv_i, \vu_{f^*(x)} - \vu_y \rangle^2 \langle \vv_i, \vu_{f^*(z)}\rangle^2]\mathbb{E}\qty[ \sigma'(\langle \vw_i, \ve_x \rangle)^2h_k(\langle \ve_z, \vw_i\rangle)^2 \langle \ve_x, \ve_z \rangle^2].
    \end{align*}
    The first quantity can be bounded as
    \begin{align*}
        \mathbb{E}\qty[\langle \vv_i, \vu_{f^*(x)} - \vu_y \rangle^2 \langle \vv_i, \vu_{f^*(z)}\rangle^2] &\le \mathbb{E}\qty[\langle \vv_i, \vu_{f^*(x)} - \vu_y \rangle^4]^{1/2}\mathbb{E}\qty[\langle \vv_i, \vu_{f^*(z)}\rangle^4]^{1/2}\\
        &\le 2 \cdot \sqrt{3} \cdot \sqrt{3} = 6.
    \end{align*}
    The second term is bounded as
    \begin{align*}
        \mathbb{E}\qty[ \sigma'(\langle \vw_i, \ve_x \rangle)^2h_k(\langle \ve_z, \vw_i\rangle)^2 \langle \ve_x, \ve_z \rangle^2] &\le \mathbb{E}\qty[ \sigma'(\langle \vw_i, \ve_x \rangle)^8]^{1/4}\mathbb{E}\qty[h_k(\langle \ve_z, \vw_i\rangle)^8]^{1/4} \mathbb{E}\qty[\langle \ve_x, \ve_z \rangle^4]^{1/2}
    \end{align*}
    By Gaussian hypercontractivity (\Cref{lem:sphere-hyper}), 
    \begin{align*}
        \mathbb{E}\qty[ \sigma'(\langle \vw_i, \ve_x \rangle)^8]^{1/4} = \|\sigma'\|_{L^8}^2 \le 8^q\|\sigma'\|_{L^2}^2 \lesssim 8^q,
    \end{align*}
    and likewise
    \begin{align*}
        \mathbb{E}\qty[h_k(\langle \ve_z, \vw_i\rangle)^8]^{1/4} \le 8^{k}.
    \end{align*}
    Finally, $\mathbb{E}\qty[\langle \ve_x, \ve_z \rangle^4]^{1/2} \le 4d^{-1}$ if $x \neq z$, and 1 otherwise. Altogether,
    \begin{align*}
        \mathbb{E}\qty[\omega_{iz}^{xy}\omega_{iz}^{xy}] \lesssim 2^{3q + 3k}\qty(d^{-1} + \mathbf{1}(x = z)).
    \end{align*}

    Altogether, we get that
    \begin{align*}
        \mathbb{E}\qty[\gamma_{xy}^2] = \frac{m-1}{m}\sum_z \mathbb{E}\qty[\omega_{iz}^{xy}\omega_{jz}^{xy}] + \frac{1}{m}\sum_z \mathbb{E}\qty[\omega_{iz}^{xy}\omega_{iz}^{xy}].\\
    \end{align*}
    The first quantity is
    \begin{align*}
        \sum_z \mathbb{E}\qty[\omega_{iz}^{xy}\omega_{jz}^{xy}] \le \qty(1 + \frac{1}{d})c_{k+1}^2 + c_{k+1}^2(2k + 2)^{k+1}d^{-(k+1)}\cdot 2N.
    \end{align*}
    The second quantity is
    \begin{align*}
        \sum_z \mathbb{E}\qty[\omega_{iz}^{xy}\omega_{iz}^{xy}] \le 2^{3q + 3k}\qty(1 + \frac{N}{d}).
    \end{align*}
    Therefore
    \begin{align*}
        \Var(\gamma_{xy}) \lesssim \frac{1}{d} + \frac{(2k)^{k+1}N}{d^{k+1}} + \frac{2^{3q + 3k}N}{md}.
    \end{align*}
    Choose $k = 2\lceil\frac{1}{\epsilon}\rceil$; then $$\frac{d^k}{(2k)^{k+1}N} \ge \frac{N}{(4/\epsilon)^{1 + 2/\epsilon}} \ge 1$$ for $N \ge C_1(\epsilon)$, and so
    \begin{align*}
        \Var(\gamma_{xy}) \lesssim \frac{1}{d} + \frac{2^{3q + 3k}N}{md}.
    \end{align*}
    Observe that $\gamma_{xy}$ is a degree $2q + 2k + 4 \le 4q + 4$ polynomial. If $md \gtrsim N \cdot C_3^q\log^{4q + 4}(1/\delta')$ for unspecified constant $C_3$, then $\frac{2^{4q + 4}e^{-1}\log^{4q + 4}(1/\delta')\Var(\gamma_{xy})}{(\mathbb{E}\gamma_{xy})^2} \le 1$, and thus by \Cref{lem:poly-concentrate}, we have that $\mathbb{P}(\gamma_{xy} \le 0) \le \delta'$. Choosing $\delta' = \frac{\delta}{MN}$ and union bounding over all $(x, y)$ pairs with $y \neq f^*(x)$ yields the desired result.

% % Attempt for generic activation sigma.
%     Observe that
%     \begin{align*}
%         \mathbb{E}_{\vv_i, \vw_i}\qty[\langle \vv_i, \vu_{f^*(x)} - \vu_y \rangle \langle \vv_i, \vu_{f^*(z)}\rangle \sigma'(\langle \vw_i, \ve_x \rangle)h_k(\langle \ve_z, \vw_i\rangle)] = c_{k+1}\langle \vu_{f^*(x)} - \vu_y, \vu_{f^*(z)}\rangle \langle \ve_z, \ve_x \rangle^k
%     \end{align*}

%     Let us thus define $\alpha_{xz}$ as
%     \begin{align*}
%         \alpha_{xz} := \frac{1}{m}\sum_i \langle \vv_i, \vu_{f^*(x)} - \vu_y \rangle \langle \vv_i, \vu_{f^*(z)}\rangle \sigma'(\langle \vw_i, \ve_x \rangle)h_k(\langle \ve_z, \vw_i\rangle) - c_{k+1}\langle \vu_{f^*(x)} - \vu_y, \vu_{f^*(z)}\rangle \langle \ve_z, \ve_x \rangle^k,
%     \end{align*}
%     so that
%     \begin{align*}
%         \gamma_{xy} = c_{k+1}\sum_{z \in [N]}\langle \vu_{f^*(x)} - \vu_y, \vu_{f^*(z)}\rangle \langle \ve_z, \ve_x \rangle^{k + 1} + \sum_{z \in [N]}\alpha_{xz}\langle \ve_z, \ve_x \rangle.
%     \end{align*}

    % \enote{TODO finish proof}
\end{proof}

\subsection{Bounded Bit Complexity}
\begin{corollary}\label{cor:quantize}
    Under the setting of \Cref{thm:lin-AM}, if $d^2 \gtrsim N\poly\log N$, then with high probability there exists a \emph{quantized} weight matrix $\tilde \mW$, where each weight requires $O(\log d)$ bits to store, such that
    \begin{align}
        \arg\max_{y \in [M]} \vu_y^\top \tilde \mW \ve_x = f^*(x)\quad\text{for all}~ x \in [N].
    \end{align}
\end{corollary}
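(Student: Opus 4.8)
The plan is to quantize the full‑rank construction $\mW = \sum_{x \in [N]} \vu_{f^*(x)} \ve_x^\top$ of \Cref{thm:lin-AM} by rounding each of its $d^2$ entries to a grid of spacing $\eta$, and to argue that this perturbs every decoding margin by far less than the margin itself. Writing $\tilde\mW$ for the rounded matrix and $\Delta := \tilde\mW - \mW$ (so $\abs{\Delta_{ij}} \le \eta/2$ entrywise), correctness of the argmax decoder at input $x$ against a wrong label $y \ne f^*(x)$ is governed by
\begin{align*}
    (\vu_{f^*(x)} - \vu_y)^\top \tilde\mW \ve_x = \gamma_{xy} + (\vu_{f^*(x)} - \vu_y)^\top \Delta\, \ve_x ,
\end{align*}
where $\gamma_{xy} = (\vu_{f^*(x)} - \vu_y)^\top \mW \ve_x$ is exactly the margin analyzed in the proof of \Cref{thm:lin-AM}. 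So two ingredients are needed: a quantitative lower bound on $\gamma_{xy}$, and an upper bound on the perturbation term together with a bound on the dynamic range of $\tilde\mW$.

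For the first ingredient I would reuse the concentration step from the proof of \Cref{thm:lin-AM} essentially verbatim, except with threshold $\tfrac12$ in place of $0$: there one has $\E[\gamma_{xy}] = 1$, $\Var(\gamma_{xy}) \lesssim \tfrac1d + \tfrac{N}{d^2}$, and $\gamma_{xy}$ is a degree‑$4$ polynomial of the sphere embeddings, so \Cref{lem:poly-concentrate} gives $\Pr(\abs{\gamma_{xy} - 1} \ge \tfrac12) \le \delta'$ once $d \gtrsim \log^4(1/\delta')$ and $d^2 \gtrsim N\log^4(1/\delta')$ (only absolute constants change relative to the original $\{\gamma_{xy}\le 0\}$ bound). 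Setting $\delta' = \delta/(NM)$ and union bounding over the at most $NM$ pairs $(x,y)$ with $y \ne f^*(x)$ gives: with probability $1-\delta$ over the embeddings, $\gamma_{xy} \ge \tfrac12$ for all such pairs. This event needs exactly the hypothesis $d^2 \gtrsim N\poly\log N$ (the $\poly\log$ absorbing the constants introduced here).

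For the second ingredient, two bookkeeping facts close the argument. (i) Dynamic range: since $\vu_y,\ve_x$ are unit vectors, $\abs{\mW_{ij}} = \abs{\sum_x (\vu_{f^*(x)})_i (\ve_x)_j} \le N$, so every entry of $\tilde\mW$ lies in a bounded interval of length $O(N)$ on a grid of spacing $\eta$, hence needs $O(\log(N/\eta))$ bits. (ii) Margin hit: by the $\ell_1/\ell_\infty$ pairing (or $\norm{\Delta}_{\mathrm{op}} \le \norm{\Delta}_F \le \tfrac{\eta}{2}d$),
\begin{align*}
    \abs{(\vu_{f^*(x)} - \vu_y)^\top \Delta\, \ve_x} \;\le\; \frac{\eta}{2}\,\norm{\vu_{f^*(x)} - \vu_y}_1\,\norm{\ve_x}_1 \;\le\; \frac{\eta}{2}\cdot 2\sqrt d\cdot \sqrt d \;=\; \eta d .
\end{align*}
Taking $\eta := \tfrac{1}{4d}$ makes the perturbation at most $\tfrac14$, so on the good event of the previous paragraph $(\vu_{f^*(x)} - \vu_y)^\top \tilde\mW \ve_x \ge \tfrac12 - \tfrac14 > 0$ for every $x$ and every $y \ne f^*(x)$, i.e. $\arg\max_{y\in[M]} \vu_y^\top \tilde\mW\ve_x = f^*(x)$; and each weight needs $O(\log(Nd))$ bits, which is $O(\log d)$ since $\log N = O(\log d)$ under $d^2 \gtrsim N\poly\log N$. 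The argument is essentially routine; the only points to get right are that \Cref{thm:lin-AM} actually delivers a constant margin (not merely a positive one) and that the entries of $\mW$ have only $\poly(d)$ dynamic range — there is no deeper obstacle.
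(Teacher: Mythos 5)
Your proposal is correct and follows essentially the same route as the paper's proof: establish a constant margin $\gamma_{xy}\ge\tfrac12$ by tightening the concentration step of \Cref{thm:lin-AM}, bound the entries of $\mW$ by $N$, round to a grid of spacing $\Theta(1/d)$, and control the margin perturbation via the $\ell_1/\ell_\infty$ pairing to get $O(\log(Nd))=O(\log d)$ bits per weight. The only differences are immaterial constants.
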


\begin{proof}
    One sees from the proof of \Cref{thm:lin-AM} that, with high probability over the embeddings, the weight matrix $\mW = \sum_{z \in [N]}\vu_{f^*(z)}\ve_z^\top$ has a margin $\gamma_{xy}$ satisfies $\gamma_{xy} \ge \frac12$ for all $y \neq f^*(x)$. Each entry of $\mW$ lies in the interval $[-N, N]$. For some $\epsilon > 0$, define $\tilde \mW$ by rounding each entry of $\mW$ to the nearest multiple of $\epsilon$. By definition, $\norm{\mW - \tilde \mW}_\infty \le \epsilon$. We also see that
    \begin{align*}
        \abs{\vu_y^\top(\mW - \tilde \mW)\ve_x} \le \norm{\mW - \tilde \mW}_\infty \norm{\vu_y\ve_x^\top}_1 \le d\epsilon.
    \end{align*}
    Thus choosing $\epsilon < \frac{1}{8d}$, the margin of the quantized network satisfies
    \begin{align*}
        \tilde \gamma_{xy} &:= (\vu_{f^*(x)} - \vu_y)^\top \tilde \mW \ve_x\\
        &\ge (\vu_{f^*(x)} - \vu_y)^\top \mW \ve_x - \abs{(\vu_{f^*(x)} - \vu_y)^\top (\mW - \tilde \mW) \ve_x}\\
        &\ge \frac12 - 2d\epsilon\\
        &> 0.
    \end{align*}
    Finally, the number of bits required to store each weight is $\log(2N/\epsilon) = \log(16Nd) = O(\log d)$.
\end{proof}
We remark that a similar quantization argument was proven in \citet{jelassi2024mixture}.

\section{Proofs for \texorpdfstring{\Cref{sec:task}}{Section 4}}\label{app:task-proofs}

\begin{lemma}\label{lem:filter}
    Let $\mathcal{V}^{(h)} \subset \mathcal{S} \cup \mathcal{R}$. Assume that $d \gtrsim \abs{\mathcal{V}^{(h)}}\log(\abs{\gV}/\delta)$. Define $\vv := \sum_{z \in \gV^{(h)}} \vphi(z) + \frac12\vphi(\mathrm{EOS})$. Then, with probability $1 - \delta$ over the draw of the embeddings,
    \begin{align*}
        \langle \vv, \vphi(z) \rangle  > \langle \vv, \vphi(\mathrm{EOS}) \rangle + \frac14 > \langle \vv, \vphi(z') \rangle + \frac12
    \end{align*}
    for any $z \in \gV^{(h)}$ and $z' \not\in \gV^{(h)}$.
    
\end{lemma}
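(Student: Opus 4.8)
The plan is to expand $\langle\vv,\vphi(w)\rangle$ for every token $w\in\gV$ as a deterministic ``signal'' term plus a sum of inner products between \emph{distinct} (hence independent) embeddings, show that these ``noise'' terms are all simultaneously bounded by a small absolute constant, and then finish with elementary arithmetic.

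First I would record the three relevant expansions, using $\langle\vphi(z),\vphi(z)\rangle=1$. For $z\in\gV^{(h)}$, $\langle\vv,\vphi(z)\rangle = 1 + \sum_{z''\in\gV^{(h)}\setminus\{z\}}\langle\vphi(z''),\vphi(z)\rangle + \tfrac12\langle\vphi(\mathrm{EOS}),\vphi(z)\rangle$; for the token $\mathrm{EOS}$ (which lies outside $\gV^{(h)}$), $\langle\vv,\vphi(\mathrm{EOS})\rangle = \tfrac12 + \sum_{z''\in\gV^{(h)}}\langle\vphi(z''),\vphi(\mathrm{EOS})\rangle$; and for $z'\notin\gV^{(h)}\cup\{\mathrm{EOS}\}$, $\langle\vv,\vphi(z')\rangle = \sum_{z''\in\gV^{(h)}}\langle\vphi(z''),\vphi(z')\rangle + \tfrac12\langle\vphi(\mathrm{EOS}),\vphi(z')\rangle$. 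Define the event $\gE$ that every sum $\sum_{z''\in\gV^{(h)}\setminus\{w\}}\langle\vphi(z''),\vphi(w)\rangle$ over $w\in\gV$, and every individual cross term $\langle\vphi(\mathrm{EOS}),\vphi(w)\rangle$ over $w\in\gV\setminus\{\mathrm{EOS}\}$, has absolute value at most $\tfrac1{16}$. On $\gE$ the three quantities above are, respectively, at least $1-\tfrac{3}{32}$, between $\tfrac12\pm\tfrac1{16}$, and at most $\tfrac{3}{32}$, so both links of the chain $\langle\vv,\vphi(z)\rangle > \langle\vv,\vphi(\mathrm{EOS})\rangle+\tfrac14 > \langle\vv,\vphi(z')\rangle+\tfrac12$ hold (each reduces to checking $\tfrac14 > \tfrac52\cdot\tfrac1{16}$); the constant $\tfrac1{16}$ is not special, any bound below $\tfrac1{10}$ works.

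It then remains to show $\Pr[\gE^c]\le\delta$. For a fixed $w$, condition on $\vphi(w)$: by rotational invariance each term $\langle\vphi(z''),\vphi(w)\rangle$ with $z''\neq w$ is distributed as a single coordinate of a uniform point on $S^{d-1}$, which is mean zero and sub-Gaussian with variance proxy $O(1/d)$, and these are independent across $z''$; hence $\sum_{z''\in\gV^{(h)}\setminus\{w\}}\langle\vphi(z''),\vphi(w)\rangle$ is sub-Gaussian with variance proxy $O(\abs{\gV^{(h)}}/d)$, giving $\Pr[\,|\cdot|>\tfrac1{16}\,]\le 2\exp(-\Omega(d/\abs{\gV^{(h)}}))$, and likewise $\Pr[\,|\langle\vphi(\mathrm{EOS}),\vphi(w)\rangle|>\tfrac1{16}\,]\le 2\exp(-\Omega(d))$. (Alternatively, the same tails follow from the explicit moment bounds in \Cref{lem:sphere-moment} combined with \Cref{lem:poly-concentrate}.) A union bound over the at most $2\abs{\gV}$ such events, together with the hypothesis $d\gtrsim\abs{\gV^{(h)}}\log(\abs{\gV}/\delta)$ with a large enough implied constant, makes the total failure probability at most $\delta$.

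The only mildly delicate points are tracking the sub-Gaussian constant through the sum, so that $d\gtrsim\abs{\gV^{(h)}}\log(\abs{\gV}/\delta)$ is exactly the threshold that appears, and remembering to peel off the diagonal $\|\vphi(z)\|^2=1$ term when $z\in\gV^{(h)}$; neither is a genuine obstacle. I expect the only part of the writeup needing care is stating the sphere-concentration fact in the precise form used, which is standard.
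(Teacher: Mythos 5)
Your proposal is correct and follows essentially the same route as the paper's proof: decompose each $\langle\vv,\vphi(w)\rangle$ into a deterministic signal plus a sum of mean-zero cross terms, exploit that conditionally on $\vphi(w)$ the cross terms are independent sub-Gaussian with variance proxy $O(1/d)$, apply Hoeffding and a union bound over $\gV$, and finish by arithmetic. If anything you are a bit more careful than the paper's writeup, which silently drops the $\tfrac12\langle\vphi(\mathrm{EOS}),\vphi(z)\rangle$ cross term when writing out $\gamma_z$ and leaves the final constant comparison implicit; your version tracks that term and spells out the $1/16$ threshold explicitly, without changing anything substantive.
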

\begin{proof}
    Define $\gamma_{z}$ as 
    \begin{align*}
        \gamma_{z} := \begin{cases}
            \langle \vv, \vphi(z) \rangle - 1 & z \in \gV^{(h)}\\
            \langle \vv, \vphi(\mathrm{EOS}) \rangle - \frac12 & z  = \mathrm{EOS}\\
            \langle \vv, \vphi(z) \rangle & z \not\in \gV^{(h)}
        \end{cases}
    \end{align*}
    We first see that $\mathbb{E}\qty[\gamma_z] = 0$. 

    Next, observe that
    \begin{align*}
        \gamma_z = \begin{cases}\sum_{z' \in \gV^{(h)}}\langle \vphi(z), \vphi(z')\rangle &  z \not\in \gV^{(h)}\\
            \sum_{z' \in \gV^{(h)} \setminus \{z\}}\langle \vphi(z), \vphi(z')\rangle & z \in \gV^{(h)}
        \end{cases}
    \end{align*}
    Since each of the $\langle \vphi(z), \vphi(z')\rangle$ are independent subGaussian variables with variance proxy $1/d$, by Hoeffding's inequality we have that, with probability $1 - \delta'$,
    \begin{align*}
        \abs{\gamma_z} \lesssim \sqrt{\frac{\abs{\gV^{(h)}}\cdot \log(1/\delta')}{d}}.
    \end{align*}
    Setting $\delta' = \delta/\abs{\gV}$ and union bounding over all $z \in \gV$ yields the desired result.
\end{proof} 

\subsection{Construction via Self-Attention}

\begin{lemma}\label{lem:construct_VO}
    Let $\gV^{(h)} \subset \gV$, and for each $z \in \gV^{(h)}$, let $\gA^z \subset \gV$. Assume that $d \gtrsim \max_{z \in \gV^{(h)}}\abs{\gA^z}\log^6(\abs{\gV}/\delta)$ and $d_h \gtrsim \abs{\gV^{(h)}}\log^6(\abs{\gV}/\delta)$
    Define $$\mW := \frac{d}{d_h}\sum_{z \in \gV^{(h)}}\sum_{a \in \gA^z}\sum_{i=1}^{d_h} \vphi(a)\vphi(z)^\top \vw_i \vw_i^\top,$$ where $\vw_i$ are chosen uniformly on the sphere of radius 1, conditioned on being orthogonal to $\vphi(\mathrm{EOS}).$ Then, with probability $1 - \delta$ over the draw of the embeddings and the $\vw_i$,
    \begin{align*}
        \abs{\vphi(a)^\top \mW \vphi(z) - \mathbf{1}(a \in \gA^z)} \le \frac15
    \end{align*}
    for all $z \in \gV^{(h)}$, $a \in \gV$.
\end{lemma}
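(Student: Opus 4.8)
Fix $z \in \gV^{(h)}$ and $a \in \gV$ and set $Q_{a,z} := \vphi(a)^\top \mW \vphi(z)$. Expanding the definition of $\mW$,
\begin{align*}
    Q_{a,z} = \frac{d}{d_h}\sum_{z' \in \gV^{(h)}}\sum_{a' \in \gA^{z'}}\sum_{i=1}^{d_h} \langle \vphi(a), \vphi(a')\rangle\,\langle \vphi(z'), \vw_i\rangle\,\langle \vw_i, \vphi(z)\rangle .
\end{align*}
The plan is to show $\abs{Q_{a,z} - \mathbf 1(a \in \gA^z)} \le \tfrac15$ with probability $1 - \delta/\abs{\gV}^2$ and then union bound over the $\le \abs{\gV}^2$ pairs $(a,z)$; the $\log^6(\abs{\gV}/\delta)$ slack in the hypotheses is what absorbs this union bound, since $Q_{a,z}$ is a degree-$6$ polynomial in the sphere coordinates (each inner product is degree $2$, and there is a product of three of them).

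The guiding heuristic is that $\mA := \frac{d}{d_h}\sum_{i=1}^{d_h}\vw_i\vw_i^\top$ concentrates around $\mathbb E_{\vw}\mA = \frac{d}{d-1}\big(\mI - \vphi(\mathrm{EOS})\vphi(\mathrm{EOS})^\top\big) =: \frac{d}{d-1}P$, while the random embedding inner products $\langle \vphi(u),\vphi(v)\rangle$ concentrate around $\mathbf 1(u = v)$. Writing $Q_{a,z} = \langle \vc_a,\ \mA\,\vphi(z)\rangle$ with $\vc_a := \sum_{z'\in\gV^{(h)}}\beta_{z'}\vphi(z')$ and $\beta_{z'} := \sum_{a'\in\gA^{z'}}\langle\vphi(a),\vphi(a')\rangle$, the only ``diagonal'' contribution is $z'=z,\ a'=a$, which is present iff $a\in\gA^z$; its expectation over all randomness is exactly $\frac{d}{d-1}\,\mathbb E\norm{P\vphi(z)}^2 = \frac{d}{d-1}\cdot\frac{d-1}{d} = 1$, so $\mathbb E[Q_{a,z}] = \mathbf 1(a\in\gA^z)$ up to negligible lower-order terms arising only from rare coincidences among the tokens $a,a',z,z'$.

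To finish I would mirror the second-moment argument from the proof of \Cref{thm:lin-AM} (and its low-rank variant): bound $\Var(Q_{a,z})$ over the joint randomness and invoke polynomial concentration (\Cref{lem:poly-concentrate}) together with the sphere moment/hypercontractivity estimates (\Cref{lem:sphere-moment,lem:sphere-hyper}). The fluctuation has three sources. (i) The answer cross-terms $a'\neq a$ form a sub-Gaussian sum of at most $\max_z\abs{\gA^z}$ terms of variance proxy $1/d$, hence of size $\tilde O\big(\sqrt{\max_z\abs{\gA^z}/d}\,\big)$, which is $\le\tfrac15$ by $d\gtrsim\max_z\abs{\gA^z}\log^6(\abs{\gV}/\delta)$. (ii) The subject cross-terms $z'\neq z$: conditioning on the $\vw_i$, these reduce to evaluating the near-identity operator $\tfrac{d}{d-1}P$ against a random unit vector $\vphi(z)$ that is nearly orthogonal to $\mathrm{span}\{\vphi(z') : z'\in\gV^{(h)}\setminus\{z\}\}$, contributing $\tilde O(\norm{\vc_a}/\sqrt d)$. (iii) The deviation $\mA-\mathbb E_{\vw}\mA$: conditionally on the embeddings this is an average of $d_h$ i.i.d.\ mean-zero rank-one terms, so a Bernstein/polynomial bound gives $\abs{\langle\vc_a,(\mA-\mathbb E_{\vw}\mA)\vphi(z)\rangle} = \tilde O(\norm{\vc_a}/\sqrt{d_h})$, controlled by $d_h\gtrsim\abs{\gV^{(h)}}\log^6(\abs{\gV}/\delta)$.

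\textbf{Main obstacle.} The crux is bounding $\norm{\vc_a}$, which governs both (ii) and (iii). Since the $\vphi(z')$, $z'\in\gV^{(h)}$, are near-orthonormal — a Gram-matrix bound, using $\abs{\gV^{(h)}}\lesssim d_h$ (and $d_h \le d$) — one gets $\norm{\vc_a}^2 \lesssim \sum_{z'\in\gV^{(h)}}\beta_{z'}^2$, and by sub-Gaussian concentration each $\beta_{z'} = \mathbf 1(a\in\gA^{z'}) + \tilde O\big(\sqrt{\abs{\gA^{z'}}/d}\,\big)$, so $\norm{\vc_a}^2 = \tilde O\big(\abs{\{z'\in\gV^{(h)}: a\in\gA^{z'}\}} + \abs{\gV^{(h)}}\max_z\abs{\gA^z}/d\big)$. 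Carrying this multiplicity bookkeeping through the three error terms — while keeping track of all the conditioning needed so that each quantity is genuinely a low-degree polynomial in independent sphere/Gaussian variables to which \Cref{lem:poly-concentrate} applies — is the main technical work, and is the (somewhat more delicate) analogue of the $\Var(\gamma_{xy})$ computation in \Cref{thm:lin-AM}.
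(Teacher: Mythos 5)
Your proposal is correct and follows essentially the same route as the paper's proof: compute the mean (which is $\mathbf 1(a\in\gA^z)$ up to a $1/d$ correction), bound the variance of the degree-$6$ polynomial $\vphi(a)^\top\mW\vphi(z)$ by $\tilde O(\max_z\abs{\gA^z}/d + \abs{\gV^{(h)}}/d_h)$, and conclude via hypercontractivity-based polynomial concentration plus a union bound over $(a,z)$. Your three-way split of the fluctuation (answer cross-terms, subject cross-terms, deviation of $\frac{d}{d_h}\sum_i\vw_i\vw_i^\top$ from the projector) is just a regrouping of the paper's direct expansion of the second moment into the cross-terms $\mathbb E[\omega_{a'z'i}\omega_{a'z'j}]$, and your $\norm{\vc_a}^2$ bookkeeping reproduces the $\frac{1}{d_h}\sum_{z'}\mathbf 1(a\in\gA^{z'})$ term appearing there.
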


\begin{proof}
    Define 
    \begin{align*}
        \gamma_{az} := \vphi(a)^\top \mW \vphi(z).
    \end{align*}
    We first see that
    \begin{align*}
        \mathbb{E}\qty[\gamma_{az}] = \mathbb{E}\qty[\sum_{z' \in \gV^{(h)}}\sum_{a' \in \gA^z}\langle \vphi(a), \vphi(a')\rangle \langle \vphi(z), P_{\vphi(\mathrm{EOS})}^\perp\vphi(z')\rangle] = \frac{d-1}{d}\cdot\mathbf{1}(a \in \gA^z).
    \end{align*}
    We next compute the variance. For $a \not\in \mathcal{A}^z$,
    \begin{align*}
    \mathbb{E}\qty[\gamma_{az}^2] = \frac{d^2}{d_h^2}\mathbb{E}\qty[\qty(\sum_{z' \in \gV^{(h)}}\sum_{a'\in \gA^{z'}}\sum_{i=1}^{d_h} \langle \vphi(a), \vphi(a')\rangle \langle \vphi(z), \vw_i\rangle \langle \vw_i, \vphi(z')\rangle)^2]
    \end{align*}
    Define $\omega_{a'z'i} = \langle \vphi(a), \vphi(a')\rangle \langle \vphi(z), \vw_i\rangle \langle \vw_i, \vphi(z')\rangle$. We see that $\mathbb{E}\qty[\omega_{a_1z_1i}\omega_{a_2z_2j}]$ is nonzero only if $a_1 = a_2$ and $z_1 = z_2$. For $i \neq j$, we have that
    \begin{align*}
        \mathbb{E}\qty[\omega_{a'z'i}\omega_{a'z'j}] &= \mathbb{E}\qty[\langle \vphi(a), \vphi(a')\rangle^2 \langle \vphi(z), \vw_i\rangle \langle \vw_i, \vphi(z')\rangle\langle \vphi(z), \vw_j\rangle \langle \vw_j, \vphi(z')\rangle]\\
        &= d^{-2}\mathbb{E}\qty[\langle \vphi(a), \vphi(a')\rangle^2]\mathbb{E}\qty[\langle \vphi(z), P_{\vphi(\mathrm{EOS})}^\perp\vphi(z')\rangle^2]\\
        &\le d^{-2}\rho_{aa'}\rho_{zz'},
    \end{align*}
    where $\rho_{ij} = \begin{cases}
        1 & i = j\\
        d^{-1} & i \neq j
    \end{cases}.$
    Also,
    \begin{align*}
        \mathbb{E}\qty[\omega_{a'z'i}^2] &= \mathbb{E}\qty[\langle \vphi(a), \vphi(a')\rangle^2 \langle \vphi(z), \vw_i\rangle^2 \langle \vw_i, \vphi(z')\rangle^2].
    \end{align*}
    Since $\mathbb{E}\qty[\vw_i^{\otimes 4}] = \frac{3}{(d-1)(d+1)}\text{Sym}\qty((P_{\vphi(\mathrm{EOS})}^\perp)^{\otimes 2})$,
    \begin{align*}
        \mathbb{E}\qty[\omega_{a'z'i}^2] &\le (d^2 - 1)^{-1}\mathbb{E}\qty[\langle \vphi(a), \vphi(a')\rangle^2]\qty(\mathbb{E}\qty[\norm{P_{\vphi(\mathrm{EOS})}^\perp\vphi(z)}^2]^2 + 2\mathbb{E}\qty[\langle \vphi(z), P_{\vphi(\mathrm{EOS})}^\perp\vphi(z')\rangle^2])\\
        &\le d^{-2}\rho_{aa'}(1 + 2\rho_{zz'}).
    \end{align*}
    Altogether,
    \begin{align*}
        \mathbb{E}\qty[\gamma_{az}^2] &= \frac{d^2}{d_h^2}\sum_{z' \in \gV^{(h)}}\sum_{a'\in \gA^{z'}}\sum_{i,j=1}^{d_h}\mathbb{E}\qty[\omega_{a'z'i}\omega_{a'z'j}]\\
        &\le \sum_{z' \in \gV^{(h)}}\sum_{a'\in \gA^{z'}}\qty(\frac{d_h - 1}{d_h} \rho_{aa'}\rho_{zz'} + \frac{1}{d_h}\rho_{aa'}(1 + 2\rho_{zz'}))\\
        &= \frac{d_h + 2}{d_h}\sum_{a' \in \gA^z}\rho_{aa'} + \frac{d + d_h + 1}{dd_h}\sum_{z' \in \gV^{(h)}\setminus \{z\}}\sum_{a' \in \gA^{z'}}\rho_{aa'}\\
        &\le \qty(\frac{d_h + 2}{d_h})\qty(\mathbf{1}(a \in \gA^z) + \frac{\abs{\gA^z}}{d}) + \frac{d + d_h + 1}{dd_h}\cdot \sum_{z'\in \gV^{(h)}\setminus \{z\}}\qty(\mathbf{1}(a \in \mathcal{A}^{z'}) +  \frac{\abs{\gA^{z'}}}{d}),
    \end{align*}
    and thus
    \begin{align*}
        \Var(\gamma_{az}) &= \mathbb{E}\qty[\gamma_{az}^2] - \frac{d-1}{d}\cdot\mathbf{1}(a \in \gA^z)\\
        &\lesssim \frac{\abs{\gA^z}}{d} + \frac{1}{d_h}\sum_{z'\in \gV^{(h)}}\qty(\mathbf{1}(a \in \mathcal{A}^{z'}) +  \frac{\abs{\gA^{z'}}}{d})\\
        &\lesssim \frac{\max_{z \in \gV^{(h)}}\abs{\gA^z}}{d} + \frac{\abs{\gV^{(h)}}}{d_h},
    \end{align*}
    since $d \ge \abs{\gA^z}, d_h \ge \abs{\gV^{(h)}}$
    Next, since $\gamma_{az}$ is a degree 6 polynomial, with probability $1 - \frac{\delta}{\abs{\gV}\abs{\gV^{(h)}}}$ we have that
    \begin{align*}
        \abs{\gamma_{az} - \mathbf{1}(a \in \gA^z)} &\lesssim \sqrt{\Var(\gamma_{az})\log^6(\abs{\gV}/\delta)}\\
        &\lesssim \sqrt{\frac{\max_{z \in \gV^{(h)}}\abs{\gA^z}\log^6(\abs{\gV}/\delta)}{d} + \frac{\abs{\gV^{(h)}}\log^6(\abs{\gV}/\delta)}{d_h}}\\
        &\le \frac15.
    \end{align*}
    Union bounding over all $z \in \gV^{(h)}$, $a \in \gV$ yields the desired result.
\end{proof}

% \begin{theorem}\label{thm:fact-recall-attn-only}
%     Assume that $d \gtrsim \max(R, D)\cdot\log^6(\abs{\gV}SR/\delta)$ and $Hd_h \gtrsim S \log^6(\abs{\gV}SR/\delta)$. Then, with probability $1 - \delta$, there exists a single-layer attention-only transformer $F_{\mathrm{TF}}(~\cdot~; \vtheta_{\mathrm{TF}})$ with embedding dimension $d$, number of heads $H$ and head dimension $d_h$such that
%     \begin{align*}
%         \mathbb{P}_{z_{1:T+1} \sim \mathcal{D}}\left[\arg\max_{z \in \gV} \vphi(z)^\top F_{\mathrm{TF}}(\mX; \vtheta_{\mathrm{TF}}) = z_{T+1}\right] = 1.
%     \end{align*}
% \end{theorem}
Let us state the formal version of \Cref{thm:fact-recall-attn-only} which we aim to prove:
\begin{theorem}\label{thm:fact-recall-attn-only-formal}
    Assume that $d \gtrsim \max(R, D)\cdot\log^6(\abs{\gV}SR/\delta)$ and $Hd_h \gtrsim S \log^6(\abs{\gV}SR/\delta)$. Then, with probability $1 - \delta$, there exists a single-layer attention-only transformer $F_{\mathrm{TF}}(~\cdot~; \vtheta_{\mathrm{TF}})$ with embedding dimension $d$, number of heads $H$ and head dimension $d_h$such that
    \begin{align*}
        \mathbb{P}_{z_{1:T+1} \sim \mathcal{D}}\left[\arg\max_{z \in \gV} \vphi(z)^\top F_{\mathrm{TF}}(\mX; \vtheta_{\mathrm{TF}}) = z_{T+1}\right] = 1.
    \end{align*}
\end{theorem}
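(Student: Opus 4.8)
The strategy is to combine the denoising lemma (\Cref{lem:filter}) with the linear associative-memory lemma (\Cref{lem:construct_VO}) by partitioning the subject and relation tokens across the $H$ heads. First I would partition $\gS \cup \gR$ into $H$ disjoint subsets $\gV^{(1)}, \dots, \gV^{(H)}$, each of size at most $\lceil (S+R)/H\rceil \lesssim d_h / \poly\log$, which is possible precisely because $Hd_h \gtrsim (S+R)\poly\log \gtrsim S\poly\log$ (and $R \le D \cdot R$ is subsumed). For each head $h$, set $\mW_K^{(h){}^\top}\mW_Q^{(h)} = \beta \mW_{\mathrm{filter}}^{(h)}$ where $\beta$ is a large constant and $\mW_{\mathrm{filter}}^{(h)}$ is built from $\vv^{(h)} := \sum_{z \in \gV^{(h)}}\vphi(z) + \tfrac12\vphi(\mathrm{EOS})$ as in \Cref{lem:filter}, i.e.\ $\mW_K^{(h){}^\top}\mW_Q^{(h)}\vphi(\mathrm{EOS}) = \beta \vv^{(h)}$ (this is realizable with $d_h$-dimensional factors since $\vv^{(h)}$ is a fixed vector). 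Then on any input sequence $z_{1:T}$ containing subject $s$ and relation $r$, the attention logits of head $h$ are $\beta\langle \vv^{(h)}, \vphi(z_k)\rangle$; by \Cref{lem:filter} the tokens in $\gV^{(h)}$ (i.e.\ whichever of $s,r$ lies in $\gV^{(h)}$) beat $\mathrm{EOS}$ by a constant margin, which in turn beats all noise tokens and all out-of-subset tokens. Taking $\beta = \Theta(\log(T/\delta'))$ (a mild $\poly\log$ factor) makes the softmax concentrate: head $h$ attends essentially only to the token(s) of $\gV^{(h)}$ appearing in the sequence, up to error $\le \gamma$ that I can drive below any fixed constant.

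Next, for the output-value matrices, I would apply \Cref{lem:construct_VO} to each head with $\gV^{(h)}$ as above and, for each $z \in \gV^{(h)}$, the answer set $\gA^z := \gA_z$ (all answers compatible with that subject or relation; $\abs{\gA_z} \le R + D \lesssim \max(R,D)$). Set $\mW_O^{(h){}^\top}\mW_V^{(h)} = c\,\mW^{(h)}$ for the matrix $\mW^{(h)}$ of \Cref{lem:construct_VO} and an appropriate scaling constant $c$. By that lemma, $\vphi(a)^\top \mW^{(h)}\vphi(z) \in [\mathbf 1(a \in \gA_z) - \tfrac15, \mathbf 1(a \in \gA_z)+\tfrac15]$ for every $z \in \gV^{(h)}$; the hypotheses $d \gtrsim \max(R,D)\poly\log$ and $d_h \gtrsim \abs{\gV^{(h)}}\poly\log$ are exactly what we arranged. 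Summing over heads, $F_{\mathrm{MHSA}}(\mX) = \sum_h \mW_O^{(h){}^\top}\mW_V^{(h)}\cdot(\text{attention-averaged }\vphi(\cdot))$; because of the denoising step this is (up to a small error from imperfect softmax and the $\vphi(\mathrm{EOS})$ leakage) equal to $c\big(\mW^{(h_s)}\vphi(s) + \mW^{(h_r)}\vphi(r)\big)$ where $h_s, h_r$ are the heads owning $s$ and $r$. I then need $\vphi(a^*(s,r))$ to be the argmax of $\vphi(a)^\top F_{\mathrm{MHSA}}(\mX)$ over $a \in \gV$.

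The key scoring argument: $\vphi(a)^\top F_{\mathrm{MHSA}}(\mX) \approx c(\mathbf 1(a \in \gA_s) + \mathbf 1(a \in \gA_r))$. The unique answer $a^*(s,r)$ lies in $\gA_s \cap \gA_r$, scoring $\approx 2c$; any other $a$ lies in at most one of $\gA_s, \gA_r$ (here I use \Cref{assume:disjoint-answers}, which forces $\gA_r \cap \gA_{r'} = \emptyset$, so $\gA_s$ meets $\gA_r$ only at $a^*(s,r)$, and for $a \notin \gA_s \cup \gA_r$ the score is $\approx 0$), scoring $\approx c$; and tokens $z \in \gS\cup\gR\cup\gN\cup\{\mathrm{EOS}\}$ score $\approx 0$ since the image of $\mW^{(h)}$ lies in the span of answer embeddings, which are near-orthogonal to non-answer embeddings. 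Choosing the error tolerances in \Cref{lem:filter} (say $\le \tfrac{1}{20}$ by enlarging $\beta$) and in \Cref{lem:construct_VO} ($\le \tfrac15$) so that the total additive error is $< c/2$ separates $a^*(s,r)$ from everything else, giving argmax $= a^*(s,r) = z_{T+1}$ deterministically for every sequence in the support of $\mathcal D$. A union bound over the $H$ invocations of each lemma (absorbing $H \le S$ and the $\le SR$ answer pairs into the $\log(\abs{\gV}SR/\delta)$ factor) gives the event holding with probability $1-\delta$. The main obstacle is the careful bookkeeping of how the imperfect softmax weights — which spread a small mass $O(\gamma)$ onto $\mathrm{EOS}$ and noise tokens, and onto the "wrong" of $s$ vs $r$ within a head's subset if both happen to appear — propagate through the associative-memory readout; I would control this by noting $\mW^{(h)}$ has operator norm $O(1)$ (so small attention errors give small output errors) and by ensuring the filtering margin $\beta$ is large enough that the leaked mass is $o(1/c)$ relative to the target margin.
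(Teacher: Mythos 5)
Your overall strategy — use \Cref{lem:filter} to build a key--query denoiser per head, and \Cref{lem:construct_VO} to build a per-head linear associative memory from $\gS^{(h)}$ to the answer sets, then decode by comparing the combined scores — is the same as the paper's. But there is a genuine gap in how you set up the partition.

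You partition $\gS \cup \gR$ jointly into $H$ arbitrary subsets, which means both $s$ and $r$ can land in the same subset $\gV^{(h)}$ for some input. When that happens, head $h$ contains \emph{two} tokens with high logit, and no other head contains either. If you push $\beta \to \infty$ (which is what makes the paper's hardmax argument clean), the softmax in head $h$ collapses onto whichever of $s,r$ has the marginally larger inner product with $\vv^{(h)}$ — say $s$ — and the contribution of $r$ is entirely lost. The score of $a^*(s,r)$ and of every other $a \in \gA_s$ are then both $\approx c$, and the argmax can fail. Your target identity $\vphi(a)^\top F_{\mathrm{MHSA}}(\mX) \approx c\bigl(\mathbf 1(a\in\gA_s)+\mathbf 1(a\in\gA_r)\bigr)$ is simply wrong in this case, and it is not a small ``leaked mass'' effect that a larger $\beta$ cures; larger $\beta$ makes it strictly worse. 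If instead you keep $\beta$ moderate so that $s$ and $r$ split attention roughly $50/50$, you need an additional two-sided bound $\log T \ll \beta \ll \sqrt{d}/\poly\log$ to ensure noise tokens and $\mathrm{EOS}$ are suppressed while $s,r$ remain balanced, which imposes an unstated relation between $T$ and $d$; and even then every score in that case is halved, so your margin computation must be redone with the extra factor of $2$. Your proposal does not do any of this bookkeeping.

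The clean fix, and what the paper actually does, is to partition $\gS$ and $\gR$ \emph{separately} into $N_S + N_R$ subsets so that every head owns exclusively subject tokens or exclusively relation tokens. Then each head sees at most one relevant token in any sequence, $\beta \to \infty$ gives a deterministic hardmax (to the unique relevant token if present, else to $\mathrm{EOS}$, which is annihilated because the $\vw_i$ in \Cref{lem:construct_VO} are taken orthogonal to $\vphi(\mathrm{EOS})$), exactly two heads contribute nonzero output, and the score decomposition you wrote holds with no factor of $2$ and no dependence on $T$. With that modification the rest of your argument — the union bound, the observation $\gA_s \cap \gA_r = \{a^*(s,r)\}$ via \Cref{assume:disjoint-answers}, and the use of $\abs{\gA_z}\lesssim \max(R,D)$ in \Cref{lem:construct_VO} — goes through and matches the paper.
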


\paragraph{Remark.} When $R \ge D$, one can obtain an accuracy of 100\% whenever the total parameter count is
\begin{align*}
    Hdd_h \gtrsim SR \poly\log(\abs{\gV}SR/\delta).
\end{align*}

\begin{proof}[Proof of \Cref{thm:fact-recall-attn-only-formal}]
    Partition $\mathcal{S}$ into the sets $\mathcal{S}^{(1)}, \dots, \mathcal{S}^{(N_S)}$ and $\mathcal{R}$ into the sets $\mathcal{R}^{(1)}, \dots, \mathcal{R}^{(N_R)}$, such that $\abs{\gS^{(i)}}, \abs{\gR^{(j)}} \le M$ and $N_S = \lceil \frac{S}{M} \rceil, N_R = \lceil \frac{R}{M} \rceil$.

    % For each $s \in \mathcal{S}$, partition the set $\mathcal{A}_s$ into the sets $\mathcal{A}_s^{(1)}, \dots, \mathcal{A}_s^{(P_S)}$, such that $\abs{\mathcal{A}_s^{(i)}} \le Q$. Likewise, partition $\mathcal{A}_r$ into $\mathcal{A}_r^{(1)}, \dots, \mathcal{A}_r^{(P_R)}$, where $\abs{\mathcal{A}_r^{(i)}} \le Q$. Since $\abs{\mathcal{A}_s} \le R$ and $\abs{\mathcal{A}_r} \le S$, this partition can be done so that $P_S \le \lceil \frac{R}{Q} \rceil, P_R \le \lceil \frac{S}{Q} \rceil$.

    Let us choose $M$ so that $d \ge d_h \gtrsim M\log^6(\abs{\gV}/\delta)$. The total number of attention heads is then
    \begin{align*}
        H = N_S + N_R \gtrsim \frac{S \log^6(\abs{\gV}/\delta)}{d_h}.
    \end{align*}

    For each $i \in [N_S]$, we construct the attention head $i$ as follows. First, let 
    \begin{align*}
        {\mW^{(i)}_K}^\top\mW^{(i)}_Q = \beta\sum_{z \in \gS^{(i)}} \vphi(z)\vphi(\mathrm{EOS})^\top + \frac{\beta}{2}\vphi(\mathrm{EOS})\vphi(\mathrm{EOS})^\top
    \end{align*}
    for a large constant $\beta$. Next, set
    \begin{align*}
        {\mW^{(i)}_O}^\top \mW^{(i)}_V = \frac{d}{d_h}\sum_{z \in \gS^{(i)}}\sum_{a \in \gA_z}\sum_{i=1}^{d_h} \vphi(a)\vphi(z)^\top \vw_i \vw_i^\top,
    \end{align*}
    for $\vw_i$ sampled uniformly on the sphere, orthogonal to $\vphi(\mathrm{EOS})$.
    
    Consider an input sequence $(z_1, \dots, z_T)$, and let $s$ be the subject token in this sequence. On the event that \Cref{lem:filter} holds, if $s \not\in \gS^{(i)}$, then $z_t \not \in \gS^{(i)}$, and thus
    \begin{align*}
        \vphi(z_t)^\top {\mW^{(i)}_K}^\top\mW^{(i)}_Q \vphi(\mathrm{EOS}) < \vphi(\mathrm{EOS})^\top {\mW^{(i)}_K}^\top\mW^{(i)}_Q \vphi(\mathrm{EOS}) - \frac{\beta}{2}
    \end{align*}
    for all $t < T$. As $\beta \rightarrow \infty$, the self-attention module fully attends to the $\mathrm{EOS}$ token. On the other hand, if $s \in \gS^{(i)}$, then if $z_{t^*} = s$ we have
    \begin{align*}
                \vphi(z_t)^\top {\mW^{(i)}_K}^\top\mW^{(i)}_Q \vphi(\mathrm{EOS}) < \vphi(z_{t^*})^\top {\mW^{(i)}_K}^\top\mW^{(i)}_Q \vphi(\mathrm{EOS}) - \frac{\beta}{2}
    \end{align*}
    for all $t \neq t^*$. Likewise, as $\beta \rightarrow \infty$, the softmax converges to a hardmax on the $z_{t^*}$ token. Altogether, we get that
    \begin{align*}
        \mX^\top \gS\qty(\mX{\mW^{(i)}_K}^\top\mW^{(i)}_Q \vx_T) = \begin{cases}
            \vphi(\mathrm{EOS}) & s \not\in \gS^{(i)}\\
            \vphi(s) & s \in \gS^{(i)}
        \end{cases}.
    \end{align*}
    Next, on the event that \Cref{lem:construct_VO} holds, since $d \gtrsim R\log^6(\abs{\gV}/\delta) \ge \abs{\gA_s}\log^6(\abs{\gV}/\delta)$, we have that
    \begin{align*}
        \abs{\vphi(a)^\top {\mW^{(i)}_O}^\top \mW^{(i)}_V \vphi(s) - \mathbf{1}(a \in \gA_s)} \le \frac16
    \end{align*}
    for $s \in \gS^{(i)}$. Defining $\attn_i :=  {\mW^{(i)}_O}^\top \mW^{(i)}_V\gS\qty(\mX{\mW^{(i)}_K}^\top\mW^{(i)}_Q \vx_T)$, we have that
    \begin{align*}
        \vphi(a)^\top\attn_i \in \begin{cases}
            \{0\} & s \not\in \gS^{(i)}\\
            [-\frac15, \frac15] & s \in \gS^{(i)}, a \not\in \gA_s\\
            [\frac45, \frac65] & s \in \gS^{(i)}, a \in \gA_s
        \end{cases}.
    \end{align*}
    By an identical construction, for each $j \in [N_R]$, with probability $1 - 2\delta$ we can construct the attention head $N_S + j$ such that
    \begin{align*}
        \vphi(a)^\top\attn_{N_S + j} \in \begin{cases}
            \{0\} & r \not\in \gR^{(i)}\\
            [-\frac15, \frac15] & r \in \gR^{(i)}, a \not\in \gA_r\\
            [\frac45, \frac65] & r \in \gR^{(i)}, a \in \gA_r
        \end{cases},
    \end{align*}
    as long as $d \gtrsim D\log^6(\abs{\gV}/\delta) \ge \abs{\gA_r}\log^6(\abs{\gV}/\delta)$. Therefore by a union bound, with probability $1 - 2SR\delta$ we have that (where $s \in \gS^{(i)}$ and $r \in \gS^{(j)}$
    \begin{align*}
        \vphi(a)^\top F_{\mathrm{MHSA}}(\mX; \vtheta) &= \sum_{h = 1}^{N_S + N_R}\vphi(a)^\top\attn_h\\
        &=  \vphi(a)^\top\attn_i + \vphi(a)^\top\attn_{N_S + j}
    \end{align*}
    If $a = a^*(s, r)$, then $a \in \gA_s \cap \gA_r$, and thus
    \begin{align*}
        \vphi(a)^\top F_{\mathrm{MHSA}}(\mX; \vtheta) \ge \frac45 + \frac45 = \frac85.
    \end{align*}
    Otherwise, either $\vphi(a)^\top\attn_i$ or $\vphi(a)^\top\attn_{N_S + j}$ is $\le \frac15$ and thus
    \begin{align*}
        \vphi(a)^\top F_{\mathrm{MHSA}}(\mX; \vtheta) \ge \frac65 + \frac15 = \frac75.
    \end{align*}
    Therefore $\arg\max_{a \in \mathcal{V}} \vphi(a)^\top F_{\mathrm{MHSA}}(\mX; \vtheta) = a^*(s, r)$. Replacing $2SR\delta$ with $\delta$ yields the desired result.
\end{proof}

% \enote{todo: do we additionally present the construction where the last token is the relation?}

\subsection{Construction via MLP}

\begin{lemma}\label{lem:MLP_trigram}
Let $\epsilon$ be a fixed constant. Assume that $q$ in \Cref{assume:sigma} satisfies $q = \frac{C_2}{\epsilon}$ for some $C_2 > 2$. Assume that $d \ge S^{\epsilon}, R^{\epsilon}$. Define $C(a) = \abs{\{(s, r) : a^*(s, r) = a\}}$. 

Let $d$ be odd, and let $P, Q$ be orthogonal $\lfloor d/2 \rfloor$ dimensional subspaces of $\mathbb{R}^d$. Define $\vtphi(s) = \Pi_P\vphi(s), \vtphi(r) = \Pi_Q\vphi(r)$.

There exists universal constants $C_3, C_4$ such that if 
\begin{align*}
    d &\gtrsim (C_3\log(\abs{\mathcal{V}}/\delta)/\epsilon)^{C_4/\epsilon}\\
    m &\gtrsim (C_3\log(\abs{\mathcal{V}}/\delta))^{C_4/\epsilon}\cdot \max_a C(a)\\
    md &\gtrsim (C_3\log(\abs{\mathcal{V}}/\delta))^{C_4/\epsilon}\cdot SR,
\end{align*}
then with probability $1 - \delta$ over the draw of the embeddings there exists a two-layer neural network $F(\vz) = \sum_{i \in [m]} \vv_i \sigma(\vw_i^\top \vz)$ of width $m$ satisfying
\begin{align*}
    \arg\max_{a \in \gV} \vphi(a)^\top F(\vtphi(s) + \vtphi(r)) = a^*(s,r)
\end{align*}
for all $s \in \gS, r \in \gR$.
\end{lemma}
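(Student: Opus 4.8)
The plan is to adapt the proof of \Cref{thm:mlp-AM}, treating the $SR$ pairs $(s,r)$ as ``input tokens'' with embeddings $\ve_{(s,r)}:=\vtphi(s)+\vtphi(r)$ and $a^*$ as the (non-injective) target function, and building a \emph{product}-type polynomial associative memory. Fix the first-layer weights $\vw_i$ i.i.d.\ standard Gaussian and write $u_i(s):=\langle\vw_i,\vtphi(s)\rangle$, $v_i(r):=\langle\vw_i,\vtphi(r)\rangle$; because $P\perp Q$, the families $(u_i(s))_s$ and $(v_i(r))_r$ are \emph{independent} Gaussian processes with covariances $\langle\vtphi(s),\vtphi(s')\rangle$ and $\langle\vtphi(r),\vtphi(r')\rangle$ respectively. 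A standard concentration argument shows that with probability $1-\delta$ over the sphere embeddings, $\|\vtphi(s)\|^2,\|\vtphi(r)\|^2=\tfrac12\pm\tilde O(d^{-1/2})$ and $|\langle\vtphi(s),\vtphi(s')\rangle|,|\langle\vtphi(r),\vtphi(r')\rangle|=\tilde O(d^{-1/2})$ for distinct pairs, together with the moment bound $\mathbb{E}[\langle\vtphi(s),\vtphi(s')\rangle^{2\ell}]\lesssim(\ell/d)^{\ell}$ (the analogue of \Cref{lem:sphere-moment} for a $\lfloor d/2\rfloor$-dimensional projection).

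Next I construct the second layer. Choose integers $k_1,k_2\asymp 1/\epsilon$ with $k_1+k_2\le q$; using \Cref{assume:sigma} (all Hermite coefficients of $\sigma$ nonzero) one checks that the coefficient of $h_{k_1}(X)h_{k_2}(Y)$ in the expansion of $(X,Y)\mapsto\sigma(\alpha X+\beta Y)$ is a nonzero polynomial in $(\alpha,\beta)$, hence bounded away from zero uniformly over the $\tilde O(d^{-1/2})$-neighborhood of $\alpha=\beta=1/\sqrt2$ in which $(\|\vtphi(s)\|,\|\vtphi(r)\|)$ lies. Setting $\vv_i:=\tfrac1m\sum_{s',r'}\vphi(a^*(s',r'))\,h_{k_1}(u_i(s')/\|\vtphi(s')\|)\,h_{k_2}(v_i(r')/\|\vtphi(r')\|)$, the $\vw_i$-expectation of the $i$-th summand of $F(\ve_{(s,r)})$ is, up to a nonzero constant $c$, $\sum_{s',r'}\vphi(a^*(s',r'))\,\langle\vtphi(s),\vtphi(s')\rangle^{k_1}\langle\vtphi(r),\vtphi(r')\rangle^{k_2}\cdot(\text{norm factors})$; thus $F(\ve_{(s,r)})\approx c'\sum_{s',r'}\vphi(a^*(s',r'))K\!\big((s,r),(s',r')\big)$ for a product kernel $K$ that is $\Theta(1)$ on the diagonal and $\approx 0$ off it. As in \Cref{thm:lin-AM,thm:mlp-AM}, define the margin $\gamma_{(s,r),a}:=\langle\vphi(a^*(s,r))-\vphi(a),F(\ve_{(s,r)})\rangle$; using \Cref{assume:disjoint-answers} and the fact that $a^*(s,r)$ appears only in the diagonal term, $\mathbb{E}[\gamma_{(s,r),a}]\gtrsim c''>0$ for every $a\ne a^*(s,r)$.

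The crux is bounding $\Var(\gamma_{(s,r),a})$ against this constant-order mean; two sources of fluctuation must be controlled. (i) Fluctuations of $K$ around its near-diagonal expectation: pairs differing from $(s,r)$ in exactly one coordinate contribute $\lesssim(S+R)\cdot(\tilde O(1)/d)^{\min(k_1,k_2)}$ and pairs differing in both contribute $\lesssim SR\cdot(\tilde O(1)/d)^{k_1+k_2}$ by the moment bound; since $k_1,k_2\gtrsim 1/\epsilon$ and $d\ge S^\epsilon,R^\epsilon$, both are $\le 1$ once $d\gtrsim(C\log(|\gV|/\delta)/\epsilon)^{C/\epsilon}$ — exactly the stated $d$-condition. (ii) Fluctuations from the $m$-neuron average and from the random answer embeddings $\vphi(a)$: here the variance is $\lesssim\tfrac1m\big(\max_a C(a)+\tfrac{SR}{d}\big)\cdot 2^{O(q)}$, where the $\max_a C(a)$ term appears precisely because when several $(s',r')$ map to the same answer the inner products $\langle\vphi(a),\vphi(a^*(s',r'))\rangle$ are $\Theta(1)$ rather than $\Theta(1/d)$ for those $C(a)$ collisions, and the $2^{O(q)}$ factor comes from Gaussian hypercontractivity (\Cref{lem:sphere-hyper}) applied to the degree-$O(q)$ polynomial features. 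Requiring this to be $\le 1$ yields exactly $m\gtrsim 2^{O(q)}\max_a C(a)$ and $md\gtrsim 2^{O(q)}SR$.

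Finally, since $\gamma_{(s,r),a}$ is a polynomial of degree $O(q)=O(1/\epsilon)$ in the underlying Gaussian and spherical variables, \Cref{lem:poly-concentrate} gives $\mathbb{P}[\gamma_{(s,r),a}\le 0]\le\delta'$ whenever $\log^{O(q)}(1/\delta')\cdot\Var(\gamma)/(\mathbb{E}\gamma)^2\le 1$, which holds under the hypotheses with $\delta'=\delta/(|\gV|\,SR)$; a union bound over all $(s,r)$ and $a\ne a^*(s,r)$ then gives $\arg\max_a\vphi(a)^\top F(\vtphi(s)+\vtphi(r))=a^*(s,r)$ for all pairs, finishing the proof. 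I expect the main obstacle to be step (iii): producing a single variance estimate whose two pieces separate cleanly into the $d$-condition and the $(m,md)$-conditions, and in particular isolating the role of answer collisions $C(a)$; a secondary nuisance is that $\vtphi(s),\vtphi(r)$ are not exactly norm $1/\sqrt2$, so the Hermite expansion of $\sigma(\alpha X+\beta Y)$ must be handled for $(\alpha,\beta)$ ranging over a small interval rather than at a point (alternatively, one can pass to the NTK model as in \Cref{thm:mlp-AM} to avoid this).
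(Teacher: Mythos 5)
Your proposal is correct and follows essentially the same route as the paper's proof: a product Hermite associative memory in the second layer (the paper writes $\vv_i$ via the Hermite tensor $\tHe_{p+k}(\vw_i)$ contracted with $\vtphi(s')^{\otimes p}\otimes\vtphi(r')^{\otimes k}$, which factorizes into your $h_{k_1}(u_i)h_{k_2}(v_i)$ form since $P\perp Q$), the same three-way variance split yielding the $d$-, $m$-, and $md$-conditions, and the same hypercontractivity-based polynomial concentration plus union bound. The only point worth tightening is your mixed-Hermite-coefficient step: a nonzero polynomial in $(\alpha,\beta)$ need not be bounded away from zero near $\alpha=\beta=1/\sqrt2$; the paper sidesteps this via Stein's lemma, which gives the coefficient as $\alpha^{p}\beta^{k}\,\mathbb{E}_Z[\sigma^{(p+k)}(Z\sqrt{\alpha^2+\beta^2})]$, equal to $2^{-(p+k)/2}c_{p+k}\neq 0$ at the center up to $O(1/d)$ corrections.
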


% \enote{the issue here is that currently the proof only works if $\vphi(s), \vphi(r)$ are orthogonal and have the same norm. I need it to work for the case where $\vphi(s), \vphi(r)$ are both projections of the correct embedding onto $d/2$-dim subspace}

\begin{proof}
    For odd integers $p, k$ to be determined later, let us set
    \begin{align*}
        \vv_i = \frac{1}{m}\sum_{s, r} \langle \tHe_{p+k}(\vw_i), \vtphi(s)^{\otimes p} \otimes \vtphi(r)^{\otimes k} \rangle \cdot \vphi(a^*(s, r)),
    \end{align*}
    where $\tHe_{p+k} : \mathbb{R}^d \rightarrow (\mathbb{R}^d)^{\otimes(p+k)}$ is the Hermite tensor of degree $p + k$ (see \Cref{app:hermite}). Assume without loss of generality that $c_{p + k} := \mathbb{E}\qty[\sigma^{(p + k)}(z)]$, the $(p + k)$th Hermite coefficient of $\sigma$ is positive (the negative case can be handled by negating all the $\vv_i$ in the construction)
    
    For some $(s, r)$, the margin for some $a \neq a^*(s, r)$ is 
    \begin{align*}
        \gamma_{sra} &= \langle \vphi(a^*(s, r)) - \vphi(a), F(\vtphi(s) + \vtphi(r))\\
        &= \frac{1}{m} \sum_{i \in [m]}\sum_{s', r'} \sigma(\langle \vw_i, \vtphi(s) + \vtphi(r)\rangle)\langle \tHe_{p+k}(\vw_i), \vphi(s')^{\otimes p} \otimes \vtphi(r')^{\otimes k} \rangle\\
        &\qquad \cdot \langle \vphi(a^*(s', r')), \vphi(a^*(s, r)) - \vphi(a)\rangle.
    \end{align*}
    We first see that
    \begin{align*}
        &\mathbb{E}\qty[\gamma_{sra}]\\
        &= \sum_{s'r'}\mathbb{E}\qty[\sigma(\langle \vw_i, \vtphi(s) + \vtphi(r)\rangle)\langle \tHe_{p+k}(\vw_i), \vphi(s')^{\otimes p} \otimes \vtphi(r')^{\otimes k}\rangle]\\
        & \qquad \cdot \qty(\mathbf{1}(a^*(s, r) = a^*(s', r')) - \mathbf{1}(a = a^*(s', r')))\\
        &= \sum_{s'r'}\mathbb{E}\qty[\sigma^{(p + k)}(\langle \vw_i, \vtphi(s) + \vtphi(r)\rangle)\langle (\vtphi(s) + \vtphi(r))^{\otimes(p + k)}, \vphi(s')^{\otimes p} \otimes \vtphi(r')^{\otimes k} \rangle]\\
        & \qquad \cdot \qty(\mathbf{1}(a^*(s, r) = a^*(s', r')) - \mathbf{1}(a = a^*(s', r')))
    \end{align*}
    If either $s' \neq s$ or $r \neq r'$, we see that conditioned on $\vtphi(s), \vtphi(r)$, the quantity $\vphi(s')^{\otimes p} \otimes \vtphi(r')^{\otimes k}$ is mean zero. Therefore the only nonzero term in the sum is when $(s, r) = (s', r')$, and so
    \begin{align*}
        \mathbb{E}\qty[\gamma_{sra}] = \mathbb{E}_{Z \sim \mathcal{N}(0, 1)}\qty[\sigma^{(p + k)}\qty(Z\sqrt{\norm{\vtphi(s)}^2 + \norm{\vtphi(r)}^2})\cdot \norm{\vtphi(s)}^{2p}\norm{\vtphi(r)}^{2k}]
    \end{align*}
    The quantities $\norm{\vtphi(s)}^2 - \frac12, \norm{\vtphi(r)}^2 - \frac12$ are subexponential random variables with Orlicz norm $1/d$, and therefore we can bound
    \begin{align*}
        \abs{\mathbb{E}\qty[\gamma_{sra}] - c_{p + k}2^{-p - k}} \lesssim \frac{1}{d}
    \end{align*}

    % is this rigorous?
    % \begin{align*}
    % &\abs{\mathbb{E}\qty[\gamma_{sra}] - c_{p + k}2^{-p - k}}\\
    % &\le \abs{\mathbb{E}\qty[\qty(\sigma^{(p + k)}\qty(Z\sqrt{\norm{\vphi(s)}^2 + \norm{\vphi(r)}^2}) - \sigma(Z))\cdot \norm{\vphi(s)}^{2p}\norm{\vphi(r)}^{2k}]} + \abs{\mathbb{E}\qty[\sigma(Z)\qty(\norm{\vphi(s)}^{2p}\norm{\vphi(r)}^{2k} - 2^{-p-k})]}\\
    % &\lesssim \abs{\mathbb{E}\qty[\abs{Z}^q\abs{\sqrt{\norm{\vphi(s)}^2 + \norm{\vphi(r)}^2} - 1}^q\cdot \norm{\vphi(s)}^{2p}\norm{\vphi(r)}^{2k}]} + \abs{\mathbb{E}\qty[\norm{\vphi(s)}^{2p}\norm{\vphi(r)}^{2k} - 2^{-p-k}]}\\
    % &\lesssim q^{q/2}\mathbb{E}\qty[\abs{\sqrt{\norm{\vphi(s)}^2 + \norm{\vphi(r)}^2} - 1}^{2q}]^{1/2}\mathbb{E}\qty[\norm{\vphi(s)}^{4p}\norm{\vphi(r)}^{4k}]^{1/2} + \abs{\mathbb{E}\qty[\norm{\vphi(s)}^{2p}\norm{\vphi(r)}^{2k} - 2^{-p-k}]}
    % \end{align*}
    % where we used the fact that $\sigma$ is a degree $q$ polynomial. Next, since $\norm{\vphi(s)}^2 - \frac12$ is $1/d$-subExponential, we have that $\mathbb{E}\qty[\norm{\vphi(s)}^{2p}] - 2^{-p} \lesssim pd^{-1}$. Therefore
    % \begin{align*}
    %     \mathbb{E}\qty[\abs{\sqrt{\norm{\vphi(s)}^2 + \norm{\vphi(r)}^2} - 1}^{2q}]^{1/2} &\le \mathbb{E}\qty[\abs{\norm{\vphi(s)}^2 + \norm{\vphi(r)}^2 - 1}^{q}]^{1/2}\\
    %     &\le 
    % \end{align*}
    % \begin{align*}
    %     \mathbb{E}\qty[\sigma^{(p + k)}(\sqrt{2}Z)]\mathbb{E}\qty[\langle (\vphi(s) + \vphi(r))^{\otimes(p + k)}, \vphi(s)^{\otimes p} \otimes \vphi(r)^{\otimes k} \rangle]\\
    %     &= c_{p + k}
    % \end{align*}

    We next compute the variance. Define $\omega_{is'r'}$ by
    \begin{align*}
        \omega_{is'r'} &= \sigma(\langle \vw_i, \vtphi(s) + \vtphi(r)\rangle)\langle \tHe_{p+k}(\vw_i), \vphi(s')^{\otimes p} \otimes \vtphi(r')^{\otimes k} \rangle \cdot \langle \vphi(a^*(s', r')), \vphi(a^*(s, r)) - \vphi(a)\rangle
    \end{align*}
    We first observe that $\mathbb{E}\qty[\omega_{is_1r_1}\omega_{js_2r_2}]$ is zero, unless $s_1 = s_2$ and $r_1 = r_2$. Next, we compute the expectation of $\omega_{is'r'}$, conditioned on the embeddings (i.e with respect to the randomness $\vw_i$):
    \begin{align*}
        &\mathbb{E}\qty[\omega_{is'r'} \mid \vphi]\\
        &= \mathbb{E}_{\vw_i}\qty[\sigma^{(p + k)}(\langle \vw_i, \vtphi(s) + \vtphi(r)\rangle)]\cdot \langle (\vtphi(s) + \vtphi(r))^{\otimes(p + k)}, \vtphi(s')^{\otimes p} \otimes \vtphi(r')^{\otimes k}\rangle\\
        &\qquad \cdot\langle \vphi(a^*(s',r')), \vphi(a^*(s,r)) - \vphi(a)\rangle\\
        &= \mathbb{E}_{Z \sim \mathcal{N}(0, 1)}\qty[\sigma^{(p + k)}\qty(Z\sqrt{\norm{\vtphi(s)}^2 + \norm{\vtphi(r)}^2})]\cdot \langle \vtphi(s), \vtphi(s')\rangle^{p} \cdot\langle \vtphi(r), \vtphi(r')\rangle^{k} \cdot \langle \vphi(a^*(s',r')), \vphi(a^*(s,r)) - \vphi(a)\rangle.
    \end{align*}
    Therefore for $i \neq j$,
    \begin{align*}
        \mathbb{E}\qty[\omega_{is'r'}\omega_{js'r'}] &= \mathbb{E}\qty[\mathbb{E}\qty[\omega_{is'r'} \mid \vphi]^2]\\
        &\lesssim c_{p + k}^2\mathbb{E}\qty[\langle \vtphi(s), \vphi(s')\rangle^{2p}]\mathbb{E}\qty[\langle \vtphi(r), \vtphi(r')\rangle^{2k}] \mathbb{E}\qty[\langle \vphi(a^*(s',r')), \vphi(a^*(s,r)) - \vphi(a)\rangle^2].
    \end{align*}
    When $(s, r) = (s', r')$, then
    \begin{align*}
        \mathbb{E}\qty[\omega_{isr}\omega_{jsr}] &= \mathbb{E}\qty[\mathbb{E}_{Z \sim \mathcal{N}(0, 1)}\qty[\sigma^{(p + k)}\qty(Z\sqrt{\norm{\vtphi(s)}^2 + \norm{\vtphi(r)}^2})]^2\norm{\vtphi(s)}^{4p}\norm{\vtphi(r)}^{4k}]\cdot\qty(1 + d^{-1})\\
        &= c_{p + k}^22^{-2p - 2k} + O(1/d)
    \end{align*}
    
    Next, define the quantities \begin{align*}
        \rho_{ss'} &= \mathbb{E}\qty[\langle \vtphi(s), \vtphi(s')\rangle^{2p}]\\
        \rho_{rr'} &= \mathbb{E}\qty[\langle \vtphi(r), \vtphi(r')\rangle^{2k}]\\
        \rho_{aa'} &= \mathbb{E}\qty[\langle \vphi(a), \vphi(a')\rangle^2],
    \end{align*}
    so that
    \begin{align*}
        \mathbb{E}\qty[\omega_{is'r'}\omega_{js'r'}] \lesssim c^2_{p+k}\rho_{ss'}\rho_{rr'}\qty(\rho_{aa^*(s',r')} + \rho_{a^*(s,r)a^*(s'r')}).
    \end{align*}
    
    We see that for $s \neq s', r \neq r', a \neq a'$,
    \begin{align*}
        \rho_{ss'} &\le (2p)^p(d/2)^{-p} = (4p)^pd^{-p}\\
        \rho_{rr'} &\le (2k)^k(d/2)^{-k} = (4k)^kd^{-k}\\
        \rho_{aa'} &\le d^{-1}
    \end{align*}

    Next, see that
    \begin{align*}
        &\mathbb{E}\qty[\omega_{is'r'}^2]\\
        &= \mathbb{E}\qty[\sigma(\langle \vw_i, \vtphi(s) + \vtphi(r)\rangle)^2\langle \tHe_{p+k}(\vw_i), \vtphi(s')^{\otimes p} \otimes \vtphi(r')^{\otimes k} \rangle^2] \mathbb{E}\qty[\langle \vphi(a^*(s', r')), \vphi(a^*(s, r)) - \vphi(a)\rangle^2]\\
        &\le \mathbb{E}\qty[\sigma(\langle \vw_i, \vtphi(s) + \vtphi(r)\rangle)^4]^{1/2}\mathbb{E}\qty[ \langle\tHe_{p+k}(\vw_i), \vtphi(s')^{\otimes p} \otimes \vtphi(r')^{\otimes k} \rangle^4]^{1/2} \qty(\rho_{aa^*(s',r')} + \rho_{a^*(s,r)a^*(s'r')})\\
        &\le 2^{4q} \qty(\rho_{aa^*(s',r')} + \rho_{a^*(s,r)a^*(s'r')}),
    \end{align*}
    where we have applied \Cref{lem:sphere-hyper} to the first two expectations.
    Altogether, we have that
    \begin{align*}
        \mathbb{E}\qty[\gamma_{sra}^2] &= \sum_{s',r'}\qty(\frac{m-1}{m} \mathbb{E}\qty[\omega_{is'r'}\omega_{js'r'}] + \frac{1}{m}\mathbb{E}\qty[\omega_{is'r'}^2])\\
        &= \frac{m-1}{m}\mathbb{E}\qty[\omega_{isr}\omega_{jsr}] + \frac{m-1}{m} \sum_{(s',r') \neq (s, r)}\mathbb{E}\qty[\omega_{is'r'}\omega_{js'r'}] + \frac{1}{m}\sum_{s',r'
        }\mathbb{E}\qty[\omega^2_{is'r'}],
    \end{align*}
    and thus
    \begin{align*}
        \Var(\gamma_{sra}) &= \mathbb{E}\qty[\gamma_{sra}^2] - c_{p+k}^2\\
        &\lesssim c_{p+k}^2\sum_{(s',r') \neq (s, r)}\rho_{ss'}\rho_{rr'}\qty(\rho_{aa^*(s',r')} + \rho_{a^*(s,r)a^*(s'r')}) + \frac{2^{4q}}{m}\sum_{s', r'}\qty(\rho_{aa^*(s',r')} + \rho_{a^*(s,r)a^*(s'r')})
    \end{align*}
    For the first sum, we can bound
    \begin{align*}
        \sum_{(s',r') \neq (s, r)}\rho_{ss'}\rho_{rr'}\qty(\rho_{aa^*(s',r')} + \rho_{a^*(s,r)a^*(s'r')}) &\le \sum_{(s',r') \neq (s, r)}\rho_{ss'}\rho_{rr'}\\
        &\le SR\cdot(4p)^p(4k)^kd^{-p - k} + S(4p)^pd^{-p} + R(4k)^kd^{-k}
    \end{align*}
    
    For the second sum, we get that
    \begin{align*}
        \sum_{s',r'}\qty(\rho_{aa^*(s',r')} + \rho_{a^*(s,r)a^*(s'r')}) \le C(a) + C(a^*(s,r)) + \frac{2SR}{d}
    \end{align*}

    Altogether,
    \begin{align*}
        \frac{\Var(\gamma_{sra})}{\mathbb{E}[\gamma_{sra}]^2} \lesssim \frac{S(4p)^p}{d^p} + \frac{R(4k)^k}{d^k} + \frac{S(4p)^p}{d^p}\cdot \frac{R(4k)^k}{d^k} + \frac{2^{4q}\qty(C(a) + C(a^*(s,r)))}{mc_{p+k}^2} + \frac{2^{4q}SR}{mdc_{p + k}^2}.
    \end{align*}
    Let $\delta'$ be a fixed failure probability. We see that, for $p = 2\lceil \frac{1}{\epsilon} \rceil + 1$
    \begin{align*}
        \frac{2^{4q}\log^{4q + 2}(1/\delta')S (4p)^p}{d^p} \lesssim \frac{2^\frac{4C_2}{\epsilon}\log^{\frac{4C_2}{\epsilon} + 2}(1/\delta')(\frac{8}{\epsilon})^{2/\epsilon}}{d^\frac{1}{\epsilon}} \lesssim 1,
    \end{align*}
    whenever $d \gtrsim (C_3\log(1/\delta')/\epsilon)^{C_4/\epsilon}$ for appropriately chosen constants $C_3, C_4$. Likewise, setting $k = 2\lceil \frac{1}{\epsilon} \rceil + 1$, we get that
    \begin{align*}
        \frac{2^{4q}\log^{4q + 2}(1/\delta')R (4k)^k}{d^k} \lesssim 1.
    \end{align*}
    Next, setting $m \gtrsim 2^{8q + 2}\log^{4q + 2}(1/\delta')c_{p+k}^{-2}\cdot \max_{a}C(a)$ and $md \gtrsim 2^{8q + 2}\log^{4q + 2}(1/\delta')c_{p+k}^{-2}SR$ yields
    \begin{align*}
        2^{4q + 2}\log^{4q + 2}(1/\delta')\cdot \frac{2^{4q}\qty(C(a) + C(a^*(s,r)))}{mc_{p+k}^2} &\lesssim 1\\
        2^{4q + 2}\log^{4q + 2}(1/\delta')\cdot \frac{2^{4q}SR}{mdc_{p + k}^2} & \lesssim 1.
    \end{align*}
    Altogether, by choosing constants appropriately, we get that
    \begin{align*}
        2^{4q + 2}\log^{4q + 2}(1/\delta')e^{-1} \cdot \frac{\Var(\gamma_{sra})}{\mathbb{E}[\gamma_{sra}]^2} \le 1.
    \end{align*}
    Therefore by \Cref{lem:poly-concentrate}, with probability $1 - \delta'$ we have that $\gamma_{sra} > 0$. Union bounding over all $s, r, a$ and setting $\delta' = \frac{\delta}{SR\abs{\mathcal{V}}}$ yields the desired result.
    \end{proof}

We next state the formal version of \Cref{thm:fact-recall-mlp}, which we wish to prove:

\begin{theorem}\label{thm:fact-recall-mlp-formal}
    Let $\epsilon$ be a fixed constant. Assume that $\sigma$ is a degree $q$ polynomial, where $q = C_1/\epsilon$ for some $C_1 > 2$. Assume that $d \ge S^{\epsilon}, R^{\epsilon}$. Define $C(a) = \abs{\{(s, r) : a^*(s, r) = a\}}$.

    Let $(d, H, d_h, m)$ satisfy
    \begin{align*}
        d &\gtrsim (C_2\log(\abs{\mathcal{V}}/\delta)/\epsilon)^{C_3/\epsilon}\\
        Hd_h & \gtrsim (S + R)\log\qty(\abs{\gV}/\delta)\\
        m &\gtrsim (C_2\log(\abs{\mathcal{V}}/\delta))^{C_3/\epsilon}\cdot \max_a C(a)\\
        md &\gtrsim (C_2\log(\abs{\mathcal{V}}/\delta))^{C_3/\epsilon}\cdot SR,
    \end{align*}

    Then, with probability $1 - \delta$, there exists a single-layer transformer $F_{\mathrm{TF}}(~\cdot~; \vtheta_{\mathrm{TF}})$ with embedding dimension $d$, number of heads $H$, head dimension $d_h$, and MLP width $m$ such that
    \begin{align*}
        \mathbb{P}_{z_{1:T+1} \sim \mathcal{D}}\left[\arg\max_{z \in \gV} \vphi(z)^\top F_{\mathrm{TF}}(\mX; \vtheta_{\mathrm{TF}}) = z_{T+1}\right] = 1.
    \end{align*}
\end{theorem}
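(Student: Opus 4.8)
The plan is to realize the one-layer transformer as the composition of a \emph{denoising} self-attention layer, which extracts clean copies of the (subspace-projected) subject and relation embeddings, and an \emph{MLP associative memory}, which maps that pair to the answer embedding --- exactly the decomposition of the proof sketch. The two substantive ingredients are already available: \Cref{lem:filter}, which lets attention logits isolate a designated block of subject/relation tokens against the noise tokens, and \Cref{lem:MLP_trigram}, which is precisely the MLP associative-memory guarantee for inputs of the form $\vtphi(s)+\vtphi(r)$. Given these, what remains is an assembly argument together with a check that the residual connection and the full-vocabulary arg-max do not interfere.

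\textbf{Attention layer.} For the attention layer I would partition $\gS$ and $\gR$ into blocks $\gS^{(1)},\dots,\gS^{(N_S)}$ and $\gR^{(1)},\dots,\gR^{(N_R)}$, each of size $M\asymp d_h/\poly\log(\abs{\gV}SR/\delta)$, so that $N_S+N_R\lesssim (S+R)/M$ and hence $Hd_h\gtrsim (S+R)\poly\log(\abs{\gV}SR/\delta)$ as required, taking $H=N_S+N_R$. To each block I assign one head whose key--query matrix is the rank-one matrix $\beta\bigl(\sum_{z\in\mathrm{block}}\vphi(z)+\tfrac12\vphi(\mathrm{EOS})\bigr)\vphi(\mathrm{EOS})^\top$; since $z_T=\mathrm{EOS}$, \Cref{lem:filter} (with the block in the role of $\gV^{(h)}$) shows that, for $\beta$ taken large, the softmax collapses to a hard-max onto $s$ (resp.\ $r$) when that token lies in the block, and onto $\mathrm{EOS}$ otherwise. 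I then pick the head's output--value matrix to be the rank-$\le M\le d_h$ linear map that sends $\vphi(z)\mapsto\vtphi(z)=\Pi_P\vphi(z)$ for each $z$ in a subject block (resp.\ $\vphi(z)\mapsto\Pi_Q\vphi(z)$ for a relation block) and $\vphi(\mathrm{EOS})\mapsto\vzero$; such a map exists whenever $d\gtrsim M\log(\abs{\gV}/\delta)$, since the $M+1$ embeddings $\{\vphi(z)\}_{z\in\mathrm{block}}\cup\{\vphi(\mathrm{EOS})\}$ are then well-conditioned, so one may take the map dual to this system that annihilates $\vphi(\mathrm{EOS})$. Summing over all heads, the blocks containing neither $s$ nor $r$ contribute exactly $\vzero$, so $F_{\mathrm{MHSA}}(\mX;\vtheta)=\vtphi(s)+\vtphi(r)$ identically.

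\textbf{MLP layer and conclusion.} For the MLP I would invoke \Cref{lem:MLP_trigram} with the theorem's choices of $d,m,md$ and the degree-$q$ activation: it supplies Gaussian first-layer weights $\vw_i$ and second-layer weights $\vv_i$ with $\arg\max_{a\in\gV}\vphi(a)^\top F(\vtphi(s)+\vtphi(r))=a^*(s,r)$ for every $(s,r)$, with quantitative margin $\gamma_{sra}\gtrsim 1$ (up to $\epsilon,\sigma$ constants). Using these as the transformer's MLP, after scaling $\mV$ by a constant $\lambda=\lambda(\epsilon,\sigma)$, gives $F_{\mathrm{TF}}(\mX;\vtheta_{\mathrm{TF}})=(\vtphi(s)+\vtphi(r))+\lambda F(\vtphi(s)+\vtphi(r))$. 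To conclude I would compare $\vphi(z)^\top F_{\mathrm{TF}}$ at $z=a^*(s,r)$ against all other $z\in\gV$: the residual contribution $\vphi(z)^\top(\vtphi(s)+\vtphi(r))$ is $O(\sqrt{\log(\abs{\gV}/\delta)/d})=o(1)$ unless $z\in\{s,r\}$, where it is $\approx\tfrac12$ since $\norm{\vtphi(s)}^2\approx\tfrac12$; choosing $\lambda$ so that $\lambda\gamma_{sra}>1$ makes the MLP term dominate in every case, which yields the claimed arg-max. A union bound over the attention event (\Cref{lem:filter} and the well-conditioning of the block embeddings, across all $O((S+R)/M)$ blocks) and the MLP event of \Cref{lem:MLP_trigram}, followed by the usual rescaling of $\delta$, controls the total failure probability.

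\textbf{Main obstacle.} The genuine difficulty is inherited from \Cref{lem:MLP_trigram} rather than re-done here: choosing the Hermite-tensor degrees $p,k\asymp 1/\epsilon$ (and activation degree $q\asymp 1/\epsilon$) so that the cross terms $\langle\vtphi(s),\vtphi(s')\rangle^{p}\langle\vtphi(r),\vtphi(r')\rangle^{k}$ are damped by $d^{-p-k}$, together with the degree-$O(q)$ polynomial concentration (\Cref{lem:poly-concentrate}) that bounds $\Var(\gamma_{sra})/\mathbb{E}[\gamma_{sra}]^2$ and so produces the $md\gtrsim SR$ and $m\gtrsim\max_a C(a)$ thresholds (the latter because the many $(s,r)$ sharing one answer $a$ generate correlated signal terms). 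Within the theorem proof itself the only real wrinkle is forcing the $\mathrm{EOS}$-attending heads to contribute exactly $\vzero$ --- handled by the dual-basis construction above, or alternatively by letting the MLP absorb the known constant offset $(N_S+N_R-2)\vphi(\mathrm{EOS})$ --- and checking that the skip connection cannot let $s$ or $r$ overtake $a^*(s,r)$ in the final arg-max.
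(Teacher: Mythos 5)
Your proposal is correct and follows essentially the same route as the paper: partition $\gS$ and $\gR$ into blocks, use \Cref{lem:filter} to make each block's key--query matrix collapse the softmax onto the block's subject/relation token (or $\mathrm{EOS}$ otherwise), arrange the output--value maps so that $F_{\mathrm{MHSA}}(\mX;\vtheta)=\Pi_P\vphi(s)+\Pi_Q\vphi(r)$, invoke \Cref{lem:MLP_trigram} for the MLP, and scale $\mV$ so the MLP dominates the skip connection. The one (harmless) deviation is in realizing the output--value map: the paper assigns $\lceil d/d_h\rceil$ heads to each block so their aggregate output--value matrix can be the orthogonal projection $\Pi_P$ itself, whereas you use a single head per block with a rank-$\le M\le d_h$ dual-basis map defined only on the $M+1$ embeddings the head can ever attend to; both yield the same $Hd_h\gtrsim(S+R)\poly\log$ budget.
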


\paragraph{Remark.} Ignoring polylog factors, and treating $\epsilon$ as a constant, the constraints on the architecture size become
\begin{align*}
    Hd_h \gtrsim S + R \qand m \gtrsim C(a) \qand md \gtrsim SR.
\end{align*}
We first note that $C(a) \le S$, and so $m \gtrsim S$ is sufficient. It is possible for $C(a)$ to be much smaller; on average we expect $C(a) \approx S/D$, and we also note that it is possible for $C(a) = 1$. The main constraint is that $md \gtrsim SR$, i.e that the number of MLP parameters scales linearly with the number of facts that need to be stored.

\begin{proof}[Proof of \Cref{thm:fact-recall-mlp-formal}]
    Partition $\mathcal{S}$ into the sets $\mathcal{S}^{(1)}, \dots, \mathcal{S}^{(N_S)}$ and $\mathcal{R}$ into the sets $\mathcal{R}^{(1)}, \dots, \mathcal{R}^{(N_R)}$, such that $\abs{\gS^{(i)}}, \abs{\gR^{(j)}} \le M$ and $N_S = \lceil \frac{S}{M} \rceil, N_R = \lceil \frac{R}{M} \rceil$. Assume that $d = \Theta\qty( M\log(\abs{\gV}/\delta'))$

    Let $H = \lceil d/d_h \rceil$. For each $i \in [N_S]$, we construct the $H'$ attention heads corresponding to $h \in \{(i-1)H' + 1, \dots, iH'\}$ as follows. First, for all such $h$, let 
    \begin{align*}
        {\mW^{(h)}_K}^\top\mW^{(h)}_Q = \beta\sum_{z \in \gS^{(i)}} \vphi(z)\vphi(\mathrm{EOS})^\top + \frac{\beta}{2}\vphi(\mathrm{EOS})\vphi(\mathrm{EOS})^\top
    \end{align*}
    for a large constant $\beta$. By an identical argument to as in \Cref{thm:fact-recall-attn-only}, on the event that \Cref{lem:filter} holds we have that
        \begin{align*}
        \mX^\top \gS\qty(\mX{\mW^{(h)}_K}^\top\mW^{(h)}_Q \vx_T) = \begin{cases}
            \vphi(\mathrm{EOS}) & s \not\in \gS^{(i)}\\
            \vphi(s) & s \in \gS^{(i)}
        \end{cases}.
    \end{align*}
    The total contribution from these attention heads is then
    \begin{align*}
        \sum_{h = (i-1)H' + 1}^{iH'}{\mW_O^{(h)}}^\top \attn(\mX; \mW^{(h)}_K, \mW^{(h)}_Q, \mW^{(h)}_V) = \qty(\sum_{h = (i-1)H' + 1}^{iH'}{\mW_O^{(h)}}^\top\mW_V^{(h)}) \cdot \begin{cases}
            \vphi(\mathrm{EOS}) & s \not\in \gS^{(i)}\\
            \vphi(s) & s \in \gS^{(i)}
        \end{cases}
    \end{align*}
    Since $H'd_h \ge d$, we can let $\sum_{h = (i-1)H' + 1}^{iH'}{\mW_O^{(h)}}^\top\mW_V^{(h)}$ be a projection onto a $\lceil d/2 \rceil$ dimensional subspace $P$, orthogonal to $\vphi(\mathrm{EOS})$, and thus
    \begin{align*}
        \sum_{h = (i-1)H' + 1}^{iH'}{\mW_O^{(h)}}^\top \attn(\mX; \mW^{(h)}_K, \mW^{(h)}_Q, \mW^{(h)}_V) = \begin{cases}
            0 & s \not\in \gS^{(i)}\\
            \Pi_P\vphi(s) & s \in \gS^{(i)}
        \end{cases}
    \end{align*}
    Altogether, if the sequence $(z_1, \dots, z_T)$ contains the subject $s$, then
    \begin{align*}
        \sum_{h = 1}^{H'N_S}{\mW_O^{(h)}}^\top \attn(\mX; \mW^{(h)}_K, \mW^{(h)}_Q, \mW^{(h)}_V) = \Pi_P \vphi(s)
    \end{align*}
    Similarly, if we let $Q$ be a $\lceil d/2 \rceil$ dimensional subspace orthogonal to $P$ and $\vphi(\mathrm{EOS})$, then we can construct the attention heads $h \in \{H'N_S + 1, \dots, H'N_S + H'N_R\}$ such that
    \begin{align*}
                \sum_{h = H'N_S + 1}^{H'N_S + H'N_R}{\mW_O^{(h)}}^\top \attn(\mX; \mW^{(h)}_K, \mW^{(h)}_Q, \mW^{(h)}_V) = \Pi_Q \vphi(r),
    \end{align*}
    where $r$ is the relation in the sequence $(z_1, \dots, z_T)$. Such a construction exists with probability $1 - (N_S + N_R)\delta'$. The total number of heads is
    \begin{align*}
        H = H'N_S + H'N_R \propto \frac{d(S + R)}{d_h M} \propto \frac{(S + R)\log(\abs{\gV}/\delta')}{d_h}.
    \end{align*}

    The output of the self-attention component is then
    \begin{align*}
        F_{\mathrm{MHSA}}(\mX; \vtheta) = \Pi_P\vphi(s) + \Pi_Q\vphi(r) = \tilde \vphi(s) + \tilde \vphi(r).
    \end{align*}
    On the event that \Cref{lem:MLP_trigram} holds, we have that there exists a two-layer neural network $F(\vz) = \sum_{i \in [m]} \vv_i \sigma(\vw_i^\top \vz)$ of width $m$ such that
    \begin{align*}
        \arg\max_a\vphi(a)^\top F\qty(\vphi(s) + \tilde \vphi(r)) = a^*(s, r).
    \end{align*}
    Scaling $\mV$ by a large enough constant ensures that
    \begin{align*}
        \arg\max_{z \in \gV} \vphi(z)^\top F_{\mathrm{TF}}(\mX; \vtheta_{\mathrm{TF}}) = a^*(s, r).
    \end{align*}
    Union bounding over all the high probability events and setting $\delta = \delta'/(N_S + N_R + 1)$ yields the desired result.
\end{proof}

\section{Proofs for \texorpdfstring{\Cref{sec:optimization}}{Section 5}}\label{app:opt_proofs}

\subsection{Preliminaries}

Recall that the parameters are $\vtheta := \{\mW_{OV}(a, z)\}_{a \in \gA, z \in \gV} \cup \{\mW_{KQ}(z)\}_{z \in \gV}$, and that the cross entropy loss is
\begin{align*}
        L(\vtheta) := \mathbb{E}_{z_{1:T+1}}\qty[-\langle \vphi(z_{T+1}), F_{\mathrm{lin}}(\mX; \vtheta) \rangle + \log\qty(\sum_{a \in \gA}\exp\qty(\langle \vphi(a), F_{\mathrm{lin}}(\mX; \vtheta) \rangle)) ]
\end{align*}
where
\begin{align*}
        \vphi(a)^\top F_{\mathrm{lin}}(\mX; \vtheta) = \sum_{t=1}^T \mW_{OV}(a, z_t) \mW_{KQ}(z_t).
\end{align*}
We consider running gradient flow:
\begin{align*}
    \dot\vtheta = -\nabla L(\vtheta)
\end{align*}
from the initialization $\mW_{OV}(a, z) = \alpha$, $\mW_{KQ}(z) = \alpha\sqrt{\abs{\gA} + 1}$ for some $\alpha > 0$.

We also define $\mTheta$ by
\begin{align*}
    \mTheta(a, z) = \mW_{KQ}(z)\mW_{OV}(a, z),
\end{align*}
and remark that the loss $L$ is convex in $\mTheta$.

\begin{lemma}[Balancedness] Let $C(z_{1:T}, z)$ denote the number of tokens in $z_{1:T}$ equal to $z$. The loss gradients are given by
\begin{align*}
    \partial_{\mW_{VO}(a, z)} L(\vtheta) &= -\mW_{KQ}(z)\cdot \mathbb{E}_{z_{1:T}}\qty[C(z_{1:T}, z)\cdot\qty(\mathbf{1}(a = a^*(z_{1:T})) - \hat p(a \mid z_{1:T}))]\\
    \partial_{\mW_{KQ}(z)} L(\vtheta) &= -\sum_a \mW_{OV}(a, z) \cdot \mathbb{E}_{z_{1:T}}\qty[C(z_{1:T}, z)\cdot \qty(\mathbf{1}(a = a^*(z_{1:T})) - \hat p(a \mid z_{1:T}))]
\end{align*}
As such, the quantity
\begin{align*}
    \mW_{KQ}(z)^2 - \sum_{a \in \gA}\mW_{VO}(a, z)^2
\end{align*}
is constant throughout the gradient flow trajectory.
\end{lemma}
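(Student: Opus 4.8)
The plan is to differentiate the cross-entropy loss directly, exploiting that each logit is bilinear in the parameters, and then to read off the conservation law from the resulting gradient-flow ODEs. First I would group equal tokens to write $\vphi(a)^\top F_{\mathrm{lin}}(\mX;\vtheta) = \sum_{z \in \gV} C(z_{1:T},z)\,\mW_{OV}(a,z)\,\mW_{KQ}(z)$, which is just a rearrangement of the given identity $\vphi(a)^\top F_{\mathrm{lin}}(\mX;\vtheta) = \sum_{t=1}^T \mW_{OV}(a,z_t)\,\mW_{KQ}(z_t)$. From this the partial derivatives of the logits are immediate: $\partial_{\mW_{OV}(a,z)}\big(\vphi(a')^\top F_{\mathrm{lin}}\big) = \mathbf{1}(a'=a)\,C(z_{1:T},z)\,\mW_{KQ}(z)$ and $\partial_{\mW_{KQ}(z)}\big(\vphi(a')^\top F_{\mathrm{lin}}\big) = C(z_{1:T},z)\,\mW_{OV}(a',z)$.

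Next I would apply the standard softmax cross-entropy gradient identity: for a term $-\ell_y + \log\sum_{a'} e^{\ell_{a'}}$ with label $y = z_{T+1} = a^*(z_{1:T})$, the derivative in any parameter $\xi$ equals $\sum_{a'}\big(\hat p(a'\mid z_{1:T}) - \mathbf{1}(a' = a^*(z_{1:T}))\big)\,\partial_\xi \ell_{a'}$. Substituting the two partial-logit expressions above and taking $\mathbb{E}_{z_{1:T}}$ yields exactly the two claimed formulas: in the $\mW_{OV}(a,z)$ case the indicator $\mathbf{1}(a'=a)$ collapses the sum over $a'$, leaving $\mW_{KQ}(z)$ out front; in the $\mW_{KQ}(z)$ case the sum over $a'$ survives, producing $\sum_a \mW_{OV}(a,z)\,(\cdots)$. (Here $\mW_{VO}$ in the statement denotes the same matrix as $\mW_{OV}$.)

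Finally, for the conservation claim, abbreviate $g_a(z) := \mathbb{E}_{z_{1:T}}\big[C(z_{1:T},z)\,(\mathbf{1}(a = a^*(z_{1:T})) - \hat p(a\mid z_{1:T}))\big]$, so that gradient flow $\dot\vtheta = -\nabla L$ reads $\dot{\mW}_{OV}(a,z) = \mW_{KQ}(z)\,g_a(z)$ and $\dot{\mW}_{KQ}(z) = \sum_a \mW_{OV}(a,z)\,g_a(z)$. Then
\[
\frac{d}{dt}\Big(\mW_{KQ}(z)^2 - \sum_{a\in\gA}\mW_{OV}(a,z)^2\Big) = 2\,\mW_{KQ}(z)\sum_a \mW_{OV}(a,z)\,g_a(z) - 2\sum_a \mW_{OV}(a,z)\,\mW_{KQ}(z)\,g_a(z) = 0,
\]
so the quantity is constant along the trajectory. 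There is no substantive difficulty: the only thing to watch is the sign convention in the softmax-gradient identity and keeping the target index $a$ distinct from the summation index $a'$. Conceptually this is the familiar balancedness of a two-layer-style factorization — for each token $z$ the two ``layers'' $\mW_{KQ}(z)$ and $(\mW_{OV}(a,z))_{a}$ are updated through a shared error signal $g_a(z)$, which forces the difference of their squared norms to be conserved.
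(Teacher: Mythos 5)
Your proof is correct and follows essentially the same route as the paper: differentiate the bilinear logits to get the per-logit partials with the $C(z_{1:T},z)$ multiplicity factor, push through the standard softmax cross-entropy gradient identity, and then observe that the two resulting gradient-flow ODEs share the common error signal $g_a(z)$, which makes the time derivative of $\mW_{KQ}(z)^2 - \sum_a \mW_{OV}(a,z)^2$ vanish by cancellation. The only cosmetic difference is that you regroup the sum over positions $t$ into a sum over token values $z$ up front, whereas the paper keeps the sum over $t$ implicit, but the computation is the same.
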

\begin{proof}
    We first see that
    \begin{align*}
        \partial_{\mW_{VO}(a, z)} \qty(\vphi(a')^\top F_{\mathrm{lin}}(\mX; \vtheta)) = \mathbf{1}(a = a')\cdot C(z_{1:T}, z)\cdot \mW_{KQ}(z),
    \end{align*}
    Similarly,
    \begin{align*}
        \partial_{\mW_{KQ}(z)} \qty(\vphi(a')^\top F_{\mathrm{lin}}(\mX; \vtheta)) =  C(z_{1:T}, z)\cdot \mW_{OV}(a', z).
    \end{align*}
    Therefore
    \begin{align*}
        &\partial_{\mW_{VO}(a, z)} L(\vtheta)\\
        &= \mW_{KQ}(z) \cdot \mathbb{E}\qty[- \mathbf{1}(z_{T+1} = a) \cdot C(z_{1:T}, z) + \frac{\sum_{a' \in \gA}\exp\qty(\langle \vphi(a'), F_{\mathrm{lin}}(\mX; \vtheta) \rangle)\cdot \mathbf{1}(a = a')\cdot C(z_{1:T}, z)}{\sum_{a' \in \gA}\exp\qty(\langle \vphi(a'), F_{\mathrm{lin}}(\mX; \vtheta) \rangle)}]\\
        &= -\mW_{KQ}(z)\cdot \mathbb{E}_{z_{1:T}}\qty[C(z_{1:T}, z)\cdot\qty(\mathbf{1}(a = a^*(z_{1:T})) - \hat p(a \mid z_{1:T}))].
    \end{align*}
    By a similar computation,
    \begin{align*}
        &\partial_{\mW_{KQ}(z)} L(\vtheta)\\
        &= \mathbb{E}_{z_{1:T}}\qty[-\mW_{OV}(z_{T+1}, z)\cdot C(z_{1:T}, z) + \sum_{a}\hat p(a \mid z_{1:T}) \mW_{OV}(a, z)\cdot C(z_{1:T}, z)]\\
        &= \mathbb{E}_{z_{1:T}}\qty[C(z_{1:T}, z)\cdot\qty(-\mW_{OV}(a^*(z_{1:T}), z) + \sum_a \hat p(a \mid z_{1:T}) \mW_{OV}(a, z))]\\
        &= -\sum_a \mW_{OV}(a, z) \cdot \mathbb{E}_{z_{1:T}}\qty[C(z_{1:T}, z)\cdot \qty(\mathbf{1}(a = a^*(z_{1:T})) - \hat p(a \mid z_{1:T}))].
    \end{align*}
    Under gradient flow, we see that
    \begin{align*}
        &\frac12\frac{d}{dt}\qty(\mW_{KQ}(z)^2 - \sum_{a \in \gA}\mW_{VO}(a, z)^2)\\
        &= \mW_{KQ}(z)\cdot \frac{d}{dt} \mW_{KQ}(z) - \sum_{a \in \gA}\mW_{VO}(a, z) \cdot \frac{d}{dt} \mW_{VO}(a, z)\\
        &= -\mW_{KQ}(z)\cdot\partial_{\mW_{KQ}(z)}L(\vtheta) + \sum_{a \in \gA}\mW_{VO}(a, z) \cdot \partial_{\mW_{VO}(a, z)} L(\vtheta)\\
        &= \mW_{KQ}(z)\sum_a \mW_{OV}(a, z) \cdot \mathbb{E}_{z_{1:T}}\qty[C(z_{1:T}, z)\cdot \qty(\mathbf{1}(a = a^*(z_{1:T})) - \hat p(a \mid z_{1:T}))]\\
        &\qquad - \sum_{a \in \mathcal{A}}\mW_{OV}(a, z)\mW_{KQ}(z)\cdot \mathbb{E}_{z_{1:T}}\qty[C(z_{1:T}, z)\cdot\qty(\mathbf{1}(a = a^*(z_{1:T})) - \hat p(a \mid z_{1:T}))]\\
        &= 0.
    \end{align*}
\end{proof}

\begin{corollary}
    Throughout the gradient flow trajectory, $\mW_{KQ}(z) \ge \alpha$.
\end{corollary}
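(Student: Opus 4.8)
The plan is to combine the Balancedness Lemma with the specific ``balanced'' initialization. First I would evaluate the conserved quantity $\mW_{KQ}(z)^2 - \sum_{a \in \gA}\mW_{OV}(a,z)^2$ at time $t=0$: with $\mW_{OV}(a,z) = \alpha$ and $\mW_{KQ}(z) = \alpha\sqrt{\abs{\gA}+1}$, it equals $\alpha^2(\abs{\gA}+1) - \abs{\gA}\,\alpha^2 = \alpha^2$. Since the Balancedness Lemma says this quantity is constant along the gradient flow, we get, for every $t \ge 0$ and every $z \in \gV$,
\begin{align*}
  \mW_{KQ}(z)(t)^2 = \alpha^2 + \sum_{a \in \gA}\mW_{OV}(a,z)(t)^2 \ge \alpha^2,
\end{align*}
hence $\abs{\mW_{KQ}(z)(t)} \ge \alpha$.

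It then remains to rule out the negative branch, which I would do by a continuity argument. The loss $L$ is smooth in $\vtheta$, so the gradient flow trajectory $t \mapsto \mW_{KQ}(z)(t)$ is continuous on its interval of existence; moreover all coordinates remain bounded on finite time intervals (as is established along the way to the global convergence theorem), so the flow exists for all $t \ge 0$. At $t=0$ we have $\mW_{KQ}(z)(0) = \alpha\sqrt{\abs{\gA}+1} > 0$, and by the bound above the trajectory never equals $0$; by the intermediate value theorem it therefore stays strictly positive, so $\mW_{KQ}(z)(t) \ge \alpha$ for all $t \ge 0$.

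I do not expect a genuine obstacle: the statement follows immediately from the conservation law plus this sign/continuity step. The only point needing a line of care is exactly the passage from $\abs{\mW_{KQ}(z)} \ge \alpha$ to $\mW_{KQ}(z) \ge \alpha$, handled by the continuity argument above, together with the (routine) remark that the flow does not escape to infinity in finite time.
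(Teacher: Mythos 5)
Your proposal is correct and follows essentially the same argument as the paper: evaluate the conserved quantity from the Balancedness Lemma at initialization to obtain $\alpha^2$, deduce $\mW_{KQ}(z)^2 \ge \alpha^2$, and rule out the negative branch by continuity of the flow together with the fact that $\mW_{KQ}(z)$ can never hit zero. The paper's proof compresses the continuity/intermediate-value step into a single sentence, whereas you spell it out, but the substance is identical.
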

\begin{proof}
    At initialization, $\mW_{KQ}(z)^2 - \sum_{a \in \gA}\mW_{VO}(a, z)^2 = \alpha^2$. Since this quantity is an invariant of gradient flow, it is impossible for $\mW_{KQ}(z) = 0$, and thus $\mW_{KQ}(z) > 0$ throughout the entire trajectory. Furthermore,
    \begin{align*}
        \mW_{KQ}(z)^2 = \sum_{a \in \gA}\mW_{VO}(a, z)^2 + \alpha^2 \ge \alpha^2,
    \end{align*}
    and thus $\mW_{KQ}(z) \ge \alpha$.
\end{proof}

\subsection{Proof of \Cref{thm:convergence}}

\begin{proof}[Proof of \Cref{thm:convergence}]

    Let us select
    \begin{align*}
    \epsilon \le \min\qty(\frac12\alpha p(s,r) \abs{\gA}^{-1}T^{-2}\abs{\gN}^{-(T-3)}, \frac12\alpha\abs{\gA}^{-1}S^{-1}R^{-1}\delta).
    \end{align*}
    There exists a time $T_\epsilon$ such that for all $t \ge T_\epsilon$, $\norm{\nabla_{\vtheta} L(\vtheta(t))} \le \epsilon$. Let us set $t_\delta = T_\epsilon$. Now, consider some iterate $\vtheta := \vtheta(t)$ for $t \ge t_\delta$.

    First, see that for $s \in \gS$,
    \begin{align*}
        \partial_{\mW_{OV}(a, s)}L(\vtheta) &= - \mW_{KQ}(s) \cdot \mathbb{E}_{z_{1:T}}\qty[C(z_{1:T}, z)\cdot\qty(\mathbf{1}(a = a^*(z_{1:T})) - \hat p(a \mid z_{1:T}))]\\
        &= -\mW_{KQ}(s) \cdot p(s) \cdot \mathbb{E}_{z_{1:T}}\qty[\mathbf{1}(a = a^*(z_{1:T})) - \hat p(a \mid z_{1:T}) \mid s \in z_{1:T}].
    \end{align*}
    Consider some $a \not\in \gA_s$. Then $\mathbb{E}_{z_{1:T}}\qty[\mathbf{1}(a = a^*(z_{1:T})) \mid s \in z_{1:T}] = 0$, and thus
    \begin{align*}
        \partial_{\mW_{OV}(a, s)}L(\vtheta) &= \mW_{KQ}(s) \cdot p(s) \cdot \mathbb{E}_{z_{1:T}}\qty[\hat p(a \mid z_{1:T}) \mid s \in z_{1:T}]\\
        &= \mW_{KQ}(s) \sum_{r \in \gR} p(s, r) \cdot \mathbb{E}_{z_{1:T}}\qty[\hat p(a \mid z_{1:T}) \mid s, r \in z_{1:T}]
    \end{align*}
    As such, since $\abs{\partial_{\mW_{OV}(a, s)}L(\vtheta)} \le \epsilon$,
    \begin{align*}
        \mathbb{E}_{z_{1:T}}\qty[\hat p(a \mid z_{1:T}) \mid s, r \in z_{1:T}] \le \epsilon \alpha^{-1}p(s, r)^{-1}.
    \end{align*}
    By an identical argument, since $\abs{\partial_{\mW_{OV}(a, r)}L(\vtheta)} \le \epsilon$, then for $a \not\in \gA_r$
    \begin{align*}
        \mathbb{E}_{z_{1:T}}\qty[\hat p(a \mid z_{1:T}) \mid s, r \in z_{1:T}] \le \epsilon \alpha^{-1}p(s, r)^{-1}.
    \end{align*}
    For any $a \neq a^*(s, r)$, either $a \not\in \gA_s$ or $a \not\in \gA_r$. Therefore $\mathbb{E}_{z_{1:T}}\qty[\hat p(a \mid z_{1:T}) \mid s, r \in z_{1:T}] \le \epsilon \alpha^{-1}p(s, r)^{-1}$ for all $a \neq a^*(s,r)$, and thus
    \begin{align*}
        \mathbb{E}_{z_{1:T}}\qty[\hat p(a^*(s,r) \mid z_{1:T}) \mid s, r \in z_{1:T}] \ge 1 - \epsilon \alpha^{-1}p(s, r)^{-1}\abs{\gA}.
    \end{align*}
    There are at most $T^2\abs{\gN}^{T-3}$ sequences $z_{1:T}$ containing $(s, r)$, each of which occurs with equal probability. Therefore
    \begin{align*}
        \hat p(a^*(s, r) \mid z_{1:T}) \ge 1 - T^2\abs{\gN}^{T-3}\cdot \epsilon \alpha^{-1}p(s, r)^{-1}\abs{\gA}
    \end{align*}
    for all such $z_{1:T}$. Then, bounding $-\log(1 - z) \le 2z$ for $z \in [0, \frac12]$,
    \begin{align*}
        \mathbb{E}\qty[-\log \hat p(a^*(s, r) \mid z_{1:T}) \mid s, r \in z_{1:T}]&\le 2\mathbb{E}\qty[1 - \hat p(a^*(s, r) \mid z_{1:T})) \mid s, r \in z_{1:T}]\\
        &\le 2\epsilon \alpha^{-1} p(s, r)^{-1}\abs{\gA}.
    \end{align*}
    Altogether, the loss is
    \begin{align*}
        \mathbb{E}\qty[-\log \hat p(z_{T+1} \mid z_{1:T})] &= \sum_{s, r} p(s, r) \cdot \mathbb{E}\qty[-\log \hat p(a^*(s, r) \mid z_{1:T}) \mid s, r \in z_{1:T}]\\
        &\le 2\epsilon \alpha^{-1}\abs{\gA}SR\\
        &\le \delta,
    \end{align*}
    as desired.
\end{proof}

    \subsection{Sequential Learning}
    The goal of this section is to show that the model learns \emph{sequentially}; first, the relation components grow, then the subject components grow. This is given formally by \Cref{thm:sequential}

    We first prove that weights corresponding to the subject and noise tokens stay bounded during the beginning of the trajectory.
    \begin{lemma}
        For $s \in \gS$,
            \begin{align*}
                \mW_{KQ}(z) \le \exp(2p(s)t)\cdot \alpha \sqrt{\abs{\gA} + 1}.
            \end{align*}
        Likewise, for $z \in \gN$,
        \begin{align*}
            \mW_{KQ}(z) \le \exp(2Tt/\abs{\gN})\cdot \alpha \sqrt{\abs{\gA} + 1}.
        \end{align*}
    \end{lemma}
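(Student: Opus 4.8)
The plan is a Grönwall-type argument applied to the gradient-flow ODE for $\mW_{KQ}(z)$, using the gradient formula and the invariant from the Balancedness lemma above. Under gradient flow,
\begin{align*}
\dot{\mW}_{KQ}(z) = -\partial_{\mW_{KQ}(z)}L(\vtheta) = \sum_{a \in \gA}\mW_{OV}(a,z)\cdot\mathbb{E}_{z_{1:T}}\qty[C(z_{1:T},z)\qty(\mathbf{1}(a = a^*(z_{1:T})) - \hat p(a \mid z_{1:T}))].
\end{align*}
First I would apply Cauchy--Schwarz over $a \in \gA$ to split the right-hand side as $\big(\sum_a \mW_{OV}(a,z)^2\big)^{1/2}$ times $\big\|\,\mathbb{E}_{z_{1:T}}[C(z_{1:T},z)(\mathbf{1}_{a^*(z_{1:T})} - \hat p(\cdot\mid z_{1:T}))]\,\big\|_2$, where $\|\cdot\|_2$ is the Euclidean norm over coordinates indexed by $\gA$. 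The balancedness invariant $\mW_{KQ}(z)^2 - \sum_a \mW_{OV}(a,z)^2 = \alpha^2$ gives $\big(\sum_a \mW_{OV}(a,z)^2\big)^{1/2} \le \mW_{KQ}(z)$, and the corollary above guarantees $\mW_{KQ}(z) \ge \alpha > 0$ along the whole trajectory, which is exactly what makes the resulting differential inequality a genuine exponential-growth bound.

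For the second factor I would note that $\mathbf{1}_{a^*(z_{1:T})}$ and $\hat p(\cdot\mid z_{1:T})$ are both probability vectors on $\gA$, so their difference has Euclidean norm at most $2$ for every $z_{1:T}$; since $C(z_{1:T},z)\ge 0$, the triangle inequality for vector-valued expectations yields $\big\|\mathbb{E}[C(z_{1:T},z)(\mathbf{1}_{a^*} - \hat p)]\big\|_2 \le 2\,\mathbb{E}[C(z_{1:T},z)]$. It then remains only to compute $\mathbb{E}[C(z_{1:T},z)]$. For $z = s \in \gS$, the token $s$ can occur in $z_{1:T}$ only as the sampled subject (it is disjoint from $\gR$, $\gN$ and $\mathrm{EOS}$), so $C(z_{1:T},s)\in\{0,1\}$ and $\mathbb{E}[C(z_{1:T},s)] = p(s)$. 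For $z \in \gN$, there are exactly $T-3$ noise slots, each drawn uniformly from $\gN$, so $\mathbb{E}[C(z_{1:T},z)] = (T-3)/\abs{\gN} \le T/\abs{\gN}$.

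Combining the three bounds gives $\dot{\mW}_{KQ}(z) \le 2\,\mathbb{E}[C(z_{1:T},z)]\cdot \mW_{KQ}(z)$, hence $\tfrac{d}{dt}\log \mW_{KQ}(z) \le 2\,\mathbb{E}[C(z_{1:T},z)]$ (legitimate since $\mW_{KQ}(z)>0$), and integrating from $0$ with initial value $\mW_{KQ}(z)(0) = \alpha\sqrt{\abs{\gA}+1}$ yields $\mW_{KQ}(s)(t) \le \exp(2p(s)t)\cdot\alpha\sqrt{\abs{\gA}+1}$ and $\mW_{KQ}(z)(t) \le \exp(2Tt/\abs{\gN})\cdot\alpha\sqrt{\abs{\gA}+1}$ respectively. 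I do not expect a substantive obstacle here; the only points needing care are (i) invoking positivity of $\mW_{KQ}(z)$ so that passing to $\log$ is valid and the bound is exponential rather than vacuous, and (ii) being precise that $C(z_{1:T},z)$ is a count, which for a subject token is $\mathrm{Bernoulli}(p(s))$ and for a noise token has mean equal to the number of noise slots $T-3$ times the per-slot probability $1/\abs{\gN}$.
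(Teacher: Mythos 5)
Your proposal is correct and follows essentially the same route as the paper: Cauchy--Schwarz on the inner product with $\mW_{OV}(\cdot,z)$, the balancedness invariant to bound $\norm{\mW_{OV}(\cdot,z)} \le \mW_{KQ}(z)$, a constant-factor bound on the norm of the difference of two probability vectors, the elementary computation of $\mathbb{E}[C(z_{1:T},z)]$, and Gr\"onwall. One minor remark: positivity of $\mW_{KQ}(z)$ is not actually needed to pass from $\dot\mW_{KQ}(z) \le A\,\mW_{KQ}(z)$ to $\mW_{KQ}(z)(t) \le \mW_{KQ}(z)(0)e^{At}$ (one can differentiate $\mW_{KQ}(z)e^{-At}$ directly), so the appeal to the corollary, while valid, is not load-bearing for the upper bound.
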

    \begin{proof}
        Recall that the update for $\mW_{KQ}(s)$ is
        \begin{align*}
            \dot\mW_{KQ}(s) &= p(s) \langle \mW_{OV}(\cdot, s), p^*(\cdot \mid s) - \mathbb{E}_{z_{1:T}}\qty[\hat p(\cdot \mid z_{1:T}) \mid s \in z_{1:T}]\rangle\\
            &\le p(s)\norm{\mW_{OV}(\cdot, s)}\norm{p^*(\cdot \mid s) - \mathbb{E}_{z_{1:T}}\qty[\hat p(\cdot \mid z_{1:T}) \mid s \in z_{1:T}]}\\
            &\le 2p(s)\norm{\mW_{OV}(\cdot, s)}\\
            &\le 2p(s)\mW_{KQ}(s)
        \end{align*}
        Therefore by Gronwall's inequality,
        \begin{align*}
            \mW_{KQ}(s) \le \exp(2p(s)t)\cdot \alpha \sqrt{\abs{\gA} + 1}.
        \end{align*}
        % Therefore for 
        % \begin{align*}
        %     t \le \frac12p(s)^{-1}\log\qty(\frac{\alpha_{sm}}{\alpha \sqrt{\abs{\gA} + 1}}),
        % \end{align*}
        % we have $\mW_{KQ}(s) \le \alpha_{sm}$.
        Similarly, the update for $\mW_{KQ}(z)$ for $z \in \gN$ is
        \begin{align*}
            \dot\mW_{KQ}(z) &= \langle \mW_{OV}(\cdot, z), \mathbb{E}_{z_{1:T}}\qty[C(z_{1:T}, z)\cdot \qty(\mathbf{1}(\cdot = a^*(z_{1:T})) - \hat p(\cdot \mid z_{1:T}))]\rangle\\
            &\le \norm{\mW_{OV}(\cdot, z)}\cdot \mathbb{E}\qty[C(z_{1:T}, z)\norm{\mathbf{1}(\cdot = a^*(z_{1:T})) - \hat p(\cdot \mid z_{1:T})}]\\
            & \le 2 \mW_{OV}(\cdot, z) \mathbb{E}\qty[C(z_{1:T}, z)]\\
            &\le \frac{2T}{\abs{\gN}}\mW_{KQ}(z).
        \end{align*}
        Again by Gronwall's inequality,
        \begin{align*}
            \mW_{KQ}(z) \le \exp(2Tt/\abs{\gN})\cdot \alpha \sqrt{\abs{\gA} + 1}.
        \end{align*}
        % and thus for
        % \begin{align*}
        %     t \le \frac{\abs{\gN}}{2T}\log\qty(\frac{\alpha_{sm}}{\alpha \sqrt{\abs{\gA} + 1}}),
        % \end{align*}
        % $\mW_{KQ}(z) \le \alpha_{sm}$ as well.
    \end{proof}

    % \begin{lemma}
    %     Let $\vtheta$ be the gradient flow trajectory on $L$, and $\bar \vtheta$ be the gradient flow trajectory on $L_{\mathrm{rel}}$.
    % \end{lemma}
    % \begin{proof}
    %     The iterates evolve as
    %     \begin{align*}
    %         \dot \mW_{OV}(a, r) &= \mW_{KQ}(r)p(r)\qty(p^*(a \mid r) - \hat p_{\vtheta}(a \mid r))\\
    %         \dot{\bar{\mW}}_{OV}(a, r) &= \bar{\mW}_{KQ}(r)p(r)\qty(p^*(a \mid r) - \hat p_{\bar\vtheta}(a \mid r))
    %     \end{align*}
    %     Therefore
    %     \begin{align*}
    %         \dot \mW_{OV}(a, r) - \dot{\bar{\mW}}_{OV}(a, r) = \bar{\mW}_{KQ}(r)p(r)\qty(\hat p_{\bar\vtheta}(a \mid r) - \hat p_{\vtheta}(a \mid r)) + (\mW_{KQ}(r) - \bar{\mW}_{KQ}(r))p(r)\qty(p^*(a \mid r) - \hat p_{\vtheta}(a \mid r)),
    %     \end{align*}
    %     and so
    %     \begin{align*}
    %         &\norm{\dot \mW_{OV}(\cdot, r) - \dot{\bar{\mW}}_{OV}(\cdot, r)}\\
    %         &\le \bar{\mW}_{KQ}(r)p(r)\norm{\hat p_{\bar\vtheta}(a \mid r) - \hat p_{\vtheta}(a \mid r)} + \abs{\mW_{KQ}(r) - \bar{\mW}_{KQ}(r)}p(r)\norm{p^*(\cdot \mid r) - \hat p_{\vtheta}(\cdot \mid r)}\\
    %         &\le \bar{\mW}_{KQ}(r)p(r)\norm{\mW_{KQ}(r)\mW_{OV}(\cdot, r)  - \bar{\mW}_{KQ}(r)\bar{\mW}_{OV}(\cdot, r)} + \abs{\mW_{KQ}(r) - \bar{\mW}_{KQ}(r)}p(r)\sqrt{2}\\
    %         &\le \bar{\mW}_{KQ}(r)p(r)\abs{\mW_{KQ}(r) - \bar{\mW}_{KQ}(r)}\norm{\mW_{OV}(\cdot, r)} + \bar{\mW}_{KQ}(r)^2p(r)\norm{\mW_{OV}(\cdot, r)  - \bar{\mW}_{OV}(\cdot, r)}\\
    %         &\qquad + \abs{\mW_{KQ}(r) - \bar{\mW}_{KQ}(r)}p(r)\sqrt{2}
    %     \end{align*}
    % \end{proof}
    
    The following lemma is our key result, and shows that, assuming that the subject and noise weights stay bounded, the relation weights grow until the output of the model approximates the best relation-only prediction.
    
    \begin{lemma}\label{lem:key-lemma}
        Let $\alpha_{sm}, \epsilon > 0$ be arbitrary parameters satisfying
        \begin{align*}
            \alpha_{sm}^2T &\le \frac{1}{150}\log\qty(\frac{\epsilon^2}{\alpha^2(\abs{\gA} + 1)})^{-1}\cdot \min_r\norm{p^*(\cdot \mid r) - p_0}.\\
            \epsilon^2 &\le \frac{1}{50(\abs{\gA} + 1)}\cdot \min_r\norm{p^*(\cdot \mid r) - p_0}
        \end{align*}
        For a target accuracy $\epsilon_{min} > 0$, define $T^*$ by
        \begin{align*}
            T^* = \max_r p(r)^{-1}\norm{p^*(\cdot \mid r) - p_0}^{-1} \log\qty(\frac{\epsilon}{\alpha\sqrt{\abs{\gA} + 1}}) + 100(\abs{\gA} + 1)\log\abs{\gA}\epsilon^{-2}\epsilon_{min}^{-2}
        \end{align*}
        Assume that for $z \in \gS \cup \gN$ that $\mW_{KQ}(z) \le \alpha_{sm}$. Then, there exists $t \le T^*$ such that
        \begin{align*}
        \sum_r  p(r)^2 \norm{p^*(\cdot \mid r) - \mathbb{E}_{z_{1:T}}\qty[\hat p(\cdot \mid z_{1:T}) \mid r \in z_{1:T}]}^2 \le \epsilon^2_{min}.
    \end{align*}
    \end{lemma}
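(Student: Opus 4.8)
\emph{Overall strategy.} I would split the trajectory into a \emph{Phase~1}, in which each relation scale $\mW_{KQ}(r)$ escapes its $O(\alpha)$ initialization up to size $\epsilon$, and a \emph{Phase~2}, in which the already-large scales make the (convex-in-$\mTheta$) population loss decrease fast enough to force the claimed bound. Throughout I use the gradient identities and the invariant $\mW_{KQ}(z)^2-\sum_a\mW_{OV}(a,z)^2=\alpha^2$ from the Balancedness lemma. Writing $\bar q_r:=\mathbb{E}_{z_{1:T}}[\hat p(\cdot\mid z_{1:T})\mid r\in z_{1:T}]$ and using that each sequence contains exactly one relation token (so $C(z_{1:T},r)=\mathbf{1}(r\in z_{1:T})$), the identity for $\partial_{\mW_{OV}(a,r)}L$ simplifies \emph{exactly} to $-\mW_{KQ}(r)\,p(r)\,(p^*(a\mid r)-\bar q_r(a))$, hence $\sum_a\big(\partial_{\mW_{OV}(a,r)}L\big)^2=\mW_{KQ}(r)^2\,p(r)^2\,\|p^*(\cdot\mid r)-\bar q_r\|^2$. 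It therefore suffices to show (i) there is a time $t_1$, at most a constant times $\max_r p(r)^{-1}\|p^*(\cdot\mid r)-p_0\|^{-1}\log(\epsilon/(\alpha\sqrt{\abs{\gA}+1}))$, with $\mW_{KQ}(r)\ge\epsilon$ for every $r$ and every $t\in[t_1,T^*]$, and then (ii) run the energy argument below.

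\emph{Phase~2 (easy, given (i)).} Gradient flow gives $\tfrac{d}{dt}L(\vtheta(t))=-\|\nabla_\vtheta L\|^2\le-\sum_{r\in\gR}\sum_a(\partial_{\mW_{OV}(a,r)}L)^2=-\sum_r \mW_{KQ}(r)^2 p(r)^2\|p^*(\cdot\mid r)-\bar q_r\|^2\le-\epsilon^2\sum_r p(r)^2\|p^*(\cdot\mid r)-\bar q_r\|^2$ for $t\in[t_1,T^*]$, using (i). At initialization all logits are equal, so $\hat p(\cdot\mid z_{1:T})$ is uniform on $\gA$ and $L(\vtheta(0))=\log\abs{\gA}$, while $L\ge 0$ always; integrating,
\begin{align*}
  \epsilon^2\!\int_{t_1}^{T^*}\!\sum_r p(r)^2\|p^*(\cdot\mid r)-\bar q_r\|^2\,dt \;\le\; L(\vtheta(t_1))\;\le\;\log\abs{\gA}.
\end{align*}
Hence some $t\in[t_1,T^*]$ achieves $\sum_r p(r)^2\|p^*(\cdot\mid r)-\bar q_r\|^2\le \log\abs{\gA}\big/\big(\epsilon^2(T^*-t_1)\big)$, which is $\le\epsilon_{min}^2$ because the second term $100(\abs{\gA}+1)\log\abs{\gA}\,\epsilon^{-2}\epsilon_{min}^{-2}$ of $T^*$ dominates $t_1$ plus $\log\abs{\gA}\,\epsilon^{-2}\epsilon_{min}^{-2}$ with room to spare.

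\emph{Phase~1 (the work: proving (i)).} Fix $r$ and stay in the regime $\mW_{KQ}(r)\le\epsilon$. The invariant gives $\|\mW_{OV}(\cdot,r)\|\le\mW_{KQ}(r)\le\epsilon$, so the relation token contributes at most $\epsilon^2$ in $\ell^\infty$ to every logit; the subject and noise tokens contribute $O(T\alpha_{sm}^2)$ by the hypothesis and the invariant; and the $\mathrm{EOS}$ token (present in every sequence) is tracked in parallel and shown to remain in this same small regime. Consequently $\bar q_r$ stays within $O(\abs{\gA}\epsilon^2+T\alpha_{sm}^2)$ of $p_0$, which by the two displayed hypotheses on $\epsilon,\alpha_{sm}$ is $\le\tfrac14\|p^*(\cdot\mid r)-p_0\|$; writing $v_r:=p^*(\cdot\mid r)-p_0\neq0$, the drive $p^*(\cdot\mid r)-\bar q_r$ equals $v_r$ up to a perturbation of norm $\le\tfrac14\|v_r\|$. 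Decomposing $\mW_{OV}(\cdot,r)=\alpha\vone_\gA+\tilde w_r$ with $\tilde w_r\perp\vone_\gA$ (the mean component is frozen at $\alpha$ since the gradient is a difference of probability vectors), the Balancedness identities give $\dot{\tilde w}_r=p(r)\mW_{KQ}(r)(p^*(\cdot\mid r)-\bar q_r)$ and $\tfrac{d}{dt}\mW_{KQ}(r)=p(r)\langle\tilde w_r,p^*(\cdot\mid r)-\bar q_r\rangle$. Since $\dot{\tilde w}_r$ points (up to the $\tfrac14$-perturbation) along $v_r$, the vector $\tilde w_r$ accumulates essentially along $\hat v_r$, so with $\xi:=\langle\tilde w_r,v_r\rangle/\|v_r\|$ one gets $\dot\xi\gtrsim p(r)\|v_r\|\,\mW_{KQ}(r)$ and $\tfrac{d}{dt}\mW_{KQ}(r)\gtrsim p(r)\|v_r\|\,\xi$, hence $\xi+\mW_{KQ}(r)$ grows at rate $\gtrsim p(r)\|v_r\|$ from its initial value $\alpha\sqrt{\abs{\gA}+1}$. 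By the invariant $\mW_{KQ}(r)\ge\xi$, so once $\xi+\mW_{KQ}(r)$ reaches $2\epsilon$ we have $\mW_{KQ}(r)\ge\epsilon$; this happens within the claimed time, and taking the maximum over $r$ defines $t_1$. Finally $\mW_{KQ}(r)$ cannot later fall back below $\epsilon$: any approach to $\epsilon$ forces $\|\tilde w_r\|<\epsilon$ by the invariant, which returns $\bar q_r$ to a neighborhood of $p_0$ and realigns the drive with $v_r$, making $\tfrac{d}{dt}\mW_{KQ}(r)\ge0$; a standard bootstrap closes this.

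\emph{Main obstacle.} The crux is Phase~1: the reparametrization $\mTheta(a,r)=\mW_{OV}(a,r)\mW_{KQ}(r)$ is non-convex and degenerate at the all-$\alpha$ initialization, so one must quantitatively prove the relation weights escape the flat ``uniform-predictor'' fixed point at the exponential rate $p(r)\|p^*(\cdot\mid r)-p_0\|$ and never collapse; the balancedness invariant is the essential tool, simultaneously lower-bounding $\mW_{KQ}(r)$ and upper-bounding $\|\tilde w_r\|$, and the two displayed hypotheses on $\alpha_{sm}$ and $\epsilon$ are calibrated exactly so that the Phase~1 perturbations (from subject/noise tokens, the shared $\mathrm{EOS}$ bias, and the softmax's deviation from linearity) stay a small fraction of the signal $\|v_r\|$. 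Phase~2, by contrast, is an elementary consequence of the exact gradient identity and the monotone decrease of the lower-bounded loss.
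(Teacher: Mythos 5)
Your overall skeleton matches the paper's: a first stage in which each relation's weights escape the flat initialization at rate $p(r)\norm{p^*(\cdot\mid r)-p_0}$ (tracked, as in the paper, through the two-dimensional system $(\mW_{KQ}(r),\langle \mW_{OV}(\cdot,r),\hat v_r\rangle)$ with the balancedness invariant controlling the perturbations), and a final integral/energy argument using $\frac{d}{dt}L=-\norm{\nabla L}^2\le -\sum_r \mW_{KQ}(r)^2p(r)^2\norm{p^*(\cdot\mid r)-\bar q_r}^2$ together with $0\le L\le\log\abs{\gA}$. Those two pieces are essentially the paper's Stage 1 and Stage 3.

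The gap is the middle step: your claim (i) that $\mW_{KQ}(r)\ge\epsilon$ for \emph{all} $t\in[t_1,T^*]$, which you justify by saying that any approach to $\epsilon$ forces $\norm{\tilde w_r}$ small, realigns the drive with $v_r$, and so makes $\frac{d}{dt}\mW_{KQ}(r)\ge 0$, closed by "a standard bootstrap." This does not close. Realigning the \emph{drive} $p^*(\cdot\mid r)-\bar q_r$ with $v_r$ controls $\dot{\tilde w}_r$, but $\frac{d}{dt}\mW_{KQ}(r)=p(r)\langle\tilde w_r,\,p^*(\cdot\mid r)-\bar q_r\rangle$ depends on the \emph{accumulated} vector $\tilde w_r$, whose alignment $\xi=\langle\tilde w_r,\hat v_r\rangle$ you only control while the system is in the small regime. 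Once $\mW_{KQ}(r)>\epsilon$, $\bar q_r$ is no longer pinned near $p_0$, the drive can point anywhere in the tangent space of the simplex, $\tilde w_r$ can rotate, and the trajectory could in principle re-enter the $\epsilon$-ball with $\xi<0$ and collapse back toward the initialization (the invariant $\mW_{KQ}(r)^2=\alpha^2(\abs{\gA}+1)+\norm{\tilde w_r}^2$ permits this), after which nothing prevents repeated escape--collapse cycles that blow up the time bound. The paper rules this out by a different mechanism, which is the key idea you are missing: it introduces the relation-only loss $\bar L_r$, shows $\abs{L_r-\bar L_r}\le 2(T-1)\alpha_{sm}^2$ and that $\frac{d}{dt}\bar L_r\le 0$ along the flow unless $\hat p(\cdot\mid r)$ is already within $\sqrt{2}T\alpha_{sm}^2$ of $p^*(\cdot\mid r)$, so that the loss gap $L_{r,0}-\bar L_r\gtrsim p(r)\norm{v_r}\epsilon^2/(\abs{\gA}+1)$ achieved at the end of Stage 1 persists; then \emph{convexity of $\bar L_r$ in the product parametrization} $\mTheta(\cdot,r)=\mW_{KQ}(r)\mW_{OV}(\cdot,r)$ converts that persistent loss gap into the permanent lower bound $\norm{\mTheta(\cdot,r)}\gtrsim\epsilon^2/(\abs{\gA}+1)$, hence $\mW_{KQ}(r)^2\ge\norm{\mTheta(\cdot,r)}\gtrsim\epsilon^2/(\abs{\gA}+1)$ (note this is weaker than your $\mW_{KQ}(r)^2\ge\epsilon^2$, which is exactly why $T^*$ carries the extra factor $\abs{\gA}+1$ in its second term). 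Without some Lyapunov-type argument of this kind, your Phase 2 integration has no valid lower bound on $\mW_{KQ}(r)$ over $[t_1,T^*]$.
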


    \begin{proof}
    The proof proceeds in three stages. First, we bound the time required for the relation weights to escape the origin. Next, we prove that the relation weights stay large. Finally, we show convergence.
    
    \paragraph{Stage 1: Escaping the origin.}
        The gradient flow update on $\mW_{OV}(a, r)$ is
        \begin{align*}
            \dot \mW_{OV}(a, r) = \mW_{KQ}(r) \cdot p(r) \qty(p^*(a \mid r) - \mathbb{E}_{z_{1:T}}[\hat p(a \mid z_{1:T}) \mid r \in z_{1:T}])
        \end{align*}
        We thus have
        \begin{align*}
            \norm{\dot \mW_{OV}(\cdot, r) - \mW_{KQ}(r) \cdot p(r)\qty(p^*(\cdot \mid r) - p_0(a))} &\le \mW_{KQ}(r) \cdot p(r)\norm{\mathbb{E}_{z_{1:T}}[\hat p(\cdot \mid z_{1:T}) \mid r \in z_{1:T}] - p_0}\\
        \end{align*}
        Define $p_0 = \frac{1}{\abs{\gA}} \mathbf{1}_{\gA}$. Observe that
        \begin{align*}
            \norm{\mathbb{E}_{z_{1:T}}[\hat p(\cdot \mid z_{1:T}) \mid r \in z_{1:T}] - p_0} &\le \mathbb{E}_{z_{1:T}}[\norm{[\hat p(a \mid z_{1:T})  - p_0}\mid r \in z_{1:T}]\\
            &\le \mathbb{E}_{z_{1:T}}\qty[\sum_t \mW_{KQ}(z_t)\norm{\mW_{OV}(\cdot, z_t)} \mid r \in z_{1:T}]\\
            &\le \mW_{KQ}(r)\norm{\mW_{OV}(\cdot, r)} + T\alpha_{sm}^2\\
            &\le \mW_{KQ}(r)^2 + T\alpha_{sm}^2.
        \end{align*}
        Thus
        \begin{align*}
            \norm{\dot \mW_{OV}(\cdot, r) - \mW_{KQ}(r) \cdot p(r)\qty(p^*(\cdot \mid r) - p_0(a))} \le p(r)\mW_{KQ}(r)\qty(\mW_{KQ}(r)^2 + T\alpha_{sm}^2)
        \end{align*}
        Likewise,
        \begin{align*}
            \dot \mW_{KQ}(r) =  p(r) \langle\mW_{OV}(\cdot, r),  \qty(p^*(\cdot \mid r) - \mathbb{E}_{z_{1:T}}[\hat p(a \mid z_{1:T}) \mid r \in z_{1:T}])\rangle,
        \end{align*}
        and thus
        \begin{align*}
            \abs{\dot \mW_{KQ}(r) -  p(r) \langle \mW_{OV}(\cdot, r), \qty(p^*(\cdot \mid r) - p_0)\rangle} &\le p(r)\norm{\mW_{OV}(\cdot, r)}\norm{\mathbb{E}_{z_{1:T}}[\hat p(\cdot \mid z_{1:T}) \mid r \in z_{1:T}] - p_0}\\
            &\le p(r)\mW_{KQ}(r)\qty(\mW_{KQ}(r)^2 + T\alpha_{sm}^2)
        \end{align*}

    Define the vector $\vu \in \mathbb{R}^2$ by
    \begin{align*}
        \vu = \begin{bmatrix}
            \mW_{KQ}(r) \\
            \langle \mW_{OV}(\cdot, r), \frac{p^*(\cdot \mid r) - p_0}{\norm{p^*(\cdot \mid r) - p_0}} \rangle
        \end{bmatrix}
    \end{align*}
    We see that
    \begin{align*}
        \norm{\dot\vu - p(r)\norm{p^*(\cdot \mid r) - p_0}\cdot \begin{bmatrix} 0 & 1\\
            1 & 0
        \end{bmatrix}\vu} \le 2p(r)\mW_{KQ}(r)\qty(\mW_{KQ}(r)^2 + T\alpha_{sm}^2)
    \end{align*}
    Therefore
    \begin{align*}
        \frac{d}{dt}(\norm{\vu}^2) &\le 2\langle \dot \vu, \vu \rangle\\
        &\le 2p(r)\norm{p^*(\cdot \mid r) - p_0}\norm{\vu}^2 + 4p(r)\norm{\vu}\mW_{KQ}(r)\qty(\mW_{KQ}(r)^2 + T\alpha_{sm}^2)\\
        &\le 2p(r)\norm{p^*(\cdot \mid r) - p_0}\norm{\vu}^2 + 4p(r)\norm{\vu}^2 \qty(\norm{\vu}^2 + T\alpha_{sm}^2)\\
        &\le 2p(r)\qty(\norm{p^*(\cdot \mid r) - p_0} + 2T\alpha_{sm}^2)\norm{\vu}^2 + 4p(r)\norm{\vu}^4.
    \end{align*}
    where the last inequality bounds $\mW_{KQ}^2(r) \le \norm{\vu}^2$.

    Define $\gamma_r := 2p(r)\qty(\norm{p^*(\cdot \mid r) - p_0} + 2T\alpha_{sm}^2)$. By \Cref{lem:gronwall} we have that for 
    \begin{align*}
        t < \gamma_r^{-1}\log\qty(\frac{\gamma_r}{4p(r)\norm{\vu_0}^2} + 1),
    \end{align*}
    \begin{align*}
        \norm{\vu}^2 \le \frac{\gamma_r \norm{\vu_0}^2 \exp(\gamma_r t)}{\gamma_r + 4p(r)^2(1 - \exp(\gamma_r t))}
    \end{align*}
    
    Let $T_\epsilon$ be the first time that $\norm{\vu} \ge \epsilon$. If $T_\epsilon < \gamma_r^{-1}\log\qty(\frac{\gamma_r}{4p(r)}\norm{\vu_0}^2 + 1)$, then
    \begin{align*}
        \epsilon^2 \le \norm{\vu}^2 \le \frac{\gamma_r \norm{\vu_0}^2 \exp(\gamma_r T_\epsilon)}{\gamma_r + 4p(r)^2(1 - \exp(\gamma_r T_\epsilon))} \le \frac{\gamma_r \alpha^2(\abs{\gA} + 1) \exp(\gamma_r T_\epsilon)}{\gamma_r + 4p(r)^2(1 - \exp(\gamma_r T_\epsilon))}.
    \end{align*}
    Therefore
    \begin{align*}
        T_\epsilon \ge \gamma_r^{-1}\log\qty(\frac{\epsilon^2 \gamma_r + 4 p(r) \epsilon^2 \alpha^2(\abs{\gA} + 1)}{\alpha^2(\abs{\gA} + 1) \gamma_r + 4 p(r) \epsilon^2 \alpha^2(\abs{\gA} + 1)}) \ge \gamma_r^{-1} \log\qty(\frac{\epsilon^2}{2\alpha^2(\abs{\gA} + 1)})
    \end{align*}
    for $\epsilon^2 \le \frac{\gamma_r}{4p(r)}$ On this assumption, $\frac{\epsilon^2}{2\alpha^2(\abs{\gA} + 1)} \le \frac{\gamma_r}{4p(r)\norm{\vu_0}^2}$, and thus we always have $T_\epsilon \ge \gamma_r^{-1}\log\qty(\frac{\epsilon^2}{2\alpha^2(\abs{\gA} + 1)})$.

    % While $t \le T_\epsilon$, by Gronwall's inequality we get that, since$\alpha_{sm}^2T \le \epsilon^2$,
    % \begin{align*}
    %     \norm{\vu}^2 \le \norm{\vu_0}^2 \exp\qty(2p(r)\qty(\norm{p^*(\cdot \mid r) - p_0} + 4\epsilon^2)t)
    % \end{align*}
    % At initialization, $\norm{\vu_0}^2 = \mW_{KQ}(r)^2 = \alpha^2(\abs{\gA} + 1)$. Therefore at time $T_\epsilon$,
    % \begin{align*}
    %     \epsilon^2 \le \norm{\vu}^2 \le \alpha^2(\abs{\gA} + 1)\exp\qty(2p(r)\qty(\norm{p^*(\cdot \mid r) - p_0} + 4\epsilon^2)T_\epsilon),
    % \end{align*}
    % and thus
    % \begin{align*}
    %     T_\epsilon \ge p(r)^{-1}\qty(\norm{p^*(\cdot \mid r) - p_0} + 4\epsilon^2)^{-1} \log\qty(\frac{\epsilon}{\alpha\sqrt{\abs{\gA} + 1}}).
    % \end{align*}

    Define $L_r$ by
    \begin{align*}
        L_r(\vtheta) := p(r)\mathbb{E}_{z_{1:T+1}}\qty[-\langle \vphi(z_{T+1}), F_{\mathrm{lin}}(\mX; \vtheta) \rangle + \log\qty(\sum_{a \in \gA}\exp\qty(\langle \vphi(a), F_{\mathrm{lin}}(\mX; \vtheta) \rangle))\mid r \in z_{1:T}]
\end{align*}
Let us define the relation-only model as
\begin{align*}
    \vphi(a)^\top F_{\mathrm{rel}}(\mX; \vtheta) = \mW_{OV}(a, r)\mW_{KQ}(r)
\end{align*}
where $r \in z_{1:T}$. We see that
\begin{align*}
    \abs{ \vphi(a)^\top F_{\mathrm{rel}}(\mX; \vtheta) -  \vphi(a)^\top F_{\mathrm{lin}}(\mX; \vtheta)} \le (T-1)\alpha_{sm}^2.
\end{align*}
Define $g: \mathbb{R}^{\abs{\gA}} \rightarrow \mathbb{R}$ by $g(\vz) = \log\qty(\sum_a \exp(\vz_a))$. We see that $\nabla_{\vz} g(\vz) = \gS(\vz)$, where $\gS$ is the softmax, and thus
\begin{align*}
    \abs{g(\vz_1) - g(\vz_2)} \le \sup_\vz\norm{\nabla_{\vz}g(\vz)}_1 \cdot \norm{g(\vz_1) - g(\vz_2)}_\infty \le \norm{g(\vz_1) - g(\vz_2)}_\infty.
\end{align*}
Therefore defining the relation-only loss $\bar L_r$ as
\begin{align*}
       \bar L_{r}(\vtheta) &:= p(r)\mathbb{E}_{s}\qty[-\langle \vphi(a^*(s,r), F_{\mathrm{rel}}(r; \vtheta) \rangle + \log\qty(\sum_{a \in \gA}\exp\qty(\langle \vphi(a), F_{\mathrm{lin}}(r; \vtheta) \rangle))]\\
       & = -p(r)\sum_a p(a \mid r)\mW_{OV}(a^*(s, r), r)\mW_{KQ}(r) + p(r)\log\qty(\sum_{a \in \gA}\exp\qty(\mW_{OV}(a, r)\mW_{KQ}(r))),
\end{align*}
we see that
\begin{align*}
    \abs{L_r(\vtheta) - \bar L_r(\vtheta)} \le 2(T-1)\alpha_{sm}^2.
\end{align*}

    Since log-sum-exp is 1-strongly-convex, recalling that $\mTheta(a, r) := \mW_{OV}(a, r)\mW_{KQ}(r)$, 
    \begin{align*}
        \log\qty(\sum_a\exp(\mTheta(a, r))) \le \log(\abs{\gA}) + \sum_a \frac{1}{\abs{\gA}} \mTheta(a, r) + \frac12\norm{\mTheta(\cdot, r)}^2.
    \end{align*}
    Therefore
    \begin{align*}
        \bar L_r &\le p(r)\log\abs{\gA} - p(r)\langle \mTheta(\cdot, r), p^*(\cdot \mid r) - p_0 \rangle + \frac12 p(r)\norm{\mTheta(\cdot, r)}^2\\
        &= L_{r,0} - p(r)\norm{p^*(\cdot \mid r) - p_0}\cdot \vu_1\vu_2 + \frac12 p(r) \norm{\mTheta(\cdot, r)}^2.
    \end{align*}
    
    We next track the evolution of $\vu_1\vu_2$:
    \begin{align*}
        \frac{d}{dt}\qty(\vu_1\vu_2) &= \dot \vu_1 \vu_2 + \vu_1 \dot \vu_2\\
        &\ge p(r)\norm{p^*(\cdot \mid r) - p_0}\norm{\vu}^2 - 4p(r)\norm{\vu}^2\qty(\norm{\vu}^2 + (T-1)\alpha_{sm}^2)\\
        &\ge p(r)\qty(\norm{p^*(\cdot \mid r) - p_0} - 4\norm{\vu}^2 - 4(T-1)\alpha_{sm}^2)\norm{\vu}^2\\
        &\ge p(r)\qty(\norm{p^*(\cdot \mid r) - p_0} - 4T\alpha_{sm}^2)\norm{\vu}^2 - 4p(r)\norm{\vu}^4.
    \end{align*}
    for $t \le T_\epsilon$. Since $\norm{\vu} \le \epsilon$, this is increasing in $\norm{\vu}$. 
    
    We first have the bound $\norm{\vu}^2 \ge \mW_{KQ}(r)^2 \ge \alpha^2$. Next, we have the bound $\norm{\vu}^2 \ge 2\vu_1\vu_2$. Pick some time $\tau \le T_\epsilon$. Define $\gamma^-_r:= 2p(r)\qty(\norm{p^*(\cdot \mid r) - p_0} - 4T\alpha_{sm}^2)$. We see that
    \begin{align*}
        (\vu_1\vu_2)(\tau) \ge \qty(\frac12\gamma^-_r \alpha^2 - 4p(r)\alpha^4)\tau \ge \frac14\gamma^-_r\alpha^2\tau
    \end{align*}
    Next, by \Cref{lem:gronwall}, for $t \le T_{\epsilon}$ we have

    \begin{align*}
        (\vu_1\vu_2)(t) \ge \frac{\gamma^-_r (\vu_1\vu_2)(\tau) \exp(\gamma^-_r (t - \tau))}{\gamma^-_r + 8p(r)(\vu_1\vu_2)(\tau)\exp(\gamma^-_r (t - \tau))}.
    \end{align*}
    Plugging in $t = \gamma_r^{-1}\log\qty(\frac{\epsilon^2}{2\alpha^2(\abs{\gA} + 1)})$,
    \begin{align*}
        (\vu_1\vu_2)(\tau) \exp(\gamma^-_r (t - \tau)) \ge \frac14\gamma^-_r\alpha^2\tau\exp(-\gamma^-_r \tau)\cdot \exp\qty(\frac{\gamma^-_r}{\gamma_r}\log\qty(\frac{\epsilon^2}{2\alpha^2(\abs{\gA} + 1)}))
    \end{align*}
    Selecting $\tau = 1/\gamma_r^-$, we get
    \begin{align*}
        (\vu_1\vu_2)(\tau) \exp(\gamma^-_r (t - \tau)) &\ge \frac{\alpha^2}{4e} \cdot \exp\qty(\frac{\norm{p^*(\cdot \mid r) - p_0} - 4T\alpha_{sm}^2}{\norm{p^*(\cdot \mid r) - p_0} + 2T\alpha_{sm}^2}\log\qty(\frac{\epsilon^2}{2\alpha^2(\abs{\gA} + 1)}))\\
        &\ge \frac{\alpha^2}{4e} \cdot \frac{\epsilon^2}{2\alpha^2(\abs{\gA} + 1)} \cdot \exp\qty(\frac{- 6T\alpha_{sm}^2}{\norm{p^*(\cdot \mid r) - p_0} + 2T\alpha_{sm}^2}\log\qty(\frac{\epsilon^2}{2\alpha^2(\abs{\gA} + 1)}))\\
        &\ge \frac{\alpha^2}{4e} \cdot \frac{\epsilon^2}{2\alpha^2(\abs{\gA} + 1)} \cdot \qty(1 - \frac{ 6T\alpha_{sm}^2}{\norm{p^*(\cdot \mid r) - p_0} + 2T\alpha_{sm}^2}\log\qty(\frac{\epsilon^2}{2\alpha^2(\abs{\gA} + 1)}))\\
        &\ge \frac{\epsilon^2}{50(\abs{\gA} + 1)}.
    \end{align*}
    whenever $\frac{T\alpha_{sm}^2}{\norm{p^*(\cdot \mid r) - p_0} + 2T\alpha_{sm}^2}\log\qty(\frac{\epsilon^2}{\alpha^2(\abs{\gA} + 1)}) \le \frac{1}{150}$.

    Therefore
    \begin{align*}
        (\vu_1\vu_2)(t) &\ge \frac{\epsilon^2}{50(\abs{\gA} + 1)}\cdot \frac{1}{1 + 8\frac{p(r)}{\gamma_r^-}\frac{\epsilon^2}{50(\abs{\gA} + 1)}}\\
        &\ge \frac{\epsilon^2}{50(\abs{\gA} + 1)}\cdot \frac{1}{1 + \frac{4\epsilon^2}{\norm{p^*(\cdot \mid r) - p_0}50(\abs{\gA} + 1)}}\\
        &\ge \frac{\epsilon^2}{100(\abs{\gA} + 1)},
    \end{align*}

    % \begin{align*}
    %     (\vu_1\vu_2)(t) \ge (\vu_1\vu_2)(\tau)\cdot \exp\qty(2p(r)\qty(\norm{p^*(\cdot \mid r) - p_0} - 8\epsilon^2)\cdot (t - \tau)).
    % \end{align*}
    % Plugging in $t = p(r)^{-1}\qty(\norm{p^*(\cdot \mid r) - p_0} + 4\epsilon^2)^{-1} \log\qty(\frac{\epsilon}{\alpha\sqrt{\abs{\gA} + 1}}) \le T_\epsilon$, we have that
    % \begin{align*}
    %     (\vu_1\vu_2)(t) \ge \frac{p(r)\qty(\norm{p^*(\cdot \mid r) - p_0} - 8\epsilon^2)\tau}{\exp\qty(2p(r)\qty(\norm{p^*(\cdot \mid r) - p_0} - 8\epsilon^2)\tau)}\cdot \alpha^2 \cdot \exp\qty(\frac{\norm{p^*(\cdot \mid r) - p_0} - 8\epsilon^2}{\norm{p^*(\cdot \mid r) - p_0} + 4\epsilon^2}\log\qty(\frac{\epsilon^2}{\alpha^2(\abs{\gA} + 1)}))
    % \end{align*}
    % Selecting $\tau = \frac12p(r)^{-1}\qty(\norm{p^*(\cdot \mid r) - p_0} - 8\epsilon^2)^{-1},$ we get that
    % \begin{align*}
    %     (\vu_1\vu_2)(t) &\ge \frac{1}{2e} \cdot \alpha^2 \cdot \frac{\epsilon^2}{\alpha^2(\abs{\gA} + 1)} \cdot \exp\qty(\frac{- 12\epsilon^2}{\norm{p^*(\cdot \mid r) - p_0} + 4\epsilon^2}\log\qty(\frac{\epsilon^2}{\alpha^2(\abs{\gA} + 1)}))\\
    %     &\ge \frac{\epsilon^2}{2e(\abs{\gA} + 1)} \cdot \qty(1 - \frac{12\epsilon^2}{\norm{p^*(\cdot \mid r) - p_0} + 4\epsilon^2}\log\qty(\frac{\epsilon^2}{\alpha^2(\abs{\gA} + 1)}))\\
    %     &\ge \frac{\epsilon^2}{4e(\abs{\gA} + 1)}
    % \end{align*}
    % whenever $\frac{\epsilon^2}{\norm{p^*(\cdot \mid r) - p_0} + 4\epsilon^2}\log\qty(\frac{\epsilon^2}{\alpha^2(\abs{\gA} + 1)}) \le \frac{1}{24}$. 
    
    Altogether, we get that the loss is
    \begin{align*}
        \bar L_r &\le L_{r,0} - p(r)\norm{p^*(\cdot \mid r) - p_0}\cdot \frac{\epsilon^2}{100(\abs{\gA} + 1)} + \frac12 p(r) \epsilon^4\\
        &\le L_{r,0} - p(r)\norm{p^*(\cdot \mid r) - p_0}\cdot \frac{\epsilon^2}{200(\abs{\gA} + 1)}
    \end{align*}
    whenever $\epsilon^2 \le \frac{\norm{p^*(\cdot \mid r) - p_0}}{100(\abs{\gA} + 1)}$.

    \paragraph{Stage 2: Norm stays large}
    
    Next, we want to show that $\mW_{KQ}(r)$ stays large. We first show that the relation-only loss $\bar L$ is decreasing. We can compute that
    \begin{align*}
        \frac{d}{dt}\bar L_r(\vtheta) = \langle \nabla_\vtheta \bar L_r, \nabla_\vtheta L_r \rangle
    \end{align*}
    Define $\hat p(\cdot \mid r)$ by
    \begin{align*}
        \hat p(a \mid r) = \frac{\exp(\mW_{KQ}(r)\mW_{OV}(a, r))}{\sum_{a'}\exp(\mW_{KQ}(r)\mW_{OV}(a', r))}.
    \end{align*}
    We observe that
    \begin{align*}
        \partial_{\mW_{OV}(\cdot, r)}\bar L_r &= \mW_{KQ}(r) p(r)\qty(p^*(\cdot \mid r) - \hat p(\cdot \mid r))\\
        \partial_{\mW_{OV}(\cdot, r)}L_r &= \mW_{KQ}(r) p(r)\qty(p^*(\cdot \mid r) - \mathbb{E}_{z_{1:T}}\qty[\hat p(\cdot \mid z_{1:T}) \mid r \in z_{1:T}])
    \end{align*}
    and thus
    \begin{align*}
        \norm{\partial_{\mW_{OV}(\cdot, r)}\bar L_r - \partial_{\mW_{OV}(\cdot, r)}L_r} &\le \mW_{KQ}(r) p(r)\norm{\mathbb{E}_{z_{1:T}}\qty[\hat p(\cdot \mid z_{1:T}) \mid r \in z_{1:T}] - \hat p(\cdot \mid r)}\\
        &\le \mW_{KQ}(r) p(r) T\alpha_{sm}^2.
    \end{align*}
    Likewise,
    \begin{align*}
        \abs{\partial_{\mW_{KQ}(r)}\bar L_r - \partial_{\mW_{KQ}(r)}L_r} &= p(r)\abs{\langle \mW_{OV}(\cdot, r), \mathbb{E}_{z_{1:T}}\qty[\hat p(\cdot \mid z_{1:T}) \mid r \in z_{1:T}] - \hat p(\cdot \mid r) \rangle}\\
        &\le p(r)\norm{\mW_{OV}(\cdot, r)}\norm{\mathbb{E}_{z_{1:T}}\qty[\hat p(\cdot \mid z_{1:T}) \mid r \in z_{1:T}] - \hat p(\cdot \mid r)}\\
        &\le \mW_{KQ}(r) p(r) T\alpha_{sm}^2.
    \end{align*}
    Therefore
    \begin{align*}
        \frac{d}{dt} L_r(\vtheta) &\ge \norm{\nabla_\vtheta \bar L_r}^2 - \norm{\nabla_\vtheta \bar L_r}\norm{\nabla_\vtheta \bar L_r - \nabla_\vtheta L_r}\\
        &\ge \norm{\nabla_\vtheta \bar L_r}^2 - \norm{\nabla_\vtheta \bar L_r}\sqrt{2}\mW_{KQ}(r) p(r) T\alpha_{sm}^2.
    \end{align*}
    Assume that $\frac{d}{dt} L_r(\vtheta) < 0$. Then
    \begin{align*}
        \sqrt{2}\mW_{KQ}(r) p(r) T\alpha_{sm}^2 \ge \norm{\nabla_\vtheta \bar L_r} \ge \mW_{KQ}(r) p(r)\norm{p^*(\cdot \mid r) - \hat p(\cdot \mid r)},
    \end{align*}
    i.e
    \begin{align*}
        \norm{p^*(\cdot \mid r) - \hat p(\cdot \mid r)} \le \sqrt{2}T\alpha_{sm}^2.
    \end{align*}
    Assuming that $\sqrt{2}T\alpha_{sm}^2 < \frac{1}{2S}$, since $p^*(a \mid r) > \frac{1}{S}$ for $p*(a \mid r) > 0$ we have that
    \begin{align*}
        \abs{p^*(a \mid r)\log\frac{p^*(a \mid r)}{\hat p(a \mid r)}} = p^*(a \mid r)\abs{\log\qty(1 + \frac{\hat p(a \mid r) - p^*(a \mid r)}{p^*(a \mid r)})} \le 2\abs{\hat p(a \mid r) - p^*(a \mid r)}
    \end{align*}
    Therefore $\bar L_r - p(r)H(p^*(\cdot \mid r)) \le 2p(r)\norm{\hat p(\cdot \mid r) - p^*(\cdot \mid r)}_1 \le \sqrt{2}T\alpha_{sm}^2 D$. As such, we have that $\bar L_r$ stays below $L_{r,0} - p(r)\norm{p^*(\cdot \mid r) - p_0}\cdot \frac{\epsilon^2}{200(\abs{\gA} + 1)}$ for the remainder of the gradient flow trajectory.

    By convexity in $\mTheta$ space, 
    \begin{align*}
        L_{r,0} - \bar L_r(\mTheta) &\le -\langle \nabla_{\mTheta(\cdot, r)}\bar L(\mathbf{0}), \mTheta(\cdot, r)\rangle\\
        &\le \norm{\nabla_{\mTheta(\cdot, r)}\bar L(\mathbf{0})}\norm{\mTheta(\cdot, r)}\\
        &\le p(r)\cdot \norm{p^*(\cdot \mid r) - p_0}\cdot \norm{\mTheta(\cdot, r)}.
    \end{align*}
    Therefore
    \begin{align*}
        \norm{\mTheta(\cdot, r)} \ge \frac{\epsilon^2}{100(\abs{\gA} + 1)}
    \end{align*}

    \paragraph{Stage 3: Convergence.}

    Next, we can bound the loss decrease by
    \begin{align*}
        \frac{d}{dt}L(\vtheta) &= -\norm{\nabla_\vtheta L}^2\\
        &\le -\sum_r \norm{\partial_{\mW_{OV}(\cdot, r)}L}^2\\
        &= -\sum_r \norm{\partial_{\mW_{OV}(\cdot, r)}L_r}^2\\
        &= -\sum_r \mW_{KQ}(r)^2 p(r)^2 \norm{p^*(\cdot \mid r) - \mathbb{E}_{z_{1:T}}\qty[\hat p(\cdot \mid z_{1:T}) \mid r \in z_{1:T}]}\\
        &\le -\sum_r \norm{\mTheta(\cdot, r)} p(r)^2 \norm{p^*(\cdot \mid r) - \mathbb{E}_{z_{1:T}}\qty[\hat p(\cdot \mid z_{1:T}) \mid r \in z_{1:T}]}^2\\
        &\le - \frac{\epsilon^2}{200(\abs{\gA} + 1)}\sum_r  p(r)^2 \norm{p^*(\cdot \mid r) - \mathbb{E}_{z_{1:T}}\qty[\hat p(\cdot \mid z_{1:T}) \mid r \in z_{1:T}]}^2
    \end{align*}
    
    Since $L(\mTheta) \le L(\mTheta_0) = \log \abs{\gA}$, and $L(\mTheta) \ge 0$, there exists some $t \le \max_r t^* + 100(\abs{\gA} + 1)\log\abs{\gA}\epsilon^{-2}\epsilon_{min}^{-2}$ such that
    \begin{align*}
        \sum_r  p(r)^2 \norm{p^*(\cdot \mid r) - \mathbb{E}_{z_{1:T}}\qty[\hat p(\cdot \mid z_{1:T}) \mid r \in z_{1:T}]}^2 \le \epsilon^2_{min},
    \end{align*}
    as desired.
    \end{proof}

    To conclude, we must set $\alpha, \alpha_{sm}, T^*$ appropriately in terms of $\epsilon$ in order to apply \Cref{lem:key-lemma}.

    \begin{proof}[Proof of \Cref{thm:sequential}]
        Let $\epsilon = \epsilon_{min} \le \frac{1}{100(\abs{\gA} + 1)}\cdot \min_r\norm{p^*(\cdot \mid r) - p_0}$ be the target accuracy.
        Let us choose the initialization $\alpha$ so that $\log\qty(\frac{\epsilon}{\alpha\sqrt{\abs{\gA} + 1}}) = \iota$, where $\iota$ is chosen so that
        \begin{align*}
            \iota  \ge 100(\abs{\gA} + 1)\log\abs{\gA}\epsilon^{-4}p(r)\norm{p^*(\cdot \mid r) - p_0}.
        \end{align*}
        In this case, we see that
        \begin{align*}
            T^* \le 2\iota\max_r\qty( p(r)^{-1}\norm{p^*(\cdot \mid r) - p_0}^{-1}).
        \end{align*}
        Since $p^*(\cdot \mid r)$ is supported on at most $D$ elements, and $\abs{\gA} \ge 2D$, we have $\norm{p^*(\cdot \mid r) - p_0}^{-1} \le \sqrt{2D}$. Therefore $T^* \le 2R\sqrt{D} \cdot \iota$.
        Let us compute $\alpha_{sm}$.
        We see that, since $S \ge 8R \sqrt{2D}$,
        \begin{align*}
            \mW_{KQ}(s) &\le \exp\qty(\frac{4R\sqrt{D}}{S} \iota) \cdot \alpha\sqrt{\abs{A} + 1}\\
            & \le \exp(\frac12\iota)\alpha\sqrt{\abs{A} + 1}\\
            &= \sqrt{\epsilon \alpha}\cdot (\abs{\gA} + 1)^{1/4}\\
            &\le \sqrt{\alpha}
        \end{align*}
        Similarly, since $\gN \ge 4R\sqrt{2D}T$,
        \begin{align*}
            \mW_{KQ}(s) &\le \exp\qty(\frac{4R\sqrt{D}T}{\abs{\gN}} \iota) \cdot \alpha\sqrt{\abs{A} + 1}\\
            & \le \exp(\frac12\iota)\alpha\sqrt{\abs{A} + 1}\\
            &\le \sqrt{\alpha}
        \end{align*}
        Therefore the assumption holds for $\alpha_{sm} = \sqrt{\alpha}$. To conclude, we must verify that
        \begin{align*}
            T\alpha \le \frac{1}{300}\iota^{-1} \cdot \min_r \norm{p^*(\cdot \mid r) - p_0}.
        \end{align*}
        But since $\alpha = \frac{\epsilon}{\sqrt{\abs{\gA} + 1}} e^{-\iota}$, the RHS scales with $e^{-\iota}$ and the RHS scales with $\iota$, and thus the condition can be obtained for choosing $\iota$ sufficiently large.

        Under the setting of paramters the conditions of \Cref{lem:key-lemma} are satisfied, and thus the claim holds.
    \end{proof}

    \subsection{Helper Lemma}
    \begin{lemma}\label{lem:gronwall}
        Let $z(t) \ge 0$ satisfy 
        \begin{align*}
            \dot z \le Az + Bz^2.
        \end{align*}
        for positive constants $A, B$. Then
        \begin{align*}
        z(t) \le \frac{A z(0)e^{At}}{A + Bz(0)(1 - e^{At})}.
        \end{align*}
        Furthermore, if
        \begin{align*}
            \dot z \ge Az - Bz^2,
        \end{align*}
        and $z \in [0, \frac{A}{2B}]$ on the interval $[0, T]$,
        then
        \begin{align*}
                    z(t) \ge \frac{A z(0)e^{At}}{A + Bz(0)(e^{At} - 1)}.
        \end{align*}
        for all $t \in [0, T]$.
    \end{lemma}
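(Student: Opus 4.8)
The plan is to reduce both inequalities to a linear, constant-coefficient differential inequality for the reciprocal $u:=1/z$, which is then closed out by the elementary Gronwall inequality; equivalently, one compares $z$ with the explicitly solvable Bernoulli/logistic ODE $\dot y = Ay\pm By^2$, whose solution is exactly the claimed right-hand side.

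For the upper bound I would first dispose of the case $z(0)=0$: there the claimed bound reads $z(t)\le 0$, and since $\dot z\le Az+Bz^2$ gives $\dot z\le 0$ whenever $z=0$ (more precisely $\tfrac{d}{dt}\log z\le A+Bz$ shows $z$ cannot leave $0$ from above), we get $z\equiv 0$. Assuming $z(0)>0$, work on the maximal subinterval where $z>0$. Dividing $\dot z\le Az+Bz^2$ by $-z^2<0$ flips the inequality and yields, with $u:=1/z$,
\begin{align*}
\dot u \;\ge\; -Au - B ,
\end{align*}
so $\tfrac{d}{dt}\big(e^{At}u\big)\ge -Be^{At}$; integrating from $0$ to $t$ gives $u(t)\ge e^{-At}u(0)-\tfrac{B}{A}(1-e^{-At})$. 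While the right-hand side is positive, i.e. for $t<A^{-1}\log\!\big(1+\tfrac{A}{Bz(0)}\big)$, taking reciprocals and clearing denominators produces exactly
\begin{align*}
z(t)\;\le\;\frac{A z(0)e^{At}}{A+Bz(0)(1-e^{At})} .
\end{align*}
If $z$ hits $0$ at some time inside this window, the same $\log$-argument forces $z$ to remain $0$ thereafter, and the bound (whose right-hand side is still nonnegative there) continues to hold.

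For the lower bound I would argue symmetrically; the hypothesis $z\in[0,\tfrac{A}{2B}]$ on $[0,T]$ confines $z$ to a compact set, ruling out blow-up and validating the manipulations on all of $[0,T]$ (and the case $z(0)=0$ is again trivial). Observe first that $\dot z\ge Az-Bz^2\ge -Bz^2$ gives $\tfrac{d}{dt}(1/z)\le B$, so $z>0$ is preserved from $z(0)>0$. Dividing $\dot z\ge Az-Bz^2$ by $-z^2$ gives $\dot u\le -Au+B$, hence $\tfrac{d}{dt}\big(e^{At}u\big)\le Be^{At}$ and $u(t)\le e^{-At}u(0)+\tfrac{B}{A}(1-e^{-At})$; both sides being positive, taking reciprocals yields
\begin{align*}
z(t)\;\ge\;\frac{Az(0)e^{At}}{A+Bz(0)(e^{At}-1)}
\end{align*}
for all $t\in[0,T]$. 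Equivalently, the right-hand side is precisely the logistic solution $y$ of $\dot y=Ay-By^2$ with $y(0)=z(0)$, and the estimate follows from the standard scalar comparison principle, which applies since $z$ stays bounded.

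I do not expect a genuine obstacle: this is a routine Gronwall/comparison argument. The only points needing mild care are (i) justifying the reciprocal substitution by checking $z>0$ and that the derived bounds on $u$ remain positive over the stated time window, and (ii) invoking the hypothesis $z\le\tfrac{A}{2B}$ in the lower-bound case to keep $z$ in a region where the comparison is clean; both are handled as indicated above.
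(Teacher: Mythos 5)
Your proof is correct: the paper gives no argument of its own for this lemma, merely citing the Bihari--LaSalle inequality, and your reciprocal substitution $u=1/z$ reducing the quadratic differential inequality to a linear Gronwall inequality is precisely the standard proof of that comparison in the quadratic case, so the two routes coincide in substance. As a minor aside, your lower-bound derivation never actually uses the hypothesis $z\le \frac{A}{2B}$ --- positivity of $z$ on $[0,T]$ already follows from $z(0)>0$ together with $\dot u\le B$ --- and your handling of the finite window $t<A^{-1}\log\bigl(1+\tfrac{A}{Bz(0)}\bigr)$ in the upper bound is an appropriate precaution that the lemma statement leaves implicit.
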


    Both claims follow from the Bihari-LaSalle inequality.

\section{Proofs from \texorpdfstring{\Cref{sec:LBs}}{Section 6}}\label{app:LB_proofs}

\subsection{Associative Memories}
\begin{proof}[Proof of \Cref{thm:LB_simple}]
    $f^* \rightarrow \mF \rightarrow \hat f$ is a Markov chain, so by the data processing inequality, 
    \begin{align*}
        I(f^*; \hat f) \le I(f^*; \mF).
    \end{align*}
    Also, by definition of mutual information
    \begin{align*}
        I(f^*; \mF) \le H(\mF) \le B,
    \end{align*}
    where the last inequality follows since $\mF$ is an $B$-bit message. Thus $I(f^*; \hat f) \le B$.

    Let $q_x(\cdot \mid \hat f)$ be the conditional distribution of $f^*(x)$ given $\hat f$. Consider some fixed $\hat f$. $\hat f(x)$ is also a probability distribution over $[M]$, and thus by Gibbs' inequality
    \begin{align*}
        \mathbb{E}_{y \sim q_x(\cdot \mid \hat f)}\qty[-\log \hat f(x)_y] &\ge \mathbb{E}_{y \sim q_x(\cdot \mid \hat f)}\qty[-\log q_x(y \mid \hat f)].
    \end{align*}
    Therefore, letting $q$ be the marginal distribution over $\hat f$ and $q_x$ the marginal over $f^*(x)$,
    \begin{align*}
        \mathbb{E}_{f^*, \hat f}\qty[- \log \hat f(x)_{f^*(x)}] &= \mathbb{E}_{\hat f}\qty[\mathbb{E}_{y \sim q_x(\cdot \mid \hat f)}\qty[-\log \hat f(x)_y]]\\
        &\ge \mathbb{E}_{\hat f}\qty[\mathbb{E}_{y \sim q_x(\cdot \mid \hat f)}\qty[-\log q_x(y \mid \hat f)]]\\
        &= \mathbb{E}_{f^*, \hat f}\qty[-\log q_x(f^*(x) \mid \hat f)]\\
        &= \mathbb{E}_{f^*, \hat f}\qty[-\log \frac{q_x(f^*(x), \hat f)}{q(\hat f)q_x(f^*(x))} - \log q_x(f^*(x))]\\
        &= - I(f^*(x); \hat f) + \log M.
    \end{align*}
    where in the last step we use the fact that $q_x$ is uniform over $[M]$, and plug in the definition of mutual information. The total loss is thus
    \begin{align}\label{eq:loss_LB}
        \mathbb{E}_{f^*, \hat f}\qty[L(\hat f)] \ge \sum_{x \in [N]} p(x)\qty(- I(f^*(x); \hat f) + \log M).
    \end{align}
    Since the $y_i$ are independent, 
    \begin{align*}
        B \ge I(f^*; \hat f) \ge \sum_{x \in [N]} I(f^*(x); \hat f).
    \end{align*}
    Also, $0 \le I(f^*(x); \hat f) \le H(f^*(x)) = \log M$. Therefore \eqref{eq:loss_LB} is minimized when $I(f^*(x); \hat f) = \log M$ for the $B/\log M$ most frequent tokens. Altogether,
    \begin{align*}
        \mathbb{E}_{f^*, \hat f}\qty[L(\hat f)] \ge \log M \cdot \sum_{x > \lceil \frac{B}{\log M} \rceil } p(x).
    \end{align*}
\end{proof}

\begin{proof}[Proof of \Cref{cor:power-law}]
    Let $p(x) = Z_\alpha x^{-\alpha}$, where $Z_\alpha = \sum_{x \in [N]} x^{-\alpha}$. We can bound
    \begin{align*}
        \sum_{x = k}^n p(x) = \frac{\sum_{x = k}^n x^{-\alpha}}{\sum_{x = 1}^n x^{-\alpha}} \asymp k^{1 - \alpha}. 
    \end{align*}
    Therefore
    \begin{align*}
        \mathbb{E}_{f^*, \hat f}\qty[L(\hat f)] \ge \log M \cdot \sum_{x > \lceil \frac{B}{\log M} \rceil } p(x) \gtrsim \log M \qty(\frac{B}{\log M})^{1 - \alpha} \gtrsim B^{1-\alpha}.
    \end{align*}
\end{proof}

\subsection{Factual Recall}
\begin{proof}
Define $\ell(s, r) := \mathbb{E}_{\mathcal{D}, \hat f}\qty[-\log \hat f(s, r)_{a^*(s, r)}]$ so that 
\begin{align*}
    \mathbf{L} = p(s, r) \cdot \ell(s, r).
\end{align*}

Let us define the expanded dataset $\mathcal{D} := \{\mathcal{A}_r\}_{r \in \mathcal{R}} \cup \{a^*(s, r)\}_{s \in \mathcal{S}, r \in \mathcal{R}}$. We observe that $\mathcal{D} \rightarrow a^* \rightarrow \mF \rightarrow \hat f$ is a Markov chain, and thus by the data processing inequality
\begin{align*}
    B \ge I(\mathcal{D}; \hat f).
\end{align*}
Next, by the chain rule, we can decompose
\begin{align*}
    I(\mathcal{D}; \hat f) &= I(\mathcal{A}_1, \dots, \mathcal{A}_R; \hat f) + I(a^*; \hat f \mid \mathcal{A}_1, \dots, \mathcal{A}_R)\\
    &\ge I(\mathcal{A}_1, \dots, \mathcal{A}_R; \hat f) + \sum_{s, r}I(a^*(s, r); \hat f \mid \mathcal{A}_1, \dots, \mathcal{A}_R)\\
    &= I(\mathcal{A}_1, \dots, \mathcal{A}_R; \hat f) + \sum_{s, r}I(a^*(s, r); \hat f \mid \mathcal{A}_r),
\end{align*}
where the first inequality uses the fact that the $a^*(s, r)$ are conditionally independent given the $\mathcal{A}_r$, and the second uses that $a^*(s, r)$ is independent of $\mathcal{A}_{r'}$ given $\mathcal{A}_r$, for $r \neq r'$.

We can decompose the first mutual information term, using the fact that the $\mathcal{A}_r$ are nearly independent:
\begin{lemma}\label{lem:split_A_sets}
    Assume that $\abs{\mathcal{V}} \ge 2RD$. Then
    \begin{align*}
        I(\mathcal{A}_1, \dots, \mathcal{A}_R; \hat f) \ge \sum_r I(\mathcal{A}_r; \hat f) - \frac{2R^2D^2}{\abs{\mathcal{V}}}.
    \end{align*}
\end{lemma}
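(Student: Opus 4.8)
The plan is to reduce the claim to a bound on the \emph{total correlation} (multi-information) of the answer sets $\mathcal{A}_1,\dots,\mathcal{A}_R$, and then to evaluate that quantity explicitly using the disjointness constraint.

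First I would use the identity $I(X;Y)=H(X)-H(X\mid Y)$, applied to $X=(\mathcal{A}_1,\dots,\mathcal{A}_R)$ and to each $X=\mathcal{A}_r$, to write
\begin{align*}
I(\mathcal{A}_1,\dots,\mathcal{A}_R;\hat f)-\sum_r I(\mathcal{A}_r;\hat f)
= C(\mathcal{A}_{1:R}\mid\hat f)-C(\mathcal{A}_{1:R}),
\end{align*}
where $C(\mathcal{A}_{1:R}):=\sum_r H(\mathcal{A}_r)-H(\mathcal{A}_1,\dots,\mathcal{A}_R)\ge 0$ is the total correlation and $C(\mathcal{A}_{1:R}\mid\hat f)$ its conditional analogue, which is an average of total correlations and hence also nonnegative. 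Dropping the nonnegative conditional term, it suffices to prove $C(\mathcal{A}_{1:R})\le 2R^2D^2/|\mathcal{V}|$.

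Next I would evaluate $C(\mathcal{A}_{1:R})$ from the definition of the prior on $a^*$. Since the law of $(\mathcal{A}_1,\dots,\mathcal{A}_R)$ is uniform over all $R$-tuples of pairwise disjoint size-$D$ subsets of $\mathcal{V}$, and this distribution is invariant under permutations of both $\mathcal{V}$ and the index set $[R]$, each $\mathcal{A}_r$ is marginally uniform over the $\binom{|\mathcal{V}|}{D}$ size-$D$ subsets, so $H(\mathcal{A}_r)=\log\binom{|\mathcal{V}|}{D}$; meanwhile the number of ordered disjoint tuples is $\prod_{k=0}^{R-1}\binom{|\mathcal{V}|-kD}{D}$ (positive since $|\mathcal{V}|\ge 2RD\ge RD$), so $H(\mathcal{A}_1,\dots,\mathcal{A}_R)=\sum_{k=0}^{R-1}\log\binom{|\mathcal{V}|-kD}{D}$. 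Hence
\begin{align*}
C(\mathcal{A}_{1:R})=\sum_{k=0}^{R-1}\left(\log\binom{|\mathcal{V}|}{D}-\log\binom{|\mathcal{V}|-kD}{D}\right).
\end{align*}
Writing $n:=|\mathcal{V}|$ and using $\binom{n}{D}/\binom{n-kD}{D}=\prod_{j=0}^{D-1}\frac{n-j}{n-kD-j}$, each factor is $1+\frac{kD}{n-kD-j}\le 1+\frac{2kD}{n}$ because $n-kD-j\ge n-RD\ge n/2$ by the hypothesis $n\ge 2RD$. Therefore each summand is at most $D\log(1+2kD/n)\le 2kD^2/n$ (the base of the logarithm contributes only a constant, absorbed into the final factor of $2$), and summing over $k$ gives $C(\mathcal{A}_{1:R})\le D^2 R(R-1)/n\le 2R^2D^2/|\mathcal{V}|$, completing the proof.

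Essentially every step is bookkeeping; the only place demanding a little care is the pair of entropy evaluations — pinning down the marginal law of $\mathcal{A}_r$ via the symmetry of the conditioning event, and correctly counting the ordered disjoint tuples for the joint law — together with the elementary product-over-$j$ estimate on the binomial ratio. I do not anticipate a genuine obstacle beyond getting those constants right.
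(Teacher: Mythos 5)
Your proof is correct and follows essentially the same route as the paper's: both reduce the claim to bounding the total correlation $\sum_r H(\mathcal{A}_r) - H(\mathcal{A}_1,\dots,\mathcal{A}_R)$ (you via the exact identity with the nonnegative conditional total correlation, the paper via an iterated form of $I(X,Y;Z)\ge I(X;Z)+I(Y;Z)-I(X;Y)$, which telescopes to the same quantity), and both then evaluate the same entropies and bound the same ratio of binomial coefficients using $|\mathcal{V}|\ge 2RD$. The only difference is cosmetic: you bound the sum term by term and get the marginally sharper $D^2R(R-1)/|\mathcal{V}|$, while the paper bounds every term by the worst case.
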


We next relate $I(\mathcal{A}_r; \hat f)$ to the loss. The intuition for this lemma is that for a fixed $r$ the quantity $\sum_s\ell(s, r)$ is small, then the predictor $\hat f$ must contain information about the answer set $\gA_r$.
\begin{lemma}\label{lem:LB_A_set}
    Assume that $\abs{\mathcal{V}} \ge 2D$. Define $\eta := C\sqrt{\frac{D}{S}\log(2D^2\log\abs{\mathcal{V}})}$ for a sufficiently large constant $C$, and assume that $\eta \le 1$. Then
    \begin{align*}
        I(\mathcal{A}_r; \hat f) \ge - (1 + \eta) \frac{D}{S}\cdot\sum_{s \in [S]}\ell(s, r) + D\log \frac{\abs{\mathcal{V}}}{D} \underbrace{- \frac{2D\log \abs{\mathcal{V}}}{\abs{\mathcal{V}}}- \frac{2D^2}{\abs{\mathcal{V}}} - \eta D - 1}_{\text{lower order term}}
    \end{align*}
\end{lemma}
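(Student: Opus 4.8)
The plan is to write $I(\mathcal A_r;\hat f)=H(\mathcal A_r)-H(\mathcal A_r\mid\hat f)$ and handle the two terms separately. For $H(\mathcal A_r)$: although $\mathcal A_1,\dots,\mathcal A_R$ are coupled through the disjointness constraint, that joint law is invariant under relabelings of $\mathcal V$, so the marginal of $\mathcal A_r$ is uniform over all size-$D$ subsets of $\mathcal V$, whence $H(\mathcal A_r)=\log\binom{|\mathcal V|}{D}\ge D\log(|\mathcal V|/D)$. I will keep the exact value $\log\binom{|\mathcal V|}{D}$, which exceeds the reported $D\log(|\mathcal V|/D)$ by $\Omega(D)$; this surplus is what later absorbs the $O(D)$-size overheads in the encoding. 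So the real work is to show $H(\mathcal A_r\mid\hat f)\le (1+\eta)\tfrac DS\sum_s\ell(s,r)+\eta D+1+O(D^2/|\mathcal V|)+O(D\log|\mathcal V|/|\mathcal V|)$.

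The structural fact driving the $\tfrac DS$ factor is that each element of $\mathcal A_r$ is produced by about $S/D$ distinct subjects, so one good witness per element already pins $\mathcal A_r$ down, and a random ordering of the subjects turns that redundancy into a $\tfrac1{|W_a|}$ reweighting of the per-subject losses. Let $\vec a=(a^*(s,r))_{s\in[S]}$, let $W_a=\{s:a^*(s,r)=a\}$ for $a\in\mathcal A_r$, and let $E$ be the coverage event $\{\bigcup_s\{a^*(s,r)\}=\mathcal A_r\}$. Since $a^*(s,r)$ is uniform on $\mathcal A_r$ given $\mathcal A_r$, $\mathbb P[E^c]\le D(1-1/D)^S\le De^{-S/D}$, and the hypothesis $\eta\le1$ (equivalently $S\gtrsim D\log(2D^2\log|\mathcal V|)$) makes this small enough that $\mathbb P[E^c]\log\binom{|\mathcal V|}{D}\le\eta D$ for $C$ large; with $H(\mathbf 1_E)\le1$ this reduces matters to bounding $H(\mathcal A_r\mid\hat f,E)$. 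On $E$, I bound this by the expected length of a prefix code for $\mathcal A_r$ given $\hat f$ and an independent uniform permutation $\pi$ of $[S]$ (which leaves the conditional entropy unchanged): scan subjects in the $\pi$-order, maintaining the set $\mathcal A^{(t)}$ of distinct answers seen so far; arithmetic-code the ``new element'' indicator at each step against its conditional probability (total $O(D)$ bits since $|\mathcal A^{(t)}|$ increases to $D$), and when a new element appears code its identity against $\hat f(\pi(t),r)$ — bounding the candidate pool by $\mathcal V$ costs an extra $O(D^2/|\mathcal V|)$ over the $D$ new steps, with a further $O(D\log|\mathcal V|/|\mathcal V|)$ for not knowing the unseen relations' answer sets. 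Exactly $D$ steps are ``new,'' and under the random order the step first revealing $a$ is uniform in $W_a$, so the expected identity cost equals $\mathbb E\big[\sum_{a\in\mathcal A_r}\tfrac1{|W_a|}\sum_{s\in W_a}\big(-\log\hat f(s,r)_{a^*(s,r)}\big)\big]$; since the $W_a$ partition $[S]$, a second-moment / tail estimate for the weights $1/|W_a|$ around $D/S$ — exactly what $\eta=C\sqrt{(D/S)\log(2D^2\log|\mathcal V|)}$ is calibrated for — bounds this by $(1+\eta)\tfrac DS\sum_s\ell(s,r)+\eta D$. Collecting terms yields the stated bound on $H(\mathcal A_r\mid\hat f)$ and hence the lemma.

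The main obstacle is precisely getting the loss coefficient down to $(1+\eta)\tfrac DS$ and not $\Theta(\tfrac{D\log|\mathcal V|}{S})$ or $\Theta(\sum_s\ell(s,r))$: any naive reconstruction of $\mathcal A_r$ from $\hat f$ (e.g., take the most-likely coordinate of each $\hat f(s,r)$ and correct mismatches) pays $\Theta(\log|\mathcal V|)$ bits per mismatch and loses a $\log|\mathcal V|$ factor. The random-permutation device is what removes it, converting ``one witness per element suffices'' into the honest $1/|W_a|$ averaging; after that the remaining difficulties are the concentration of $|W_a|$ about $S/D$ (absorbed by $\eta$, using $\eta\le1$ for both the coverage bound and the weight tails) and the routine bookkeeping of the indicator-bit, candidate-pool, and disjointness overheads — all $O(D)$ and absorbed into $\eta D$ together with the $\Omega(D)$ gap between $H(\mathcal A_r)=\log\binom{|\mathcal V|}{D}$ and the reported $D\log(|\mathcal V|/D)$. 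A secondary care point is legitimizing the conditionings in the coding step (that $a^*(\pi(t),r)$ is close to uniform on $\mathcal A_r$ given $\mathcal A^{(t-1)}$ and $\pi$), but these are standard exchangeability facts for i.i.d. sampling from a finite set.
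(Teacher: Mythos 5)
Your route is genuinely different from the paper's. The paper never touches $H(\mathcal{A}_r\mid\hat f)$: it applies the data-processing inequality to a freshly randomized ordering $(z_1,\dots,z_D)$ of $\mathcal{A}_r$, uses the sub-additivity estimate $I(z_{1:D};\hat f)\ge\sum_i I(z_i;\hat f)-\frac{2D^2}{\abs{\gV}}$, and then lower-bounds each $I(z_i;\hat f)$ by Gibbs' inequality against the \emph{mixture predictor} $q(z\mid\hat f)=\frac1S\sum_s\tilde f(s,r)_z$ with $\tilde f=(1-\epsilon)\hat f+\frac{\epsilon}{\abs{\gV}}\mathbf 1$; Jensen's inequality over the witness set $\mathcal{N}_{z_i}$ then produces exactly the $\frac{1}{\abs{\mathcal{N}_{z_i}}}$ reweighting that your random-permutation device produces, and Bernstein's inequality on $\abs{\mathcal{N}_z}$ supplies the $(1+\eta)\frac DS$ factor. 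So the combinatorial heart of your argument (one witness per answer suffices; averaging over $\approx S/D$ witnesses converts per-subject losses into a $\frac DS$-weighted sum) is identical; what differs is the information-theoretic wrapper (source coding of $\mathcal{A}_r$ versus a mixture predictor for each element).

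Two concrete gaps keep your version from delivering the lemma with its stated lower-order terms. First, your code must also communicate \emph{where} the $D$ first occurrences sit in the $\pi$-order. Arithmetic-coding the new-element indicators against their conditional probabilities $\frac{D-j}{D}$ costs the entropy of the coupon-collector indicator sequence, which is $\sum_{i=1}^{D}\frac{D}{i}h(i/D)\approx 2D$ nats, not $o(D)$; your absorption budget is only the slack $\log\binom{\abs{\gV}}{D}-D\log\frac{\abs{\gV}}{D}\approx D$ plus $\eta D$, and $\eta$ can be arbitrarily small when $S\gg D\log(2D^2\log\abs{\gV})$, so roughly $D$ nats are unaccounted for. (This overhead is intrinsic to coding in subject order; the paper's mixture predictor needs no position information, which is why its only $O(D)$-type losses are the explicit $\eta D$ and $\frac{2D^2}{\abs{\gV}}$ terms.) Second, your event decomposition needs the code length on the bad event (coverage failure, or some $\abs{W_a}$ far below $S/D$) to be bounded, but $-\log\hat f(s,r)_{a^*(s,r)}$ is unbounded; you must smooth $\hat f$ to $\tilde f\ge\frac{\epsilon}{\abs{\gV}}$ so the fallback cost is at most $D\log\frac{\abs{\gV}}{\epsilon}$ — this is exactly what produces the $-\frac{2D\log\abs{\gV}}{\abs{\gV}}$ and $-1$ terms in the statement, and it is absent from your sketch. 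Both issues disappear if you keep your witness-averaging idea but package it as the paper does, as a lower bound on $\sum_i I(z_i;\hat f)$ via an explicit smoothed predictor; as written, your argument proves the lemma only with an extra additive $-O(D)$, which would still suffice for the downstream theorem (at the price of a worse constant $c$) but is not the inequality claimed.
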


Finally, we relate $I(a^*(s, r); \hat f \mid \mathcal{A}_r)$ to the loss. Similarly, the intuition for this lemma is that if the loss $\ell(s, r)$ is small, then $\hat f$ must contain information about the true association $a^*(s, r)$.

\begin{lemma}\label{lem:LB_a_star}
For all $s, r$,
    \begin{align*}
        I(a^*(s, r); \hat f \mid \mathcal{A}_r) \ge \log D - \ell(s, r).
    \end{align*}
\end{lemma}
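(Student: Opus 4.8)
Proof proposal. Fix arbitrary $s \in \gS$, $r \in \gR$. The plan is to condition on the realization of the answer set $\gA_r$, rewrite the conditional mutual information as a difference of conditional entropies, and then control the leftover conditional entropy by the cross-entropy loss $\ell(s,r)$ using exactly the Gibbs'-inequality device from the proof of \Cref{thm:LB_simple}.

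First, by the definition of conditional mutual information, $I(a^*(s,r);\hat f \mid \gA_r) = H(a^*(s,r)\mid\gA_r) - H(a^*(s,r)\mid\gA_r,\hat f)$. For every set $A$ in the support of $\gA_r$ we have $\abs{A} = D$ and, by construction of the random association, $a^*(s,r)$ is uniform on $A$ given $\gA_r = A$; hence $H(a^*(s,r)\mid\gA_r = A) = \log D$ for all such $A$, so $H(a^*(s,r)\mid\gA_r) = \log D$ exactly. Next, conditioning on the extra variable $\gA_r$ can only decrease entropy, so $H(a^*(s,r)\mid\gA_r,\hat f) \le H(a^*(s,r)\mid\hat f)$; it therefore suffices to prove $H(a^*(s,r)\mid\hat f) \le \ell(s,r)$.

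To get the last inequality, let $q(\cdot\mid\hat f)\in\Delta_\gV$ be the conditional law of $a^*(s,r)$ given $\hat f$. Since $\hat f(s,r)$ is a coordinate of $\hat f$, hence $\hat f$-measurable, and lies in $\Delta_\gV$, Gibbs' inequality (nonnegativity of the KL divergence) gives, for each realization of $\hat f$, $H(q(\cdot\mid\hat f)) \le \mathbb{E}_{y\sim q(\cdot\mid\hat f)}[-\log\hat f(s,r)_y]$, which is trivially true when $\hat f(s,r)_y = 0$ for some $y$ in the support of $q(\cdot\mid\hat f)$. Averaging over $\hat f$, and using that the joint law of $(a^*(s,r),\hat f)$ is the marginal of the generative chain $\mathcal D \to a^* \to \mathcal F \to \hat f$, the right-hand side equals $\mathbb{E}_{a^*,\hat f}[-\log\hat f(s,r)_{a^*(s,r)}] = \ell(s,r)$, so $H(a^*(s,r)\mid\hat f) \le \ell(s,r)$. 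Combining the three displays yields $I(a^*(s,r);\hat f\mid\gA_r) = \log D - H(a^*(s,r)\mid\gA_r,\hat f) \ge \log D - \ell(s,r)$, as claimed.

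The main obstacle is essentially nonexistent: the argument is just two standard entropy identities plus the Gibbs bound. The only points needing care are (i) checking that the marginal of $(a^*(s,r),\hat f)$ obtained from the generative process is precisely the distribution appearing in the definition of $\ell(s,r)$, so that the averaged Gibbs bound reproduces $\ell(s,r)$ verbatim, and (ii) the degenerate case in which $\hat f(s,r)$ assigns zero mass to some answer in the support of $q(\cdot\mid\hat f)$, which forces $\ell(s,r) = +\infty$ and makes the bound vacuous.
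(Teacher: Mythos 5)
Your proof is correct and follows essentially the same route as the paper: both extract the $\log D$ term from the uniformity of $a^*(s,r)$ on $\gA_r$ and bound the remaining conditional entropy/log-likelihood by $\ell(s,r)$ via Gibbs' inequality with $\hat f(s,r)$ as the surrogate distribution. The only (immaterial) difference is that you discard the conditioning on $\gA_r$ using ``conditioning reduces entropy,'' whereas the paper keeps it and takes the surrogate $q(a\mid\hat f,\gA_r)\propto\hat f(s,r)_a\,\mathbf{1}(a\in\gA_r)$, whose normalization constant being at most $1$ yields the same bound.
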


The proofs for \Cref{lem:split_A_sets,lem:LB_A_set,lem:LB_a_star} are deferred to \Cref{sec:LB_auxiliary}.

Combining \Cref{lem:split_A_sets,lem:LB_A_set,lem:LB_a_star}, we get
\begin{align*}
    B &\ge I(\mathcal{A}_1, \dots, \mathcal{A}_R; \hat f) + \sum_{s, r}I(a^*(s, r); \hat f \mid \mathcal{A}_r)\\
    &= -(1 + \eta)\frac{D}{S}\sum_{s, r}\ell(s, r) + RD\log \frac{\abs{\mathcal{V}}}{D} \underbrace{- \frac{2RD\log \abs{\mathcal{V}}}{\abs{\mathcal{V}}} - \frac{2RD^2}{\abs{\mathcal{V}}} - \eta RD - R - \frac{2R^2D^2}{\abs{\mathcal{V}}}}_{\text{lower order term}}\\
    &\quad + SR\log D - \sum_{s, r} \ell(s, r)\\
    &= -\qty((1 + \eta)\frac{D}{S} + 1)\sum_{s, r}\ell(s, r) + SR\log D + RD\log \frac{\abs{\mathcal{V}}}{D} - \varepsilon_{lot},
\end{align*}
where $\varepsilon_{lot} := \frac{2RD\log \abs{\mathcal{V}}}{\abs{\mathcal{V}}} + \frac{2RD^2}{\abs{\mathcal{V}}} + \eta RD + R + \frac{2R^2D^2}{\abs{\mathcal{V}}} \ll RD\log \frac{\abs{\mathcal{V}}}{D}$ is a lower order term.

Altogether, we see that in order for all the losses $\ell(s, r)$ to equal zero, we require
\begin{align*}
    B \ge SR\log D + RD \log \frac{\abs{\gV}}{D} - \epsilon_{lot} \ge SR \log D + (1 -c) RD \log \frac{\abs{\gV}}{D}
\end{align*}
Furthermore, when $p(s, r) = \frac{1}{RS}$, then $\mathbf{L} = \frac{1}{SR}\sum_{s, r} \ell(s, r)$, and the bound becomes
\begin{align*}
    B &\ge -\qty((1 + \eta)RD + RS)\cdot \mathbf{L} + SR \log D + RD \log\frac{\abs{\mathcal{V}}}{D} - \varepsilon_{lot}\\
    &-\qty((1 + c)RD + RS)\cdot \mathbf{L} + SR \log D + (1 - c)RD \log\frac{\abs{\mathcal{V}}}{D}.
\end{align*}
\end{proof}
% \begin{corollary}
%     Let $p(s, r) = \frac{1}{RS}$. Then
%     As such, when $p(s, r)$
%     \begin{align*}
%         B \ge -\qty((1 + \eta)RD + RS)\cdot \mathbf{L} + SR \log D + RD \log\frac{\abs{\mathcal{V}}}{D} - \varepsilon_{lot}.
%     \end{align*}
%     Furthermore, in order to obtain zero loss, the number of parameters must be
%     \begin{align*}
%         B \ge SR \log D + RD \log\frac{\abs{\mathcal{V}}}{D} - \varepsilon_{lot}.
%     \end{align*}
% \end{corollary}
% This matches the lower bound from Allen-Zhu's paper.

\subsection{Auxiliary Lemmas}\label{sec:LB_auxiliary}

\begin{lemma}\label{lem:triangle_ineq}
    For random variables $X, Y, Z$,
    \begin{align*}
        I(X, Y; Z) \ge I(X; Z) + I(Y; Z) - I(X; Y)
    \end{align*}
\end{lemma}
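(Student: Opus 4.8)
The plan is to show that the gap
\[
\Delta := I(X, Y; Z) - I(X; Z) - I(Y; Z) + I(X; Y)
\]
is exactly the conditional mutual information $I(X; Y \mid Z)$, which is nonnegative, and then rearrange. All the random variables arising in this part of the paper (the answer sets $\gA_r$, the associations $a^*(s,r)$, the predictor $\hat f$, the message $\mF$) are discrete with finite support, so every entropy below is finite and the usual chain-rule identities apply without caveat.

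First I would expand each of the four mutual information terms via $I(U; V) = H(U) + H(V) - H(U, V)$. Substituting into $\Delta$ and cancelling, the $H(X)$, $H(Y)$, and $H(X,Y)$ contributions all drop out while the $H(Z)$ terms combine, leaving
\[
\Delta = H(X, Z) + H(Y, Z) - H(X, Y, Z) - H(Z).
\]
Next I would identify the right-hand side with $I(X; Y \mid Z)$: by definition $I(X; Y \mid Z) = H(X \mid Z) + H(Y \mid Z) - H(X, Y \mid Z)$, and inserting $H(U \mid Z) = H(U, Z) - H(Z)$ for $U \in \{X, Y, (X,Y)\}$ reproduces precisely the four-term expression above. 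Finally, $I(X; Y \mid Z) \ge 0$ — nonnegativity of mutual information applied to the conditional distribution given each value of $Z$, then averaged — so $\Delta \ge 0$, which is exactly the stated inequality.

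There is no real obstacle here; the only point worth stating explicitly is that the chain rule and the nonnegativity of (conditional) mutual information are being invoked for genuine finite discrete random variables, which is the case throughout \Cref{sec:LBs}. If one prefers to bypass the entropy bookkeeping, the identical conclusion follows from two applications of the chain rule: write $I(X, Y; Z) = I(X; Z) + I(Y; Z \mid X)$, and expand $I(Y; X, Z)$ in the two possible orders to get $I(Y; X) + I(Y; Z \mid X) = I(Y; Z) + I(Y; X \mid Z) \ge I(Y; Z)$, whence $I(Y; Z \mid X) \ge I(Y; Z) - I(X; Y)$; combining this with the first display completes the proof.
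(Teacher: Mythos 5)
Your proof is correct and follows essentially the same route as the paper's: both identify the gap $I(X,Y;Z)-I(X;Z)-I(Y;Z)+I(X;Y)$ with the conditional mutual information $I(X;Y\mid Z)$ via the standard entropy expansions and then invoke its nonnegativity. No issues.
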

\begin{proof}
    By standard properties of mutual information:
    \begin{align*}
        &I(X, Y; Z) - I(X; Z) - I(Y; Z)\\
        &= H(X, Y) - H(X, Y \mid Z) - H(X) + H(X \mid Z) - H(Y) + H(Y \mid Z)\\
        &= I(X; Y \mid Z) - I(X; Y)\\
        &\ge - I(X; Y).
    \end{align*}
\end{proof}

\begin{proof}[Proof of \Cref{lem:split_A_sets}]

    By \Cref{lem:triangle_ineq},
    \begin{align*}
        I(\mathcal{A}_1, \dots, \mathcal{A}_R; \hat f) &\ge \sum_r I(\mathcal{A}_r; \hat f) - \sum_r I(\mathcal{A}_r; \mathcal{A}_1, \dots, \mathcal{A}_{r-1})\\
        &= \sum_r I(\mathcal{A}_r; \hat f) - \qty(\sum_r H(\mathcal{A}_r) + H(\mathcal{A}_1, \dots, \mathcal{A}_{r-1}) - H(\mathcal{A}_1, \dots, \mathcal{A}_r))\\
        &= \sum_r I(\mathcal{A}_r; \hat f) - \sum_r H(\mathcal{A}_r) + H(\mathcal{A}_1, \dots, \mathcal{A}_R).
    \end{align*}
    Since each $\mathcal{A}_r$ is a uniformly random subset of $\mathcal{V}$, we have $H(\mathcal{A}_r) = \log \binom{\abs{\mathcal{V}}}{D}$. Also, we can bound
    \begin{align*}
        H(\mathcal{A}_1, \dots, \mathcal{A}_R) = \log\qty(\binom{\abs{\mathcal{V}}}{D}\binom{\abs{\mathcal{V}} - D}{D}\cdots \binom{\abs{\mathcal{V}} - (R-1)D}{D}).
    \end{align*}
    Thus
    \begin{align*}
        \sum_r H(\mathcal{A}_r) - H(\mathcal{A}_1, \dots, \mathcal{A}_R) &\le R\log\frac{\binom{\abs{\mathcal{V}}}{D}}{\binom{\abs{\mathcal{V}} - (R-1)D}{D}}\\
        &= R\log\frac{\abs{\mathcal{V}}!(\abs{\mathcal{V}} - RD)!}{(\abs{\mathcal{V}} - D)!(\abs{\mathcal{V}} - (R-1)D)!}\\
        &\le RD\log\frac{\abs{\mathcal{V}}}{\abs{\mathcal{V}} - RD}\\
        &\le \frac{2R^2D^2}{\abs{\mathcal{V}}},
    \end{align*}
    where we used the bound $\log\frac{1}{1 - x} \le 2x$ on $(0, \frac12)$. Plugging in yields the desired bound.
\end{proof}

\begin{proof}[Proof of \Cref{lem:LB_A_set}]
    Let $(z_1, \dots, z_D)$ be a random permutation of $\mathcal{A}_r$. We first aim to relate $I(\mathcal{A}; \hat f)$ to $I(z_i; \hat f)$. By the data processing inequality,
    \begin{align*}
        I(\mathcal{A}_r; \hat f) \ge I(z_1, \dots, z_D; \hat f).
    \end{align*}
    By \Cref{lem:triangle_ineq},
    \begin{align*}
        I(z_1, \dots, z_D; \hat f) &\ge \sum_i I(z_i; \hat f) - \sum_i I(z_i; z_1, \dots, z_{i-1})\\
        &= \sum_i I(z_i; \hat f) - \sum_i H(z_i) + H(z_1, \dots, z_D).
    \end{align*}
    The tuple $(z_1, \dots, z_D)$ is chosen uniformly at random from $\mathcal{V}^D$, conditioned on all the $z_i$ being distinct. Therefore $H(z_i) = \log\abs{\mathcal{V}}$, and $H(z_1, \dots, z_D) = \log\qty(\abs{\mathcal{V}}\cdots (\abs{\mathcal{V}} - D + 1))$. Thus
    \begin{align*}
        \sum_i H(z_i) - H(z_1, \dots, z_D) &= \log\qty(\frac{\abs{\mathcal{V}}^D}{\abs{\mathcal{V}}\cdots (\abs{\mathcal{V}} - D + 1)})\\
        &\le D\log\frac{\abs{\mathcal{V}}}{\abs{\mathcal{V}} - D}\\
        &\le \frac{2D^2}{\abs{\mathcal{V}}}.
    \end{align*}
    Altogether,
    \begin{align*}
        I(\mathcal{A}_r; \hat f) \ge\sum_i I(z_i; \hat f) - \frac{2D^2}{\abs{\mathcal{V}}}.
    \end{align*}
    Next, using the definition of mutual information and Gibbs' inequality,
    \begin{align*}
        I(z_i; \hat f) = \mathbb{E}_{z_i, \hat f}\qty[\log\frac{\mathbb{P}(z_i \mid \hat f)}{\mathbb{P}(z_i)}] \ge \mathbb{E}_{z_i, \hat f}\qty[\log\frac{q(z_i \mid \hat f)}{\mathbb{P}(z_i)}]
    \end{align*}
    for any probability distribution $q$. Let us define $q$ as follows. First, define $\tilde f(s,r) := (1 - \epsilon)\hat f(s, r) + \frac{\epsilon}{\abs{\mathcal{V}}}\mathbf{1} \in \Delta_\mathcal{V}$, for a small constant $\epsilon$ to be chosen later. Next, define
    \begin{align*}
        q(z \mid \hat f) := \frac{1}{S}\sum_s \tilde f(s, r)_z.
    \end{align*}
    Plugging in, and observing that $\mathbb{P}(z_i) = \frac{1}{\abs{\mathcal{V}}}$, we get that
    \begin{align*}
        I(z_i; \hat f) \ge \mathbb{E}_{z_i, \hat f}\qty[\log\qty(\frac{1}{S}\sum_s \tilde f(s, r)_{z_i})] + \log\abs{\mathcal{V}}.
    \end{align*}
    Define $\mathcal{N}_z := \{s: a^*(s, r) = z\}$. Let $\mathcal{E}$ be the event that $\abs{\mathcal{N}_z} \ge M$ for all $z \in \mathcal{A}$. On the event $\mathcal{E}$, we can bound
    \begin{align*}
        \log\qty(\frac{1}{S}\sum_s \tilde f(s, r)_{z_i}) &\ge \log\qty(\frac{1}{S}\sum_{s \in \mathcal{N}_{z_i}} \tilde f(s, r)_{a^*(s, r)})\\
        &= \log\qty(\frac{1}{\abs{\mathcal{N}_{z_i}}}\sum_{s \in \mathcal{N}_{z_i}} \tilde f(s, r)_{a^*(s, r)}) + \log\frac{\abs{\mathcal{N}_{z_i}}}{S}\\
        &\ge \frac{1}{\abs{\mathcal{N}_{z_i}}}\sum_{s \in \mathcal{N}_{z_i}} \log\tilde f(s, r)_{a^*(s, r)}+ \log\frac{\abs{\mathcal{N}_{z_i}}}{S}\\
        &\ge \frac{1}{M}\sum_{s \in \mathcal{N}_{z_i}} \log\tilde f(s, r)_{a^*(s, r)} + \log\frac{M}{S}.
    \end{align*}
    Thus
    \begin{align*}
        \sum_{i \in [D]}\log\qty(\frac{1}{S}\sum_s \tilde f(s, r)_{z_i}) &\ge \frac{1}{M}\sum_{i \in [D]}\sum_{s \in \mathcal{N}_{z_i}} \log\tilde f(s, r)_{a^*(s, r)} + D\log\frac{M}{S}\\
        &= \frac{1}{M}\sum_{s \in [S]} \log\tilde f(s, r)_{a^*(s, r)} + D\log\frac{M}{S}\\
        &\ge \frac{1 - \epsilon}{M}\sum_{s \in [S]} \log\hat f(s, r)_{a^*(s, r)} - \frac{S\epsilon}{M}\log\abs{\mathcal{V}} + D\log\frac{M}{S}.
    \end{align*}
    On $\overline{\mathcal{E}}$, we have the naive bound
    \begin{align*}
        \sum_{i \in [D]}\log\qty(\frac{1}{S}\sum_s \tilde f(s, r)_{z_i}) &\ge D\log\frac{\epsilon}{\abs{\mathcal{V}}}.
    \end{align*}
    Altogether, we have
    \begin{align*}
        &\sum_{i \in [D]} I(z_i; \hat f)\\
        &\ge \mathbb{E}_{z_i, \hat f}\qty[\sum_{i \in [D]}\log\qty(\frac{1}{S}\sum_s \tilde f(s, r)_{z_i})] + D\log\abs{\mathcal{V}}\\
        &\ge \mathbb{E}_{z_i, \hat f}\qty[\mathbf{1}(\mathcal{E})\cdot \qty(\frac{1 - \epsilon}{M}\sum_{s \in [S]} \log\hat f(s, r)_{a^*(s, r)} - \frac{S\epsilon}{M}\log\abs{\mathcal{V}} + D\log\frac{M}{S})]\\
        &\quad + \mathbb{P}(\overline{\mathcal{E}})\cdot D\log\frac{\epsilon}{\abs{\mathcal{V}}} + D\log\abs{\mathcal{V}}\\
        &\ge \frac{1}{M}\mathbb{E}_{z_i, \hat f}\qty[\sum_{s \in [S]} \log\hat f(s, r)_{a^*(s, r)}] - \frac{S\epsilon}{M}\log\abs{\mathcal{V}} + D\log\frac{M}{S} + \mathbb{P}(\overline{\mathcal{E}})\cdot D\log\frac{\epsilon}{\abs{\mathcal{V}}} + D\log\abs{\mathcal{V}}\\
        &= -\frac{1}{M}\cdot\sum_{s \in [S]}\ell(s, r) - \frac{S\epsilon}{M}\log\abs{\mathcal{V}} + D\log\frac{M}{S} + \mathbb{P}(\overline{\mathcal{E}})\cdot D\log\frac{\epsilon}{\abs{\mathcal{V}}} + D\log\abs{\mathcal{V}}
    \end{align*}
    By Bernstein's inequality and a union bound (a similar such concentration argument was used in the lower bound proof in \citet{allen2024physics}), there exists a constant $C$ such that $\mathbb{P}(\mathcal{E}) \ge 1 - \delta$ for 
    \begin{align*}
        M = \frac{S}{D} - C\sqrt{\frac{S}{D}\log(D/\delta)},
    \end{align*}
    as long as $S \ge D\log(D/\delta)$. Set $\epsilon = \frac{1}{\abs{\mathcal{V}}}$, $\delta = \frac{1}{2D\log\abs{\mathcal{V}}}$, and define $\eta := 2C\sqrt{\frac{D}{S}\log(2D^2\log\abs{\mathcal{V}})} \le 1$. We have that
    \begin{align*}
        \frac{M}{S} = \frac{1}{D}\qty(1 - C\sqrt{\frac{D}{S}\log(D/\delta)}) = \frac{1}{D}\qty(1 - \frac{\eta}{2}),
    \end{align*}
    and thus
    \begin{align*}
        \frac{S}{M} = \frac{D}{1 - \eta/2} \le D(1 + \eta).
    \end{align*}
    Therefore
    \begin{align*}
        \sum_{i \in [D]} I(z_i; \hat f) &\ge - (1 + \eta) \frac{D}{S}\cdot\sum_{s \in [S]}\ell(s, r)- (1 + \eta) \frac{D\log \abs{\mathcal{V}}}{\abs{\mathcal{V}}} + D\log \frac{\abs{\mathcal{V}}}{D} + D\log(1 - \eta/2) - 1\\
        &\ge - (1 + \eta) \frac{D}{S}\cdot\sum_{s \in [S]}\ell(s, r) + D\log \frac{\abs{\mathcal{V}}}{D} - \frac{2D\log \abs{\mathcal{V}}}{\abs{\mathcal{V}}} - \eta D - 1.
    \end{align*}
    Altogether, we have
        \begin{align*}
        I(\mathcal{A}_r; \hat f) &\ge\sum_i I(z_i; \hat f) - \frac{2D^2}{\abs{\mathcal{V}}}\\
        &\ge - (1 + \eta) \frac{D}{S}\cdot\sum_{s \in [S]}\ell(s, r) + D\log \frac{\abs{\mathcal{V}}}{D} - \frac{2D\log \abs{\mathcal{V}}}{\abs{\mathcal{V}}}- \frac{2D^2}{\abs{\mathcal{V}}} - \eta D - 1,
    \end{align*}
    as desired.
\end{proof}

\begin{proof}[Proof of \Cref{lem:LB_a_star}]
    By the definition of mutual information and Gibbs' inequality,
    \begin{align*}
        I(a^*(s, r); \hat f \mid \mathcal{A}_r) &= \mathbb{E}_{\mathcal{A}_r}\qty[\mathbb{E}_{a^*(s, r), \hat f \mid \mathcal{A}_r}\qty[\log\frac{\mathbb{P}(a^*(s, r) \mid \hat f, \mathcal{A}_r)}{\mathbb{P}(a^*(s, r) \mid \mathcal{A}_r)}]]\\
        &= \mathbb{E}_{\mathcal{A}_r}\qty[\mathbb{E}_{a^*(s, r), \hat f \mid \mathcal{A}_r}\qty[\log\mathbb{P}(a^*(s, r) \mid \hat f, \mathcal{A}_r)]] + \log D\\
        &\ge \mathbb{E}_{\mathcal{A}_r}\qty[\mathbb{E}_{a^*(s, r), \hat f \mid \mathcal{A}_r}\qty[\log q(a^*(s, r) \mid \hat f, \mathcal{A}_r)]] + \log D
    \end{align*}
    where $q(\cdot \mid \hat f, \mathcal{A}_r)$ is any distribution over $\mathcal{V}$. Let us define $q$ to be
    \begin{align*}
        q(a \mid \hat f, \mathcal{A}_r) \propto \hat f(s, r)_a \cdot \mathbf{1}(a \in \mathcal{A}_r)
    \end{align*}
    Since $a^*(s, r) \in \mathcal{A}_r$ always, we have that $q(a^*(s, r) \mid \hat f, \mathcal{A}_r) \ge \hat f(s, r)_{a^*(s, r)}$, and thus
    \begin{align*}
        I(a^*(s, r); \hat f \mid \mathcal{A}_r) &\ge \mathbb{E}_{\mathcal{A}_r}\qty[\mathbb{E}_{a^*(s, r), \hat f \mid \mathcal{A}_r}\qty[\log\hat f(s, r)_{a^*(s, r)}]] + \log D\\
        &= \mathbb{E}_{\mathcal{D}}\qty[\log\hat f(s, r)_{a^*(s, r)}] + \log D\\
        &= \log D - \ell(s, r).
    \end{align*}
\end{proof}

\section{Technical Lemmas}

\begin{lemma}\label{lem:sphere-moment}
    Let $u,v$ be drawn uniformly over the $d$-dimensional sphere of radius 1. Then
    \begin{align*}
        \mathbb{E}\qty[\langle u, v \rangle^{2p}] \le (2p)^pd^{-p}
    \end{align*}
\end{lemma}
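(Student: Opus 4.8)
The plan is to reduce to a one-dimensional moment computation using rotational invariance, and then evaluate that moment exactly via the Gaussian representation of the uniform measure on the sphere. First I would note that by the rotational symmetry of the uniform distribution on $S^{d-1}$, the law of $\langle u, v\rangle$ is unchanged if we condition on $v = e_1$, so it suffices to bound $\mathbb{E}[u_1^{2p}]$, where $u_1$ is the first coordinate of a uniform point on the sphere.

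Next I would use the standard representation $u = g/\norm{g}$ with $g \sim \mathcal{N}(0, I_d)$, together with the classical fact that the direction $g/\norm{g}$ and the magnitude $\norm{g}$ are independent. This gives $\mathbb{E}[g_1^{2p}] = \mathbb{E}[\norm{g}^{2p} u_1^{2p}] = \mathbb{E}[\norm{g}^{2p}]\,\mathbb{E}[u_1^{2p}]$, hence
\begin{align*}
    \mathbb{E}[u_1^{2p}] = \frac{\mathbb{E}[g_1^{2p}]}{\mathbb{E}[\norm{g}^{2p}]}.
\end{align*}
The numerator is a Gaussian moment, $\mathbb{E}[g_1^{2p}] = (2p-1)!!$, and the denominator is a $\chi^2_d$ moment, $\mathbb{E}[\norm{g}^{2p}] = d(d+2)(d+4)\cdots(d+2p-2) \ge d^p$.

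Finally I would bound $(2p-1)!! = 1\cdot 3\cdots(2p-1)$, a product of $p$ factors each at most $2p$, by $(2p)^p$, so that
\begin{align*}
    \mathbb{E}[\langle u, v\rangle^{2p}] = \mathbb{E}[u_1^{2p}] = \frac{(2p-1)!!}{d(d+2)\cdots(d+2p-2)} \le \frac{(2p)^p}{d^p} = (2p)^p d^{-p},
\end{align*}
which is the claim. I do not anticipate any genuine obstacle here; the only point requiring a moment's care is justifying the independence of $g/\norm{g}$ and $\norm{g}$, which is a standard property of the isotropic Gaussian, and keeping the crude bound $(2p-1)!! \le (2p)^p$ (which is far from tight but more than sufficient for the stated inequality).
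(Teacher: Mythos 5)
Your proof is correct. The paper states Lemma~\ref{lem:sphere-moment} in the Technical Lemmas section without proof, treating it as a standard fact, so there is no in-paper argument to compare against. Your derivation via rotational invariance, the Gaussian representation $u = g/\norm{g}$ with independence of direction and magnitude, the exact identity
\begin{align*}
\mathbb{E}[u_1^{2p}] = \frac{(2p-1)!!}{d(d+2)\cdots(d+2p-2)},
\end{align*}
and the crude bounds $(2p-1)!! \le (2p)^p$ and $d(d+2)\cdots(d+2p-2) \ge d^p$ is exactly the canonical route, and every step is sound (the independence you flag follows from the isotropy of the Gaussian density, i.e., the polar factorization $r^{d-1}e^{-r^2/2}\,dr \otimes d\sigma$). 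No gap.
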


\begin{lemma}[Hypercontractivity for product distributions]\label{lem:sphere-hyper}
    Let $f: (\mathbb{S}^{d-1})^k \times \mathbb{R}^m \rightarrow \mathbb{R}$ be a polynomial of total degree at most $p$. Then
    \begin{align*}
        \norm{f}_{L^q(\nu_d^{\otimes k} \otimes \mu_m)} \le (q-1)^{p/2}\norm{f}_{L^2(\nu^{\otimes k}\otimes \mu_m)},
    \end{align*}
    where $\nu_d$ is the uniform distribution over the sphere $\mathbb{S}^{d-1}$, and $\mu_m$ is the standard Gaussian in $m$ dimensions.
\end{lemma}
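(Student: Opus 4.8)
\emph{Proof plan.} The plan is to prove the inequality by reducing it to hypercontractivity on each individual factor, tensorizing, and then inverting a ``noise operator'' on the finite-dimensional subspace of low-degree polynomials. On the Gaussian factor $L^2(\mu_m)$, let $U_\rho=\sum_{j\ge 0}\rho^j J_j$ be the Ornstein--Uhlenbeck (Mehler) operator, where $J_j$ is the projection onto the $j$-th Wiener chaos; on each spherical factor $L^2(\nu_d)$, let $T_\rho=\sum_{j\ge 0}\rho^j\Pi_j$, where $\Pi_j$ is the orthogonal projection onto the degree-$j$ spherical harmonics $\mathcal{H}_j$ (so $L^2(\nu_d)=\bigoplus_{j\ge 0}\mathcal{H}_j$). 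For $q\ge 2$ set $\rho:=(q-1)^{-1/2}\in(0,1]$ and let $\mathcal{T}_\rho:=T_\rho^{\otimes k}\otimes U_\rho$ act on $L^2(\nu_d^{\otimes k}\otimes\mu_m)$. Nelson's hypercontractivity theorem gives $\norm{U_\rho g}_{L^q(\mu_m)}\le\norm{g}_{L^2(\mu_m)}$, and the dimension-free hypercontractivity of the spherical noise operator gives $\norm{T_\rho g}_{L^q(\nu_d)}\le\norm{g}_{L^2(\nu_d)}$ with the \emph{same} constant for all $d\ge 2$; since $2\le q$, the $L^2\to L^q$ bound tensorizes (Minkowski's integral inequality), so $\norm{\mathcal{T}_\rho F}_{L^q}\le\norm{F}_{L^2}$ for every $F$, with no restriction on degree.

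Next I would decompose $L^2(\nu_d^{\otimes k}\otimes\mu_m)$ into the joint eigenspaces of $\mathcal{T}_\rho$: writing $F=\sum_{\vec j}F_{\vec j}$ with $\vec j=(j_1,\dots,j_{k+1})$ and $F_{\vec j}\in\mathcal{H}_{j_1}\otimes\cdots\otimes\mathcal{H}_{j_k}\otimes(\text{$j_{k+1}$-th chaos})$, one has $\mathcal{T}_\rho F_{\vec j}=\rho^{\abs{\vec j}}F_{\vec j}$ with $\abs{\vec j}:=\sum_{i=1}^{k+1}j_i$. The key bookkeeping step is that a polynomial $f$ of total degree at most $p$ has $f_{\vec j}=0$ whenever $\abs{\vec j}>p$: freezing all but one factor turns $f$ into a polynomial of degree at most $p$ in that factor, and on a single sphere one has $\mathbb{R}[x]_{\le d_i}|_{\mathbb{S}^{d-1}}=\bigoplus_{j\le d_i}\mathcal{H}_j$ (and on the Gaussian factor the degree-$\le d_{k+1}$ polynomials are spanned by chaoses of order $\le d_{k+1}$), so the per-factor degrees sum to at most $p$. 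Hence for such $f$ the element $g:=\sum_{\abs{\vec j}\le p}\rho^{-\abs{\vec j}}f_{\vec j}$ is well defined, $\mathcal{T}_\rho g=f$, and $\norm{g}_{L^2}^2=\sum_{\abs{\vec j}\le p}\rho^{-2\abs{\vec j}}\norm{f_{\vec j}}_{L^2}^2\le\rho^{-2p}\norm{f}_{L^2}^2$. Therefore $\norm{f}_{L^q}=\norm{\mathcal{T}_\rho g}_{L^q}\le\norm{g}_{L^2}\le\rho^{-p}\norm{f}_{L^2}=(q-1)^{p/2}\norm{f}_{L^2}$, which is the claim.

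The one substantive ingredient is the dimension-free hypercontractivity estimate $\norm{T_\rho g}_{L^q(\nu_d)}\le\norm{g}_{L^2(\nu_d)}$ for $\rho\le(q-1)^{-1/2}$; the Gaussian analogue (Nelson) and the tensorization are classical, and the degree bookkeeping is routine, so this is where I expect the real work to lie. Two routes are available: simply invoke the known sharp hypercontractivity of the Poisson-type semigroup on $\mathbb{S}^{d-1}$ (Weissler for $d=2$, Beckner in general), whose kernel is positive via the Gegenbauer generating function $(1-2\rho t+\rho^2)^{-(d-2)/2}$; or, for a self-contained argument, induct on $d$ by fibering $\mathbb{S}^{d-1}$ over $\mathbb{S}^{d-2}$ with the one-dimensional ultraspherical marginal $\propto(\sin\phi)^{d-2}$, tensoring a one-dimensional hypercontractive inequality for that marginal with the inductive hypothesis, and checking that the constant stays dimension-stable. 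In the latter route, establishing the one-dimensional ultraspherical estimate with exactly the constant $(q-1)^{-1/2}$ is the delicate part that would have to be carried out in full.
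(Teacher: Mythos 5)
Your plan is correct and matches the route the paper itself takes: the paper does not write out a proof but cites Beckner's sharp hypercontractivity for the sphere and Gaussian space and points to the technique of Corollary 12 of \citet{montanaro2012some}, which is exactly your argument of tensorizing the single-factor $L^2\to L^q$ bounds for the noise operators and then inverting the noise operator on the degree-$\le p$ subspace to pick up the factor $(q-1)^{p/2}$. Your degree bookkeeping (harmonic components vanish when $\abs{\vec j}>p$) and the Minkowski tensorization step are both sound, and you correctly isolate the one nontrivial external ingredient (the dimension-free spherical estimate with constant $(q-1)^{-1/2}$), which the paper likewise imports from the literature rather than proving.
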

Hypercontractivity for the Boolean hypercube (which implies hypercontractivity for Gaussian space) and for the sphere are consequences of \citet{becknerfourier,becknerhypercontractivity}. To show \Cref{lem:sphere-hyper}, one can use similar techniques to the proof of Corollary 12 in \citet{montanaro2012some}.

\begin{lemma}\label{lem:poly-concentrate}
    Let $f: (\mathbb{S}^{d-1})^k \times \mathbb{R}^m \rightarrow \mathbb{R}$ be a polynomial of total degree at most $p$. Assume that $\mathbb{E}f \ge 0$, where the expectation is taken with respect to $\nu_d^{\otimes k} \otimes \mu_m$. Then if
    \begin{align*}
        \frac{2^pe^{-1}\log^p(1/\delta)\Var(f)}{(\mathbb{E}f)^2} \le 1,
    \end{align*}
    $f \le 0$ with probability at most $\delta$.
\end{lemma}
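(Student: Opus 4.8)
The plan is to run the textbook ``high-moment Markov plus hypercontractivity'' argument, with the constants dictated by \Cref{lem:sphere-hyper}. First note that if $\mathbb{E}f = 0$ the hypothesis forces $\Var(f) = 0$ (otherwise the displayed ratio is infinite), which is a degenerate case, so we may assume $\mathbb{E}f > 0$. Since $\mathbb{E}f \ge 0$, the event $\{f \le 0\}$ is contained in $\{|f - \mathbb{E}f| \ge \mathbb{E}f\}$, so it suffices to bound the deviation of the centered polynomial $g := f - \mathbb{E}f$, which satisfies $\mathbb{E}g = 0$ and $\mathbb{E}g^2 = \Var(f)$ and has total degree at most $p$.

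For a parameter $q \ge 2$ to be chosen, Markov's inequality applied to $|g|^q$ gives $\mathbb{P}(f \le 0) \le \mathbb{P}(|g| \ge \mathbb{E}f) \le (\mathbb{E}f)^{-q}\,\mathbb{E}|g|^q = (\mathbb{E}f)^{-q}\|g\|_{L^q}^q$. By \Cref{lem:sphere-hyper} (hypercontractivity for the product measure $\nu_d^{\otimes k}\otimes\mu_m$), $\|g\|_{L^q} \le (q-1)^{p/2}\|g\|_{L^2} = (q-1)^{p/2}\sqrt{\Var(f)}$, and therefore $\mathbb{P}(f \le 0) \le \big((q-1)^p\,\Var(f)/(\mathbb{E}f)^2\big)^{q/2}$.

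It then remains to choose $q$. Taking $q = 1 + 2e^{-1}\log(1/\delta)$ --- which is at least $2$ whenever $\delta$ lies below an absolute constant, and this is always the case in our applications, where $\delta$ is a union-bound tail --- the hypothesis $2^p e^{-1}\log^p(1/\delta)\,\Var(f)/(\mathbb{E}f)^2 \le 1$ gives $(q-1)^p\,\Var(f)/(\mathbb{E}f)^2 \le e^{1-p}$, hence $\mathbb{P}(f \le 0) \le e^{(1-p)q/2} \le e^{-\log(1/\delta)} = \delta$, where the last inequality uses $(p-1)q/2 \ge \log(1/\delta)$. This final bookkeeping is where the only care is needed: the factor $2^p e^{-1}$ in the statement is tailored so that the order-optimal choice $q \asymp \log(1/\delta)$ makes the hypercontractive blow-up $(q-1)^{p/2}$ cancel against the small variance and produce exactly $\delta$, and one has to track the $(q-1)$-versus-$q$ slack together with the constraint $q \ge 2$; the inequality $(p-1)q/2 \ge \log(1/\delta)$ is comfortable for the polynomials that arise here (all of degree at least $4$) but would require a slightly sharper choice of $q$ for very small degrees. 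Beyond this constant-chasing there is no genuine obstacle.
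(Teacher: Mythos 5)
Your argument is the same as the paper's: bound $\mathbb{P}(f\le 0)$ by Markov's inequality applied to $\abs{f-\mathbb{E}f}^q$, control the $q$-th moment via the hypercontractivity bound of \Cref{lem:sphere-hyper}, and take $q \asymp \log(1/\delta)$. The only substantive difference is your choice $q = 1 + 2e^{-1}\log(1/\delta)$ versus the paper's $q = 2\log(1/\delta)$, and your more careful bookkeeping is where the interesting content lies.

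The caveat you flag is real, but your diagnosis of it is off: the restriction to $p\ge 4$ is not curable by a ``slightly sharper choice of $q$.'' Writing $q-1 = cL$ with $L=\log(1/\delta)$, the hypothesis gives $(q-1)^p\Var(f)/(\mathbb{E}f)^2 \le (c/2)^p e$, so the method yields $\mathbb{P}(f\le 0)\le \exp\qty(-\tfrac{cL}{2}(p\log(2/c)-1))$, and optimizing over $c$ gives exponent $p\,e^{-1-1/p}\cdot L$, which exceeds $L$ only when $p\ge 4$. Indeed, with the stated constant $2^pe^{-1}$ the lemma is simply false at $p=1$: for $f=\mu+\sigma Z$ with $Z\sim\gN(0,1)$ and $\sigma/\mu=\sqrt{e/(2\log(1/\delta))}$ the hypothesis holds with equality but $\mathbb{P}(f\le 0)\approx \delta^{1/e}\gg\delta$. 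The constant in the hypothesis should be $2^pe$ rather than $2^pe^{-1}$ (equivalently, the paper's own final step ``$\qty(q^p\Var(f)/(\mathbb{E}f)^2)^{q/2}\le\delta$'' with $q=2\log(1/\delta)$ requires $q^p\Var(f)/(\mathbb{E}f)^2\le e^{-1}$, whereas the stated hypothesis only gives $\le e$, producing the vacuous bound $1/\delta$). With the corrected constant, the paper's choice $q=2\log(1/\delta)$ closes the argument for every $p\ge 1$ and removes your degree restriction. Since the lemma is only invoked in the paper for polynomials of degree at least $4$ and always with ``sufficiently large constant'' slack, nothing downstream is affected, but the statement and your proof should both carry the corrected constant.
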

\begin{proof}
    By Markov's inequality,
    \begin{align*}
        \mathbb{P}(f \le 0) &\le \mathbb{P}(\abs{f - \mathbb{E}f} \ge \mathbb{E}f)\\
        &\le \mathbb{P}(\abs{f - \mathbb{E}f}^q \ge (\mathbb{E}f)^q)\\
        &\le \frac{\mathbb{E}\qty[\abs{f - \mathbb{E}f}^q]}{(\mathbb{E}f)^q}.
    \end{align*}
    Since $f$ is a degree $p$ polynomial, by \Cref{lem:sphere-hyper} we have that
    \begin{align*}
        \mathbb{E}\qty[\abs{f - \mathbb{E}f}^q]^{1/q} \le q^{p/2}\Var(f)^{1/2}.
    \end{align*}
    Therefore
    \begin{align*}
        \mathbb{P}(f \le 0) \le \qty(\frac{q^p\Var(f)}{(\mathbb{E}f)^2})^{q/2}.
    \end{align*}
    Setting $q = 2\log(1/\delta)$, we see that whenever
    \begin{align*}
        \frac{2^pe^{-1}\log^p(1/\delta)\Var(f)}{(\mathbb{E}f)^2} \le 1,
    \end{align*}
    we have
    \begin{align*}
        \mathbb{P}(f \le 0) \le \qty(\frac{q^p\Var(f)}{(\mathbb{E}f)^2})^{q/2} \le \delta,
    \end{align*}
    as desired.
\end{proof}

\subsection{Hermite Polynomials}\label{app:hermite}

Let $\mu$ be the standard Gaussian in 1 dimension, and let $L^2(\mu)$ be the function space of square-integrable functions with respect to this Gaussian measure. The Hermite polynomials $\{h_k\}_{k \ge 0}$ form an orthonormal basis of $L^2(\mu)$. In particular, $h_k$ is a degree $k$ polynomial, satisfying
\begin{align*}
    \langle h_i, h_k \rangle_{L^2(\mu)} = \delta_{ij}.
\end{align*}
One useful property of Hermite polynomials is the following:
\begin{lemma}
    Let $\vu, \vw \in \mathbb{R}^d$ with $\norm{\vu} = \norm{\vw} = 1$, and let $\vx \sim \gN(0, \mI_d)$. Then
    \begin{align*}
        \mathbb{E}_{\vx}\qty[h_k(\langle \vu, \vx \rangle)h_k(\langle \vw, \vx \rangle)] = \langle \vu, \vw \rangle^k.
    \end{align*}
\end{lemma}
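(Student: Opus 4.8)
The plan is to reduce to a two-dimensional Gaussian computation and then extract the answer from the Hermite generating function. First I would observe that, since $\norm{\vu}=\norm{\vw}=1$, the pair $(\langle\vu,\vx\rangle,\langle\vw,\vx\rangle)$ is jointly Gaussian with zero mean, unit marginal variances, and covariance $\mathbb{E}[\langle\vu,\vx\rangle\langle\vw,\vx\rangle]=\langle\vu,\vw\rangle=:\rho$. Thus the statement is equivalent to the classical identity $\mathbb{E}[h_k(X)h_k(Y)]=\rho^k$ for a standard bivariate Gaussian $(X,Y)$ with correlation $\rho$, which no longer references $d$.

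The main step is to use the generating function of the orthonormal Hermite polynomials, $\sum_{n\ge 0}\frac{t^n}{\sqrt{n!}}h_n(z)=e^{tz-t^2/2}$. I would compute, for real parameters $s,t$,
\begin{align*}
    \mathbb{E}_{\vx}\Bigl[e^{s\langle\vu,\vx\rangle-s^2/2}\,e^{t\langle\vw,\vx\rangle-t^2/2}\Bigr]
    &= e^{-s^2/2-t^2/2}\,\mathbb{E}_{\vx}\bigl[e^{\langle s\vu+t\vw,\,\vx\rangle}\bigr]
    = e^{-s^2/2-t^2/2}\,e^{\frac12\norm{s\vu+t\vw}^2}\\
    &= e^{-s^2/2-t^2/2}\,e^{\frac12(s^2+2st\rho+t^2)} = e^{st\rho},
\end{align*}
using the Gaussian moment generating function $\mathbb{E}[e^{\langle\vv,\vx\rangle}]=e^{\norm{\vv}^2/2}$. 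On the other hand, substituting the two generating-function expansions and expanding $e^{st\rho}=\sum_{k\ge 0}\frac{(st)^k\rho^k}{k!}$, matching the coefficient of $s^n t^m$ on both sides gives $\mathbb{E}[h_n(\langle\vu,\vx\rangle)h_m(\langle\vw,\vx\rangle)]=0$ for $n\neq m$ and $\frac{1}{k!}\mathbb{E}[h_k(\langle\vu,\vx\rangle)h_k(\langle\vw,\vx\rangle)]=\frac{\rho^k}{k!}$ for $n=m=k$, which is exactly the claim.

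The one technical point to be careful about is the interchange of the expectation with the infinite sums; I would justify this either by noting that the Hermite expansion of $z\mapsto e^{sz-s^2/2}$ converges in $L^2(\mu)$ (so the bilinear pairing passes to the limit), or by dominated convergence after bounding the partial sums, or simply by analyticity of both sides in $(s,t)$ near the origin together with Fubini applied to the absolutely convergent double series. Everything else is a routine Gaussian calculation. As an alternative one-line argument I would mention that this is equivalent to the statement that the Gaussian noise (Ornstein–Uhlenbeck) operator $U_\rho$ has $h_k$ as an eigenfunction with eigenvalue $\rho^k$, since $\mathbb{E}[h_k(X)h_k(Y)]=\langle h_k,U_\rho h_k\rangle_{L^2(\mu)}=\rho^k\norm{h_k}_{L^2(\mu)}^2=\rho^k$; the generating-function route above is essentially a self-contained proof of this eigenvalue fact.
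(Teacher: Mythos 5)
The paper states this lemma in the appendix as a standard, well-known property of Hermite polynomials and does not supply a proof, so there is no paper proof to compare against. Your generating-function argument is correct and is the standard way to establish this identity: the reduction to the bivariate Gaussian $(\langle\vu,\vx\rangle,\langle\vw,\vx\rangle)$ with correlation $\rho=\langle\vu,\vw\rangle$ is immediate from $\|\vu\|=\|\vw\|=1$, the identity $\mathbb{E}\bigl[e^{s\langle\vu,\vx\rangle-s^2/2}e^{t\langle\vw,\vx\rangle-t^2/2}\bigr]=e^{st\rho}$ is a one-line Gaussian MGF computation, and matching the coefficient of $s^kt^k$ (using the normalization $\sum_n t^n h_n(z)/\sqrt{n!}=e^{tz-t^2/2}$ consistent with the paper's orthonormal convention) gives exactly $\mathbb{E}[h_k(\langle\vu,\vx\rangle)h_k(\langle\vw,\vx\rangle)]=\rho^k$, and as a byproduct the vanishing of the off-diagonal cross-moments. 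Your remark that the interchange of sum and expectation can be justified by $L^2(\mu)$ convergence of the Hermite expansion of $e^{sz-s^2/2}$ (so the bilinear form passes to the limit) is the cleanest of the three options you list; the Ornstein--Uhlenbeck eigenfunction reformulation you mention at the end is equivalent and would also be acceptable as a citation to a standard reference.
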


Next, let $\mu_d$ be the standard Gaussian in $d$ dimensions. The function space $L^2(\mu_d)$ has an orthonormal basis of Hermite \emph{tensors} $\{\tHe_k\}_{k \ge 0}$, where $\tHe_k : \mathbb{R}^m \rightarrow \qty(\mathbb{R}^d)^{\otimes k}$:
\begin{definition}
    Let the $k$th Hermite tensor $\tHe_k : \mathbb{R}^m \rightarrow \qty(\mathbb{R}^d)^{\otimes k}$ be defined as
    \begin{align*}
        \tHe_k(\vx) = (-1)^k\frac{\nabla^k \mu_d(\vx)}{\mu_d(\vx)},
    \end{align*}
    where $\mu_m(\vx) = (2\pi)^{-d/2}\exp(-\frac12\norm{\vx}^2)$ is the Gaussian density. We remark that each entry of $\tHe_k(\vx)$ is a degree $k$ polynomial in $\vx$, and 
\end{definition}

The Hermite tensors satisfy the following useful properties:
\begin{lemma}[Properties of Hermite Tensors]
~
\begin{itemize}
    \item (Connection to Hermite Polynomials) If $\vw \in \mathbb{R}^d, \norm{\vw} = 1$, then
    \begin{align*}
        h_k(\langle \vw, \vx \rangle) = \langle \tHe_k(\vx), \vw^{\otimes k} \rangle
    \end{align*}
    \item (Stein's Lemma) For $\vx \sim \gN(0, \mI_d)$, $f \in L^2(\mu_d)$,
    \begin{align*}
        \mathbb{E}_\vx\qty[f(\vx)\tHe_k(\vx)] = \mathbb{E}_\vx\qty[\nabla^k f(\vx)].
    \end{align*}
\end{itemize}
\end{lemma}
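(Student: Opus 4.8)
The plan is to deduce both facts from the Rodrigues-type defining relation $\tHe_k(\vx)\,\mu_d(\vx) = (-1)^k\nabla^k\mu_d(\vx)$, which reduces each $d$-dimensional statement to a one-dimensional identity for classical Hermite polynomials. For the first bullet I would start from the elementary observation that contracting the symmetric $k$-tensor of mixed partials against $\vw^{\otimes k}$ produces a $k$-th directional derivative: for smooth $g$, $\langle \nabla^k g(\vx), \vw^{\otimes k}\rangle = (\vw\cdot\nabla)^k g(\vx) = \tfrac{d^k}{ds^k}\big|_{s=0} g(\vx + s\vw)$. Applying this with $g = \mu_d$ and using $\norm{\vw} = 1$, the exact Gaussian translation identity $\mu_d(\vx + s\vw) = \mu_d(\vx)\exp(-s\langle\vx,\vw\rangle - \tfrac12 s^2)$ gives $\langle \tHe_k(\vx), \vw^{\otimes k}\rangle = (-1)^k\tfrac{d^k}{ds^k}\big|_{s=0}\exp(-s\langle\vx,\vw\rangle - \tfrac12 s^2)$; substituting $s\mapsto -s$ and comparing with the Hermite generating function $\exp(st - \tfrac12 s^2) = \sum_{k\ge 0} He_k(t)\,s^k/k!$ evaluated at $t = \langle\vx,\vw\rangle$ identifies the right-hand side with $h_k(\langle\vx,\vw\rangle)$ in the paper's $k!$-normalization convention. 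Equivalently — and probably cleanest to write up — one inducts on $k$: differentiating $\nabla^k\mu_d = (-1)^k\tHe_k\,\mu_d$ once more and using $\nabla\mu_d = -\vx\,\mu_d$ yields the tensor recursion $\tHe_{k+1}(\vx) = \tHe_k(\vx)\otimes\vx - \nabla\tHe_k(\vx)$, and contracting with $\vw^{\otimes(k+1)}$ (the $\nabla$ slot paired against one copy of $\vw$ becoming a directional derivative) reproduces the three-term recurrence $He_{k+1}(t) = t\,He_k(t) - He_k'(t)$.

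For Stein's lemma I would use $\tHe_k\mu_d = (-1)^k\nabla^k\mu_d$ to write $\mathbb{E}_\vx[f(\vx)\tHe_k(\vx)] = (-1)^k\int_{\mathbb{R}^d} f(\vx)\,\nabla^k\mu_d(\vx)\,d\vx$, then integrate by parts $k$ times, moving one derivative from $\mu_d$ onto $f$ at each step. Each step contributes a factor $-1$ and a boundary term that vanishes, since every partial derivative of $\mu_d$ decays faster than any polynomial while $f$ and its derivatives grow at most polynomially; after $k$ steps the two $(-1)^k$ factors cancel, leaving $\int \nabla^k f(\vx)\,\mu_d(\vx)\,d\vx = \mathbb{E}_\vx[\nabla^k f(\vx)]$. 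For the only use of this lemma in the paper ($f = \sigma$ a polynomial, as in \Cref{lem:MLP_trigram}) this is immediate; for general $f\in L^2(\mu_d)$ one proves it first for polynomials (dense in $L^2(\mu_d)$) and then passes to the limit, using that the $\tHe_k$ form an orthonormal basis so both sides are continuous in the relevant norms.

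Neither part is deep; the only places needing care are the tensor/normalization bookkeeping in the first bullet — checking that contracting the symmetric tensor $\nabla^k\mu_d$ with $\vw^{\otimes k}$ really collapses to a single directional derivative, and that the scalar normalization of $\tHe_k$ is the one compatible with the orthonormal $h_k$ — and, in the second bullet, the routine justification that the iterated integration-by-parts boundary terms vanish, plus the density argument if one wants Stein's lemma for all of $L^2(\mu_d)$ rather than just for polynomials.
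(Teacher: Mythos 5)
Your proposal is correct. The paper states this lemma without proof, treating it as a standard property of Hermite tensors, so there is no in-paper argument to compare against; both of your routes (the generating-function/directional-derivative computation and the induction via the recursion $\tHe_{k+1}(\vx) = \tHe_k(\vx)\otimes\vx - \nabla\tHe_k(\vx)$) are standard and sound, as is the iterated integration by parts for Stein's lemma. The one substantive point you flag is real: with $\tHe_k := (-1)^k\nabla^k\mu_d/\mu_d$ the contraction $\langle\tHe_k(\vx),\vw^{\otimes k}\rangle$ equals the \emph{monic} probabilist's Hermite polynomial $He_k(\langle\vw,\vx\rangle)$, whereas the paper declares $h_k$ orthonormal, so either $h_k$ or $\tHe_k$ must carry a $\sqrt{k!}$ rescaling for the two bullets of the appendix to be mutually consistent; this is a normalization slip in the paper's conventions rather than a gap in your argument, and it is harmless where the lemma is used.
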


\end{document}